\def\eqref#1{equation~\ref{#1}}
\def\1{\bm{1}}
\DeclareMathAlphabet{\mathsfit}{\encodingdefault}{\sfdefault}{m}{sl}
\SetMathAlphabet{\mathsfit}{bold}{\encodingdefault}{\sfdefault}{bx}{n}
\theoremstyle{plain}
\newtheorem{theorem}{Theorem}[section]
\newtheorem{proposition}[theorem]{Proposition}
\newtheorem{condition}[theorem]{Condition}
\newtheorem{lemma}[theorem]{Lemma}
\newtheorem{corollary}[theorem]{Corollary}
\theoremstyle{definition}
\newtheorem{definition}[theorem]{Definition}
\theoremstyle{remark}
\title{Understanding Private Learning From Feature Perspective}
\author{
Meng Ding$^{1}$,
Mingxi Lei$^{1}$,
Shaopeng Fu$^{2}$,
Shaowei Wang$^{3}$,
Di Wang$^{2}$\thanks{Correspondence to: Di Wang <di.wang@kaust.edu.sa>, Jinhui Xu <jhxu@ustc.edu.cn>.},
Jinhui Xu$^{4}$\\[4pt]
$^{1}$Department of Computer Science and Engineering, State University of New York at Buffalo, USA\\
$^{2}$Division of CEMSE, King Abdullah University of Science and Technology, Saudi Arabia\\
$^{3}$Institute of Artificial Intelligence and Blockchain, Guangzhou University, China\\
$^{4}$School of Information Science and Technology, University of Science and Technology of China, China
}
\begin{document}

\maketitle

\begin{abstract}
Differentially private Stochastic Gradient Descent (DP-SGD) has become integral to privacy-preserving machine learning, ensuring robust privacy guarantees in sensitive domains. Despite notable empirical advances leveraging features from non-private, pre-trained models to enhance DP-SGD training, a theoretical understanding of feature dynamics in private learning remains underexplored. This paper presents the first theoretical framework to analyze private training through a feature learning perspective. Building on the multi-patch data structure from prior work, our analysis distinguishes between label-dependent feature signals and label-independent noise—a critical aspect overlooked by existing analyses in the DP community. Employing a two-layer CNN with polynomial ReLU activation, we theoretically characterize both feature signal learning and data noise memorization in private training via noisy gradient descent. Our findings reveal that (1) Effective private signal learning requires a higher signal-to-noise ratio (SNR) compared to non-private training, and (2) When data noise memorization occurs in non-private learning, it will also occur in private learning, leading to poor generalization despite small training loss. Our findings highlight the challenges of private learning and prove the benefit of feature enhancement to improve SNR. Experiments on synthetic and real-world datasets also validate our theoretical findings.
\end{abstract}

\section{Introduction}

Differentially private (DP) learning has emerged as a cornerstone of privacy-preserving machine learning, addressing growing concerns about data privacy in sensitive domains such as healthcare \cite{lundervold2019overview,chlap2021review,shamshad2023transformers}, finance 
 \cite{ozbayoglu2020deep,bi2024advanced}, and user-centric applications \cite{oroojlooy2023review}. Differential Privacy, introduced by \cite{dwork2006calibrating}, provides robust privacy guarantees by limiting the impact of any individual data points on the model's output. Among DP learning methods, differentially private stochastic gradient descent (DP-SGD) \cite{abadi2016deep} has emerged as a canonical algorithm for training private machine learning models.

 However, DP-SGD often comes with a significant cost in model accuracy \cite{shokri2015privacy,abadi2016deep,bagdasaryan2019differential}. To improve the performance, recent work \cite{tramer2020differentially} shows that DP-SGD training benefits from handcrafted features and can achieve better performance by leveraging features learned from public data in a similar domain. Similarly, \cite{tang2024differentially} highlights the advantages of transferring features learned from synthetic data to private training, while \cite{sun2023importance} and \cite{bao2023dp} illustrate the importance of feature preprocessing in private learning.
 These findings suggest that improving feature quality is essential for effective private learning. Benefiting from this principle and the advent of large-scale foundation models, DP-SGD has demonstrated significant performance boosts by learning features from non-private models pre-trained on large public datasets \cite{tramer2020differentially,li2021large,de2022unlocking,arora2022can,kurakin2022toward,mehta2023towards,nasr2023effectively,tang2024private,bu2024pre}. 

 Despite the empirical success of DP-SGD from enhanced features, the theoretical understanding of these phenomena remains in its infancy. Previous work on DP learning has primarily focused on analyzing the utility bounds of private models, such as DP-SGD and its variants, with a particular emphasis on both convex  \cite{bassily2014private, wang2017differentially, bassily2019private, feldman2020private, song2020characterizing, su2021faster, asi2021private, bassily2021non, kulkarni2021private,DBLP:conf/ijcai/TaoW0W22,su2023differentially,su2024faster} and non-convex models \cite{zhang2017efficient,wang2017differentially,wang2019differentially,zhang2021private, bassily2021differentially,wang2023efficient,dingrevisiting}, leaving the role and explanation from the feature learning perspective largely unexplored.

\begin{figure*}[t]
\vspace{-0.2in}
\centering
\begin{tikzpicture}
    \node[anchor=south west, inner sep=0] (img1) at (0, 0) {\includegraphics[width=0.22\textwidth]{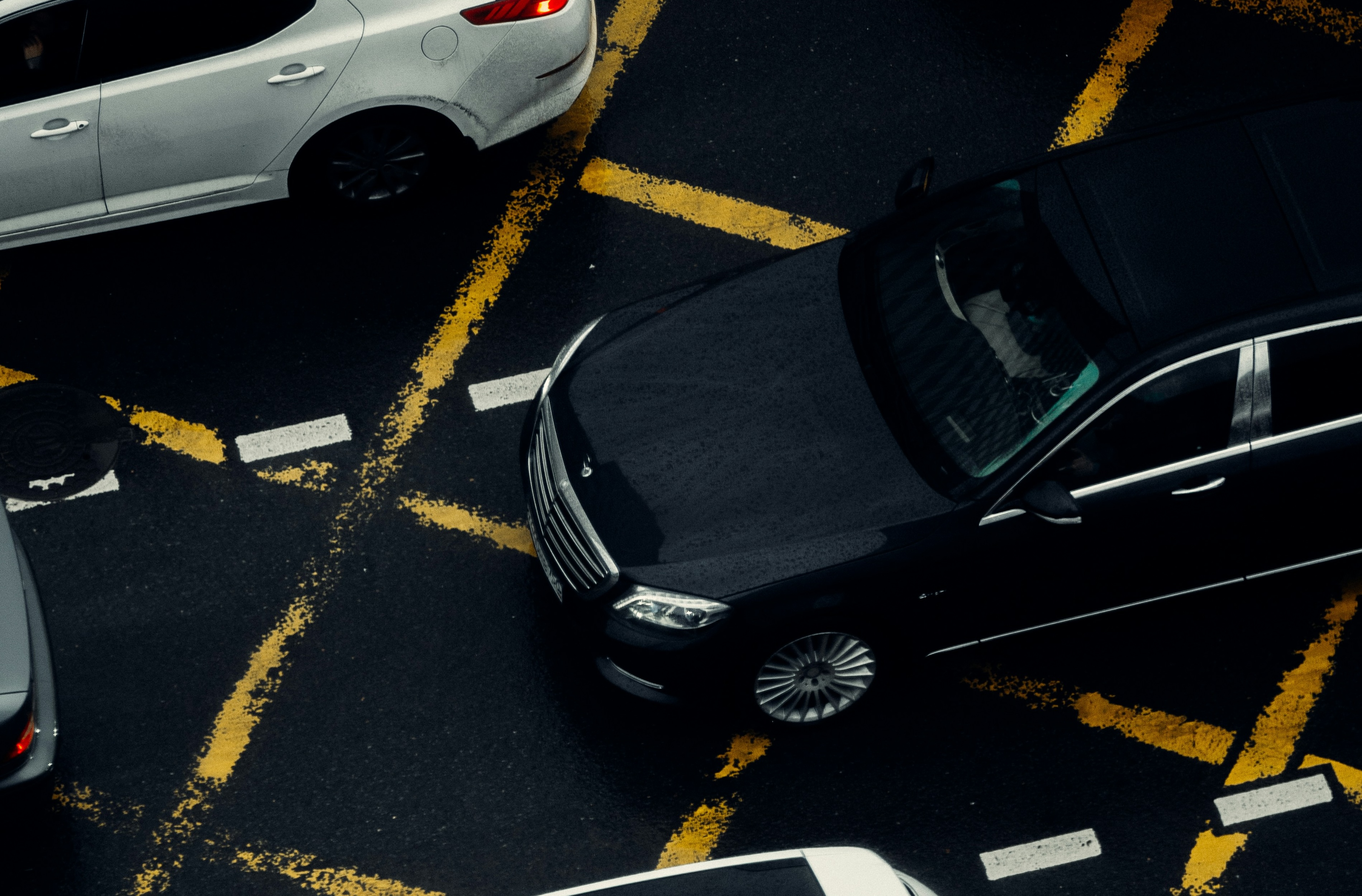}};
    \draw[red, thick] (1.5, 0.6) circle (0.2);
    \node[red] at (1.9, 1.0) {\textbf{\small headlight}};
    \draw[green, thick] (0.1,2.0) rectangle (1.1,0.1);
    \node[green] at (1.9, 1.7) {\textbf{\small data noise}};
    
    \node[anchor=south west, inner sep=0] (img2) at (3.5, 0) {\includegraphics[width=0.22\textwidth]{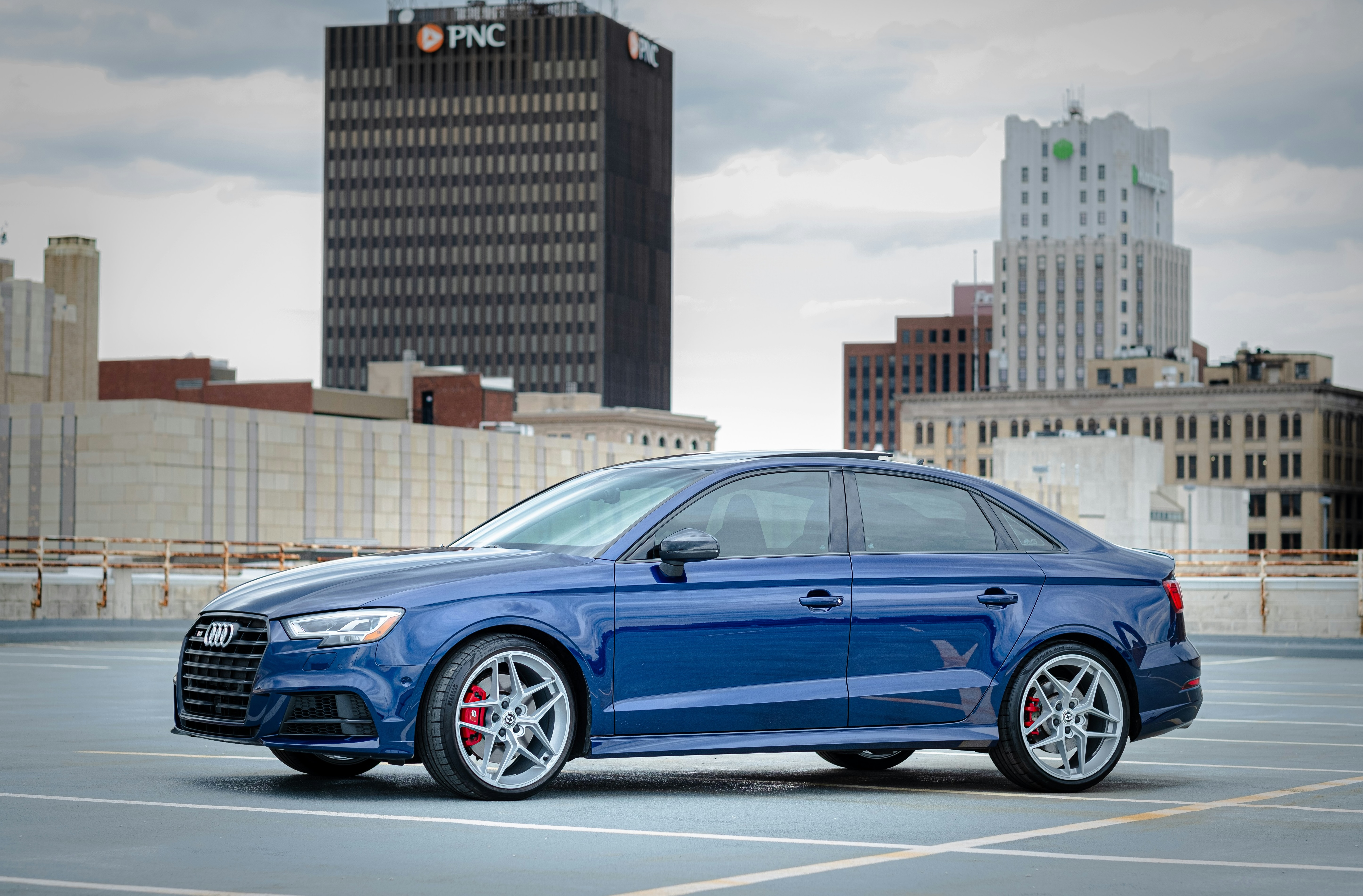}};
    \draw[red, thick] (4.2, 0.6) circle (0.2);
    \node[red] at (4.9, 0.2) {\textbf{\small headlight}};
    \draw[green, thick] (4.2,2.0) rectangle (6.2,1.0);
    \node[green] at (5.3, 1.5) {\textbf{\small data noise}};
    
    \node[anchor=south west, inner sep=0] (img3) at (7.0, 0) {\includegraphics[width=0.22\textwidth]{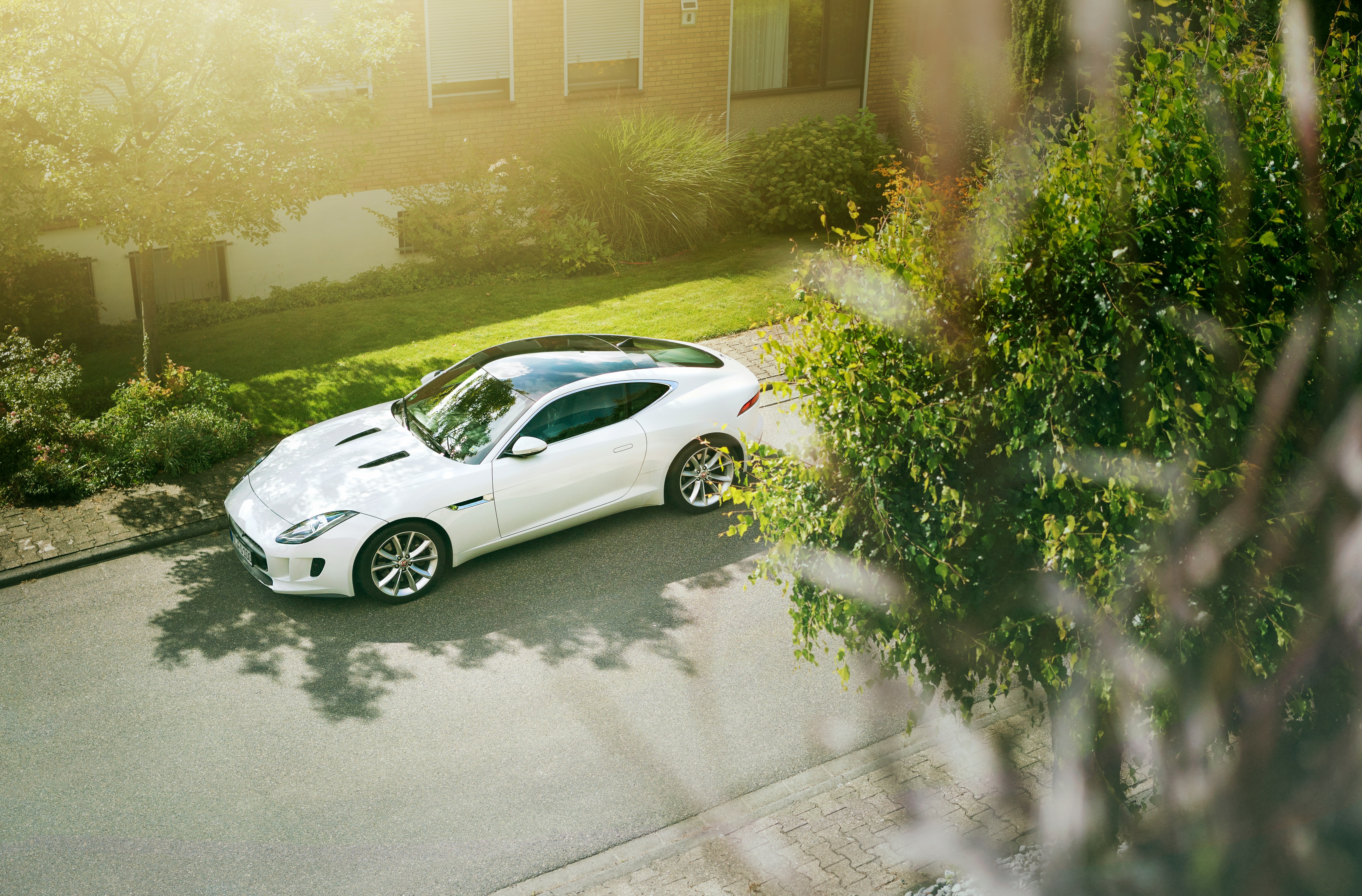}};
    \draw[red, thick] (7.7, 0.9) circle (0.1);
    \node[red] at (7.7, 1.4) {\textbf{\small headlight}};
    \draw[green, thick] (8.7,2.0) rectangle (10.0,0.1);
    \node[green] at (9.2, 0.9) {\textbf{\small data noise}};
    
    \node[anchor=south west, inner sep=0] (img4) at (10.5, 0) {\includegraphics[width=0.22\textwidth]{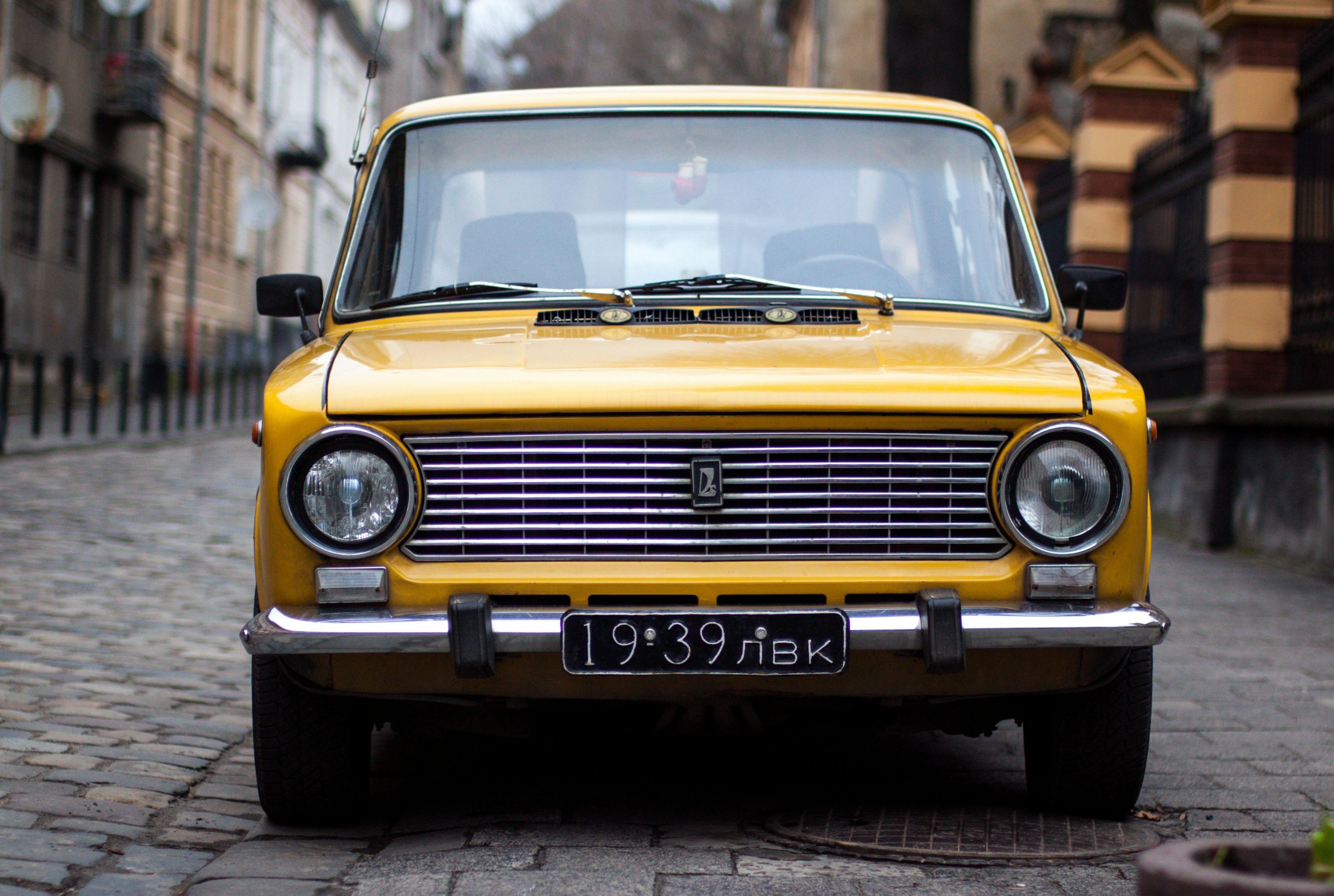}};
    \draw[red, thick] (13.0, 0.9) circle (0.2);
    \node[red] at (12.7, 1.4) {\textbf{\small headlight}};
    \draw[green, thick] (10.6,2.0) rectangle (11.3,1.0);
    \node[green] at (11.5, 0.6) {\textbf{\small data noise}};
\end{tikzpicture}
\caption{Illustration of images with feature signal and data noise.}
\label{fig:image_features}
\vspace{-0.1in}
\end{figure*}
 
Only two recent works have studied the theoretical aspects of features in private learning, both with several limitations. \cite{sun2023importance} confines its analysis to simple tasks using linear classification models without addressing applicability to neural networks. \cite{wang2024neural} investigates feature shifts during private fine-tuning of the last layer under the framework of neural collapse \cite{papyan2020prevalence}, simplifying the private model with the assumption of the equiangular tight frame (ETF). Furthermore, both works focus exclusively on utility, providing limited explanations of the learning dynamics of features in private learning. We will provide more discussions later.

In this paper, we develop a novel theoretical framework
that studies the private learning dynamics from a feature learning perspective in noisy gradient descent, a simple version of DP-SGD. Inspired by the structure of image data, we consider a data distribution modeled as a multiple-patch structure, $\mathbf{x} = [ y \cdot \mathbf{v}, \bm{\xi}] \in (\mathbb{R}^d)^2$, where $y \in \{+1,-1\}$ is the label, $\mathbf{v}$ represents the useful label-dependent feature signals, $\bm{\xi}$ refers to label-independent data noise randomly sampled from a Gaussian distribution with standard deviation $\sigma_{\xi}$ and $d$ is the dimension. For example, as illustrated in \cref{fig:image_features}, the wheel serves as a feature for the class 'car,' while the background acts as label-independent data noise. 

Beyond the linear classification model, we utilize a two-layer convolutional neural network (CNN) with a polynomial ReLU activation function: $\sigma(z)=\max \{0, z\}^q$, where $q>2$ is a hyperparameter.  Given a training dataset of $n$ samples, we quantify noisy gradient descent in terms of feature signal learning and data noise memorization, measured through the private model $\mathbf{w}$ with signal and data noise. Specifically, we present the following (informal) results:
\begin{theorem}[Informal]\label{thm:informal}
    Let $\operatorname{SNR}:=\|\mathbf{v}\|_2 /\|\bm{\xi}\|_2$ \footnote{{Our data model is analogous to the patch model in \cite{cao2022benign}: the signal vector $\mathbf{v}$ is fixed, while the noise $\boldsymbol{\xi}$ is drawn from an isotropic Gaussian. When the dimension $d$ is large, $\|\boldsymbol{\xi}\|_2 \approx (\mathbb{E}\|\boldsymbol{\xi}\|_2^2)^{1/2}$ by standard concentration, so we treat $\|\mathbf{v}\|_2/\|\boldsymbol{\xi}\|_2$ (and hence $\|\mathbf{v}\|_2/(\sigma_\xi \sqrt{d})$) as the signal-to-noise ratio for ease of discussion similar to \cite{cao2022benign}.}} be the signal-to-noise ratio and $\varepsilon$ be the privacy budget. Under appropriate conditions, it holds that
    \begin{itemize}[leftmargin=1.5em]
        \item When $\min \{\operatorname{SNR} \cdot n \varepsilon, \operatorname{SNR}^q \cdot n\} \geq \widetilde{\Omega}(1)$, the private CNN model can capture the feature signal.
        \item When $\min \{\operatorname{SNR}^{-1} \cdot \varepsilon, \operatorname{SNR}^{-q} \cdot n^{-1} \} \geq \widetilde{\Omega}(1)$, the private CNN model can capture the data noise.
    \end{itemize}
\end{theorem}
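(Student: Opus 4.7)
The plan is to adapt the signal-noise decomposition framework used in the benign-overfitting analysis of two-layer CNNs with polynomial ReLU, and to track how the per-step DP noise perturbs the coupled dynamics of signal learning and noise memorization. First I would write each filter at iteration $t$ as
\[
\mathbf{w}_{j,r}^{(t)} = \mathbf{w}_{j,r}^{(0)} + \gamma_{j,r}^{(t)}\,\frac{\mathbf{v}}{\|\mathbf{v}\|_2} + \sum_{i=1}^{n} \rho_{j,r,i}^{(t)}\,\frac{\boldsymbol{\xi}_i}{\|\boldsymbol{\xi}_i\|_2} + \boldsymbol{\eta}_{j,r}^{(t)},
\]
where $\gamma_{j,r}^{(t)}$ is the signal-learning coefficient, $\rho_{j,r,i}^{(t)}$ records memorization of the $i$-th noise patch, and $\boldsymbol{\eta}_{j,r}^{(t)}$ is the residual in the orthogonal complement of $\operatorname{span}\{\mathbf{v},\boldsymbol{\xi}_1,\dots,\boldsymbol{\xi}_n\}$, which absorbs the accumulated DP noise perpendicular to these directions.

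Next I would derive one-step recursions for $\gamma$ and $\rho$ by differentiating the empirical loss through $\sigma'(z)=q z^{q-1}$. This produces deterministic drift terms of order $\ell'\cdot(\gamma^{(t)})^{q-1}\|\mathbf{v}\|_2^{q}$ for the signal coefficient and $n^{-1}\ell'\cdot(\rho^{(t)})^{q-1}\|\boldsymbol{\xi}_i\|_2^{q}$ for the noise coefficient, to which I add the projections of the per-step Gaussian DP noise $\mathbf{z}^{(t)}\sim\mathcal{N}(0,\sigma_{\mathrm{DP}}^2\mathbf{I})$ onto $\mathbf{v}/\|\mathbf{v}\|_2$ and onto each $\boldsymbol{\xi}_i/\|\boldsymbol{\xi}_i\|_2$, where $\sigma_{\mathrm{DP}}\asymp\sqrt{\log(1/\delta)}/(n\varepsilon)$. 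Summing these Gaussian increments with a martingale maximal inequality yields accumulated DP drifts of typical size $\sigma_{\mathrm{DP}}\sqrt{T}\,\|\mathbf{v}\|_2$ along the signal direction and $\sigma_{\mathrm{DP}}\sqrt{T}\,\|\boldsymbol{\xi}_i\|_2$ along each noise direction; these are the two new random forcing terms absent from the non-private analysis, and they are exactly the source of the $\varepsilon$-dependent conditions in the theorem.

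The core argument then has two phases. In the early phase the loss derivative is $\Theta(1)$, so $\gamma^{(t)}$ reaches the $\widetilde\Theta(1)$ threshold through whichever mechanism wins first: either its intrinsic polynomial-ReLU drift, which after normalizing by $\|\boldsymbol{\xi}_i\|_2^{q}$ and summing across iterations requires $\operatorname{SNR}^{q}\cdot n=\widetilde\Omega(1)$, or being lifted above the DP drift on the $\mathbf{v}$-direction, which after balancing the per-step signal increment against $\sigma_{\mathrm{DP}}\|\mathbf{v}\|_2$ gives $\operatorname{SNR}\cdot n\varepsilon=\widetilde\Omega(1)$. The $\min$ in the theorem picks the faster of the two. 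An entirely symmetric computation with $\mathbf{v}$ and $\boldsymbol{\xi}_i$ interchanged, together with the extra $1/n$ factor from the empirical gradient and the inverse SNR factor $\|\boldsymbol{\xi}_i\|_2/\|\mathbf{v}\|_2=\operatorname{SNR}^{-1}$, produces the noise-memorization condition $\min\{\operatorname{SNR}^{-1}\varepsilon,\ \operatorname{SNR}^{-q}n^{-1}\}=\widetilde\Omega(1)$. Once the relevant coefficient crosses $\widetilde\Theta(1)$, a standard late-phase convergence argument for polynomial-ReLU networks takes over and the corresponding training-loss component decreases rapidly.

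The main obstacle will be the coupled induction that (a) keeps the signal/noise decomposition approximately orthogonal throughout training despite the injected Gaussian noise, (b) bounds $\|\boldsymbol{\eta}_{j,r}^{(t)}\|_2$ uniformly over $t\le T$ via a martingale maximal inequality applied to the DP noise sequence, and (c) controls the off-diagonal inner products $\langle\boldsymbol{\xi}_i,\boldsymbol{\xi}_j\rangle$ and $\langle\boldsymbol{\xi}_i,\mathbf{v}\rangle$ at the $\widetilde O(\sigma_\xi\sqrt{d})$ scale predicted by concentration. Once (a)--(c) are in place, the $\gamma$ and $\rho$ recursions decouple enough that each of the four competing drift terms can be compared directly, which yields the two minima in Theorem~\ref{thm:informal}.
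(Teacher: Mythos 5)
Your proposal follows the same high-level playbook as the paper — a signal–noise decomposition of the filters, one-step recursions for the signal and noise coefficients, a two-phase argument in which the early phase compares the polynomial-ReLU drift against the DP-induced perturbation, and concentration bounds of size $\sigma_{\mathrm{DP}}\sqrt{T}$ on the accumulated DP projections — and it correctly identifies the two minima as arising from a race between the intrinsic growth rate and the DP drift. The one substantive structural difference is in how the DP noise is placed inside the decomposition. The paper (Definition~4.1, Lemma~4.2) keeps the accumulated privacy noise as an explicit, separate additive term $-\eta\sum_{s\le t}\mathbf{z}_s$ and defines $\Gamma_{j,r}^{(t)},\bar\Phi_{j,r,i}^{(t)},\underline\Phi_{j,r,i}^{(t)}$ so that they obey the \emph{same} recursions as in the non-private case, with the DP noise entering only through the argument of $\sigma'(\cdot)$. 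Because $\ell'_i<0$, this makes $\Gamma$ and $\bar\Phi$ monotone increasing and $\underline\Phi$ monotone decreasing, and that monotonicity is the backbone of Proposition~4.3 (the uniform $O(\log T^*_p)$ bound on all coefficients throughout training) — the proof partitions time at the first crossing of a threshold and uses monotonicity to lower-bound $\langle\mathbf{w},\boldsymbol{\xi}_i\rangle$ thereafter, so that $|\ell'_i|$ decays exponentially. You instead absorb the DP projections directly into $\gamma_{j,r}^{(t)}$ and $\rho_{j,r,i}^{(t)}$ (and push the orthogonal residue into $\boldsymbol{\eta}_{j,r}^{(t)}$), so your coefficients receive Gaussian increments each step and are no longer monotone. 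In the geometric-growth phase this is cosmetic — the paper itself shifts to $A^{(t)}=\Gamma^{(t)}+\langle\mathbf{w}^{(0)},\mathbf{v}\rangle-\eta\sum_s\langle\mathbf{z}_s,\mathbf{v}\rangle$, which is exactly your $\gamma$ — but it leaves a real gap in your step~(a) of the "coupled induction": you would need to reprove the Proposition~4.3 analogue for coefficients that can decrease, which requires either passing back to monotone variables (effectively recovering the paper's decomposition) or a genuinely different stopping-time argument. Also note two smaller points: since $\{\mathbf{v},\boldsymbol{\xi}_1,\dots,\boldsymbol{\xi}_n\}$ is only approximately orthogonal, "the projection of $\mathbf{z}^{(t)}$ onto $\mathbf{v}/\|\mathbf{v}\|_2$" and "the $\mathbf{v}$-coefficient in the span decomposition" differ by $O(n\sqrt{\log n/d})$ corrections the paper never has to chase because it never projects the DP noise; and your $\sigma_{\mathrm{DP}}\asymp\sqrt{\log(1/\delta)}/(n\varepsilon)$ calibration omits the sensitivity factor $(\|\mathbf{v}\|_2+\|\boldsymbol{\xi}\|_2)/m$ that the paper derives from the good-event bound on per-step gradients, though this only affects the constants carried into the $\widetilde\Omega(1)$ conditions.
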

\cref{thm:informal} demonstrates the two interesting results during private training: $\mathbf{1)}$ When $\min \{\operatorname{SNR} \cdot n \varepsilon, \operatorname{SNR}^q \cdot n\} \geq \widetilde{\Omega}(1)$ and $n^{-1/2} \leq \varepsilon \leq \operatorname{SNR}^{q-1}$, a lower privacy budget requires a higher $\operatorname{SNR}$ compared to standard non-private training to effectively capture the signal, emphasizing the need for feature enhancement to improve $\operatorname{SNR}$. $\mathbf{2)}$ When data noise memorization occurs in standard non-private learning, it will also occur in private learning as long as $\varepsilon\geq \operatorname{SNR}^{1-q}n^{-1}$.


Moreover, under additional data assumptions, we  have the following results:
\begin{corollary}[Informal]\label{coro:informal}
    Let $\operatorname{SNR}:=\|\mathbf{v}\|_2 /\|\bm{\xi}\|_2$ be the signal-to-noise ratio and $\varepsilon$ be the privacy budget. Under appropriate conditions and assumptions, for any $\kappa > 0$, it holds that
    \begin{itemize}[leftmargin=1.5em]
        \item When $\min \{\operatorname{SNR} \cdot n \varepsilon, \operatorname{SNR}^q \cdot n\} \geq \widetilde{\Omega}(1)$, the training loss can converge to $\kappa$, and the trained CNN achieves a test loss of $6 \kappa+\exp ({(n\varepsilon)}^{-1-1/q})$.
        \item When $\min \{\operatorname{SNR}^{-1} \cdot \varepsilon, \operatorname{SNR}^{-q} \cdot n^{-1} \} \geq \widetilde{\Omega}(1)$, the training loss can converge to $\kappa$, but the trained CNN incurs a constant-order test loss regardless of the sample size $n$ and privacy budget $\varepsilon$.
    \end{itemize}
\end{corollary}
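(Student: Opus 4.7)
The plan is to leverage the signal--noise decomposition of the filters established in \cref{thm:informal} and then translate the coefficient growth into bounds on the empirical loss and on the population loss at a fresh sample. Throughout, I would write each filter as $\mathbf{w}_j^{(t)} = \mathbf{w}_j^{(0)} + \gamma_j^{(t)} \mathbf{v} + \sum_{i \in [n]} \rho_{j,i}^{(t)} \boldsymbol{\xi}_i$, so that the logit on any point splits cleanly into a signal-driven piece (through $\gamma_j^{(t)}$) and a memorization-driven piece (through $\rho_{j,i}^{(t)}$). \cref{thm:informal} already pins down which family becomes $\widetilde{\Omega}(1)$ in each SNR regime, which is what I would use as the engine of the argument.

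For the training-loss convergence claim (both bullets), I would argue as follows. In the signal-learning regime, \cref{thm:informal} forces $y_i \gamma_j^{(t)} \|\mathbf{v}\|_2^2$ to diverge; in the memorization regime, $\rho_{j,i}^{(t)} \|\boldsymbol{\xi}_i\|_2^2$ diverges instead. Because the polynomial ReLU raises the active coordinate to power $q$, the per-sample margin $y_i f(\mathbf{W}^{(t)}, \mathbf{x}_i)$ then grows polynomially in $t$. A standard smoothness-plus-descent argument turns this margin growth into loss decay, so choosing $T = \widetilde{\Theta}(\kappa^{-1})$ brings the empirical loss below $\kappa$. The DP Gaussian noise injected at each step has variance of order $1/\varepsilon^{2}$ but contributes only a lower-order perturbation of the relevant coefficients once averaged over $T$ iterations, so the bound is preserved.

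For the population-loss bound in the signal-learning bullet, I would expand the fresh logit as a sum of $\sigma(\gamma_j^{(T)} \|\mathbf{v}\|_2^{2})$ terms plus $\sigma(\sum_i \rho_{j,i}^{(T)} \langle \boldsymbol{\xi}_i, \boldsymbol{\xi}\rangle)$ terms. The signal sum is of the same scale that drove Step~1, while the cross inner products $\langle \boldsymbol{\xi}_i, \boldsymbol{\xi}\rangle$ are of order $\sigma_\xi^{2} \sqrt{d}$ by Gaussian concentration, making the memorization piece negligible on a fresh point. Converting the margin to test loss via the logistic smoothness inequality yields a $6\kappa$ bound plus a term arising from the DP noise accumulated in $\rho_{j,i}^{(T)}$; balancing the $O(\sqrt{T}/\varepsilon)$ DP accumulation against the $T^{1/q}$ scale of the deterministic signal growth produces the additive $\exp((n\varepsilon)^{-1-1/q})$ overhead. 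For the memorization bullet, the reasoning is reversed: $\gamma_j^{(T)}$ stays below the signal-learning threshold from \cref{thm:informal}, so on a fresh $(\mathbf{x}, y)$ neither the signal piece nor the (nearly orthogonal) memorization piece has the correct sign with probability bounded away from zero, forcing $\mathbb{E}[L(\mathbf{W}^{(T)})] = \Theta(1)$ uniformly in $n$ and $\varepsilon$.

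The main obstacle will be controlling the DP noise contribution in the signal-learning bullet with the precise $(n\varepsilon)^{-1-1/q}$ exponent: this requires propagating the per-step Gaussian injections through the coefficient update, then carefully coupling the iteration count $T$ chosen for training-loss convergence with the activation exponent $q$, since the $1+1/q$ exponent arises exactly from this balancing. I would also need to verify that the same accumulated DP noise is not large enough to push $\gamma_j^{(T)}$ past the signal-learning threshold in the memorization regime, as otherwise the constant-order lower bound on the test loss could degrade as $\varepsilon$ grows.
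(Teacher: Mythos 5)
Your proposal is in the right spirit (signal--noise decomposition, tracking coefficient growth, reducing the test loss to a fresh-sample margin argument), but there are two substantive gaps. First, your filter decomposition $\mathbf{w}_j^{(t)} = \mathbf{w}_j^{(0)} + \gamma_j^{(t)}\mathbf{v} + \sum_i \rho_{j,i}^{(t)}\boldsymbol{\xi}_i$ has no home for the injected DP noise: $\mathbf{z}_s$ is a fresh $d$-dimensional Gaussian, generically outside the span of $\{\mathbf{v}, \boldsymbol{\xi}_1,\dots,\boldsymbol{\xi}_n\}$, so it cannot be absorbed into $\gamma$ or $\rho$. The paper keeps an explicit fourth summand $-\eta\sum_{s\le t}\mathbf{z}_s$ in the decomposition (Definition~\ref{def:decom_coefficient_main}), and every downstream bound (including the crucial inner products $\langle\mathbf{w}_{j,r}^{(t)},\mathbf{v}\rangle$ and $\langle\mathbf{w}_{j,r}^{(t)},\boldsymbol{\xi}_i\rangle$ via Lemma~\ref{lem:xi_zt2_main}) must separately control this accumulated term; without it you cannot justify that ``the DP noise contributes only a lower-order perturbation of the relevant coefficients,'' which is exactly what Proposition~\ref{pro:coeff_main} and the restriction $t \le T_p^*$ are there to establish. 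Your plan also implicitly claims the margin grows polynomially in $t$; the first-stage growth of $\Gamma$ (or $\bar{\Phi}$) in the paper is actually exponential until the coefficients hit $\Theta(1)$, and convergence of the training loss is then proved by a reference-point ($\mathbf{W}^*$) descent inequality (Lemma~\ref{lem:signal_wt-w*-}) rather than by a direct margin-growth argument.

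Second, and more importantly, your explanation of where the $\exp((n\varepsilon)^{-1-1/q})$ term comes from is not what makes the paper's proof work. You propose to ``balance the $O(\sqrt{T}/\varepsilon)$ DP accumulation against the $T^{1/q}$ scale of the deterministic signal growth.'' In the paper's Corollary~\ref{coro:popu_sign}, the additive term instead arises from partitioning the test loss over a good event $\mathcal{K}=\{\max_{j,r}|\langle\mathbf{w}_{j,r}^{(t)},\boldsymbol{\xi}\rangle|\le 1/2\}$, bounding $I_1\le 6\kappa$ on $\mathcal{K}$ (Lemma D.8 of \cite{cao2022benign}), and bounding $I_2$ on $\mathcal{K}^c$ by a Gaussian tail $\mathbb{P}(\mathcal{K}^c)\le 4mT\exp(-\widetilde{\Omega}(\sigma_0^{-2}\sigma_\xi^{-2}d^{-1}))$, which is then converted to $\exp((n\varepsilon)^{-1-1/q})$ only because Condition~\ref{con:signal_learning} imposes $\sigma_0\le \widetilde{O}((n\varepsilon)^{-1-1/q}\|\boldsymbol{\xi}\|_2^{-1})$. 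The exponent $1+1/q$ is thus inherited from the assumed initialization scale, not from any coupling of the iteration count with the DP noise rate; your route as stated would not reproduce it. Your noise-memorization lower bound also needs sharpening: the paper does not argue about the sign of the logit on a fresh point, but rather shows both $F_{+1}(\mathbf{W}_{+1}^{(t)},\mathbf{x})$ and $F_{-1}(\mathbf{W}_{-1}^{(t)},\mathbf{x})$ are $O(1)$ with constant probability, so the logistic loss is bounded below by $\log(1+e^{-1})$, yielding $L_{\mathcal{D}}\ge 0.1$.
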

\cref{coro:informal} demonstrate two conclusions. First, in an ideal scenario, private learning can achieve an arbitrarily small training loss, and its test performance is influenced by both the sample size $n$ and privacy budgets $\varepsilon$. Second, even if private learning achieves a small training loss, it may still fail to deliver good test performance, regardless of how the sample size and privacy budget are chosen. This limitation arises because the private model primarily learns label-independent data noise rather than label-dependent feature signals. We summarize our contributions below:
\begin{itemize}[leftmargin=1.5em]
    \item We present the first theoretical framework of the DP-SGD dynamic through a feature learning perspective. Building on the multi-patch data structure introduced in \cite{allen2020towards, cao2022benign}, our framework explicitly distinguishes between label-dependent features and label-independent noise—a critical aspect overlooked by existing analyses of DP-SGD. This distinction enables us to introduce the signal-to-noise ratio (SNR) as an essential factor influencing the private training process.
    \item We provide a detailed theoretical analysis of feature signal learning and data noise memorization in the private setting with a two-layer convolutional neural network model. Specifically, based on the signal-to-noise ratio, we show that $\mathbf{1)}$ Effective private signal learning requires a higher signal-to-noise ratio compared to non-private training.
    $\mathbf{2)}$ When data noise memorization occurs in standard non-private learning, it will also occur in private learning as long as $\varepsilon\geq \operatorname{SNR}^{1-q}n^{-1}$. Consequently, the private model fails to generalize well, even when achieving a small training loss, regardless of sample size and privacy budget.
    \item Our findings underscore the importance of feature enhancement techniques in improving SNR for effective private learning, aligning with the principles established in previous empirical work \cite{tramer2020differentially,sun2023importance,bao2023dp,tang2024differentially}. We conduct simulation experiments on CNNs and validate our theoretical analysis across various privacy budgets and signal-to-noise ratios. 
\end{itemize}
Due to space constraints, we defer additional related work, extended experimental results, and the conclusion to the Appendix.

\section{Related Work}
\label{sec:related_work}


\noindent {\bf Theory on Differentially Private Learning}
A growing body of work has focused on the differential private optimization problems, including standard results for private empirical risk minimization \cite{chaudhuri2011differentially,bassily2014private,wang2017differentially,wang2019differentially} and private stochastic convex optimization \cite{bassily2019private,feldman2020private,bassily2021differentially}. These studies have also been extended under various assumptions, such as heavy-tailed data \cite{wang2020differentially, hu2022high, kamath2022improved} and non-Euclidean spaces \cite{bassily2021differentially, asi2021private,su2023differentially}. 
Despite extensive research on DP optimization theory, the theoretical understanding of private deep learning remains largely unexplored, particularly from the feature perspective. Only two recent studies have explored the theoretical aspects of features in private learning. Specifically, \cite{sun2023importance} focuses on a linear classification model, whereas we analyze a more challenging two-layer neural network model with a polynomial ReLU activation function. \cite{wang2024neural} considers a last-layer model converging to the columns of an equiangular tight frame (ETF), which simplifies the learned features to normal vectors via a rotation map and the model is still in a linear form. In contrast, our approach goes beyond this simplification. Moreover, while \cite{wang2024neural} emphasizes feature shift behavior, our work primarily focuses on training dynamics and the importance of feature enhancement.

\textbf{Feature Learning Theory} Feature learning has emerged as a powerful framework for the theoretical understanding of deep learning in recent years. It was first introduced by \cite{allen2022feature} to explain the advantages of adversarial training in achieving robustness and was then extended by \cite{allen2020towards} to demonstrate the role of ensemble methods in improving generalization. Then, the framework has been applied to a broad range of model architectures, including graph neural networks \cite{huang2023graph}, convolutional neural networks \cite{cao2022benign,kou2023benign}, vision transformers \cite{jelassi2022vision,li2023theoretical}, and diffusion models \cite{han2024feature}. In addition to architectural studies, the feature learning perspective has also been employed to analyze the behavior of various optimization algorithms and training strategies, such as Adam \cite{zou2023benefits}, momentum-based methods \cite{jelassi2022towards}, and data augmentation techniques like Mixup \cite{zou2023benefits}. To the best of our knowledge, this work is the first to examine private learning through the lens of feature learning. Unlike standard learning, private learning introduces additional challenges—most notably, the injection of private noise can destabilize training, necessitating a careful analysis of the learning dynamics.

\section{Preliminaries}
\label{sec:preliminaries}
In this section, we introduce the necessary definitions and formally describe private learning under the multi-patch data distribution and the convolutional neural network (CNN). Our analysis focuses on binary classification, and the data distribution is defined as follows.



\begin{definition}[Data Distribution]\label{def:data_dis}
Let $\mathbf{v} \in \mathbb{R}^d$ be a fixed vector representing the feature signal contained in each data point. Each data point $(\mathbf{x},y)$ with input $\mathbf{x} = [\mathbf{x}_1, \mathbf{x}_2] \in (\mathbb{R}^d)^2$ and label $y \in \{+1, -1\}$ is generated from the following distribution:
\begin{itemize}
    \item[(1)] The label $y$ is generated as a Rademacher random variable;
    \item[(2)] The input $\mathbf{x}$ is generated as a vector of $2$ patches, i.e., $\mathbf{x} = [\mathbf{x}_1, \mathbf{x}_2] \in (\mathbb{R}^d)^2$. The first patch is given by $\mathbf{x}_1= y \cdot \mathbf{v}$ and the second patch is given by $\mathbf{x}_2=$ $\bm{\xi}$, where $\bm{\xi} \sim \mathcal{N} (0, \sigma_{\xi}^2 \cdot \mathbf{H} )$ and is independent of the label $y$, where $\mathbf{H}=(\mathbf{I}-\mathbf{v} \mathbf{v}^{\top} \cdot\|\mathbf{v}\|_2^{-2})$.
    \vspace{-0.05in}
\end{itemize}
\end{definition}
Note that here $\mathbf{H}$ is designed to ensure that $\mathbf{v}$ is orthogonal to $\bm{\xi}$, i.e., the data noise is unrelated to the feature. 
Our data generation model is inspired by the structure of image data, which has been widely utilized in the feature learning area \cite{allen2020towards,cao2022benign,jelassi2022towards,kou2023benign,zou2023benefits}. Notably, we introduce a term, data noise $\bm{\xi}$, into the data distribution, which is often overlooked in analyses within the differential privacy community. However, this seemingly `negligible' component significantly influences the model's generalization ability, as underscored by the signal-to-noise ratio. 

\textbf{Learner Model.} We consider a two-layer convolutional neural network (CNN) that processes input data by applying convolutional filters to two patches, $\mathbf{x}_1$ and $\mathbf{x}_2$, separately. The second-layer parameters of this network are fixed as $+1 / m$ and $-1 / m$, respectively, leading to the following network representation:
\vspace{-0.05in}
\begin{equation*}
    f(\mathbf{W}, \mathbf{x})=F_{+1}(\mathbf{W}_{+1}, \mathbf{x})-F_{-1}(\mathbf{W}_{-1}, \mathbf{x}),
\end{equation*}
where $F_{+1}(\mathbf{W}_{+1}, \mathbf{x})$ and $F_{-1}(\mathbf{W}_{-1}, \mathbf{x})$ are defined as:
\begin{equation}
    F_j(\mathbf{W}_j, \mathbf{x})=\frac{1}{m} \sum_{r=1}^m[\sigma(\langle\mathbf{w}_{j, r}, \mathbf{x}_1\rangle)+\sigma(\langle\mathbf{w}_{j, r}, \mathbf{x}_2\rangle)],
    \vspace{-0.05in}
\end{equation}
for $j \in\{ \pm 1\}$, where $m$ denotes the number of convolutional filters in each of $F_{+1}$ and $F_{-1}$. Here, $ \sigma(z)=(\max \{0, z\})^q$ represents the polynomial $\operatorname{ReLU}$ activation function with $q > 2$, $\mathbf{W}_j$ denotes the set of model weights associated with $F_j$, corresponding to the positive or negative filters. Each weight vector $\mathbf{w}_{j, r} \in \mathbb{R}^d$ is the parameters of the $r$-th neuron/filter in $\mathbf{W}_j$. We use $\mathbf{W}$ to represent the complete set of model weights across all filters. 

\textbf{Differential Private Learning.} Given a training dataset $D=\{(\mathbf{x}_i, y_i)\}_{i=1}^n$, sampled from a joint distribution $\mathcal{D}$ over $\mathbf{x} \times y$, the goal is to train the learner model by minimizing the following empirical risk, measured by logistic loss, while simultaneously preserving privacy:
\begin{equation}
\label{eq:L_D}
\vspace{-0.05in}
    L_D(\mathbf{W})=\frac{1}{n} \sum_{i=1}^n \ell[y_i \cdot f(\mathbf{W}, \mathbf{x}_i)],
\end{equation}
where $\ell(z)=\log (1+\exp (-z))$. More formally, the trained private model $\mathbf{W}$ should satisfy the mathematical definition of differential privacy as follows:
\begin{definition}[\cite{dwork2006calibrating}]
    Two datasets $D,D'$ are neighbors if they differ by only one element, which is denoted as $D \sim D'$. A randomized algorithm $\mathcal{A}$ is $(\varepsilon,\delta)$-differentially private (DP) if for all adjacent datasets $D,D'$ and for all events $S$ in the output space of $\mathcal{A}$, we have 
	$\mathbb{P}(\mathcal{A}(D)\in S)\leq e^{\varepsilon} \cdot \mathbb{P}(\mathcal{A}(D')\in S)+\delta.$
\end{definition}

\begin{definition}[Gaussian Mechanism \cite{dwork2010differential}]
    For a function $f: \mathcal{X}^n \mapsto \mathbb{R}^d$ with $L_2$-sensitivity $\Delta_2(f) = \max_{D, D'} \|f(D) - f(D')\|_2$, where $D$ and $D'$ are neighboring datasets, the Gaussian Mechanism outputs $f(D) + \mathbf{z}$. Here, $\mathbf{z} \sim \mathcal{N}(0, \sigma_z^2 \mathbb{I}_d)$ is Gaussian noise with scale $\sigma_z \geq \frac{\Delta_2(f) \sqrt{2 \ln (1.25 / \delta)}}{\varepsilon}.$
    This mechanism satisfies $(\varepsilon, \delta)$-differential privacy.
\end{definition}


\textbf{Noisy Gradient Descent.} Noisy Gradient Descent (NoisyGD) and its stochastic counterpart, Noisy Stochastic Gradient Descent \cite{song2013stochastic,abadi2016deep}, are fundamental algorithms in differentially private deep learning. In this paper, we apply the NoisyGD algorithm to optimize \cref{eq:L_D} and to update the filters in the CNN with the Gaussian mechanism.\footnote{Note that, similar to previous studies on the theory of DP-SGD,  we assume there is no clipping on gradients.} Specifically, 
\begin{align} \label{eq:dpsgd}
    \mathbf{w}_{j, r}^{(t+1)}&=\mathbf{w}_{j, r}^{(t)}-\eta \cdot ( \nabla_{\mathbf{w}_{j, r}} L_D(\bm{W}^{(t)}) + \mathbf{z}_t  )\\ \nonumber 
    & =  \mathbf{w}_{j, r}^{(t)}-\frac{\eta}{n m} \sum_{i=1}^n \ell_i^{\prime(t)} \cdot \sigma^{\prime}(\langle\mathbf{w}_{j, r}^{(t)}, \bm{\xi}_i\rangle) \cdot j y_i \bm{\xi}_i  
     -\frac{\eta}{n m} \sum_{i=1}^n \ell_i^{\prime (t)} \cdot \sigma^{\prime}(\langle\mathbf{w}_{j, r}^{(t)}, {y}_i \mathbf{v}\rangle) \cdot j \mathbf{v}  -\eta \mathbf{z}_t.  
\end{align}
where $\mathbf{z}_t$ is the private noise sampled from $\mathcal{N}(0, \sigma_z^2 \mathbb{I}_d)$, $\ell_i^{\prime(t)}$ is a shorthand notation of $\ell^{\prime}[y_i \cdot f(\mathbf{W}^{(t)}, \mathbf{x}_i)]$.
We assume that the noisy gradient descent algorithm starts from a Gaussian initialization, where each element of $\mathbf{W}_{+1}$ and $\mathbf{W}_{-1}$ is drawn from a Gaussian distribution $N(0, \sigma_0^2)$ with $\sigma_0^2$ representing the variance. 
\vspace{-0.1in}
\section{Main Results}
\vspace{-0.1in}
In this section, we present our main theoretical results, demonstrating how the signal-noise-decomposition \cite{cao2022benign,jelassi2022towards,kou2023benign,zou2023benefits} behaves during private learning using noisy gradient descent. It is clear that \cref{eq:dpsgd} can be represented as a linear combination of random initialization, the signal feature, the data noise, and the accumulation of private noise, which can be formulated as the following definition.

\begin{definition}\label{def:decom_coefficient_main}
    Let $\mathbf{w}_{j, r}^{(t)}$ for $j \in\{ \pm 1\}, r \in[m]$ be the convolution filters of the CNN at the $t$-th iteration of noisy gradient descent. Then there exist unique coefficients $\Gamma_{j, r}^{(t)} \geq 0$ and $\Phi_{j, r, i}^{(t)}$ such that
    \begin{equation*}
        \mathbf{w}_{j, r}^{(t)}=\mathbf{w}_{j, r}^{(0)}+j \cdot \Gamma_{j, r}^{(t)} \cdot \frac{\mathbf{v}}{\|\mathbf{v}\|_2^{2}} +\sum_{i=1}^n \Phi_{j, r, i}^{(t)} \cdot \frac{\bm{\xi}_i}{\|\bm{\xi}_i\|_2^{2} } - \eta \sum_{s=1}^{t} \mathbf{z}_s. 
    \end{equation*}
    We further denote $\bar{\Phi}_{j, r, i}^{(t)}:=\Phi_{j, r, i}^{(t)} \mathds{1}(\Phi_{j, r, i}^{(t)} \geq 0), \underline{\Phi}_{j, r, i}^{(t)}:=\Phi_{j, r, i}^{(t)} \mathds{1}(\underline{\Phi}_{j, r, i}^{(t)} \leq 0)$. Then, we have that
    \begin{equation}\label{eq:w_t_decom_main_2}
        \mathbf{w}_{j, r}^{(t)}=\mathbf{w}_{j, r}^{(0)}+j \cdot \Gamma_{j, r}^{(t)} \cdot\|\mathbf{v}\|_2^{-2} \cdot \mathbf{v}+\sum_{i=1}^n \bar{\Phi}_{j, r, i}^{(t)} \cdot\|\bm{\xi}_i\|_2^{-2} \cdot \bm{\xi}_i 
        +\sum_{i=1}^n \underline{\Phi}_{j, r, i}^{(t)} \cdot\|\bm{\xi}_i\|_2^{-2} \cdot \bm{\xi}_i - \eta \sum_{s=1}^{t} \mathbf{z}_s. 
    \end{equation}
    \vspace{-0.1in}
\end{definition}
In the decomposition of \cref{eq:w_t_decom_main_2}, $\Gamma_{j, r}^{(t)}$ represents the extent to which the model learns the feature signal from data, whereas $\bar{\Phi}_{j, r, i}^{(t)}$ quantifies the degree of data noise memorization by the model. Both components are influenced by the interplay of private noise, as shown in the following lemma.
\begin{lemma}\label{lem:the_evolution_of_coefficient_main}
    The coefficients $\Gamma_{j, r}^{(t)}, \bar{\Phi}_{j, r, i}^{(t)}, \underline{\Phi}_{j, r, i}^{(t)}$ in \cref{def:decom_coefficient_main} satisfy the following equations:
    \begin{align*}
    & \Gamma_{j, r}^{(0)}, \bar{\Phi}_{j, r, i}^{(0)}, \underline{\Phi}_{j, r, i}^{(0)}=0 \\
    &\Gamma_{j, r}^{(t+1)}=\Gamma_{j, r}^{(t)}-\frac{\eta}{n m} \cdot \sum_{i=1}^n \ell_i^{\prime (t)} \cdot \sigma^{\prime}(\langle\mathbf{w}_{j, r}^{(t)}, y_i \cdot \mathbf{v}\rangle) \cdot\|\mathbf{v}\|_2^2\\
    & \bar{\Phi}_{j, r, i}^{(t+1)}=\bar{\Phi}_{j, r, i}^{(t)}-\frac{\eta}{n m} \cdot \ell_i^{\prime (t)} \cdot \sigma^{\prime}(\langle\mathbf{w}_{j, r}^{(t)}, \bm{\xi}_i\rangle) \cdot\|\bm{\xi}_i\|_2^2 \cdot \mathds{1}(y_i=j), \\
    & \underline{\Phi}_{j, r, i}^{(t+1)}=\underline{\Phi}_{j, r, i}^{(t)}+\frac{\eta}{n m} \cdot \ell_i^{\prime (t)} \cdot \sigma^{\prime}(\langle\mathbf{w}_{j, r}^{(t)}, \bm{\xi}_i\rangle) \cdot\|\bm{\xi}_i\|_2^2 \cdot \mathds{1}(y_i=-j)
    \end{align*}
\end{lemma}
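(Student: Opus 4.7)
The plan is to prove the lemma by induction on $t$, exploiting the fact that the signal direction $\mathbf{v}$ and the noise directions $\bm{\xi}_i$ live in mutually orthogonal subspaces, so that the decomposition in \cref{eq:w_t_decom_main_2} is unique. First I would verify that $\mathbf{v}, \bm{\xi}_1, \ldots, \bm{\xi}_n$ are almost surely linearly independent: each $\bm{\xi}_i$ lies in the $(d-1)$-dimensional orthogonal complement of $\mathbf{v}$ (since $\mathbf{H}\mathbf{v}=\mathbf{0}$) and is drawn from a nondegenerate Gaussian on that subspace, so when $d$ is sufficiently large the $n+1$ vectors span an $(n+1)$-dimensional subspace with probability one. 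This guarantees that the coefficients $\Gamma_{j,r}^{(t)}$ and $\Phi_{j,r,i}^{(t)}$ are uniquely determined by $\mathbf{w}_{j,r}^{(t)}$. The base case $t=0$ is then immediate, since $\mathbf{w}_{j,r}^{(0)}$ already constitutes the first term of the decomposition while the accumulated-gradient and accumulated-noise contributions all vanish.

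For the inductive step, I would substitute the time-$t$ decomposition into the noisy gradient descent recursion \cref{eq:dpsgd} and read off coefficients along each orthogonal component. The privacy term $-\eta\mathbf{z}_t$ simply advances the running sum from $-\eta\sum_{s=1}^{t}\mathbf{z}_s$ to $-\eta\sum_{s=1}^{t+1}\mathbf{z}_s$, leaving $\Gamma$ and $\Phi$ untouched. The signal-gradient term $-\frac{\eta}{nm}\sum_i \ell_i'^{(t)}\sigma'(\langle\mathbf{w}_{j,r}^{(t)}, y_i\mathbf{v}\rangle)\,j\mathbf{v}$ is proportional to $j\mathbf{v}$, and rewriting it as $j\cdot[\cdot]\cdot\|\mathbf{v}\|_2^{-2}\mathbf{v}$ yields the stated recursion for $\Gamma_{j,r}^{(t+1)}$. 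Likewise, the noise-gradient term $-\frac{\eta}{nm}\sum_i \ell_i'^{(t)}\sigma'(\langle\mathbf{w}_{j,r}^{(t)}, \bm{\xi}_i\rangle)\,jy_i\bm{\xi}_i$ contributes only to the $\bm{\xi}_i$ coefficients and, after multiplication by $\|\bm{\xi}_i\|_2^2$, gives the combined increment $\Phi_{j,r,i}^{(t+1)} - \Phi_{j,r,i}^{(t)} = -\frac{\eta}{nm}\ell_i'^{(t)}\sigma'(\langle\mathbf{w}_{j,r}^{(t)}, \bm{\xi}_i\rangle)\|\bm{\xi}_i\|_2^2\,jy_i$.

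To obtain the split into $\bar{\Phi}$ and $\underline{\Phi}$, I would exploit the sign structure of this increment: $\ell_i'^{(t)} < 0$ for the logistic loss and $\sigma'(\cdot) \ge 0$ for the polynomial ReLU activation, so its sign is determined entirely by $jy_i \in \{\pm 1\}$. When $y_i = j$ every increment is nonnegative, so by induction $\Phi_{j,r,i}^{(t)} \ge 0$ and coincides with $\bar{\Phi}_{j,r,i}^{(t)}$; when $y_i = -j$ every increment is nonpositive, so $\Phi_{j,r,i}^{(t)} \le 0$ and coincides with $\underline{\Phi}_{j,r,i}^{(t)}$. The indicators $\mathds{1}(y_i = j)$ and $\mathds{1}(y_i = -j)$ therefore cleanly partition the recursion into the two displayed formulas. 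The main subtlety I anticipate is precisely this sign-preservation step: one must confirm that the sign of $\Phi_{j,r,i}^{(t)}$ never flips during training, which is what makes the hard-thresholded $\bar{\Phi}/\underline{\Phi}$ decomposition evolve smoothly. Crucially, this survives the DP setting because the injected Gaussian noise $\mathbf{z}_t$ does not appear in the $\Phi$-update itself — it only enters through the nonnegative factor $\sigma'(\langle \mathbf{w}_{j,r}^{(t)}, \bm{\xi}_i\rangle)$, which can modulate the magnitude of each step but cannot alter its sign.
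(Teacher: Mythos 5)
Your proposal is correct and follows essentially the same route as the paper's proof: induction on $t$, substituting the decomposition into the NoisyGD update and matching coefficients along $\mathbf{v}$ and each $\bm{\xi}_i$ by linear independence, then using $\ell_i'^{(t)}<0$ and $\sigma'\ge 0$ so that $jy_i$ fixes the sign of each $\Phi$-increment and hence the split into $\bar\Phi$/$\underline\Phi$. The only cosmetic difference is that you argue the sign-preservation step incrementally while the paper writes out the closed-form telescoping sum for $\Phi_{j,r,i}^{(t)}$ and applies the indicator directly, which are equivalent.
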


\cref{lem:the_evolution_of_coefficient_main} reveals that the process of private learning can be explored by the iterative dynamics of $\Gamma_{j, r}^{(t)}$, $\bar{\Phi}_{j, r, i}^{(t)}$ and $\underline{\Phi}_{j, r, i}^{(t)}$. It is noticed that private noise only influences the interior of  $\sigma^{\prime}(\cdot)$. Since $\ell_i^{\prime (t)} < 0$, \cref{lem:the_evolution_of_coefficient_main} provides favorable properties for the dynamics of $\Gamma_{j, r}^{(t)}$, $\bar{\Phi}_{j, r, i}^{(t)}$: $\Gamma_{j, r}^{(t)}$ and $\bar{\Phi}_{j, r, i}^{(t)}$ increase monotonically, while $\underline{\Phi}_{j, r, i}^{(t)}$ decreases monotonically. Then, we could demonstrate that these coefficients remain bounded throughout the private training process.

\begin{proposition}\label{pro:coeff_main}
Under certain conditions, let $T^*=\eta^{-1} \operatorname{poly}(\kappa^{-1},\|\mathbf{v}\|_2^{-1}, d^{-1} \sigma_{\xi}^{-2}, \sigma_0^{-1}, n, m, d)$ and $T_p^* = \min \{ T^*, \eta^{-1} {C mn \varepsilon \sigma_0 \mu^{-1} (\|\mathbf{v}\|_2 + \|\bm{\xi}\|_2)^{-1}}
\},$ where $C = 4\log(T^*) = \widetilde{O}(1)$ and $\mu = \max \{1, \| \mathbf{v}\|_2, \|\bm{\xi}\|_2\}$.
Then, with at least probability $1-1/d$, it holds that, for $t \leq  T_p^*$ :
\begin{itemize}
    \item $0 \leq \Gamma_{j, r}^{(t)}, \bar{\Phi}_{j, r, i}^{(t)} \leq 4 \log (T_p^*)$ for all $j \in\{ \pm 1\}, r \in[m]$ and $i \in[n]$.
    \item $0 \geq \underline{\Phi}_{j, r, i}^{(t)} \geq-2 \max _{i, j, r}\{|\langle\mathbf{w}_{j, r}^{(0)}, \mathbf{v}\rangle|,|\langle\mathbf{w}_{j, r}^{(0)}, \bm{\xi}_i\rangle|\}-16 n \sqrt{\frac{\log (4 n^2 / \delta)}{d}} -0.2 \geq - 4 \log (T_p^*)$ for all $j \in\{ \pm 1\}$, $r \in[m]$ and $i \in[n]$.
\end{itemize}
\end{proposition}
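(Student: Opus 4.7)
The plan is to prove all three bounds by a simultaneous induction on $t$, combining the coupled recursions of \cref{lem:the_evolution_of_coefficient_main} with a careful probabilistic control of the accumulated private noise. Write $\mathbf{N}_t := -\eta\sum_{s=1}^{t}\mathbf{z}_s$ for the total private noise injected into the weights by iteration $t$, so that \cref{def:decom_coefficient_main} expresses $\mathbf{w}_{j,r}^{(t)}$ as initialization $+$ signal part $+$ data-noise-memorization part $+\,\mathbf{N}_t$. The base case $t=0$ is trivial since all coefficients vanish; the entire work lies in the inductive step.

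First I would fix a high-probability good event $\mathcal{E}$ on the randomness of the initialization, the data-noise vectors $\{\bm{\xi}_i\}$, and the private-noise sequence $\{\mathbf{z}_s\}_{s\le T_p^*}$. Standard Gaussian concentration plus a union bound guarantees, with probability at least $1-1/d$, that (i) $|\langle\mathbf{w}_{j,r}^{(0)},\mathbf{v}\rangle|\le\widetilde O(\sigma_0\|\mathbf{v}\|_2)$ and $|\langle\mathbf{w}_{j,r}^{(0)},\bm{\xi}_i\rangle|\le\widetilde O(\sigma_0\|\bm{\xi}_i\|_2)$, (ii) $\|\bm{\xi}_i\|_2^2=\Theta(\sigma_\xi^2 d)$ and $|\langle\bm{\xi}_i,\bm{\xi}_{i'}\rangle|=\widetilde O(\sigma_\xi^2\sqrt d)$ for $i\neq i'$, and (iii) uniformly in $t\le T_p^*$, $|\langle\mathbf{N}_t,\mathbf{v}\rangle|\le\widetilde O(\eta\sqrt{T_p^*}\,\sigma_z\|\mathbf{v}\|_2)$ and $|\langle\mathbf{N}_t,\bm{\xi}_i\rangle|\le\widetilde O(\eta\sqrt{T_p^*}\,\sigma_z\|\bm{\xi}_i\|_2)$. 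Since the Gaussian mechanism forces $\sigma_z=\Theta(\mu/(nm\varepsilon))$ using the per-iteration gradient sensitivity $O(\mu/(nm))$, the choice $\eta T_p^*\le C mn\varepsilon\sigma_0\,\mu^{-1}(\|\mathbf{v}\|_2+\|\bm{\xi}\|_2)^{-1}$ with $C=\widetilde O(1)$ is calibrated precisely so that the private-noise contributions in (iii) stay below $0.1$ throughout the horizon.

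Given $\mathcal{E}$, the upper bounds for $\Gamma_{j,r}^{(t+1)}$ and $\bar{\Phi}_{j,r,i}^{(t+1)}$ follow from a two-phase argument adapted from the non-private feature-learning literature. In the first phase, all coefficients are still $\widetilde O(1)$; expanding each pre-activation $\langle\mathbf{w}_{j,r}^{(t)},y_i\mathbf{v}\rangle$ or $\langle\mathbf{w}_{j,r}^{(t)},\bm{\xi}_i\rangle$ via the decomposition and invoking the induction hypothesis together with $\mathcal{E}$ yields $\sigma'(\cdot)=\widetilde O(1)$, which combined with $|\ell_i'^{(t)}|\le 1$ and the prefactors $\eta\|\mathbf{v}\|_2^2/(nm)$ or $\eta\|\bm{\xi}_i\|_2^2/(nm)$ gives a per-step increment of order $1/T_p^*$, so that summing over $t\le T_p^*$ keeps the coefficients bounded by a constant times $\log T_p^*$. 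In the second phase, once any coefficient approaches $4\log T_p^*$, the relevant summand of $y_i f(\mathbf{W}^{(t)},\mathbf{x}_i)$ grows to order $(\log T_p^*)^q/m$, forcing the logistic tail $|\ell_i'^{(t)}|$ to decay fast enough that the increment shrinks below $1/T_p^*$; this self-limiting behavior prevents any coefficient from crossing the threshold before $T_p^*$.

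The lower bound on $\underline{\Phi}_{j,r,i}^{(t+1)}$ is where I expect the main obstacle, since it is most sensitive to the privacy noise. By the recursion, $\underline{\Phi}_{j,r,i}^{(t)}$ only decreases when $\sigma'(\langle\mathbf{w}_{j,r}^{(t)},\bm{\xi}_i\rangle)>0$ and $y_i=-j$; expanding this pre-activation contributes (a) $\langle\mathbf{w}_{j,r}^{(0)},\bm{\xi}_i\rangle$, (b) the signal cross term $j\Gamma_{j,r}^{(t)}\langle\mathbf{v},\bm{\xi}_i\rangle/\|\mathbf{v}\|_2^2=0$ by the orthogonality built into $\mathbf{H}$, (c) off-diagonal memorization terms $\sum_{i'\neq i}\Phi_{j,r,i'}^{(t)}\langle\bm{\xi}_{i'},\bm{\xi}_i\rangle/\|\bm{\xi}_{i'}\|_2^2$, bounded under $\mathcal{E}$ and the induction hypothesis by $\widetilde O(n\sqrt{\log(n^2/\delta)/d})\cdot 4\log T_p^*$, (d) the diagonal self-term $\underline{\Phi}_{j,r,i}^{(t)}$ itself, and (e) the private-noise term $\langle\mathbf{N}_t,\bm{\xi}_i\rangle$, which the choice of $T_p^*$ keeps at most $0.1$. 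Summing (a), (c), and (e) reproduces exactly the three contributions $2\max_{i,j,r}\{|\langle\mathbf{w}_{j,r}^{(0)},\mathbf{v}\rangle|,|\langle\mathbf{w}_{j,r}^{(0)},\bm{\xi}_i\rangle|\}+16n\sqrt{\log(4n^2/\delta)/d}+0.2$ in the stated lower bound, where the $0.2$ absorbs the privacy contribution. Once $\underline{\Phi}_{j,r,i}^{(t)}$ reaches the threshold, the pre-activation can no longer be positive, freezing the recursion and closing the induction. The hardest quantitative step is controlling $\max_{t\le T_p^*}|\langle\mathbf{N}_t,\bm{\xi}_i\rangle|$ uniformly over $i\in[n]$, which requires a Doob-type maximal inequality for the Gaussian random walk $\mathbf{N}_t$ projected onto the fixed direction $\bm{\xi}_i$ together with a union bound over $i$, and it is precisely this step that dictates the form of the DP stopping time $T_p^*$.
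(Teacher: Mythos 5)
Your overall architecture matches the paper's: a simultaneous induction on $t$, a good event $\mathcal{E}$ bounding the initialization, the $\bm{\xi}_i$'s, and the accumulated private noise (with $T_p^*$ calibrated so that $|\langle\mathbf{N}_t,\bm{\xi}_i\rangle|$ and $|\langle\mathbf{N}_t,\mathbf{v}\rangle|$ stay below a small constant), a self‑limiting tail argument for the upper bound on $\Gamma,\bar\Phi$, and a freezing argument for the lower bound on $\underline\Phi$. Your identification of contributions (a)--(e), the orthogonality killing (b), and the match to the three-term lower bound are all essentially what the paper does in \cref{lem:w_t-w_0_xi} and the two-case analysis of $\underline\Phi_{j,r,i}$.

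However, the quantitative claim in your ``phase 1'' is wrong and would break the argument. You assert that, while coefficients are $\widetilde O(1)$, the per-step increment is ``of order $1/T_p^*$,'' from the factors $|\ell_i'|\le 1$, $\sigma'(\cdot)=\widetilde O(1)$, and $\eta\|\bm{\xi}_i\|_2^2/(nm)$. But under the paper's learning-rate condition $\eta=O\bigl(nm/(q2^{q+2}\alpha^{q-2}\sigma_\xi^2 d)\bigr)$ and $\|\bm{\xi}_i\|_2^2=\Theta(\sigma_\xi^2 d)$, the prefactor $\eta\|\bm{\xi}_i\|_2^2/(nm)$ is $\widetilde O(1)$, not $O(1/T_p^*)$; the per-step increment is of order $\alpha$ (the paper bounds the single crossing step, Term~1, by $0.25\alpha$, not by $1/T_p^*$). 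Naively summing $\widetilde O(1)$ increments over $T_p^*$ steps overshoots the bound by a factor $T_p^*$. Moreover, if increments really were $O(1/T_p^*)$, your phase-2 self-limiting tail would be unnecessary, which should have been a red flag. The paper's actual control is more delicate: it cuts the trajectory at the \emph{first time} $t_{j,r,i}$ a coefficient reaches $0.5\alpha$, bounds the single crossing step by $\le 0.25\alpha$ using the smallness of $\eta$, and then bounds the \emph{entire post-crossing sum} using the fact that once the pre-activation is $\ge 0.25\alpha$ the logistic derivative obeys $|\ell_i'^{(t)}|\le \exp(-\sigma(\langle\mathbf{w}_{j,r}^{(t)},\bm{\xi}_i\rangle)+1)$, which with $\alpha=4\log T_p^*$ makes the sum over $\le T_p^*$ steps another $\le 0.25\alpha$. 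Your ``phase 1 / phase 2'' split at ``coefficients $\widetilde O(1)$'' vs.\ ``approaching $4\log T_p^*$'' is not well-defined (both thresholds are $\widetilde O(1)$), and nothing in your sketch plays the role of the crucial single-step crossing bound. Until you replace the $O(1/T_p^*)$-increment claim with the threshold/crossing decomposition, the induction on the upper bound does not close.
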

Compared to standard training \cite{cao2022benign,kou2023benign}, private learning also ensures bounded coefficients but further restricts the number of training iterations due to the cumulative effect of private noise. 

\begin{lemma}\label{lem:xi_zt2_main}
    For any iteration $t $, with at least probability $1-1/d$, it holds that 
    {\small \begin{align}
        |\eta \sum_{s=1}^t\langle\mathbf{z}_s, \mathbf{v}\rangle | \leq {\frac{ \eta C \sqrt{t T}\|\mathbf{v}\|_2^2 \log(d)}{mn \varepsilon} (1+{\frac{1}{\operatorname{SNR}}})},  \label{eq:zt_xi_main}
        |\eta \sum_{s=1}^t\langle\mathbf{z}_s, \bm{\xi}_i \rangle | \leq  {\frac{\eta C \sqrt{t T} \|\bm{\xi}\|_2^2 \log(d)}{mn \varepsilon} (1+{\operatorname{SNR}})}. 
    \end{align}}
\end{lemma}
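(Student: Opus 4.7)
The plan is to view $\eta\sum_{s=1}^t\mathbf{z}_s$ as a Gaussian vector and apply a one-dimensional tail bound along each deterministic direction $\mathbf{v}$ or $\bm{\xi}_i$, after pinning down the per-step noise scale $\sigma_z$ by a Gaussian-mechanism calibration. For the calibration step, I would first upper bound the per-sample $\ell_2$-sensitivity of $\nabla_{\mathbf{w}_{j,r}}L_D(\mathbf{W}^{(t)})$. From \cref{eq:dpsgd}, changing one data point alters this gradient by at most
\[
\tfrac{2}{nm}\max_i|\ell_i^{\prime(t)}|\Bigl(|\sigma'(\langle\mathbf{w}_{j,r}^{(t)},y_i\mathbf{v}\rangle)|\,\|\mathbf{v}\|_2+|\sigma'(\langle\mathbf{w}_{j,r}^{(t)},\bm{\xi}_i\rangle)|\,\|\bm{\xi}_i\|_2\Bigr).
\]
Since $|\ell'|\le 1$ and \cref{pro:coeff_main} guarantees $|\langle\mathbf{w}_{j,r}^{(t)},\mathbf{v}\rangle|$ and $|\langle\mathbf{w}_{j,r}^{(t)},\bm{\xi}_i\rangle|$ remain $\widetilde O(1)$ throughout $t\le T_p^*$, the activation derivatives $\sigma'(\cdot)=q\max\{0,\cdot\}^{q-1}$ are also $\widetilde O(1)$. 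Hence the per-step sensitivity is $\Delta\le\tfrac{C_1}{nm}(\|\mathbf{v}\|_2+\|\bm{\xi}\|_2)$, and a standard $T$-fold composition for the Gaussian mechanism gives $\sigma_z\le\tfrac{C_2\sqrt{T\log(1/\delta)}}{nm\,\varepsilon}(\|\mathbf{v}\|_2+\|\bm{\xi}\|_2)$.

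With $\sigma_z$ in hand, because the $\mathbf{z}_s$ are i.i.d.\ $\mathcal{N}(\mathbf{0},\sigma_z^2\mathbf{I}_d)$ and $\mathbf{v}$ is deterministic, $\langle\sum_{s=1}^t\mathbf{z}_s,\mathbf{v}\rangle\sim\mathcal{N}(0,\,t\sigma_z^2\|\mathbf{v}\|_2^2)$, so a one-dimensional Gaussian tail bound yields $|\langle\sum_s\mathbf{z}_s,\mathbf{v}\rangle|\le\sigma_z\sqrt{t}\,\|\mathbf{v}\|_2\cdot O(\sqrt{\log d})$ with probability $\ge 1-d^{-c}$. For the random direction $\bm{\xi}_i$ the key observation is that the privacy noises are generated independently of the dataset, so conditionally on $\bm{\xi}_i$ the same one-dimensional bound applies with $\|\mathbf{v}\|_2$ replaced by $\|\bm{\xi}_i\|_2$; Gaussian-norm concentration then gives $\|\bm{\xi}_i\|_2=\Theta(\sigma_\xi\sqrt d)=\Theta(\|\bm{\xi}\|_2)$. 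A union bound over the $n+1$ directions $\mathbf{v},\bm{\xi}_1,\ldots,\bm{\xi}_n$ costs only an extra $\log n$ factor. Substituting $\sigma_z$ and factoring $\|\mathbf{v}\|_2$ (resp.\ $\|\bm{\xi}\|_2$) out of $(\|\mathbf{v}\|_2+\|\bm{\xi}\|_2)$ then produces the two inequalities claimed in the lemma, after absorbing polylog factors into $C$.

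The main obstacle is the mild circularity between the sensitivity bound above and \cref{pro:coeff_main}: the noise scale I just used depends on inner-product bounds supplied by \cref{pro:coeff_main}, yet those bounds are themselves proved using the present lemma. I would break this by a joint induction on $t\in[0,T_p^*]$—assuming the coefficient bounds hold up to iteration $t$ validates the sensitivity used to control $\mathbf{z}_{t+1}$, and the Gaussian concentration step above then extends the coefficient bound to iteration $t+1$. The horizon $T_p^*$, whose cap $\eta^{-1}Cmn\varepsilon\sigma_0\mu^{-1}(\|\mathbf{v}\|_2+\|\bm{\xi}\|_2)^{-1}$ is precisely where the cumulative private-noise perturbation begins to rival the signal and noise-memorization components, is calibrated so that this induction stays consistent; the various failure probabilities (Gaussian tail, norm concentration, Gaussian initialization) are absorbed into the stated $1-1/d$ high-probability guarantee.
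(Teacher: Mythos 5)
Your proof follows essentially the same route as the paper's: bound the per-sample $\ell_2$-sensitivity by $\widetilde O\bigl(\tfrac{1}{nm}(\|\mathbf{v}\|_2+\|\bm{\xi}\|_2)\bigr)$ via the bounded inner products from Proposition~\ref{pro:coeff_main}, calibrate $\sigma_z$ with the $\sqrt{T}$ composition factor, and then apply a one-dimensional Gaussian tail bound to $\langle\sum_{s\le t}\mathbf{z}_s,\mathbf{v}\rangle$ and $\langle\sum_{s\le t}\mathbf{z}_s,\bm{\xi}_i\rangle$ (conditioning on the data, since the privacy noise is independent of it), with a union bound over the $n+1$ directions; factoring $\|\mathbf{v}\|_2$ or $\|\bm{\xi}\|_2$ out of $\|\mathbf{v}\|_2+\|\bm{\xi}\|_2$ yields the $(1+\operatorname{SNR}^{\mp 1})$ terms. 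The one thing you add that the paper leaves implicit is the explicit acknowledgement of the circular dependence between this lemma and Proposition~\ref{pro:coeff_main} and its resolution by a joint induction on $t\le T_p^*$; that is a genuine gap in the paper's presentation, and your induction is the correct way to close it.
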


{\cref{lem:xi_zt2_main} characterizes the influence of private noise $(\mathbf{z}_t)$ on the training process by decomposing its interaction with the feature signal $(\mathbf{v})$ and data noise $(\bm{\xi})$. According to \cref{eq:dpsgd}, let $E$ denote the \emph{good event} given by the intersection of the events in \cref{lem:xi_zt2_main} and \cref{pro:bounds_of_coeeficients} (Appendix), on which all internal coefficients remain bounded. On this event $E$, the per-step gradient has $\ell_2$-sensitivity at most $\frac{C(\|\mathbf{v}\|_2+\|\bm{\xi}\|_2)}{nm}$, where $C = \widetilde{O}(1)$ follows from the existing work \cite{cao2022benign}.
Under \cref{def:data_dis}, we show that $\underset{\text { data} \sim \text {\cref{def:data_dis}} }{\operatorname{Pr}}(E) \geq 1-\beta$ for a small $\beta$ (e.g., polynomially small in $1 / d$ ). Conditioned on $E$, the Gaussian noise added by NoisyGD is calibrated exactly as in the Gaussian mechanism, so the algorithm is $(\varepsilon,\delta)$-DP according to our DP definition. Thus, under the generative model of Definition~\ref{def:data_dis}, we obtain the following \emph{distributional} DP guarantee:
$\underset{\text { data} \sim \text {\cref{def:data_dis}} }{\operatorname{Pr}}[\text { NoisyGD is }(\varepsilon, \delta)-\mathrm{DP}] \geq 1-\beta.$}

Moreover, $T$ denotes the total training iterations, which differs from $T_p^*$, the maximum permissible iterations. Moreover, recall the definition of $\operatorname{SNR}:=\|\mathbf{v}\|_2/\|\bm{\xi}\|_2$, then the \cref{eq:zt_xi_main} can be further represented as 
{\small$
 |\eta \sum_{s=1}^t\langle\mathbf{z}_s, \bm{\xi}_i \rangle | \leq  {\frac{\eta C \sqrt{t T} \|\mathbf{v}\|_2^2 \log(1/\delta)}{mn \varepsilon} (\frac{1}{\operatorname{SNR}^2} +\frac{1}{\operatorname{SNR}})}. $}
It indicates that the cumulative influences of private noise on both the feature signal and data noise are affected by the SNR. 

Our analysis considers an over-parameterized model, which guarantees sufficient capacity to capture meaningful feature signals. However, this expressive power also raises a critical question: given the over-parameterization, the model may also learn substantial data noise. Thus, under what conditions does the model prioritize learning feature signals over data noise? To answer this, we examine two scenarios characterized by the signal-to-noise ratio, highlighting how it governs the trade-off between effective signal learning and data noise memorization.
\subsection{Feature Signal Learning}
Next, we first introduce conditions that guarantee the model's ability to learn feature signals.
\begin{condition}[Conditions of Signal Learning]\label{con:signal_learning}
    Suppose that:
    \begin{itemize}
        \item Dimension $d$ is sufficiently large, specifically $d=\widetilde{\Omega}(m^{2 \vee[4 /(q-2)]} n^{4 \vee[(2 q-2) /(q-2)]})$.
        \item Training sample size $n$ and neural network width $m$ satisfy $n, m=\Omega(\operatorname{poly} \log (d))$.
        \item The learning rate $\eta \leq \widetilde{O}(\min \{\|\mathbf{v}\|_2^{-2}, \|\bm{\xi}\|_2^{-2}\})$.
        \item The standard deviation of Gaussian initialization $\sigma_0$ is appropriately chosen such that $\widetilde{O}((n \varepsilon)^{-\frac{1}{q}} \|\mathbf{v}\|_2^{-1}) \leq \sigma_0 \leq \widetilde{O}(\min \{\varepsilon^{-\frac{1}{q}}\|\bm{\xi}\|_2^{-1}, (n )^{-\frac{1}{2q}}\|\mathbf{v}\|_2^{-1}, (n \varepsilon)^{\frac{-q-1}{q}} \|\bm{\xi}\|_2^{-1})\})$.
    \end{itemize}
\end{condition}
The conditions on $d,m,n$ are set to ensure that the learning problem is in a sufficiently over-parameterized setting, similar to the assumptions adopted in \cite{cao2022benign,frei2022benign,chatterji2023deep,kou2023benign}. Additionally, the conditions on initialization $\sigma_0$ and step size $\eta$ are to guarantee that gradient descent can effectively minimize the training loss. In private deep learning, the privacy budget is typically moderately larger compared to private optimization theory \cite{abadi2016deep,tramer2020differentially,de2022unlocking,sun2023importance,bao2023dp,tang2024differentially,tang2024private}. Therefore, we assume that the privacy budget remains larger than $1/\sqrt{n}$ here.

\begin{theorem}\label{lem:signal_first_stage}
    Under the same conditions as signal learning, if  $ \min \{\operatorname{SNR }\cdot n\varepsilon , \operatorname{SNR }^q \cdot n \}  \geq \widetilde{\Omega}(1)$, with at least probability $1-1/d$, there exists $T_1=O(\frac{\log (1  / \sigma_0\|\mathbf{v}\|_2) 4^{q-1} m}{ \eta q \sigma_0^{q-2}\|\mathbf{v}\|_2^q})$ such that
    \begin{itemize}
    \vspace{-0.05in}
        \item $\max _r \Gamma_{j, r}^{(T_1)} \geq 2$ for $j \in\{ \pm 1\}$.
        \item $|\Phi_{j, r, i}^{(t)}|=O(\sigma_0\sigma_{\xi}\sqrt{d}) $ for all $j \in\{ \pm 1\}, r \in[m], i \in[n]$ and $0 \leq t \leq T_1$.
    \end{itemize}
\end{theorem}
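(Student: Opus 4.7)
The plan is to mirror the standard first-stage tensor power iteration argument of Cao et al.\ for the benign-overfitting CNN analysis, but carry along the accumulated Gaussian DP noise $\eta\sum_{s=1}^{t}\mathbf{z}_s$ and show, via the bound in Lemma~\ref{lem:xi_zt2_main}, that it is negligible inside the first stage under the condition $\min\{\operatorname{SNR}\cdot n\varepsilon,\operatorname{SNR}^{q}\cdot n\}\ge\widetilde{\Omega}(1)$. Concretely, I run a joint induction on the bounds $\max_{r}\Gamma_{j,r}^{(t)}\le 2$ and $|\Phi_{j,r,i}^{(t)}|=O(\sigma_0\sigma_{\xi}\sqrt{d})$ for $t\le T_1$, and I prove that $\Gamma_{j,r^\star}^{(t)}$ must escape the level $2$ by time $T_1$ for the index $r^\star=\arg\max_{r}\langle\mathbf{w}_{j,r}^{(0)},j\mathbf{v}\rangle$.

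The setup is as follows. Using the signal–noise decomposition of Definition~\ref{def:decom_coefficient_main}, I write
\begin{equation*}
\langle\mathbf{w}_{j,r}^{(t)},j\mathbf{v}\rangle
=\langle\mathbf{w}_{j,r}^{(0)},j\mathbf{v}\rangle+\Gamma_{j,r}^{(t)}
+\underbrace{\sum_{i}\tfrac{\Phi_{j,r,i}^{(t)}}{\|\bm{\xi}_i\|_2^{2}}\langle\bm{\xi}_i,j\mathbf{v}\rangle}_{\text{cross term}}
-\eta\sum_{s=1}^{t}\langle\mathbf{z}_s,j\mathbf{v}\rangle,
\end{equation*}
and similarly for $\langle\mathbf{w}_{j,r}^{(t)},\bm{\xi}_i\rangle$. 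At initialization, standard Gaussian maximal-inequality bounds give $\max_{r}\langle\mathbf{w}_{j,r}^{(0)},j\mathbf{v}\rangle=\Theta(\sigma_0\|\mathbf{v}\|_2\sqrt{\log m})$, while $|\langle\mathbf{w}_{j,r}^{(0)},\bm{\xi}_i\rangle|=\widetilde O(\sigma_0\sigma_{\xi}\sqrt{d})$; the orthogonality $\mathbf{v}\perp\bm{\xi}_i$ built into Definition~\ref{def:data_dis} together with the $\widetilde\Omega$-large $d$ assumption makes the cross term $O(\sigma_0\sqrt{\log m/ d})$, i.e.\ of lower order than the signal term. The privacy-noise term is controlled by Lemma~\ref{lem:xi_zt2_main} as $O(\eta\sqrt{tT}\|\mathbf{v}\|_2^{2}\log d/(mn\varepsilon))$ (ignoring SNR-prefactors), which is dominated by $\Gamma_{j,r}^{(t)}$ throughout the first stage once $\operatorname{SNR}\cdot n\varepsilon\gtrsim 1$.

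Next, I run the power iteration for the signal. From Lemma~\ref{lem:the_evolution_of_coefficient_main}, restricting to indices $i$ with $y_i=j$ (a constant fraction by Hoeffding) and using $|\ell_i^{\prime(t)}|=\Theta(1)$ in the first stage (because $f(\mathbf{W}^{(t)},\mathbf{x})$ is small, which itself is an induction hypothesis), I get
\begin{equation*}
\Gamma_{j,r^\star}^{(t+1)}\ge\Gamma_{j,r^\star}^{(t)}+\frac{c\eta q}{m}\|\mathbf{v}\|_2^{2}\bigl(\langle\mathbf{w}_{j,r^\star}^{(t)},j\mathbf{v}\rangle\bigr)^{q-1}.
\end{equation*}
Plugging in the decomposition above, while $\Gamma_{j,r^\star}^{(t)}\le\Theta(\sigma_0\|\mathbf{v}\|_2)$ the driver is the initialization term and one obtains a tensor-power doubling up to the time $\widetilde O(m/(\eta q\sigma_0^{q-2}\|\mathbf{v}\|_2^{q}))$; after that $\Gamma$ itself dominates and reaches $2$ in $O(1/(\eta\|\mathbf{v}\|_2^{2}))$ further steps, both of which fit inside the stated $T_1$. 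For the noise coefficients I use the same recursion with $\|\bm{\xi}_i\|_2^{2}=\Theta(\sigma_{\xi}^{2}d)$: since $\langle\mathbf{w}_{j,r}^{(t)},\bm{\xi}_i\rangle$ stays of order its initialization plus the privacy contribution, I obtain a crude geometric-sum bound
\begin{equation*}
|\Phi_{j,r,i}^{(t)}|\le \frac{\eta}{nm}\sum_{s\le t}\sigma'\!\bigl(\langle\mathbf{w}_{j,r}^{(s)},\bm{\xi}_i\rangle\bigr)\|\bm{\xi}_i\|_2^{2}
=O(\sigma_0\sigma_{\xi}\sqrt{d}),
\end{equation*}
where the SNR hypothesis $\operatorname{SNR}^{q}\cdot n\ge\widetilde\Omega(1)$ is exactly what is needed so that the ratio of noise growth to signal growth stays $o(1)$ after one rescaling by $\|\bm{\xi}_i\|_2^{q-2}$ vs.\ $\|\mathbf{v}\|_2^{q-2}$.

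The main obstacle I anticipate is closing the induction in the private setting. In the non-private analysis one only needs $\sigma_0\|\mathbf{v}\|_2\sqrt{\log m}$ to dominate $\max_{r,i}|\langle\mathbf{w}_{j,r}^{(0)},\bm{\xi}_i\rangle|$ at start; here I must additionally ensure that the random walk $\eta\sum_{s}\langle\mathbf{z}_s,\mathbf{v}\rangle$ does not overtake the signal before the power iteration ignites, and that the same walk projected on $\bm{\xi}_i$ does not inflate $|\Phi_{j,r,i}^{(t)}|$ past $O(\sigma_0\sigma_{\xi}\sqrt{d})$. This is exactly why Condition~\ref{con:signal_learning} imposes the lower bound $\sigma_0\ge\widetilde O((n\varepsilon)^{-1/q}\|\mathbf{v}\|_2^{-1})$ and the two-sided SNR condition in the theorem: combining Lemma~\ref{lem:xi_zt2_main} evaluated at $t\le T_1=O(m/(\eta q\sigma_0^{q-2}\|\mathbf{v}\|_2^{q}))$ with these constraints gives $|\eta\sum_{s}\langle\mathbf{z}_s,\mathbf{v}\rangle|\ll \sigma_0\|\mathbf{v}\|_2\sqrt{\log m}$ and $|\eta\sum_{s}\langle\mathbf{z}_s,\bm{\xi}_i\rangle|\ll\sigma_0\sigma_{\xi}\sqrt{d}$, which is what preserves both induction hypotheses. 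A final union bound over $j,r,i$ and over $t\le T_1$, combined with the good event $E$ from the discussion after Lemma~\ref{lem:xi_zt2_main}, yields the claimed $1-1/d$ probability.
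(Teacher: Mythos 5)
Your proposal mirrors the paper's proof exactly: the same signal–noise decomposition, the same two-step structure (first bound $\max_{j,r,i}|\Phi_{j,r,i}^{(t)}|$ by a telescoping/geometric-sum argument on $\Psi^{(t)}$ up to a privacy-limited horizon $T_1^+$, then run a tensor-power-iteration on $A^{(t)}=\max_r\{\Gamma_{1,r}^{(t)}+\langle\mathbf{w}_{1,r}^{(0)},\mathbf{v}\rangle-\eta\sum_s\langle\mathbf{z}_s,\mathbf{v}\rangle\}$), the same use of Lemma~\ref{lem:xi_zt2_main} together with $\sigma_0\ge\widetilde\Omega((n\varepsilon)^{-1/q}\|\mathbf{v}\|_2^{-1})$ to keep the random walk $\eta\sum_s\langle\mathbf{z}_s,\cdot\rangle$ from overtaking the signal, and the same role for the two SNR conditions in certifying $T_1\le T_1^+$. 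The one small inaccuracy is that the cross term $\sum_i\Phi_{j,r,i}^{(t)}\|\bm{\xi}_i\|_2^{-2}\langle\bm{\xi}_i,j\mathbf{v}\rangle$ is exactly zero (not merely $O(\sigma_0\sqrt{\log m/d})$), since $\mathbf{H}=\mathbf{I}-\mathbf{v}\mathbf{v}^\top\|\mathbf{v}\|_2^{-2}$ forces $\bm{\xi}_i\perp\mathbf{v}$ almost surely, which is precisely what Lemma~\ref{lem:w_t-w_0_v} records and which makes the argument strictly cleaner than you claim.
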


Here, we present one of our formal results. Based on Prop. \ref{pro:coeff_main} and Theorem \ref{lem:signal_first_stage}, at the end of the training stage $T_1$, when $\min \{\operatorname{SNR} \cdot n \varepsilon, \operatorname{SNR}^q \cdot n\} \geq \widetilde{\Omega}(1)$, the maximum signal learning, $\max _r \Gamma_{j, r}^{(T_1)}$, achieves $\widetilde{\Theta}(1)$. 
Additionally, as the initialization scale $\sigma_0$ satisfies  $\sigma_0 \leq \widetilde{O}((n \varepsilon)^{-1-1/q} \|\bm{\xi}\|_2^{-1})$ in Condition \ref{con:signal_learning},
this indicates the memorization of data noise, $|\Phi_{j, r, i}^{(t)}|$ remains bounded by $\widetilde{O}((n \varepsilon)^{-1-1/q})$ and it is smaller than the feature signal when $\varepsilon\geq 1/\sqrt{n}$. Moreover, compared to non-private learning, the condition $\operatorname{SNR} \cdot n \varepsilon \geq \widetilde{\Omega}(1)$  introduces more challenges. Even when the feature learning conditions ($\operatorname{SNR}^q \cdot n \geq \widetilde{\Omega}(1)$) for standard non-private learning are satisfied, private learning may still fail to capture the feature signal if $\operatorname{SNR} \cdot n \varepsilon \leq \widetilde{\Omega}(1) \leq \operatorname{SNR}^q \cdot n$. It demonstrates stronger feature signals are required in private learning compared to non-private learning, which aligns the empirical principles in previous work \cite{tramer2020differentially,sun2023importance,bao2023dp,tang2024differentially}.

Moreover, we can further show that if some stronger assumptions are satisfied, the following corollary ensures that private learning can achieve training loss comparable to those of non-private learning.

\begin{corollary}\label{thm:main_signal_loss}
    Let $T, T_1$ be defined as above. Then, under the same conditions as signal learning, for any $t \in[T_1, T]$, it holds that $|\Phi_{j, r, i}^{(t)}| \leq \sigma_0 \sigma_{\xi} \sqrt{d}$ for all $j \in\{ \pm 1\}$, $r \in[m]$ and $i \in[n]$ if $(n\varepsilon)^{q+1} \geq m$. Moreover, let $\mathbf{W}^*$ denote the collection of CNN parameters with convolution filters defined as $\mathbf{w}_{j, r}^*=\mathbf{w}_{j, r}^{(0)}+2 q m \log (2 q / \kappa) \cdot j \cdot\|\mathbf{v}\|_2^{-2} \cdot \mathbf{v}$ for a constant $\kappa>0$. 
    Then, with at least probability $1-1/d$, the following bound holds
    {\small
    \begin{equation}\label{eq:main_signal_loss_em}
          \sum_{s=T_1}^t L_D(\mathbf{W}^{(s)}) \leq \underbrace{\frac{\|\mathbf{W}^{(T_1)}-\mathbf{W}^*\|_F^2}{(2 q-1) \eta}+\frac{(t-T_1+1)\kappa}{(2 q-1)}}_{\text{Non-private terms}}    + \underbrace{(t-T_1+1) \cdot \frac{\eta d \sigma_z^2 + \widetilde{O}(\sigma_z m^{3/2} \|\mathbf{v}\|_2^{-1})}{(2 q-1)} }_{\text{Private terms}}
    \end{equation}}
    for some $t \in[T_1, T]$, where we denote $\|\mathbf{W}\|_F=\sqrt{\|\mathbf{W}_{+1}\|_F^2+\|\mathbf{W}_{-1}\|_F^2}$.
\end{corollary}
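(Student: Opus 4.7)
The plan is to prove the two claims of the corollary sequentially: first that $|\Phi_{j,r,i}^{(t)}| \leq \sigma_0\sigma_\xi\sqrt{d}$ persists throughout stage~2, and then the averaged loss inequality via a distance-to-comparator argument adapted to NoisyGD.

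For the first claim I would induct on $t \in [T_1, T]$, with the base case supplied by Theorem~\ref{lem:signal_first_stage}. From Lemma~\ref{lem:the_evolution_of_coefficient_main} the per-step increment obeys $|\Phi_{j,r,i}^{(t+1)} - \Phi_{j,r,i}^{(t)}| \leq \tfrac{\eta}{nm} |\ell_i^{\prime(t)}|\,\sigma'(\langle \mathbf{w}_{j,r}^{(t)}, \bm{\xi}_i\rangle)\,\|\bm{\xi}_i\|_2^2$. Because Theorem~\ref{lem:signal_first_stage} gives $\max_r \Gamma_{j,r}^{(T_1)} \geq 2$, the margin $y_i f(\mathbf{W}^{(t)}, \mathbf{x}_i)$ stays bounded below by a constant throughout stage~2, forcing $|\ell_i^{\prime(t)}|$ to decay like $\exp(-y_i f(\mathbf{W}^{(t)}, \mathbf{x}_i))$. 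Combining the inductive hypothesis, the signal-noise decomposition of Definition~\ref{def:decom_coefficient_main}, and the private cross-term estimate $|\eta \sum_s \langle \mathbf{z}_s, \bm{\xi}_i\rangle|$ from Lemma~\ref{lem:xi_zt2_main} keeps $|\langle \mathbf{w}_{j,r}^{(t)}, \bm{\xi}_i\rangle|$ at the scale $\widetilde{O}(\sigma_0\sigma_\xi\sqrt{d})$. Summing the increments from $T_1$ to $T$ yields $|\Phi_{j,r,i}^{(T)}| \leq \sigma_0\sigma_\xi\sqrt{d}\,[1 + \widetilde{O}(m/(n\varepsilon)^{q+1})]$, so the hypothesis $(n\varepsilon)^{q+1} \geq m$ exactly closes the induction, with a union bound over $(j,r,i)$ absorbed into the $1/d$ failure probability.

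For the loss bound I would start from the NoisyGD identity
\begin{equation*}
\langle \nabla L_D(\mathbf{W}^{(s)}), \mathbf{W}^{(s)} - \mathbf{W}^*\rangle \;=\; \tfrac{\|\mathbf{W}^{(s)} - \mathbf{W}^*\|_F^2 - \|\mathbf{W}^{(s+1)} - \mathbf{W}^*\|_F^2}{2\eta} + \tfrac{\eta}{2}\|\nabla L_D(\mathbf{W}^{(s)}) + \mathbf{z}_s\|_F^2 - \langle \mathbf{z}_s, \mathbf{W}^{(s)} - \mathbf{W}^*\rangle.
\end{equation*}
The central estimate is the $(2q-1)$-convexity-type inequality $\langle \nabla L_D(\mathbf{W}^{(s)}), \mathbf{W}^{(s)} - \mathbf{W}^*\rangle \geq (2q-1)(L_D(\mathbf{W}^{(s)}) - L_D(\mathbf{W}^*))$, which follows from (i) convexity of the logistic loss, (ii) the homogeneity identity $\langle \nabla f(\mathbf{W}^{(s)}, \mathbf{x}_i), \mathbf{W}^{(s)}\rangle = q\, f(\mathbf{W}^{(s)}, \mathbf{x}_i)$ inherited from polynomial ReLU, and (iii) convexity of each branch $F_j$ combined with the specific choice of $\mathbf{W}^*$: the shift $2qm\log(2q/\kappa)\cdot j \cdot \mathbf{v}/\|\mathbf{v}\|_2^2$ sends $\langle \mathbf{w}_{-y_i, r}^*, y_i\mathbf{v}\rangle$ deep into the ReLU's zero region, eliminating the wrong-class cross term. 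In parallel I would verify $L_D(\mathbf{W}^*) \leq \kappa/(2q)$: on the signal patch $\langle \mathbf{w}_{y_i, r}^*, y_i\mathbf{v}\rangle \geq 2qm\log(2q/\kappa)$, while on the noise patch $\mathbf{v}\perp\bm{\xi}_i$ (by construction in Definition~\ref{def:data_dis}) makes the shift invisible and leaves only the negligible initialization contribution, giving $y_i f(\mathbf{W}^*, \mathbf{x}_i) \geq \log(2q/\kappa)$. Telescoping from $s=T_1$ to $t$ and dividing by $(2q-1)\eta$ yields the non-private terms $\|\mathbf{W}^{(T_1)} - \mathbf{W}^*\|_F^2/[(2q-1)\eta] + (t-T_1+1)\kappa/(2q-1)$. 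The private contributions split into (a) the quadratic $\eta\|\mathbf{z}_s\|_F^2/(2q-1)$ which concentrates at $\eta d \sigma_z^2/(2q-1)$ by Gaussian tail, and (b) the linear term $-\langle \mathbf{z}_s, \mathbf{W}^{(s)} - \mathbf{W}^*\rangle/(2q-1)$, whose $\mathbf{W}^{(s)}$-component is absorbed into the accumulated $\mathbf{z}$-projection already tracked in Proposition~\ref{pro:coeff_main} and Lemma~\ref{lem:xi_zt2_main}, while its deterministic $\mathbf{W}^*$-component is a centered Gaussian of standard deviation $\sigma_z\|\mathbf{W}^*\|_F = O(\sigma_z m^{3/2}\|\mathbf{v}\|_2^{-1})$, producing the advertised $\widetilde{O}(\sigma_z m^{3/2}\|\mathbf{v}\|_2^{-1})$ summand.

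I expect the main obstacle to be the inductive control of $|\Phi_{j,r,i}^{(t)}|$ in stage~2: the per-step increments are not a priori absolutely summable over $[T_1, T]$, so one must simultaneously leverage the exponential decay of $|\ell_i^{\prime(t)}|$ from the established margin and the $\sqrt{tT}$-type growth of the accumulated private cross-term $\eta\sum_s \langle \mathbf{z}_s, \bm{\xi}_i\rangle$ from Lemma~\ref{lem:xi_zt2_main}. The condition $(n\varepsilon)^{q+1}\geq m$ is precisely what keeps the aggregate private perturbation below the memorization threshold $\sigma_0\sigma_\xi\sqrt{d}$, and verifying this balance jointly for all $(j,r,i)$ is the most delicate part. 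A secondary technical point is maintaining the $(2q-1)$-convexity-type inequality for the private iterate when $\langle \mathbf{z}_s, \mathbf{v}\rangle$ and $\langle \mathbf{z}_s, \bm{\xi}_i\rangle$ perturb the activation patterns of $F_j(\mathbf{W}_j^{(s)}, \cdot)$; once the noise-coefficient bound of part~(1) is in hand, however, these activation patterns agree with the non-private case up to a vanishing perturbation, and the inequality carries through.
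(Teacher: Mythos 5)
Your loss-bound argument matches the paper's in essence: the comparator $\mathbf{W}^*$, the $q$-homogeneity identity, the use of Lemma~\ref{lem:nabla_f_W_*_signal} to show $y_i\langle\nabla f(\mathbf{W}^{(t)},\mathbf{x}_i),\mathbf{W}^*\rangle \geq q\log(2q/\kappa)$, convexity of the logistic loss, telescoping $\|\mathbf{W}^{(s)}-\mathbf{W}^*\|_F^2$, and the split of the private contribution into a quadratic $\eta d\sigma_z^2$ piece and a linear piece of scale $\sigma_z\|\mathbf{W}^{(T_1)}-\mathbf{W}^*\|_F = \widetilde{O}(\sigma_z m^{3/2}\|\mathbf{v}\|_2^{-1})$ via Lemma~\ref{lem:W_T1-W_*_signal}. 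That part of your plan would go through.

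The gap is in your first claim, the stage-2 control of $|\Phi_{j,r,i}^{(t)}|$. You argue that because $\max_r\Gamma_{j,r}^{(T_1)}\geq 2$ the margin $y_i f(\mathbf{W}^{(t)},\mathbf{x}_i)$ stays bounded below by a constant, which makes $|\ell_i^{\prime(t)}|\lesssim e^{-\Omega(1)}$; but a constant lower bound on the margin yields only a \emph{constant} upper bound on $|\ell_i^{\prime(t)}|$, not one that decays in $t$, so summing a per-step increment of order $\tfrac{\eta}{nm}(\sigma_0\|\bm\xi\|_2)^{q-1}\|\bm\xi_i\|_2^2$ over the interval $[T_1,T]$ need not stay below $\sigma_0\sigma_\xi\sqrt d$ for the relevant range of $T$. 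The paper closes this via a different mechanism: having \emph{already} established the telescoping loss bound, it uses $|\ell_i'|\leq\ell_i$, $\max_i\ell_i^{(s)}\leq n L_D(\mathbf{W}^{(s)})$, and the identity $\sum_{s=T_1}^{T}L_D(\mathbf{W}^{(s)})=\widetilde O\!\big(\eta^{-1}\|\mathbf{W}^{(T_1)}-\mathbf{W}^*\|_F^2\big)$ (which follows from the choice $T=T_1+\lfloor\|\mathbf{W}^{(T_1)}-\mathbf{W}^*\|_F^2/(2\eta\kappa)\rfloor$) to bound the total increment $\sum_s\max_i|\ell_i^{\prime(s)}|$, and only then invokes $(n\varepsilon)^{q+1}\geq m$ together with the $\sigma_0$ and SNR conditions. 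So the two claims of the corollary are not independent: the loss bound must be proved first and then fed into the $\Phi$-bound, which inverts the order you propose. Your margin-based shortcut does not substitute for the integrated-loss estimate, and without it the $(n\varepsilon)^{q+1}\geq m$ threshold you invoke has no quantitative target to hit.
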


\cref{thm:main_signal_loss} characterizes the empirical risk of private learning under signal learning conditions, which can be decomposed into two terms. For standard non-private learning, by setting $T = T_1+\lfloor\frac{\|\mathbf{W}^{(T_1)}-\mathbf{W}^*\|_F^2}{2 \eta \kappa}\rfloor$ and dividing $t-T_1+1$ into both sides, the non-private term in the empirical loss can be bounded by $\frac{3\kappa}{2q-1}$, allowing the empirical loss to converge to $\kappa$. However, private learning introduces two key differences: 1) There are stricter limitations on the total training time, which may prevent $T$ from being as large as necessary for stable training. 2) In addition to the non-private term, the empirical risk involves a private term that appears unbounded, posing additional challenges to achieving convergence. Nonetheless, if we can assume the data satisfies:
$T=T_p^*=O(\frac{mn\varepsilon \sigma_0}{\eta \mu (\|\mathbf{v}\|_2 + \|\bm{\xi}\|_2)}) =  T^* \geq \kappa^{-1},$
where we denote $\mu = \max \{1, \| \mathbf{v}\|_2, \|\bm{\xi}\|_2\}$. Recalling the scale of private noise, we can obtain that $\sigma_z = {\sigma_0}/{\eta \mu\sqrt{T} }$. Then, it can be verified that the empirical risk in \cref{eq:main_signal_loss_em} will be upper bounded by $O(\kappa)$, provided there exists a step size $\eta$ satisfying:
$
\eta \geq \max \left\{\frac{2d \sigma_0^2}{\mu^2 T \kappa}, \frac{2 m^{2/3} \|\mathbf{v}\|_2^{-1} \sigma_0}{\mu \sqrt{T} \kappa} \right\}.
$

By combining the above results, we derive the following corollary, which states that the private CNN can achieve a test loss related to privacy budget $\varepsilon$ under \cref{con:signal_learning}.



\begin{corollary}\label{coro:popu_sign}
    Under the same conditions as above, suppose $\operatorname{SNR}\cdot n\varepsilon\geq \widetilde{\Omega}(1)$ and $(n\varepsilon)^{1/q+1}\geq \widetilde{\Omega}(1)$. Then, with at least probability $1-1/d$ and with $L_D(\mathbf{W}^{(t)}) \leq O(\kappa)$ for some $t \leq T$, the test error satisfies $L_{\mathcal{D}}(\mathbf{W}^{(t)}) \leq O( \kappa+\exp ({(n\varepsilon)}^{-1-1/q}))$.
\end{corollary}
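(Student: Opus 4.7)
The plan is to lift the empirical-risk bound $L_D(\mathbf{W}^{(t)})\le O(\kappa)$ supplied by Corollary \ref{thm:main_signal_loss} to a bound on $L_{\mathcal{D}}(\mathbf{W}^{(t)})$ by comparing, on the \emph{same} trained weights, the network output on a random test point $\mathbf{x}=[y\mathbf{v},\bm{\xi}_{\mathrm{test}}]$ (with $\bm{\xi}_{\mathrm{test}}\sim\mathcal{N}(0,\sigma_\xi^2\mathbf{H})$ drawn independently of $D$) to its output on the training points. Since the signal part $\langle \mathbf{w}_{j,r}^{(t)},y\mathbf{v}\rangle$ depends only on $\Gamma_{j,r}^{(t)}$, $\mathbf{w}_{j,r}^{(0)}$, and $\eta\sum_s\mathbf{z}_s$ — all of which are identical on train and test — only the $\bm{\xi}_{\mathrm{test}}$-noise inner products and the private-noise cross term with $\bm{\xi}_{\mathrm{test}}$ differ, and these are precisely the quantities already controlled by Proposition \ref{pro:coeff_main} and Lemma \ref{lem:xi_zt2_main}.

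First, I would decompose $y\cdot f(\mathbf{W}^{(t)},\mathbf{x})$ via Definition \ref{def:decom_coefficient_main} into signal, test-noise, and private-noise contributions. The signal part satisfies $\max_r\Gamma_{j,r}^{(t)}\gtrsim 1$ by Theorem \ref{lem:signal_first_stage} and is amplified through $\sigma(z)=(z)_+^q$. For the test-noise part, standard Gaussian concentration gives $|\langle\mathbf{w}^{(0)}_{j,r},\bm{\xi}_{\mathrm{test}}\rangle|=\widetilde O(\sigma_0\sigma_\xi\sqrt d)$ and $|\langle\bm{\xi}_i,\bm{\xi}_{\mathrm{test}}\rangle|/\|\bm{\xi}_i\|_2^2=\widetilde O(1/\sqrt d)$, so the total contribution $\sum_i\Phi_{j,r,i}^{(t)}\langle\bm{\xi}_i,\bm{\xi}_{\mathrm{test}}\rangle/\|\bm{\xi}_i\|_2^2$ is $\widetilde O(n\cdot\max_i|\Phi_{j,r,i}^{(t)}|/\sqrt d)$, negligible relative to $\Gamma_{j,r}^{(t)}$ under the over-parameterisation of Condition \ref{con:signal_learning}. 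Second, Lemma \ref{lem:xi_zt2_main} gives $|\eta\langle\sum_s\mathbf{z}_s,\mathbf{v}\rangle|$ and the corresponding inner product with $\bm{\xi}_{\mathrm{test}}$ both $\widetilde O((n\varepsilon)^{-1-1/q})$ after inserting the calibration $\sigma_z=\sigma_0/(\eta\mu\sqrt T)$ used in Corollary \ref{thm:main_signal_loss} together with the upper bound on $\sigma_0$ from Condition \ref{con:signal_learning}; these produce an additive perturbation of order $\delta:=\widetilde O((n\varepsilon)^{-1-1/q})$ inside every $\sigma(\cdot)$ argument, hence in $y\cdot f(\mathbf{W}^{(t)},\mathbf{x})$.

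Third, I would translate this perturbation into a loss bound using the inequality $\ell(a-\delta)\le 2e^{|\delta|}\ell(a)$ for $a\ge 0$ (which follows because $\log(1+e^{-a+\delta})\le e^{\delta}e^{-a}\le 2e^{\delta}\ell(a)$). Taking expectation over $(\mathbf{x},y)\sim\mathcal{D}$ on the high-probability event $E$ on which Proposition \ref{pro:coeff_main}, Lemma \ref{lem:xi_zt2_main}, and the test-side Gaussian concentration all hold, and comparing term-by-term with the empirical average, yields
\[
\mathbb{E}_{(\mathbf{x},y)\sim\mathcal{D}}\bigl[\ell(y\cdot f(\mathbf{W}^{(t)},\mathbf{x}))\,\mathbf{1}_E\bigr]\le 2e^{\delta}\cdot L_D(\mathbf{W}^{(t)})+o(1),
\]
while the contribution of $E^c$ is absorbed into the $\exp(\widetilde O((n\varepsilon)^{-1-1/q}))$ term by a union bound (the tail probability is subpolynomial in $d$). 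Using $2e^{\delta}\kappa\le 6\kappa+\exp(\delta)$ gives the stated form $O(\kappa+\exp((n\varepsilon)^{-1-1/q}))$.

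The main obstacle is tracking how the private-noise accumulation $\eta\sum_s\mathbf{z}_s$ is amplified by the polynomial ReLU at test time: a small additive shift inside $\sigma$, when its argument is $\Theta(1)$, can be amplified by a factor $q$ in the derivative, and we need this amplification to be uniform over $r\in[m]$ and over the high-probability realisations of $\bm{\xi}_{\mathrm{test}}$. Ensuring that the bound from Lemma \ref{lem:xi_zt2_main} is applied with the correct $T=T_p^\ast$ — so that the $\sqrt{tT}/(mn\varepsilon)$ factor matches the $\sigma_0$-calibration forced by signal learning — is what yields the exponent $-1-1/q$; any looser tracking here would degrade the rate inside the exponential.
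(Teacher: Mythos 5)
Your high-level structure — split the test expectation into a good event and its complement, compare pointwise to the empirical loss on the good event, and absorb the bad event into the exponential term — matches the paper's proof, which does exactly this with $\mathcal{K}=\{\max_{j,r}|\langle\mathbf{w}_{j,r}^{(t)},\bm{\xi}\rangle|\le 1/2\}$ (Lemma~\ref{lem:prob_w_xi}), invoking Lemma D.8 of \cite{cao2022benign} to bound $I_1\le 6L_D(\mathbf{W}^{(t)})$ and Cauchy--Schwarz to bound $I_2$.

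However there is a genuine gap in the way you assign the $(n\varepsilon)^{-1-1/q}$ exponent. You attribute the perturbation $\delta$ inside each $\sigma(\cdot)$ to the private-noise cross term $|\eta\langle\sum_s\mathbf{z}_s,\cdot\rangle|=\widetilde O((n\varepsilon)^{-1-1/q})$, and then feed $e^{\delta}$ into the loss comparison. But, as you yourself observe, the signal inner products $\langle\mathbf{w}_{j,r}^{(t)},y\mathbf{v}\rangle$ are \emph{literally identical} for a test point and any training point of the same label, so the private-noise accumulation contributes no perturbation there; and on the noise patch the quantity that actually changes when you swap $\bm{\xi}_i$ for $\bm{\xi}_{\mathrm{test}}$ is $\sigma(\langle\mathbf{w}_{j,r}^{(t)},\bm{\xi}_{\mathrm{test}}\rangle)-\sigma(\langle\mathbf{w}_{j,r}^{(t)},\bm{\xi}_i\rangle)$, which on the good event is controlled only up to a \emph{constant} (the event $\mathcal{K}$ only gives $|\langle\mathbf{w}_{j,r}^{(t)},\bm{\xi}_{\mathrm{test}}\rangle|\le 1/2$, so the perturbation is $O(2^{-q})$, not $\widetilde O((n\varepsilon)^{-1-1/q})$). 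That constant is exactly what produces the factor $6$ in the paper's $I_1\le 6L_D$; the $(n\varepsilon)^{-1-1/q}$ exponent does not live inside the loss comparison. It comes from the \emph{tail probability of $\mathcal{K}^c$}: Lemma~\ref{lem:prob_w_xi} gives $\Pr(\mathcal{K}^c)\le 4mT\exp(-C_1^{-1}\sigma_0^{-2}\sigma_\xi^{-2}d^{-1})$, and the Condition~\ref{con:signal_learning} scaling $\sigma_0\lesssim(n\varepsilon)^{-1-1/q}\|\bm{\xi}\|_2^{-1}$ is what turns this into the exponential rate in $n\varepsilon$; the private-noise calibration $\sigma_z=\sigma_0/(\eta\mu\sqrt T)$ enters only through this constraint on $\sigma_0$, not through a small additive $\delta$. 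Finally, your ``compare term-by-term with the empirical average'' step hides the needed averaging over all training points $i$ with $y_i=y$ (roughly $n/4$ of them by Lemma~\ref{lem:pos_data_number}); without that averaging, comparison against a single training point picks up an extraneous factor of $n$ which you cannot absorb into $O(\kappa)$.
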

Here, the test error is defined as $L_{\mathcal{D}}(\mathbf{W}):=\mathbb{P}_{(\mathbf{x}, y) \sim \mathcal{D}}[y \cdot (f(\mathbf{W}, \mathbf{x})) < 0].$
\subsection{Data Noise Memorization}
Next, we explore the scenario where the model primarily learns label-independent noise rather than the feature signal.
\begin{condition}[Conditions of Data Noise Memorization]\label{con:noise_memo}
    Suppose that the initial three conditions of data noise memorization are the same as \cref{con:signal_learning}. Additionally, we have 
    \begin{itemize}
        \item The standard deviation of $\sigma_0$ is appropriately chosen such that $\widetilde{O}( \max\{\varepsilon^{-1/q} \|\bm{\xi}\|_2^{-1}, (n/\sqrt{d})\|\mathbf{v}\|_2^{-1} \}) \leq \sigma_0 \leq \widetilde{O}(\min \{(n\varepsilon)^{-1/q}\|\mathbf{v}\|_2^{-1}, \|\bm{\xi}\|_2^{-1}\})$. 
    \end{itemize}
\end{condition}
Similar to feature signal learning, we also establish conditions for data noise memorization, but with a difference in the initialization setup and requirement on $\varepsilon \geq 1$.
\begin{theorem}\label{lem:noise_first_stage}
    Under the same conditions as data noise memorization 
    , if  $ \min \{\operatorname{SNR }^{-1}\cdot \varepsilon , \operatorname{SNR }^{-q} \cdot n^{-1} \}  \geq \widetilde{\Omega}(1)$, with at least probability $1-1/d$, there exists $T_1=O(\frac{ \log (1 /(\sigma_0 \sigma_{\xi} \sqrt{d})) m n}{0.15^{q-2} \eta q \sigma_0^{q-2}(\sigma_{\xi}^2 \sqrt{d})^q})$ such that
    \begin{itemize}
        \item $\max _{j, r} \bar{\Phi}_{j, r, i}^{(T_1)} \geq 2$ for all $i \in[n]$.
        \item $\max _{j, r} \Gamma_{j, r}^{(t)}=\widetilde{O}(\sigma_0\|\mathbf{v}\|_2)$ for all $0 \leq t \leq T_1$.
        \item $\max _{j, r, i}|\underline{\Phi}_{j, r, i}^{(t)}|=\widetilde{O}(\sigma_0 \sigma_{\xi} \sqrt{d})$ for all $0 \leq t \leq T_1$.
\end{itemize}
\end{theorem}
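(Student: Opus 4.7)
The plan is to mirror the proof of Theorem~\ref{lem:signal_first_stage} with the roles of the signal coefficient $\Gamma_{j,r}^{(t)}$ and the noise coefficient $\bar\Phi_{j,r,i}^{(t)}$ swapped, exploiting the opposite SNR regime. Concretely, I would work with the decomposition in Lemma~\ref{lem:the_evolution_of_coefficient_main} and run a two-stage argument: an early linear stage during which $|\ell'_i|$ stays bounded away from $0$, followed by a tensor-power escape in which $\max_{j,r}\bar\Phi_{j,r,i}^{(t)}$ climbs from the initialization scale $\widetilde O(\sigma_0\sigma_\xi\sqrt d)$ to $\Theta(1)$ for every $i\in[n]$, while $\Gamma$ and $\underline\Phi$ remain at their initialization orders.

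First, I would fix a high-probability initialization event on which, by standard Gaussian concentration, $|\langle\mathbf{w}_{j,r}^{(0)},\mathbf{v}\rangle|\leq\widetilde O(\sigma_0\|\mathbf{v}\|_2)$ and $|\langle\mathbf{w}_{j,r}^{(0)},\bm\xi_i\rangle|\leq\widetilde O(\sigma_0\sigma_\xi\sqrt d)$ hold for all $(j,r,i)$, while for each $i\in[n]$ at least one filter $(j,r)$ with $y_i=j$ satisfies $\langle\mathbf{w}_{j,r}^{(0)},\bm\xi_i\rangle\geq 0.15\,\sigma_0\sigma_\xi\sqrt d$; near-orthogonality of the $\bm\xi_i$'s under $d=\widetilde\Omega(n^2)$ lets this lower bound be preserved uniformly in $i$. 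I would then invoke Lemma~\ref{lem:xi_zt2_main} to control the private-noise corrections $\eta\sum_{s\leq t}\langle\mathbf{z}_s,\bm\xi_i\rangle$ and $\eta\sum_{s\leq t}\langle\mathbf{z}_s,\mathbf{v}\rangle$ to the inner products. In the low-SNR regime, combined with the initialization window $\sigma_0\geq\widetilde O(\varepsilon^{-1/q}\|\bm\xi\|_2^{-1})$ of Condition~\ref{con:noise_memo}, the bounds $\widetilde O(\eta\sqrt{tT}\,\|\bm\xi\|_2^2/(mn\varepsilon))$ and $\widetilde O(\eta\sqrt{tT}\,\|\mathbf{v}\|_2\|\bm\xi\|_2/(mn\varepsilon))$ stay respectively below $\sigma_0\sigma_\xi\sqrt d$ and $\sigma_0\|\mathbf{v}\|_2$ throughout $t\leq T_1$, so the inner products behave as in the non-private analysis up to constant factors.

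The second step is an induction on $t\leq T_1$ maintaining three invariants: (i) $\Gamma_{j,r}^{(t)}\leq\widetilde O(\sigma_0\|\mathbf{v}\|_2)$, (ii) $|\underline\Phi_{j,r,i}^{(t)}|\leq\widetilde O(\sigma_0\sigma_\xi\sqrt d)$, and (iii) a tensor-power lower bound on $\max_{j,r}\bar\Phi_{j,r,i}^{(t)}$. Under (i)-(iii), $|\ell'_i|\in[c_1,c_2]$ for positive constants; plugging $\sigma'(z)=q z_+^{q-1}$ into Lemma~\ref{lem:the_evolution_of_coefficient_main} yields a per-step increment of $\Gamma$ of at most $(\eta/m)\,q\,\widetilde O(\sigma_0\|\mathbf{v}\|_2)^{q-1}\|\mathbf{v}\|_2^2$ and a per-step increment of the leading $\bar\Phi$ of at least $(\eta/(nm))\,q\,(0.15\sigma_0\sigma_\xi\sqrt d)^{q-1}\sigma_\xi^2 d$. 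The ratio of cumulative totals is of order $\widetilde O(n^{-1}\operatorname{SNR}^{-(q+1)})$, so the hypotheses $\operatorname{SNR}^{-1}\varepsilon\geq\widetilde\Omega(1)$ and $\operatorname{SNR}^{-q}n^{-1}\geq\widetilde\Omega(1)$ keep invariants (i)-(ii) intact up to $T_1$. The leading-filter dynamics in (iii) reduce to a standard tensor-power recursion whose escape time from scale $\sigma_0\sigma_\xi\sqrt d$ to $\Theta(1)$ is $O\bigl(\log(1/(\sigma_0\sigma_\xi\sqrt d))\cdot mn/(\eta q\sigma_0^{q-2}(\sigma_\xi^2\sqrt d)^q)\bigr)$, matching the stated $T_1$.

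The hard part will be closing the induction under the private-noise drift. Unlike the non-private setting of~\cite{cao2022benign,kou2023benign}, every inner product $\langle\mathbf{w}_{j,r}^{(t)},\bm\xi_i\rangle$ picks up a cumulative $\eta\sum_s\langle\mathbf{z}_s,\bm\xi_i\rangle$ term that grows like $\sqrt{tT}$; if bounded carelessly this can swamp the $0.15\,\sigma_0\sigma_\xi\sqrt d$ advantage of the leading filter or, through the signal direction, inflate $\Gamma_{j,r}^{(t)}$ beyond the invariant (i). Verifying that the sharpened range for $\sigma_0$ in Condition~\ref{con:noise_memo} is exactly tight enough for Lemma~\ref{lem:xi_zt2_main} to keep the accumulated private-noise inner products below the respective coefficient scales throughout $t\in[0,T_1]$ is the main technical bottleneck, and is precisely where the two hypotheses $\operatorname{SNR}^{-1}\varepsilon\geq\widetilde\Omega(1)$ and $\operatorname{SNR}^{-q}n^{-1}\geq\widetilde\Omega(1)$ must be used simultaneously.
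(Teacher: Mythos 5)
Your plan is essentially the paper's proof: there the time horizon is capped by $T_1^+$, the coefficient $\Gamma_{j,r}^{(t)}$ is shown to stay $\widetilde O(\sigma_0\|\mathbf{v}\|_2)$ via a telescoping bound on $A^{(t)}$, and $\bar\Phi_{j,r,i}^{(t)}$ escapes through a geometric recursion on $B_i^{(t)}$ seeded at $B_i^{(0)}\geq 0.15\,\sigma_0\sigma_\xi\sqrt d$, with the two SNR hypotheses used exactly to verify $T_1\leq T_1^+$. One minor correction to your heuristic: the quantity governing which coefficient escapes first is the ratio of tensor-power escape times, $T_\Gamma/T_{\bar\Phi}\sim n^{-1}\operatorname{SNR}^{-q}$, not the instantaneous increment ratio $n^{-1}\operatorname{SNR}^{-(q+1)}$ you cite; the escape-time comparison is what directly matches the hypothesis $\operatorname{SNR}^{-q}n^{-1}\geq\widetilde\Omega(1)$.
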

\cref{lem:noise_first_stage} shows that the data noise memorization, $\max _{j, r} \bar{\Phi}_{j, r, i}^{(T_1)}$, exceeds the feature signal learning at the end of the first training stage $T_1$, provided that $ \min \{\operatorname{SNR }^{-1}\cdot \varepsilon, \operatorname{SNR }^{-q} \cdot n^{-1} \}  \geq \widetilde{\Omega}(1)$. In contrast to \cref{lem:signal_first_stage}, here demonstrates that
when $\operatorname{SNR }^{-1}\cdot \varepsilon \leq    \widetilde{\Omega}(1) \leq \operatorname{SNR }^{-q} \cdot n^{-1} $, the memorization of data noise does not occur in private learning, even though it may happen in standard non-private learning. However, it is important to note that this scenario only arises under the highly restrictive condition $\varepsilon \leq  n^{-q}$, which is an overly stringent condition and unlikely to be met in practical private deep learning scenarios. In other words, when data noise memorization occurs in standard non-private learning, it will also occur in private learning as long as $\varepsilon\geq \operatorname{SNR}^{1-q}n^{-1}$. 
Moreover, it is noticed that $\max _{j, r, i}|\underline{\Phi}_{j, r, i}^{(t)}|$ is bounded by $\widetilde{O}(\sigma_0 \sigma_{\xi} \sqrt{d})$, while under stricter privacy budget (smaller $\varepsilon$), this term is much larger than the non-private learning, indicating that private learning may amplifying the data noise memorization of the other filter.


Since we assume the model is over-parameterized, even if it fails to learn a good feature signal, it can still fit the data noise well enough for the training loss to converge to a small value, similar to the case of feature signal learning. 
\begin{corollary}\label{thm:main_noise_loss}
    Let $T, T_1$ be defined as above.
    Then under the same conditions as data noise memorization, for any $t \in[T_1, T]$, it holds that $|\Gamma_{j, r, i}^{(t)}| \leq \sigma_0 \|\mathbf{v}\|_2$ for all $j \in\{ \pm 1\}$ and $r \in[m]$ if $n^{1/q}\varepsilon \geq m$. Moreover, let $\mathbf{W}^*$ be the collection of CNN parameters with convolution filters $\mathbf{w}_{j, r}^*=\mathbf{w}_{j, r}^{(0)}+2 q m \log (2 q / \kappa))[\sum_{i=1}^n \mathbbm{1}(j=y_i) \cdot \frac{\bm{\xi}_i}{\|\bm{\xi}_i\|_2}]$. Then, with at least probability $1-1/d$, the following bound holds
    {\small \begin{equation}\label{eq:noise_loss}
          \sum_{s=T_1}^t L_D(\mathbf{W}^{(s)}) \leq  \underbrace{\frac{\|\mathbf{W}^{(T_1)}-\mathbf{W}^*\|_F^2}{(2 q-1) \eta}+\frac{(t-T_1+1)\kappa}{(2 q-1)}}_{\text{Non-private terms}} +   \underbrace{(t-T_1+1) \cdot \frac{\eta d \sigma_z^2 + \widetilde{O}(\sigma_z m^2 n^{1 / 2} \|\bm{\xi}\|_2^{-1})}{(2 q-1)} }_{\text{Private terms}}
    \end{equation}}
    for some $t \in[T_1, T]$, where we denote $\|\mathbf{W}\|_F=\sqrt{\|\mathbf{W}_{+1}\|_F^2+\|\mathbf{W}_{-1}\|_F^2}$.
\end{corollary}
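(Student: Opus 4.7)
The proof of Corollary 4.7 mirrors the argument for Corollary 4.5, with signal and noise exchanging roles. The plan is to (i) verify that even for $t \in [T_1, T]$ the signal coefficient $\Gamma_{j,r}^{(t)}$ remains small (of order $\sigma_0\|\mathbf{v}\|_2$), so that $\mathbf{W}^{(t)}$ is essentially a noise–memorizing configuration, (ii) use the convexity of $\ell$ together with the $q$-homogeneity of $\sigma$ to turn noisy gradient descent into a potential-function contraction against the noise-aligned reference $\mathbf{W}^*$, and (iii) telescope the one-step identity to recover the stated bound, tracking two extra private contributions: the variance of each Gaussian noise injection and the accumulated cross-term $\langle \mathbf{z}_s, \mathbf{W}^{(s)} - \mathbf{W}^* \rangle$. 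Throughout, Proposition 4.2 supplies uniform bounds on the coefficients $\Gamma_{j,r}^{(t)}$, $\bar\Phi_{j,r,i}^{(t)}$, $\underline\Phi_{j,r,i}^{(t)}$, and Lemma 4.1 supplies uniform bounds on the private-noise inner products against $\mathbf{v}$ and $\bm{\xi}_i$.

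\textbf{Step 1: confining the signal coefficient.} First I would extend Theorem 4.6 by inducting on $t$ to show $\Gamma_{j,r}^{(t)} \leq \sigma_0 \|\mathbf{v}\|_2$ for all $j, r$ and all $t \in [T_1, T]$. The update formula in Lemma 4.3 for $\Gamma_{j,r}^{(t)}$ involves $\sigma'(\langle \mathbf{w}_{j,r}^{(t)}, y_i \mathbf{v}\rangle)$, which in turn decomposes via Definition 4.1 as the sum of an initialization piece of order $\sigma_0\|\mathbf{v}\|_2$, a term proportional to $\Gamma_{j,r}^{(t)}$, an $O(\bar\Phi_{j,r,i}^{(t)} \|\bm{\xi}\|_2^{-1})$ cross term (negligible by the orthogonality built into Definition 3.1), and the perturbation $\eta\sum_s \langle \mathbf{z}_s, \mathbf{v}\rangle$ controlled by Lemma 4.1. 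Under Condition 4.6 the initialization scale $\sigma_0 \geq \widetilde O(\varepsilon^{-1/q}\|\bm{\xi}\|_2^{-1})$ dominates this perturbation, and the extra assumption $n^{1/q}\varepsilon \geq m$ ensures that the per-step increment of $\Gamma_{j,r}^{(t)}$ stays $O(\sigma_0\|\mathbf{v}\|_2 / T)$, so the inductive bound is preserved over the entire window.

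\textbf{Step 2: potential function and telescoping.} Expanding one step of the noisy update (4) around $\mathbf{W}^*$ gives
\begin{equation*}
\|\mathbf{W}^{(t+1)} - \mathbf{W}^*\|_F^2 = \|\mathbf{W}^{(t)} - \mathbf{W}^*\|_F^2 - 2\eta \langle \nabla L_D(\mathbf{W}^{(t)}) + \mathbf{z}_t,\, \mathbf{W}^{(t)} - \mathbf{W}^* \rangle + \eta^2 \|\nabla L_D(\mathbf{W}^{(t)}) + \mathbf{z}_t\|_F^2 .
\end{equation*}
Using convexity of $\ell$ and the standard $q$-homogeneity inequality $z\sigma'(z) \geq q\sigma(z)$, applied to $f(\mathbf{W}^*, \mathbf{x}_i)$ and $f(\mathbf{W}^{(t)}, \mathbf{x}_i)$, I would derive a descent-type bound $(2q-1)L_D(\mathbf{W}^{(t)}) \leq -\langle \nabla L_D(\mathbf{W}^{(t)}), \mathbf{W}^{(t)} - \mathbf{W}^*\rangle + \kappa$, using that the specific choice $\mathbf{w}_{j,r}^* = \mathbf{w}_{j,r}^{(0)} + 2qm\log(2q/\kappa)\sum_i \mathbbm{1}(j=y_i)\bm{\xi}_i/\|\bm{\xi}_i\|_2$ forces $y_i f(\mathbf{W}^*, \mathbf{x}_i) \geq \log(2q/\kappa)$ up to lower-order error (the activations evaluated on $\bm{\xi}_i$ each contribute the prescribed margin, while the signal contribution is negligible by Step 1). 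Telescoping from $T_1$ to $t$ and rearranging yields the non-private half of (5). The two private terms come from bounding (a) $\eta^2\sum_s \|\mathbf{z}_s\|_F^2$ by its expectation $\eta^2 T d\sigma_z^2$ with a standard $\chi^2$ tail, giving the $\eta d\sigma_z^2$ factor per step, and (b) $\eta\sum_s \langle \mathbf{z}_s, \mathbf{W}^{(s)} - \mathbf{W}^*\rangle$ by Cauchy–Schwarz combined with a martingale concentration inequality, noting that $\|\mathbf{W}^* - \mathbf{W}^{(0)}\|_F = \widetilde O(m^{3/2}\sqrt{n}/\|\bm{\xi}\|_2)$ (the $2m$ filter blocks each contribute $\widetilde O(m\sqrt{n/2}/\|\bm{\xi}\|_2)$ by near-orthogonality of the $\bm{\xi}_i$), whence the claimed scaling $\widetilde O(\sigma_z m^2 n^{1/2}\|\bm{\xi}\|_2^{-1})$ after absorbing the sub-Gaussian tail factor.

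\textbf{Main obstacle.} The most delicate step is Step 1: controlling $\Gamma_{j,r}^{(t)}$ over the full window $[T_1, T]$ under privacy. During the first stage Theorem 4.6 gives the needed bound, but after $T_1$ the gradient losses $\ell_i'^{(t)}$ are no longer uniformly small, so one cannot simply continue the geometric growth argument from the preceding proof. The key is that the noise-memorization assumption $\min\{\operatorname{SNR}^{-1}\varepsilon, \operatorname{SNR}^{-q} n^{-1}\} \geq \widetilde\Omega(1)$ together with $n^{1/q}\varepsilon \geq m$ keeps the effective signal activation $\sigma'(\langle \mathbf{w}_{j,r}^{(t)}, \mathbf{v}\rangle)$ at the initialization scale $\sigma_0^{q-1}\|\mathbf{v}\|_2^{q-1}$, which is strictly smaller than the corresponding noise activation by a factor polynomial in $\operatorname{SNR}^{-1}$. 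This separation, combined with the Lemma 4.1 bound on $|\eta\sum_s\langle\mathbf{z}_s,\mathbf{v}\rangle|$ that scales with $1/(n\varepsilon)$, is what prevents private noise from inadvertently amplifying signal learning in the noise-memorization regime; without the $n^{1/q}\varepsilon \geq m$ condition, the bookkeeping for the inductive step would fail.
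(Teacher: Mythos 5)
Your overall plan matches the paper's: a potential-function telescope against the noise-aligned reference $\mathbf{W}^*$, driven by convexity of $\ell$, the $q$-homogeneity identity $\langle\nabla f(\mathbf{W},\mathbf{x}),\mathbf{W}\rangle = q f(\mathbf{W},\mathbf{x})$, a margin lower bound of the form $y_i\langle\nabla f(\mathbf{W}^{(t)},\mathbf{x}_i),\mathbf{W}^*\rangle\geq q\log(2q/\kappa)$, and an induction confining $\Gamma_{j,r}^{(t)}$ to the initialization scale over $[T_1,T]$. So the route is essentially the paper's. Two things, however, deserve attention.

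In Step~1 you assert that $n^{1/q}\varepsilon\geq m$ keeps the \emph{per-step} increment of $\Gamma_{j,r}^{(t)}$ at $O(\sigma_0\|\mathbf{v}\|_2/T)$. That is not what holds: the increment is $\tfrac{\eta}{nm}\sum_i|\ell_i^{\prime(t)}|\,\sigma'(\langle\mathbf{w}_{j,r}^{(t)},y_i\mathbf{v}\rangle)\|\mathbf{v}\|_2^2$, and $|\ell_i^{\prime(t)}|=\Theta(1)$ at $t=T_1$, so a uniform $O(1/T)$ bound fails. What actually closes the induction is a bound on the \emph{cumulative} increment: using $|\ell'|\leq\ell$ and $\sum_i\ell_i^{(s)}\leq n L_D(\mathbf{W}^{(s)})$, you control $\sum_{s=T_1}^{t-1}\max_i|\ell_i^{\prime(s)}|$ by the already-telescoped sum $\sum_s L_D(\mathbf{W}^{(s)})\lesssim\|\mathbf{W}^{(T_1)}-\mathbf{W}^*\|_F^2/\eta$, and only then do the SNR condition for noise memorization and $n^{1/q}\varepsilon\geq m$ enter. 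This also means your two steps are listed in the wrong dependency order: the potential-function telescope must be established first (it only needs the coarse coefficient bound $|\Gamma_{j,r}^{(t)}|\leq\alpha=\widetilde{O}(1)$ from the boundedness proposition, not the tight $\sigma_0\|\mathbf{v}\|_2$ bound), and the $\Gamma$-induction comes second. Once that ordering is fixed there is no circularity and your plan goes through.

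Your martingale treatment of $\eta\sum_s\langle\mathbf{z}_s,\mathbf{W}^{(s)}-\mathbf{W}^*\rangle$ would give a $\sqrt{t-T_1+1}$ accumulation, which is tighter than the stated corollary; the paper instead bounds $|\eta\langle\mathbf{z}_t,\mathbf{W}^{(t)}-\mathbf{W}^*\rangle|$ by $\eta\sigma_z\cdot\widetilde{O}(m^2 n^{1/2}\|\bm{\xi}\|_2^{-1})$ per step with a union bound, producing the linear private term appearing in the corollary. Either route recovers the statement, so this is a permissible (indeed slightly stronger) variant rather than a gap.
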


\cref{thm:main_noise_loss} shares the same empirical loss structure as \cref{thm:main_signal_loss}, with the only difference being the bound on $\|\mathbf{W}^{(T_1)}-\mathbf{W}^*\|_F$. This results in the term $\widetilde{O}(\sigma_z m^2 n^{1 / 2} \|\bm{\xi}\|_2^{-1})$ appearing here. Therefore, under the same data assumptions as \cref{thm:main_signal_loss}, the empirical risk in \cref{eq:noise_loss} can also be upper bounded by $O(\kappa)$, provided there exists a step size $\eta$ satisfying:
$
\eta \geq \max \left\{\frac{2d \sigma_0^2}{\mu^2 T \kappa}, \frac{2 m^{2}n^{1/2} \|\bm{\xi}\|_2^{-1}\sigma_0}{\mu \sqrt{T} \kappa} \right\}.
$
However, even if the private CNN model achieves a sufficiently small training loss under the data noise memorization scenario, it still fails to exhibit good generalization ability, as it primarily learns the label-independent data noise rather than the label-dependent feature signals.
\begin{corollary}\label{coro:popu_noise}
    Under the same conditions as data noise memorization, within $T$ iterations, regardless of how the sample size $n$ and privacy budget $\varepsilon$ chosen, with at least probability $1-1/d$, we can find $\mathbf{W}^{(\widetilde{T})}$ such that $L_D(\mathbf{W}^{(\widetilde{T})}) \leq O(\kappa)$. Additionally, for some $0 \leq t \leq \widetilde{T}$ we have that $L_{\mathcal{D}}(\mathbf{W}^{(t)}) \geq 0.1$.
\end{corollary}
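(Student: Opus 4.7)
The first claim---existence of $\widetilde{T}$ with $L_D(\mathbf{W}^{(\widetilde{T})})\leq O(\kappa)$---follows directly from \cref{thm:main_noise_loss}. I would pick $T=T_p^{\ast}$ together with a learning rate $\eta\geq \max\{2d\sigma_0^{2}/(\mu^{2} T\kappa),\,2m^{2}n^{1/2}\|\bm{\xi}\|_2^{-1}\sigma_0/(\mu\sqrt{T}\kappa)\}$, exactly as in the discussion below \cref{eq:noise_loss}. Plugging this in makes $\sum_{s=T_1}^{T}L_D(\mathbf{W}^{(s)})=O(\kappa(T-T_1+1))$, and an averaging/pigeonhole argument then produces some $\widetilde{T}\in[T_1,T]$ realizing the stated bound.

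For the test lower bound I would fix $t\leq \widetilde{T}$ and draw $(\mathbf{x},y)\sim\mathcal{D}$ with noise patch $\bm{\xi}$ independent of the training set and of the privacy-noise sequence $\{\mathbf{z}_s\}$. Using \cref{def:decom_coefficient_main} together with the deterministic orthogonalities $\bm{\xi}\perp\mathbf{v}$ and $\bm{\xi}_i\perp\mathbf{v}$ built into \cref{def:data_dis}, I expand
\begin{align*}
\langle \mathbf{w}_{j,r}^{(t)},y\mathbf{v}\rangle &= \langle \mathbf{w}_{j,r}^{(0)},y\mathbf{v}\rangle + jy\,\Gamma_{j,r}^{(t)} - \eta y\langle {\textstyle\sum}_s\mathbf{z}_s,\mathbf{v}\rangle, \\
\langle \mathbf{w}_{j,r}^{(t)},\bm{\xi}\rangle &= \langle \mathbf{w}_{j,r}^{(0)},\bm{\xi}\rangle + \sum_{i=1}^n \Phi_{j,r,i}^{(t)}\frac{\langle\bm{\xi}_i,\bm{\xi}\rangle}{\|\bm{\xi}_i\|_2^{2}} - \eta\langle {\textstyle\sum}_s\mathbf{z}_s,\bm{\xi}\rangle,
\end{align*}
noting that $y$ enters the second display only implicitly, through the training-label dependence of the coefficients $\Phi_{j,r,i}^{(t)}$.

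I would then bound the two sides separately. On the signal side, \cref{lem:noise_first_stage} gives $\Gamma_{j,r}^{(t)}=\widetilde{O}(\sigma_0\|\mathbf{v}\|_2)$, Gaussian concentration gives $|\langle\mathbf{w}_{j,r}^{(0)},\mathbf{v}\rangle|=\widetilde{O}(\sigma_0\|\mathbf{v}\|_2)$, and \cref{lem:xi_zt2_main} combined with $T\leq T_p^{\ast}$ and \cref{con:noise_memo} makes the private-noise projection on $\mathbf{v}$ of the same order. After applying $\sigma(z)=\max\{0,z\}^{q}$ and summing over $r$, the total signal contribution to $f(\mathbf{W}^{(t)},\mathbf{x})$ is $\widetilde{O}((\sigma_0\|\mathbf{v}\|_2)^{q})$, which is vanishingly small since $\sigma_0\|\mathbf{v}\|_2\ll 1$ under \cref{con:noise_memo} and $q>2$. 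On the noise side, independence of $\bm{\xi}$ from $\{\bm{\xi}_i\}$ gives $|\langle\bm{\xi}_i,\bm{\xi}\rangle|=\widetilde{O}(\sigma_{\xi}^{2}\sqrt{d})$ while $\|\bm{\xi}_i\|_2^{2}=\Theta(\sigma_{\xi}^{2}d)$, so using the $\Phi$-bound of \cref{pro:coeff_main} and the dimension condition $d=\widetilde{\Omega}(n^{4})$ the cross-term $\sum_i\Phi_{j,r,i}^{(t)}\langle\bm{\xi}_i,\bm{\xi}\rangle/\|\bm{\xi}_i\|_2^{2}$ has magnitude $\widetilde{O}(n/\sqrt{d})=o(\sigma_0\|\bm{\xi}\|_2)$. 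Thus, up to a $y$-independent common additive shift and a lower-order remainder, $\langle\mathbf{w}_{j,r}^{(t)},\bm{\xi}\rangle$ reduces to the initialization term $\langle\mathbf{w}_{j,r}^{(0)},\bm{\xi}\rangle\sim\mathcal{N}(0,\sigma_0^{2}\|\bm{\xi}\|_2^{2})$.

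The finish is a symmetry-plus-anti-concentration step applied to $yf(\mathbf{W}^{(t)},\mathbf{x})=F_y(\mathbf{W}_y^{(t)},\mathbf{x})-F_{-y}(\mathbf{W}_{-y}^{(t)},\mathbf{x})$. After discarding the $\widetilde{O}((\sigma_0\|\mathbf{v}\|_2)^{q})$ signal contribution, $F_{+1}$ and $F_{-1}$ depend on the test noise only through two collections of Gaussian inner products against the i.i.d.\ initial filters $\mathbf{w}_{\pm 1,r}^{(0)}$ shifted by a common, $y$-independent private offset; conditional on the training data, privacy noise, and the realization of $\bm{\xi}$, the pair $(F_{+1},F_{-1})$ is therefore exchangeable, yielding $\mathbb{P}(yf<0)=\mathbb{P}(F_{-y}>F_y)\geq\tfrac{1}{2}(1-\mathbb{P}(F_{+1}=F_{-1}))\geq 0.1$ by continuity of the Gaussian law and absorbing the exponentially small signal slack. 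The main obstacle I anticipate is that the memorization coefficients $\Phi_{j,r,i}^{(t)}$ are not symmetric in $j=\pm 1$---they depend on which training points carry label $j$---so exact exchangeability of $(F_{+1},F_{-1})$ fails. I expect one has to argue that these $j$-asymmetric perturbations are dominated by the $\widetilde{\Theta}(\sigma_0\|\bm{\xi}\|_2)$-scale randomness of the initial filters acting on the fresh $\bm{\xi}$, so that an \emph{approximate} exchangeability combined with a quantitative Gaussian anti-concentration bound still delivers the constant-order error probability uniformly in $n$ and $\varepsilon$.
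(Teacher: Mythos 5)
Your handling of the first claim (existence of $\widetilde{T}$ with $L_D(\mathbf{W}^{(\widetilde{T})})\le O(\kappa)$) matches the paper: take $T=T_p^*$, calibrate $\eta$ as you state, and average. For the second claim you take a genuinely different route from the paper, and there is a real gap in it.

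The paper's argument is far more elementary and does not invoke symmetry or anti-concentration at all. It simply bounds $\|\mathbf{w}_{j,r}^{(t)}\|_2 = \widetilde O(\sigma_0\sqrt d)$ (using $d=\widetilde\Omega(n^4)$ to control $\sum_i |\Phi_{j,r,i}^{(t)}|/\|\bm\xi_i\|_2$ and $\sigma_z=\sigma_0/(\eta\mu\sqrt T)$ to control $\|\eta\sum_s\mathbf z_s\|_2$), then uses that the fresh $\bm\xi$ is Gaussian and independent of the trained weights, so $\langle\mathbf{w}_{j,r}^{(t)},\bm\xi\rangle$ is mean-zero Gaussian with standard deviation $\le\sigma_\xi\|\mathbf{w}_{j,r}^{(t)}\|_2$. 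With constant probability over the test point, all these inner products and the signal inner products $\langle\mathbf{w}_{j,r}^{(t)},y\mathbf v\rangle$ (small since $\Gamma=\widetilde O(\sigma_0\|\mathbf v\|_2)$) are $O(1)$, so $F_{\pm1}(\mathbf{W}^{(t)},\mathbf x)\le 1$, hence $y\cdot f(\mathbf{W}^{(t)},\mathbf x)\le 1$, hence $\ell(yf)\ge\log(1+e^{-1})$, hence $\mathbb E[\ell(yf)]\ge\tfrac12\log(1+e^{-1})\ge 0.1$. In other words, the paper lower bounds the expected \emph{logistic loss}, not the classification error $\mathbb P(yf<0)$, despite the definition printed after \cref{coro:popu_sign}; the two claims are not equivalent, and for a lower bound the expected-loss version is the weaker one.

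You are instead attempting the strictly stronger statement $\mathbb P(F_{-y}>F_y)\ge 0.1$ via approximate exchangeability of $(F_{+1},F_{-1})$. You flag the right obstacle, and it is in fact fatal to the argument as sketched. First, $\Phi_{j,r,i}^{(t)}$ is not a $j$-symmetric function of the initialization: the label pattern breaks the symmetry, and the initialization $\mathbf{w}_{j,r}^{(0)}$ you want to isolate as a symmetric Gaussian source feeds forward into $\Phi_{j,r,i}^{(t)}$ through the training trajectory, so the ``common shift plus symmetric fluctuation'' decomposition does not factor cleanly. Second, your dominance claim $\sum_i\Phi_{j,r,i}^{(t)}\langle\bm\xi_i,\bm\xi\rangle/\|\bm\xi_i\|_2^2 = \widetilde O(n/\sqrt d) = o(\sigma_0\|\bm\xi\|_2)$ is not guaranteed by \cref{con:noise_memo}: the only lower bound on $\sigma_0\|\bm\xi\|_2$ coming from the condition is of the \emph{same} $\widetilde\Theta(n/\sqrt d)$ order (via $\sigma_0\ge\widetilde\Omega((n/\sqrt d)\|\mathbf v\|_2^{-1})$) or of order $\varepsilon^{-1/q}$ (which can be arbitrarily small when $\varepsilon$ is large), so the $j$-asymmetric perturbation is not generically dominated by the initialization randomness. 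Without that dominance, the approximate exchangeability plus anti-concentration step does not close. If your intended target is the logistic-loss lower bound that the paper actually proves, the simpler boundedness argument above suffices and avoids the symmetry issue entirely.
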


\begin{figure*}[t]
\vspace{-0.15in}
    \centering
    \begin{subfigure}[b]{0.23\textwidth}
        \includegraphics[width=\textwidth]{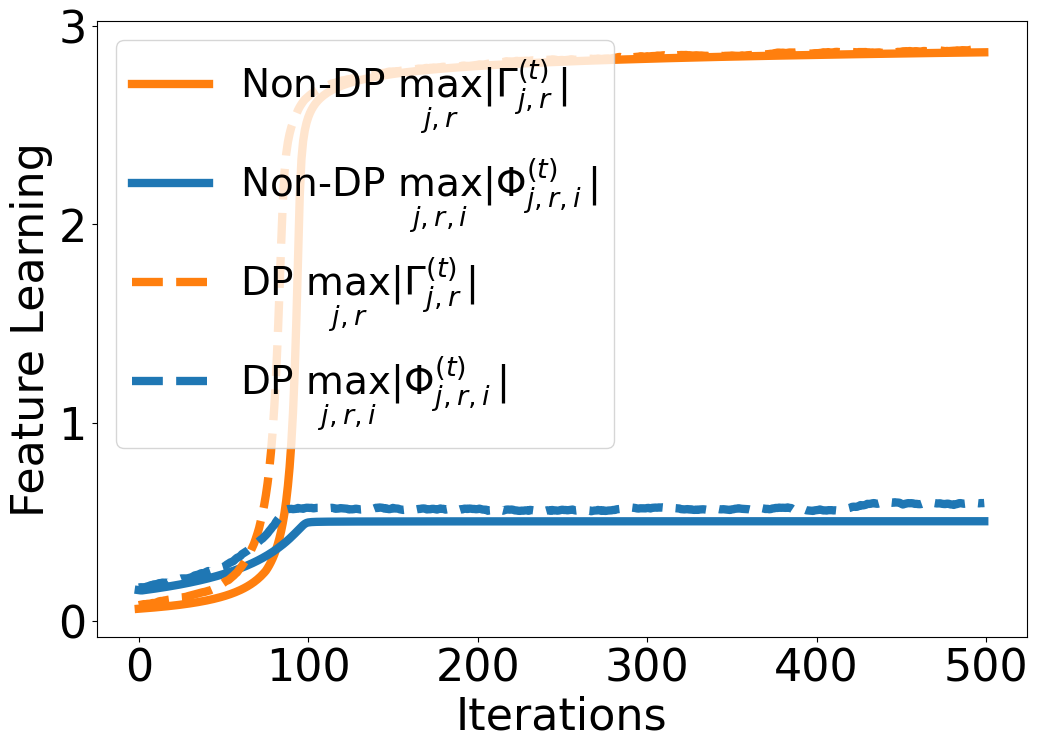}
        \caption{$\operatorname{SNR}=0.6,\ \varepsilon=5$}
        \label{fig:snr06_eps5}
    \end{subfigure}\hfill
    \begin{subfigure}[b]{0.23\textwidth}
        \includegraphics[width=\textwidth]{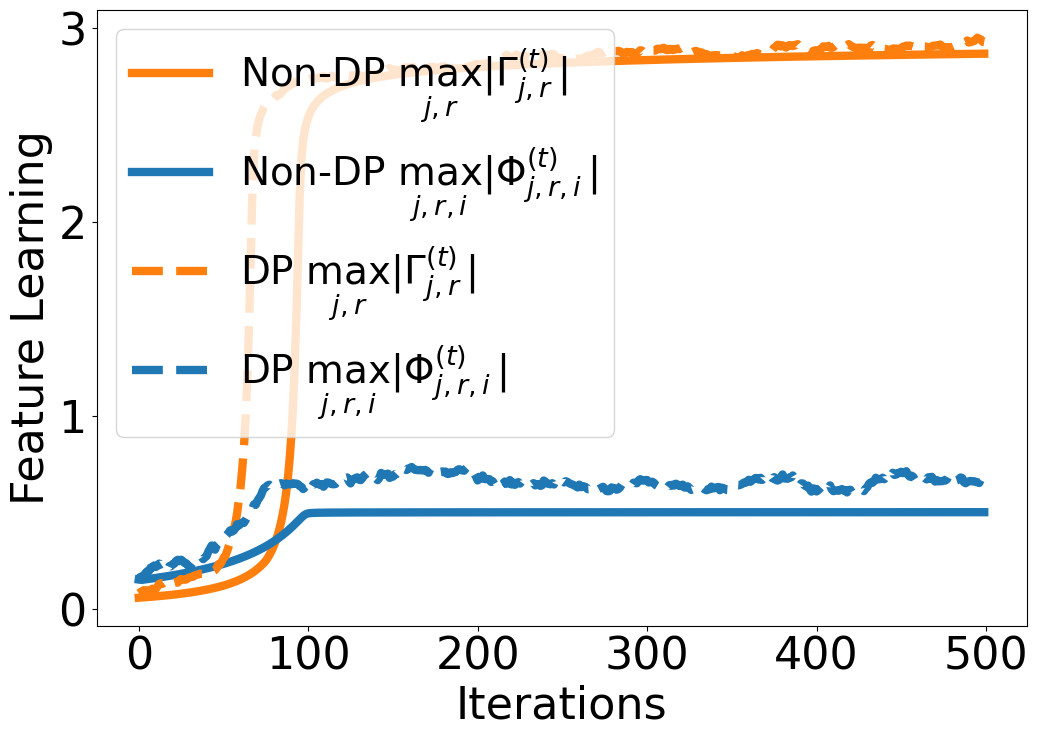}
        \caption{$\operatorname{SNR}=0.6,\ \varepsilon=1$}
        \label{fig:snr06_eps1}
    \end{subfigure}\hfill
    \begin{subfigure}[b]{0.23\textwidth}
        \includegraphics[width=\textwidth]{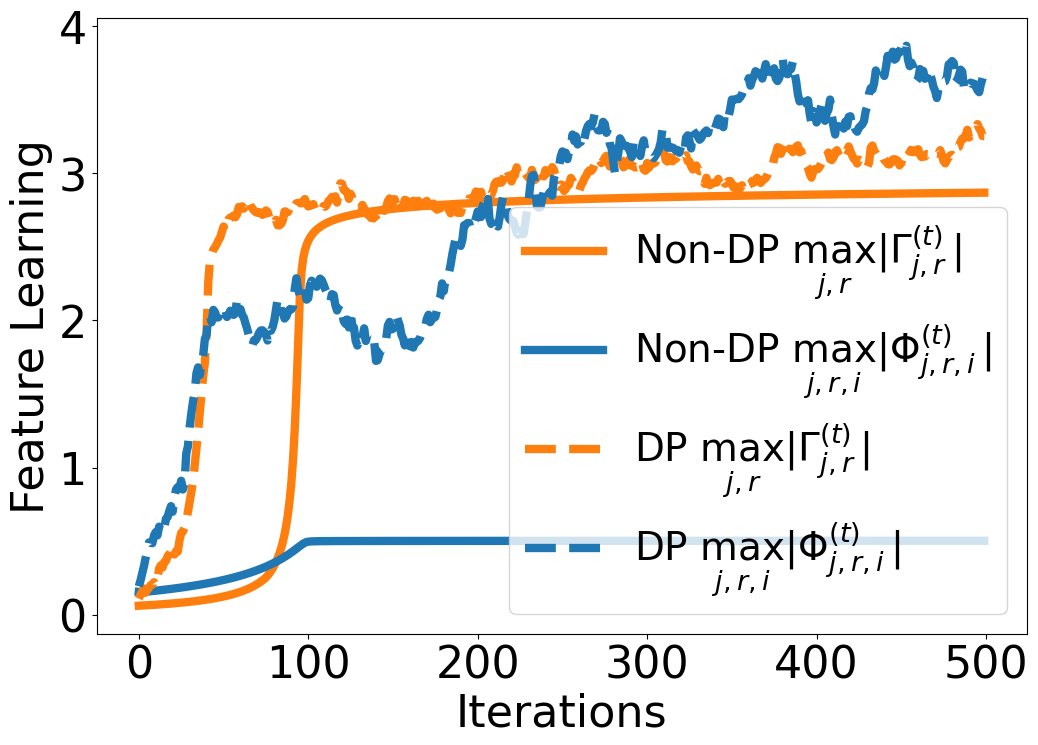}
        \caption{$\operatorname{SNR}=0.6,\ \varepsilon=0.2$}
        \label{fig:snr06_eps02}
    \end{subfigure}\hfill
    \begin{subfigure}[b]{0.23\textwidth}
        \includegraphics[width=\textwidth]{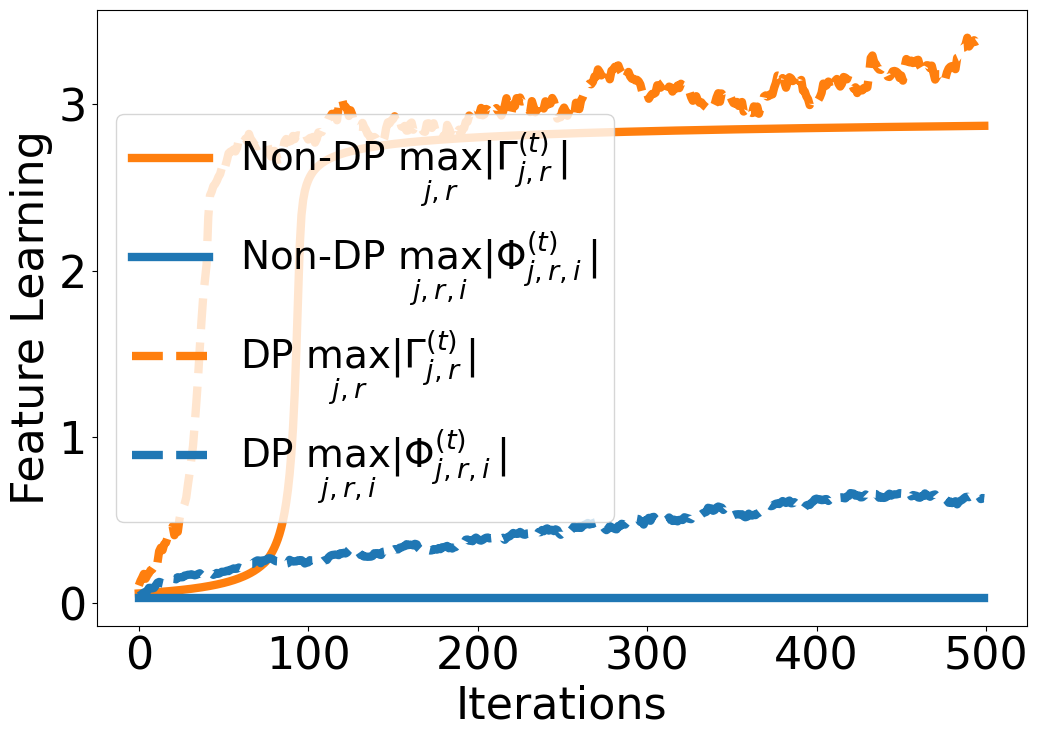}
        \caption{$\operatorname{SNR}=3,\ \varepsilon=0.2$}
        \label{fig:snr3_eps02}
    \end{subfigure}

    \vspace{0.6em}

    \begin{subfigure}[b]{0.23\textwidth}
        \includegraphics[width=\textwidth]{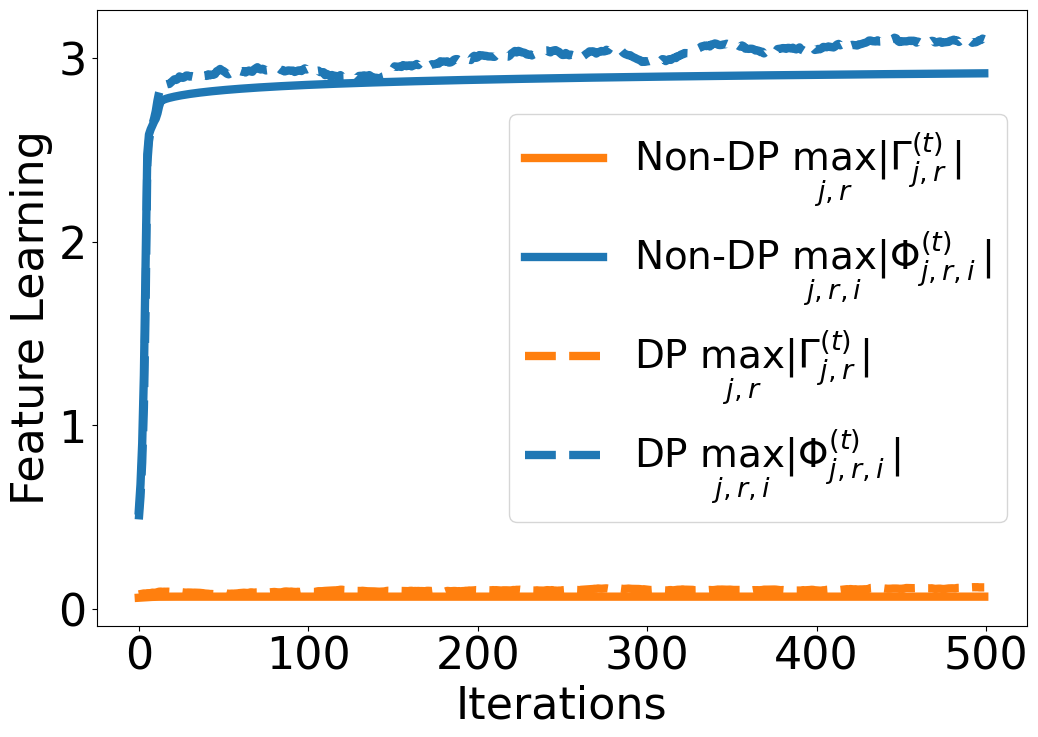}
        \caption{$\operatorname{SNR}=0.2,\ \varepsilon=5$}
        \label{fig:snr02_eps5}
    \end{subfigure}\hfill
    \begin{subfigure}[b]{0.23\textwidth}
        \includegraphics[width=\textwidth]{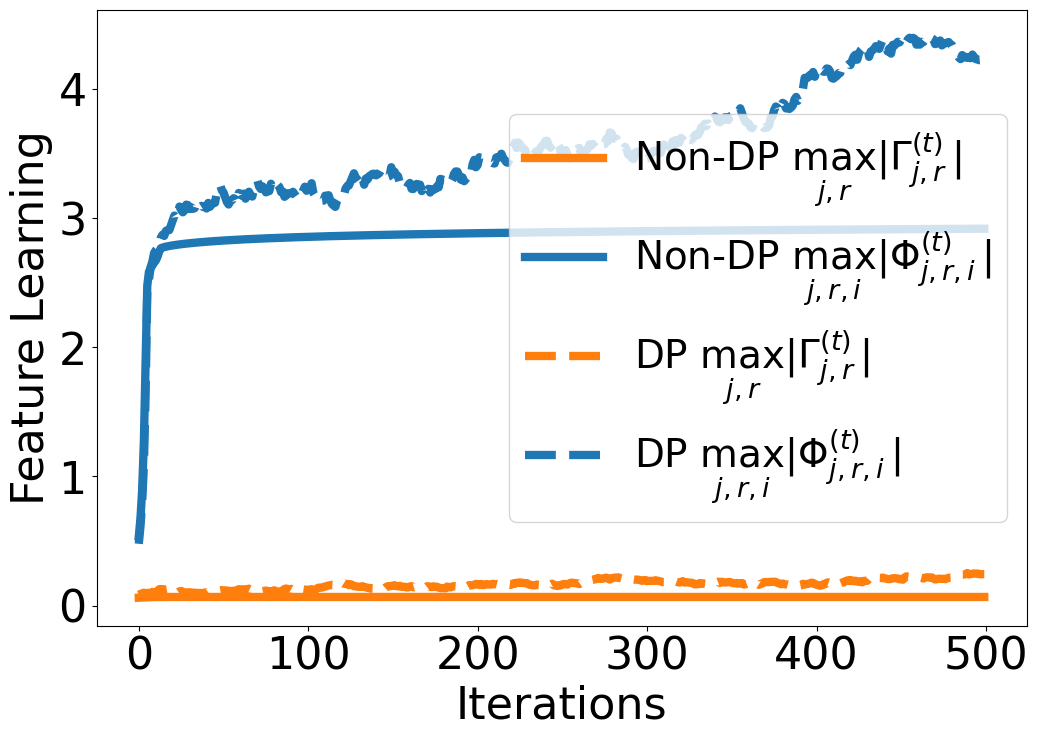}
        \caption{$\operatorname{SNR}=0.2,\ \varepsilon=1$}
        \label{fig:snr02_eps1}
    \end{subfigure}\hfill
    \begin{subfigure}[b]{0.23\textwidth}
        \includegraphics[width=\textwidth]{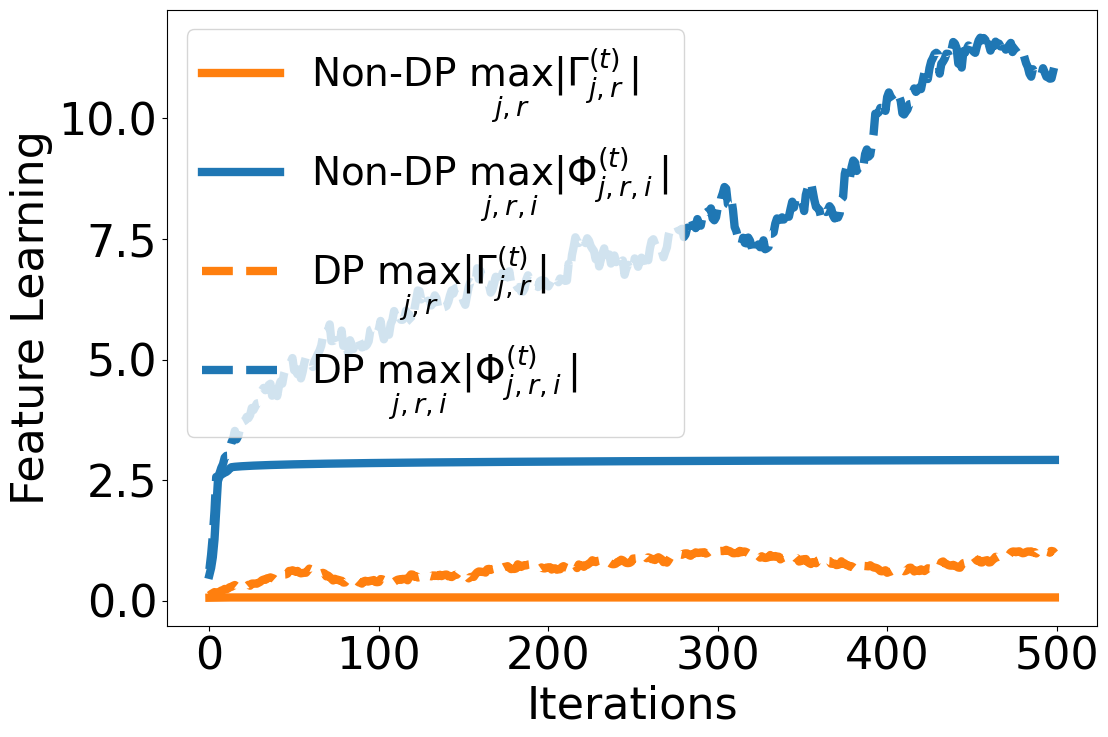}
        \caption{$\operatorname{SNR}=0.2,\ \varepsilon=0.2$}
        \label{fig:snr02_eps02}
    \end{subfigure}\hfill
    \begin{subfigure}[b]{0.23\textwidth}
        \includegraphics[width=\textwidth]{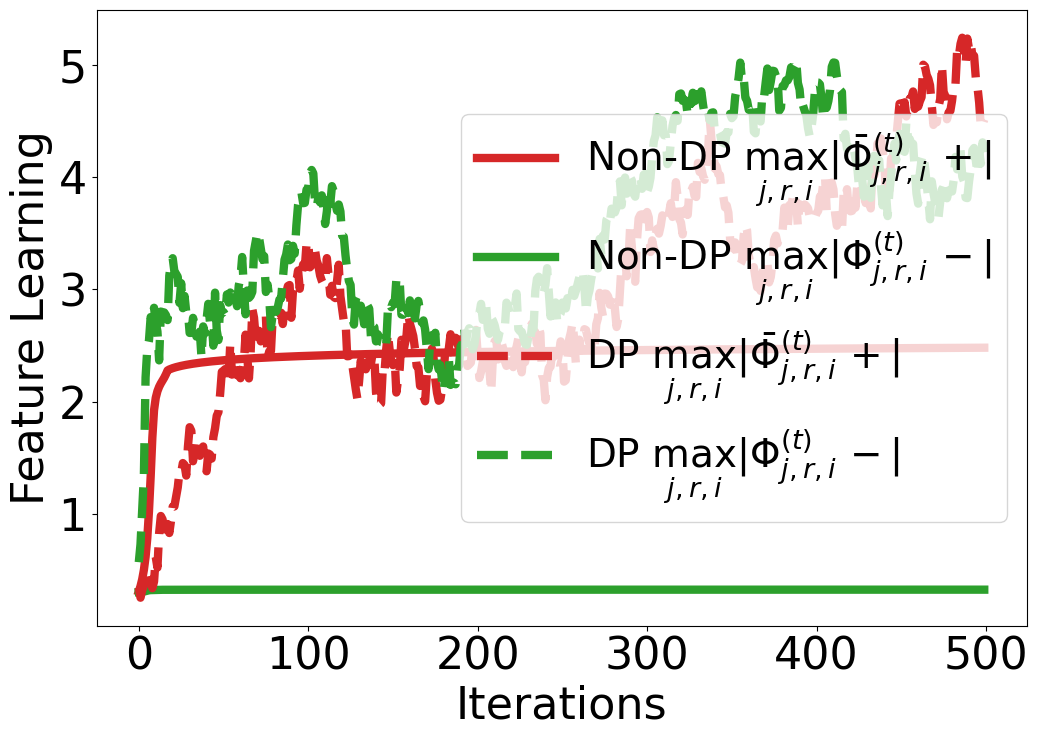}
        \caption{$\operatorname{SNR}=0.2,\ \varepsilon=0.2$ }
        \label{fig:snr02_eps02_plus}
    \end{subfigure}

    \caption{Comparison of Feature Signal and Data Noise in (Non) Private Learning. The figure compares the dynamics of feature learning $\max_{j,r}\lvert\Gamma^{(t)}_{j,r}\rvert$ and data noise $\max_{j,r,i}\lvert\Phi^{(t)}_{j,r,i}\rvert$ across varying privacy budgets $\varepsilon$ and SNR. Subfigures \ref{fig:snr06_eps5}–\ref{fig:snr06_eps02} correspond to $\operatorname{SNR}=0.6$ with $\varepsilon\in\{5,1,0.2\}$, subfigure \ref{fig:snr3_eps02} corresponds to $\operatorname{SNR}=3$ with $\varepsilon=0.2$, and subfigures \ref{fig:snr02_eps5}–\ref{fig:snr02_eps02} correspond to $\operatorname{SNR}=0.2$ with $\varepsilon\in\{5,1,0.2\}$. Subfigure \ref{fig:snr02_eps02_plus} shows a decomposition of $\max_{j,r,i}\lvert\Phi^{(t)}_{j,r,i}\rvert$ at $\operatorname{SNR}=0.2,\ \varepsilon=0.2$.}
    \label{fig:snr_all}
    \vspace{-0.2in}
\end{figure*}

\section{Experiment}
\vspace{-0.1in}
\label{sec:experiment}
Due to space limitations, we present only a synthetic data experiment here. Additional results on the CIFAR-10 dataset \cite{Krizhevsky2009}, including an exploration of SNR, are provided in the Appendix. 
In the synthetic experiment, we compare the evolution of feature signal and data noise between NoisyGD and standard (non-private) training.

\textbf{Experimental Setup.}
We conducted experiments by generating synthetic data defined in \cref{def:data_dis} with a controlled signal-to-noise ratio ($\operatorname{SNR} = 0.2, 0.6, 3$). The dataset was constructed using a fixed signal vector $\mathbf{v}$ and a data noise component $\bm{\xi}$. We implemented a two-layer CNN with an input dimension of $d=1000, m=10$ filters, and a polynomial ReLU activation function with a power parameter $q=3$. The model weights were initialized randomly from a normal distribution with a small variance ($\sigma_0=0.001$), and training was performed using gradient descent with a learning rate of $\eta =0.01$ over $500$ epochs. For private learning, we consider the noisy gradient descent during weight updates, with a privacy budget of $\epsilon=0.2,1,5$ and $\delta=10^{-5}$. Throughout the training process, we monitored the maximum inner products between the learned weights and the signal and data noise components, denoted as $\max _{j, r}|\Gamma_{j, r}^{(t)}|$ and $\max _{j, r, i}|\Phi_{j, r, i}^{(t)}|$, respectively. 

\textbf{Private learning requires stronger feature signal.} In \cref{fig:snr_all}, we compare the dynamics of feature signal learning and data noise memorization during the non-private and private training process under the higher SNR. In subfigures \ref{fig:snr06_eps5} and \ref{fig:snr06_eps1}, it can be observed that when the $\operatorname{SNR} = 0.6$ is sufficient for non-private training and the privacy budgets $\varepsilon =1, 5$ are moderately larger, the feature learning trajectories in private training closely align with those of non-private training. However, in subfigures \ref{fig:snr06_eps02}, when the $\operatorname{SNR} = 0.6$ is sufficient for non-private training and $\varepsilon =0.2$ is relatively smaller, the data noise memorization (represented by the blue dashed line) no longer remains below the feature signal like the non-private training. This indicates the $\operatorname{SNR} = 0.6$ is insufficient for private training under stronger privacy constraints. When we increase $\operatorname{SNR} $ to $3$ in \cref{fig:snr3_eps02}, we observe that private training regains its ability to perform feature learning, exhibiting trends similar to non-private training. This demonstrates that as the privacy budget $\varepsilon$ decreases (i.e., stronger privacy guarantees), the requirement for a higher SNR becomes more pronounced.
 

\textbf{Private learning may amplify data noise memorization.} 
In \cref{fig:snr02_eps5}-\cref{fig:snr02_eps02}, we observe that if non-private training exhibits data noise memorization {(the blue line is higher than the orange line)}, private training is also prone to this behavior {(corresponding to the dashed line)}. Moreover, as illustrated in Figure \ref{fig:snr02_eps02_plus}, the green DP line (representing noise memorization under private learning) shows a noticeable increase, while the corresponding non-DP green line remains unchanged. This indicates that private training, particularly with smaller privacy budgets, not only fails to suppress data noise but also amplifies its memorization for other filters, aligning with our theoretical analysis in \cref{lem:noise_first_stage}.

\section{Conclusion}
In this paper, we introduced the first theoretical framework to analyze the dynamics of feature learning in differentially private learning, focusing on the trade-offs between feature signals and data noise through a decomposition of these components. Using a two-layer CNN, we demonstrated that private learning necessitates a higher signal-to-noise ratio (SNR) compared to non-private training to effectively capture features, particularly under stringent privacy budgets. Additionally, we showed that data noise memorization, if present in non-private learning, persists in private learning, resulting in poor generalization even when training losses are minimized. Our findings highlight the critical role of feature enhancement in private learning, aligning with prior empirical studies and providing valuable insights for designing effective privacy-preserving learning systems.

\bibliography{iclr2026_conference}

@article{cao2022benign,
  title={Benign overfitting in two-layer convolutional neural networks},
  author={Cao, Yuan and Chen, Zixiang and Belkin, Misha and Gu, Quanquan},
  journal={Advances in neural information processing systems},
  volume={35},
  pages={25237--25250},
  year={2022}
}

@inproceedings{kou2023benign,
  title={Benign overfitting in two-layer ReLU convolutional neural networks},
  author={Kou, Yiwen and Chen, Zixiang and Chen, Yuanzhou and Gu, Quanquan},
  booktitle={International Conference on Machine Learning},
  pages={17615--17659},
  year={2023},
  organization={PMLR}
}

@inproceedings{jelassi2022towards,
  title={Towards understanding how momentum improves generalization in deep learning},
  author={Jelassi, Samy and Li, Yuanzhi},
  booktitle={International Conference on Machine Learning},
  pages={9965--10040},
  year={2022},
  organization={PMLR}
}

@inproceedings{zou2023benefits,
  title={The benefits of mixup for feature learning},
  author={Zou, Difan and Cao, Yuan and Li, Yuanzhi and Gu, Quanquan},
  booktitle={International Conference on Machine Learning},
  pages={43423--43479},
  year={2023},
  organization={PMLR}
}

@article{allen2020towards,
  title={Towards understanding ensemble, knowledge distillation and self-distillation in deep learning},
  author={Allen-Zhu, Zeyuan and Li, Yuanzhi},
  journal={arXiv preprint arXiv:2012.09816},
  year={2020}
}

@inproceedings{song2013stochastic,
  title={Stochastic gradient descent with differentially private updates},
  author={Song, Shuang and Chaudhuri, Kamalika and Sarwate, Anand D},
  booktitle={2013 IEEE global conference on signal and information processing},
  pages={245--248},
  year={2013},
  organization={IEEE}
}

@article{lundervold2019overview,
  title={An overview of deep learning in medical imaging focusing on MRI},
  author={Lundervold, Alexander Selvikv{\aa}g and Lundervold, Arvid},
  journal={Zeitschrift f{\"u}r Medizinische Physik},
  volume={29},
  number={2},
  pages={102--127},
  year={2019},
  publisher={Elsevier}
}

@article{chlap2021review,
  title={A review of medical image data augmentation techniques for deep learning applications},
  author={Chlap, Phillip and Min, Hang and Vandenberg, Nym and Dowling, Jason and Holloway, Lois and Haworth, Annette},
  journal={Journal of Medical Imaging and Radiation Oncology},
  volume={65},
  number={5},
  pages={545--563},
  year={2021},
  publisher={Wiley Online Library}
}

@article{shamshad2023transformers,
  title={Transformers in medical imaging: A survey},
  author={Shamshad, Fahad and Khan, Salman and Zamir, Syed Waqas and Khan, Muhammad Haris and Hayat, Munawar and Khan, Fahad Shahbaz and Fu, Huazhu},
  journal={Medical Image Analysis},
  volume={88},
  pages={102802},
  year={2023},
  publisher={Elsevier}
}

@article{ozbayoglu2020deep,
  title={Deep learning for financial applications: A survey},
  author={Ozbayoglu, Ahmet Murat and Gudelek, Mehmet Ugur and Sezer, Omer Berat},
  journal={Applied soft computing},
  volume={93},
  pages={106384},
  year={2020},
  publisher={Elsevier}
}

@article{oroojlooy2023review,
  title={A review of cooperative multi-agent deep reinforcement learning},
  author={Oroojlooy, Afshin and Hajinezhad, Davood},
  journal={Applied Intelligence},
  volume={53},
  number={11},
  pages={13677--13722},
  year={2023},
  publisher={Springer}
}

@article{bi2024advanced,
  title={Advanced portfolio management in finance using deep learning and artificial intelligence techniques: Enhancing investment strategies through machine learning models},
  author={Bi, Shuochen and Lian, Yufan},
  journal={Journal of Artificial Intelligence Research},
  volume={4},
  number={1},
  pages={233--298},
  year={2024}
}

@inproceedings{dwork2006calibrating,
  title={Calibrating noise to sensitivity in private data analysis},
  author={Dwork, Cynthia and McSherry, Frank and Nissim, Kobbi and Smith, Adam},
  booktitle={Theory of Cryptography: Third Theory of Cryptography Conference, TCC 2006, New York, NY, USA, March 4-7, 2006. Proceedings 3},
  pages={265--284},
  year={2006},
  organization={Springer}
}

@article{bagdasaryan2019differential,
  title={Differential privacy has disparate impact on model accuracy},
  author={Bagdasaryan, Eugene and Poursaeed, Omid and Shmatikov, Vitaly},
  journal={Advances in neural information processing systems},
  volume={32},
  year={2019}
}

@inproceedings{shokri2015privacy,
  title={Privacy-preserving deep learning},
  author={Shokri, Reza and Shmatikov, Vitaly},
  booktitle={Proceedings of the 22nd ACM SIGSAC conference on computer and communications security},
  pages={1310--1321},
  year={2015}
}

@article{tramer2020differentially,
  title={Differentially private learning needs better features (or much more data)},
  author={Tramer, Florian and Boneh, Dan},
  journal={arXiv preprint arXiv:2011.11660},
  year={2020}
}

@article{de2022unlocking,
  title={Unlocking high-accuracy differentially private image classification through scale},
  author={De, Soham and Berrada, Leonard and Hayes, Jamie and Smith, Samuel L and Balle, Borja},
  journal={arXiv preprint arXiv:2204.13650},
  year={2022}
}

@article{li2021large,
  title={Large language models can be strong differentially private learners},
  author={Li, Xuechen and Tramer, Florian and Liang, Percy and Hashimoto, Tatsunori},
  journal={arXiv preprint arXiv:2110.05679},
  year={2021}
}

@article{arora2022can,
  title={Can Foundation Models Help Us Achieve Perfect Secrecy?},
  author={Arora, Simran and R{\'e}, Christopher},
  journal={arXiv preprint arXiv:2205.13722},
  year={2022}
}

@article{mehta2023towards,
  title={Towards large scale transfer learning for differentially private image classification},
  author={Mehta, Harsh and Thakurta, Abhradeep Guha and Kurakin, Alexey and Cutkosky, Ashok},
  journal={Transactions on Machine Learning Research},
  year={2023}
}

@article{kurakin2022toward,
  title={Toward training at imagenet scale with differential privacy},
  author={Kurakin, Alexey and Song, Shuang and Chien, Steve and Geambasu, Roxana and Terzis, Andreas and Thakurta, Abhradeep},
  journal={arXiv preprint arXiv:2201.12328},
  year={2022}
}

@inproceedings{nasr2023effectively,
  title={Effectively using public data in privacy preserving machine learning},
  author={Nasr, Milad and Mahloujifar, Saeed and Tang, Xinyu and Mittal, Prateek and Houmansadr, Amir},
  booktitle={International Conference on Machine Learning},
  pages={25718--25732},
  year={2023},
  organization={PMLR}
}

@article{bu2024pre,
  title={Pre-training Differentially Private Models with Limited Public Data},
  author={Bu, Zhiqi and Zhang, Xinwei and Hong, Mingyi and Zha, Sheng and Karypis, George},
  journal={arXiv preprint arXiv:2402.18752},
  year={2024}
}

@article{tang2024private,
  title={Private fine-tuning of large language models with zeroth-order optimization},
  author={Tang, Xinyu and Panda, Ashwinee and Nasr, Milad and Mahloujifar, Saeed and Mittal, Prateek},
  journal={arXiv preprint arXiv:2401.04343},
  year={2024}
}

@inproceedings{bassily2014private,
  title={Private empirical risk minimization: Efficient algorithms and tight error bounds},
  author={Bassily, Raef and Smith, Adam and Thakurta, Abhradeep},
  booktitle={2014 IEEE 55th annual symposium on foundations of computer science},
  pages={464--473},
  year={2014},
  organization={IEEE}
}

@article{wang2017differentially,
  title={Differentially private empirical risk minimization revisited: Faster and more general},
  author={Wang, Di and Ye, Minwei and Xu, Jinhui},
  journal={Advances in Neural Information Processing Systems},
  volume={30},
  year={2017}
}

@inproceedings{feldman2020private,
  title={Private stochastic convex optimization: optimal rates in linear time},
  author={Feldman, Vitaly and Koren, Tomer and Talwar, Kunal},
  booktitle={Proceedings of the 52nd Annual ACM SIGACT Symposium on Theory of Computing},
  pages={439--449},
  year={2020}
}

@article{andrew2021differentially,
  title={Differentially private learning with adaptive clipping},
  author={Andrew, Galen and Thakkar, Om and McMahan, Brendan and Ramaswamy, Swaroop},
  journal={Advances in Neural Information Processing Systems},
  volume={34},
  pages={17455--17466},
  year={2021}
}

@inproceedings{asi2021private,
  title={Private adaptive gradient methods for convex optimization},
  author={Asi, Hilal and Duchi, John and Fallah, Alireza and Javidbakht, Omid and Talwar, Kunal},
  booktitle={International Conference on Machine Learning},
  pages={383--392},
  year={2021},
  organization={PMLR}
}

@inproceedings{abadi2016deep,
  title={Deep learning with differential privacy},
  author={Abadi, Martin and Chu, Andy and Goodfellow, Ian and McMahan, H Brendan and Mironov, Ilya and Talwar, Kunal and Zhang, Li},
  booktitle={Proceedings of the 2016 ACM SIGSAC conference on computer and communications security},
  pages={308--318},
  year={2016}
}

@article{mcmahan2017learning,
  title={Learning differentially private recurrent language models},
  author={McMahan, H Brendan and Ramage, Daniel and Talwar, Kunal and Zhang, Li},
  journal={arXiv preprint arXiv:1710.06963},
  year={2017}
}

@inproceedings{kairouz2021practical,
  title={Practical and private (deep) learning without sampling or shuffling},
  author={Kairouz, Peter and McMahan, Brendan and Song, Shuang and Thakkar, Om and Thakurta, Abhradeep and Xu, Zheng},
  booktitle={International Conference on Machine Learning},
  pages={5213--5225},
  year={2021},
  organization={PMLR}
}

@inproceedings{wang2019differentially,
  title={Differentially private empirical risk minimization with smooth non-convex loss functions: A non-stationary view},
  author={Wang, Di and Xu, Jinhui},
  booktitle={Proceedings of the AAAI Conference on Artificial Intelligence},
  volume={33},
  number={01},
  pages={1182--1189},
  year={2019}
}

@article{bassily2019private,
  title={Private stochastic convex optimization with optimal rates},
  author={Bassily, Raef and Feldman, Vitaly and Talwar, Kunal and Guha Thakurta, Abhradeep},
  journal={Advances in neural information processing systems},
  volume={32},
  year={2019}
}

@article{song2020characterizing,
  title={Characterizing private clipped gradient descent on convex generalized linear problems},
  author={Song, Shuang and Thakkar, Om and Thakurta, Abhradeep},
  journal={arXiv preprint arXiv:2006.06783},
  year={2020}
}

@article{su2021faster,
  title={Faster rates of differentially private stochastic convex optimization},
  author={Su, Jinyan and Wang, Di},
  journal={arXiv preprint arXiv},
  volume={2108},
  year={2021}
}

@inproceedings{bassily2021non,
  title={Non-euclidean differentially private stochastic convex optimization},
  author={Bassily, Raef and Guzm{\'a}n, Crist{\'o}bal and Nandi, Anupama},
  booktitle={Conference on Learning Theory},
  pages={474--499},
  year={2021},
  organization={PMLR}
}

@article{kulkarni2021private,
  title={Private non-smooth empirical risk minimization and stochastic convex optimization in subquadratic steps},
  author={Kulkarni, Janardhan and Lee, Yin Tat and Liu, Daogao},
  journal={arXiv preprint arXiv:2103.15352},
  year={2021}
}

@article{zhang2017efficient,
  title={Efficient private ERM for smooth objectives},
  author={Zhang, Jiaqi and Zheng, Kai and Mou, Wenlong and Wang, Liwei},
  journal={arXiv preprint arXiv:1703.09947},
  year={2017}
}

@inproceedings{zhang2021private,
  title={Private stochastic non-convex optimization with improved utility rates},
  author={Zhang, Qiuchen and Ma, Jing and Lou, Jian and Xiong, Li},
  booktitle={Proceedings of the Thirtieth International Joint Conference on Artificial Intelligence},
  year={2021}
}

@inproceedings{wang2023efficient,
  title={Efficient privacy-preserving stochastic nonconvex optimization},
  author={Wang, Lingxiao and Jayaraman, Bargav and Evans, David and Gu, Quanquan},
  booktitle={Uncertainty in Artificial Intelligence},
  pages={2203--2213},
  year={2023},
  organization={PMLR}
}

@article{bassily2021differentially,
  title={Differentially private stochastic optimization: New results in convex and non-convex settings},
  author={Bassily, Raef and Guzm{\'a}n, Crist{\'o}bal and Menart, Michael},
  journal={Advances in Neural Information Processing Systems},
  volume={34},
  pages={9317--9329},
  year={2021}
}

@inproceedings{dwork2010differential,
  title={Differential privacy under continual observation},
  author={Dwork, Cynthia and Naor, Moni and Pitassi, Toniann and Rothblum, Guy N},
  booktitle={Proceedings of the forty-second ACM symposium on Theory of computing},
  pages={715--724},
  year={2010}
}

@inproceedings{hu2022high,
  title={High dimensional differentially private stochastic optimization with heavy-tailed data},
  author={Hu, Lijie and Ni, Shuo and Xiao, Hanshen and Wang, Di},
  booktitle={Proceedings of the 41st ACM SIGMOD-SIGACT-SIGAI Symposium on Principles of Database Systems},
  pages={227--236},
  year={2022}
}

@inproceedings{su2023differentially,
  title={Differentially private stochastic convex optimization in (non)-Euclidean space revisited},
  author={Su, Jinyan and Zhao, Changhong and Wang, Di},
  booktitle={Uncertainty in Artificial Intelligence},
  pages={2026--2035},
  year={2023},
  organization={PMLR}
}

@article{su2024faster,
  title={Faster Rates of Differentially Private Stochastic Convex Optimization},
  author={Su, Jinyan and Hu, Lijie and Wang, Di},
  journal={Journal of Machine Learning Research},
  volume={25},
  number={114},
  pages={1--41},
  year={2024}
}

@inproceedings{DBLP:conf/ijcai/TaoW0W22,
  author       = {Youming Tao and
                  Yulian Wu and
                  Xiuzhen Cheng and
                  Di Wang},
  title        = {Private Stochastic Convex Optimization and Sparse Learning with Heavy-tailed
                  Data Revisited},
  booktitle    = {{IJCAI}},
  pages        = {3947--3953},
  publisher    = {ijcai.org},
  year         = {2022}
}

@article{dingrevisiting,
  title={Revisiting differentially private relu regression},
  author={Ding, Meng and Lei, Mingxi and Zhu, Liyang and Wang, Shaowei and Wang, Di and Xu, Jinhui},
  journal={Advances in Neural Information Processing Systems},
  volume={37},
  pages={55470--55506},
  year={2024}
}

@article{sun2023importance,
  title={The importance of feature preprocessing for differentially private linear optimization},
  author={Sun, Ziteng and Suresh, Ananda Theertha and Menon, Aditya Krishna},
  journal={arXiv preprint arXiv:2307.11106},
  year={2023}
}

@article{tang2024differentially,
  title={Differentially private image classification by learning priors from random processes},
  author={Tang, Xinyu and Panda, Ashwinee and Sehwag, Vikash and Mittal, Prateek},
  journal={Advances in Neural Information Processing Systems},
  volume={36},
  year={2024}
}

@article{wang2024neural,
  title={Neural Collapse Meets Differential Privacy: Curious Behaviors of NoisyGD with Near-perfect Representation Learning},
  author={Wang, Chendi and Zhu, Yuqing and Su, Weijie J and Wang, Yu-Xiang},
  journal={arXiv preprint arXiv:2405.08920},
  year={2024}
}

@article{papyan2020prevalence,
  title={Prevalence of neural collapse during the terminal phase of deep learning training},
  author={Papyan, Vardan and Han, XY and Donoho, David L},
  journal={Proceedings of the National Academy of Sciences},
  volume={117},
  number={40},
  pages={24652--24663},
  year={2020},
  publisher={National Acad Sciences}
}

@article{bao2023dp,
  title={Dp-mix: mixup-based data augmentation for differentially private learning},
  author={Bao, Wenxuan and Pittaluga, Francesco and BG, Vijay Kumar and Bindschaedler, Vincent},
  journal={Advances in Neural Information Processing Systems},
  volume={36},
  pages={12154--12170},
  year={2023}
}

@article{chatterji2023deep,
  title={Deep linear networks can benignly overfit when shallow ones do},
  author={Chatterji, Niladri S and Long, Philip M},
  journal={Journal of Machine Learning Research},
  volume={24},
  number={117},
  pages={1--39},
  year={2023}
}

@inproceedings{frei2022benign,
  title={Benign overfitting without linearity: Neural network classifiers trained by gradient descent for noisy linear data},
  author={Frei, Spencer and Chatterji, Niladri S and Bartlett, Peter},
  booktitle={Conference on Learning Theory},
  pages={2668--2703},
  year={2022},
  organization={PMLR}
}

@inproceedings{papernot2021tempered,
  title={Tempered sigmoid activations for deep learning with differential privacy},
  author={Papernot, Nicolas and Thakurta, Abhradeep and Song, Shuang and Chien, Steve and Erlingsson, {\'U}lfar},
  booktitle={Proceedings of the AAAI Conference on Artificial Intelligence},
  volume={35},
  number={10},
  pages={9312--9321},
  year={2021}
}

@article{bu2024automatic,
  title={Automatic clipping: Differentially private deep learning made easier and stronger},
  author={Bu, Zhiqi and Wang, Yu-Xiang and Zha, Sheng and Karypis, George},
  journal={Advances in Neural Information Processing Systems},
  volume={36},
  year={2024}
}

@article{yu2021differentially,
  title={Differentially private fine-tuning of language models},
  author={Yu, Da and Naik, Saurabh and Backurs, Arturs and Gopi, Sivakanth and Inan, Huseyin A and Kamath, Gautam and Kulkarni, Janardhan and Lee, Yin Tat and Manoel, Andre and Wutschitz, Lukas and others},
  journal={arXiv preprint arXiv:2110.06500},
  year={2021}
}

@inproceedings{luo2021scalable,
  title={Scalable differential privacy with sparse network finetuning},
  author={Luo, Zelun and Wu, Daniel J and Adeli, Ehsan and Fei-Fei, Li},
  booktitle={Proceedings of the IEEE/CVF conference on computer vision and pattern recognition},
  pages={5059--5068},
  year={2021}
}

@article{zhang2024disk,
  title={Disk: Differentially private optimizer with simplified kalman filter for noise reduction},
  author={Zhang, Xinwei and Bu, Zhiqi and Balle, Borja and Hong, Mingyi and Razaviyayn, Meisam and Mirrokni, Vahab},
  journal={arXiv preprint arXiv:2410.03883},
  year={2024}
}

@article{chaudhuri2011differentially,
  title={Differentially private empirical risk minimization.},
  author={Chaudhuri, Kamalika and Monteleoni, Claire and Sarwate, Anand D},
  journal={Journal of Machine Learning Research},
  volume={12},
  number={3},
  year={2011}
}

@inproceedings{wang2020differentially,
  title={On differentially private stochastic convex optimization with heavy-tailed data},
  author={Wang, Di and Xiao, Hanshen and Devadas, Srinivas and Xu, Jinhui},
  booktitle={International Conference on Machine Learning},
  pages={10081--10091},
  year={2020},
  organization={PMLR}
}

@inproceedings{kamath2022improved,
  title={Improved rates for differentially private stochastic convex optimization with heavy-tailed data},
  author={Kamath, Gautam and Liu, Xingtu and Zhang, Huanyu},
  booktitle={International Conference on Machine Learning},
  pages={10633--10660},
  year={2022},
  organization={PMLR}
}

@techreport{Krizhevsky2009,
  author = {Krizhevsky, Alex},
  title = {Learning Multiple Layers of Features from Tiny Images},
  institution = {University of Toronto},
  year = {2009},
  number = {TR-2009},
  url = {https://www.cs.toronto.edu/~kriz/learning-features-2009-TR.pdf}
}

@article{hendrycks2019robustness,
  title={Benchmarking Neural Network Robustness to Common Corruptions and Perturbations},
  author={Dan Hendrycks and Thomas Dietterich},
  journal={Proceedings of the International Conference on Learning Representations},
  year={2019}
}

@inproceedings{he2016deep,
  title={Deep residual learning for image recognition},
  author={He, Kaiming and Zhang, Xiangyu and Ren, Shaoqing and Sun, Jian},
  booktitle={Proceedings of the IEEE conference on computer vision and pattern recognition},
  pages={770--778},
  year={2016}
}

@article{selvaraju2020grad,
  title={Grad-CAM: visual explanations from deep networks via gradient-based localization},
  author={Selvaraju, Ramprasaath R and Cogswell, Michael and Das, Abhishek and Vedantam, Ramakrishna and Parikh, Devi and Batra, Dhruv},
  journal={International journal of computer vision},
  volume={128},
  pages={336--359},
  year={2020},
  publisher={Springer}
}

@inproceedings{allen2022feature,
  title={Feature purification: How adversarial training performs robust deep learning},
  author={Allen-Zhu, Zeyuan and Li, Yuanzhi},
  booktitle={2021 IEEE 62nd Annual Symposium on Foundations of Computer Science (FOCS)},
  pages={977--988},
  year={2022},
  organization={IEEE}
}

@article{huang2023graph,
  title={Graph neural networks provably benefit from structural information: A feature learning perspective},
  author={Huang, Wei and Cao, Yuan and Wang, Haonan and Cao, Xin and Suzuki, Taiji},
  journal={arXiv preprint arXiv:2306.13926},
  year={2023}
}

@article{jelassi2022vision,
  title={Vision transformers provably learn spatial structure},
  author={Jelassi, Samy and Sander, Michael and Li, Yuanzhi},
  journal={Advances in Neural Information Processing Systems},
  volume={35},
  pages={37822--37836},
  year={2022}
}

@article{li2023theoretical,
  title={A theoretical understanding of shallow vision transformers: Learning, generalization, and sample complexity},
  author={Li, Hongkang and Wang, Meng and Liu, Sijia and Chen, Pin-Yu},
  journal={arXiv preprint arXiv:2302.06015},
  year={2023}
}

@article{han2024feature,
  title={On the feature learning in diffusion models},
  author={Han, Andi and Huang, Wei and Cao, Yuan and Zou, Difan},
  journal={arXiv preprint arXiv:2412.01021},
  year={2024}
}
\bibliographystyle{iclr2026_conference}

\newpage
\appendix

\section{The Use of Large Language Models}
We used LLMs as assistive tools for writing. For writing, LLMs were used to polish language (grammar, wording, and flow) and suggest alternative phrasings; the research problem setup, preliminaries, methods, analyses, and conclusions were conceived and written by the authors.

\section{Notations Table}

\begin{table}[ht]
\centering
\caption{Notation Summary}
\label{tab:notation}
\begin{tabular}{@{}lp{10cm}@{}}
\toprule
\textbf{Symbol} & \textbf{Description} \\ 
\midrule
$\mathbf{x}$ & Input data point with multi-patch structure $\mathbf{x} = [y\mathbf{v}, \boldsymbol{\xi}] \in (\mathbb{R}^d)^2$ \\
$y$ & Binary label ($\pm 1$) \\
$y \cdot \mathbf{v}$ & Label-dependent feature vector (signal component) \\
$\boldsymbol{\xi}$ & Label-independent Gaussian noise $\sim \mathcal{N}(0, \sigma_\xi^2\mathbf{H})$ \\
$\mathbf{z}_t$ & Gaussian privacy noise added at iteration $t$ \\
$\mathrm{SNR}$ & Signal-to-noise ratio $\|\mathbf{v}\|_2/\|\boldsymbol{\xi}\|_2$ \\
$T$ & Total number of training iterations \\
$T_p^*$ & Maximum number of private training iterations \\
$m$ & Number of convolutional filters per class \\
$d$ & Dimension of feature/noise vectors \\
$n$ & Number of training samples \\
$\eta$ & Learning rate in noisy gradient descent \\
$\sigma_0$ & Standard deviation of Gaussian weight initialization \\
$\sigma_{\xi}$ & Standard deviation of Gaussian data noise \\
$\sigma_{z}$ & Standard deviation of Gaussian private noise \\
$\varepsilon,\delta$ & $(\varepsilon,\delta)$-differential privacy parameters \\
$\Gamma_{j,r}^{(t)}$ & Signal learning coefficient for filter $r$ in class $j$ at iteration $t$ \\
$\Phi_{j,r,i}^{(t)}$ & Noise memorization coefficient for sample $i$ and filter $r$ in class $j$ \\
$\sigma(z)$ & Polynomial ReLU activation: $\max\{0,z\}^q$ with $q>2$ \\
$L_D(\mathbf{W})$ & Empirical risk with logistic loss over dataset $D$ \\
$L_{\mathcal{D}}(\mathbf{W})$ & Population risk with logistic loss over data distribution $\mathcal{D}$ \\
$q$ & Polynomial degree in activation function ($q>2$) \\
$\mathbf{H}$ & Orthogonal projection matrix $\mathbf{I}-\mathbf{v}\mathbf{v}^\top/\|\mathbf{v}\|_2^2$ \\
$\kappa$ & Convergence threshold for training loss \\
\bottomrule
\end{tabular}
\end{table}

\section{Additional Related Work}
{\bf Differentially Private Learning}
The most widely used technique for differentially private training in deep learning is differentially private stochastic gradient descent (DP-SGD). However, the accuracy of private deep learning still significantly lags behind that of standard non-private learning across several benchmarks \cite{mcmahan2017learning, papernot2021tempered, tramer2020differentially, de2022unlocking}. To bridge this gap, various techniques have been proposed to enhance DP learning, including adaptive gradient clipping methods that dynamically adjust clipping thresholds \cite{andrew2021differentially, bu2024automatic}, feature extraction or pre-processing before applying DP-SGD \cite{abadi2016deep, tramer2020differentially, de2022unlocking, sun2023importance, bao2023dp, tang2024differentially}, parameter-efficient training strategies via adapters, low-rank weights, or quantization \cite{yu2021differentially, luo2021scalable}, and private noise reduction techniques using tree aggregation mechanisms or filters \cite{kairouz2021practical, zhang2024disk}.

\section{Additional Experimental Details}
\subsection{Real-world Data Experiment}\label{sec:real_exp}
In this experiment, we explore the impact of $\operatorname{SNR}$ in the private learning. Due to the space limitation, we provide experimental setup and more results in \cref{app:exp}.

\textbf{Higher SNR Improves Accuracy Across Various Privacy Budgets.}

\begin{figure}[h!]
    \centering
    \includegraphics[width=.7\linewidth]{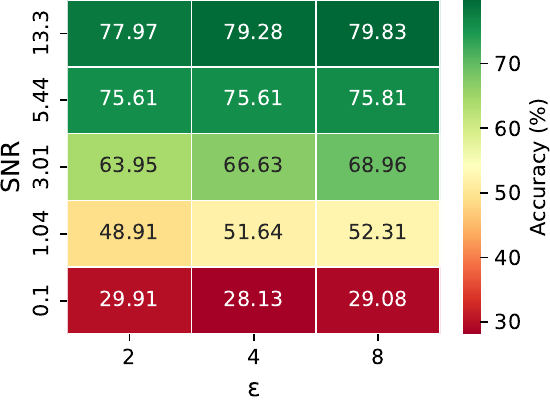}
    \vspace{-0.1in}
    \caption{Impact of $\operatorname{SNR}$ and $\varepsilon$ on CIFAR-10 Accuracy.}
    \label{fig:heatmap}
\end{figure}
The results, illustrated in \cref{fig:heatmap}, reveal that higher $\operatorname{SNR}$ values consistently lead to improved model accuracy under various privacy budgets, as the cleaner signal allows the model to better learn useful feature signals. Moreover, as the privacy budget $\varepsilon$ decreases (indicating stronger privacy guarantees), the model's accuracy degrades, particularly under low $\operatorname{SNR}$ conditions. This degradation is attributed to the combined effects of data noise and the additional noise introduced by private learning.

\textbf{Experimental Setup for \cref{sec:real_exp}.} We conducted experiments on the CIFAR-10 dataset \cite{Krizhevsky2009}, training on a version of the dataset corrupted with Gaussian noise applied at varying scales to control the signal-to-noise ratio \cite{hendrycks2019robustness}. The training data was corrupted with noise levels corresponding to different SNR values, while the test set remained clean. A ResNet-20 architecture \cite{he2016deep} was employed as the baseline model, designed for CIFAR-10 with an input image size of 32$\times$32 pixels. Training was performed using noisy gradient descent with an initial learning rate of $0.1$. The model was trained for $100$ epochs with a batch size of $1000$.

\label{app:exp}
\Cref{fig:cam8}, \Cref{fig:cam4} and \Cref{fig:cam2} depict the Class Activation Maps (CAMs) using GradCAM \cite{selvaraju2020grad} for different classes in the CIFAR-10 dataset under varying SNR. In CAMs, the colors represent the intensity of activation in specific regions of the input image. These activations indicate how strongly the model associates different areas of the image with a specific class. CAMs highlight the most important regions contributing to the model's prediction.
\begin{figure}
    \centering
    \includegraphics[width=.7\textwidth]{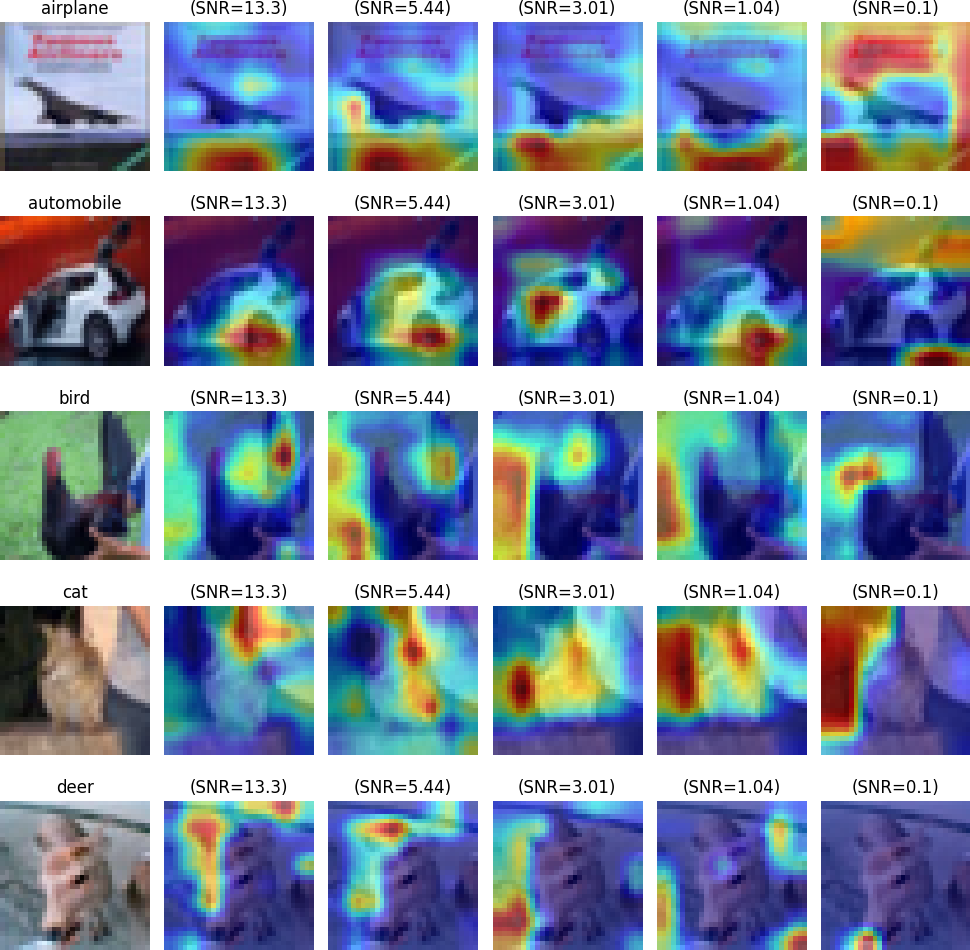}
    \caption{Class Activation Mappings for CIFAR-10 Across Different SNRs ($\epsilon = 8$).}
    \label{fig:cam8}
\end{figure}

\begin{figure}
    \centering
    \includegraphics[width=.7\textwidth]{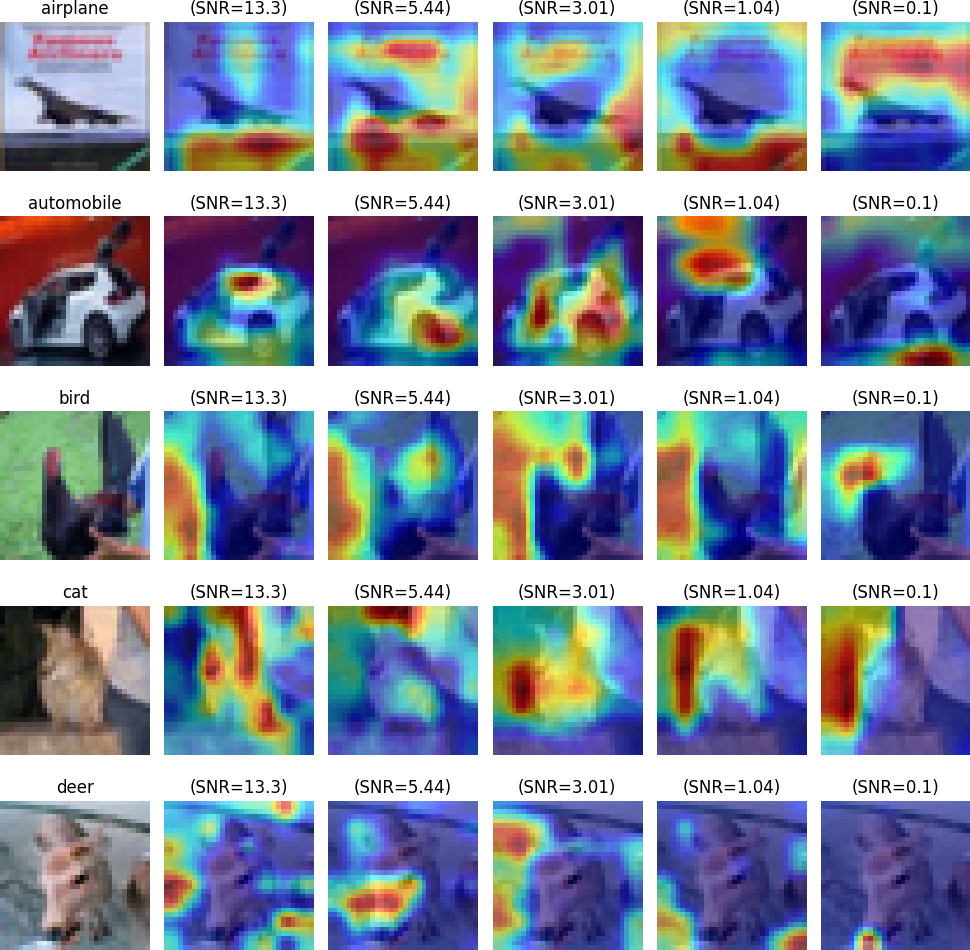}
    \caption{Class Activation Mappings for CIFAR-10 Across Different SNRs ($\epsilon = 4$).}
    \label{fig:cam4}
\end{figure}

\begin{figure}
    \centering
    \includegraphics[width=.7\textwidth]{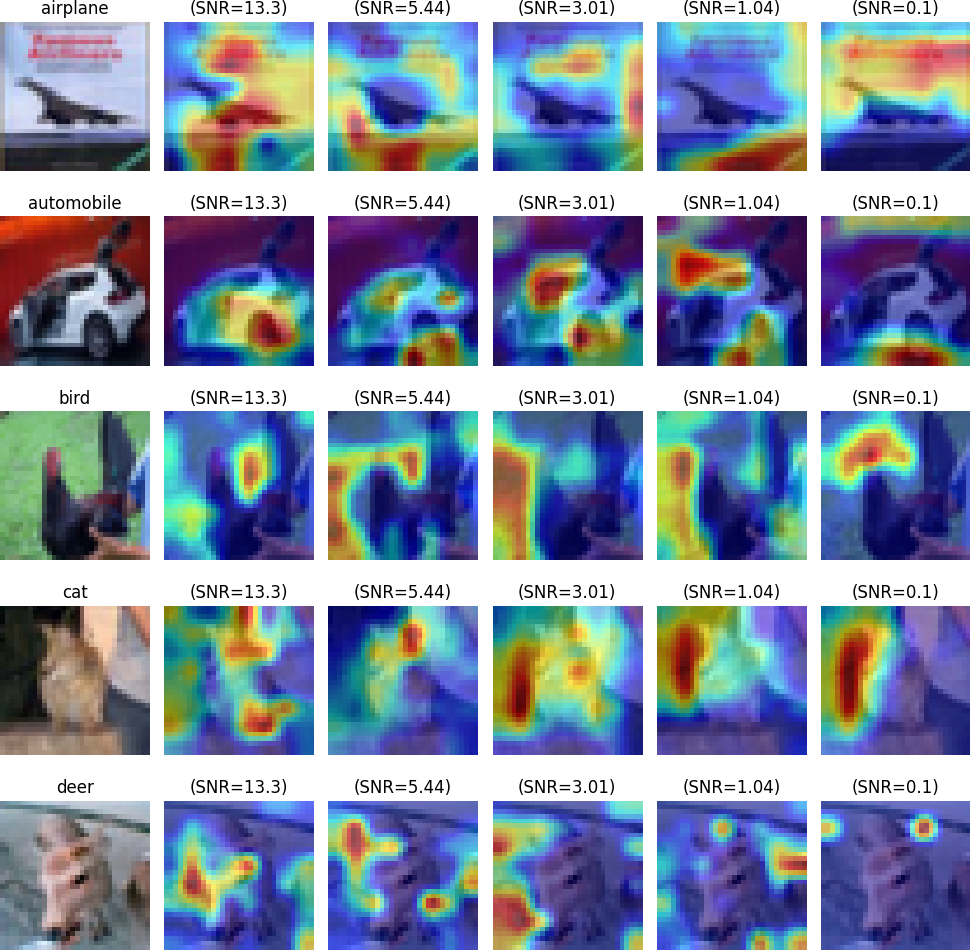}
    \caption{Class Activation Mappings for CIFAR-10 Across Different SNRs ($\epsilon = 2$).}
    \label{fig:cam2}
\end{figure}

\newpage

\section{Support Lemmas}
\begin{lemma}\label{lem:pos_data_number}
Suppose that $\delta_x>0$ and $n \geq 8 \log (4 / \delta_x)$. Then with probability at least $1-\delta_x$,
$$
|\{i \in[n]: y_i=1\}|,|\{i \in[n]: y_i=-1\}| \geq n / 4 .
$$
\end{lemma}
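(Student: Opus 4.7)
The plan is to recognize the claim as a routine concentration inequality for a sum of independent Rademacher variables. Let $N_+ := |\{i \in [n] : y_i = 1\}|$ and $N_- := n - N_+$. Since the labels $y_i$ are i.i.d.\ Rademacher (as specified in \cref{def:data_dis}), the indicators $\mathds{1}(y_i = 1)$ are i.i.d.\ Bernoulli$(1/2)$, so $N_+$ is distributed as Binomial$(n, 1/2)$ with mean $n/2$. The whole task is then to show that $N_+$ and $N_-$ are each at least $n/4$ with the stated probability.

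First I would invoke Hoeffding's inequality (or equivalently the additive Chernoff bound for Bernoulli$(1/2)$ variables) to obtain
\begin{equation*}
\Pr\bigl(N_+ \le n/4\bigr) \;=\; \Pr\bigl(N_+ - n/2 \le -n/4\bigr) \;\le\; \exp\!\bigl(-2\, (n/4)^2 / n\bigr) \;=\; \exp(-n/8).
\end{equation*}
Using the hypothesis $n \ge 8 \log(4/\delta_x)$, the right-hand side is at most $\delta_x / 4$. By the symmetry $N_- \stackrel{d}{=} N_+$ (equivalently, replacing $y_i$ by $-y_i$), the same bound applies to $\Pr(N_- \le n/4)$.

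Finally, I would conclude by a union bound:
\begin{equation*}
\Pr\bigl(N_+ \le n/4 \;\text{or}\; N_- \le n/4\bigr) \;\le\; \delta_x/4 + \delta_x/4 \;=\; \delta_x/2 \;\le\; \delta_x,
\end{equation*}
so with probability at least $1 - \delta_x$ both $N_+ \ge n/4$ and $N_- \ge n/4$, as claimed. There is no real obstacle here: the only mild care needed is matching the constants so that the threshold $n \ge 8 \log(4/\delta_x)$ (rather than, say, $8\log(2/\delta_x)$) is exactly what makes the union bound close; this is why the statement uses $4/\delta_x$ inside the logarithm.
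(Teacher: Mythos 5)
Your proof is correct and takes essentially the same approach as the paper: apply Hoeffding's inequality to the count of one label (the paper phrases it as a two-sided deviation of the empirical mean $\frac{1}{n}\sum_i \mathds{1}\{y_i=1\}$ from $1/2$, you phrase it as a one-sided lower-tail bound on $N_+$), verify that $n \ge 8\log(4/\delta_x)$ makes the deviation at most $n/4$, and then handle the other label by symmetry with a union bound. The constants and failure-probability bookkeeping match.
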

\begin{proof}[\bf Proof of \cref{lem:pos_data_number}]
    We first establish the bound for $|\{i \in[n]: y_i=1\}|$ and the bound for $|\{i \in[n]: y_i=-1\}|$ follows identically.
    Using Hoeffding's inequality, we know that with probability at least $1-\delta / 2$, the following holds:
    $$
    |\frac{1}{n} \sum_{i=1}^n 1\{y_i=1\}-\frac{1}{2}| \leq \sqrt{\frac{\log (4 / \delta)}{2 n}}.
    $$
    Thus, for $n \geq 8 \log (4 / \delta)$, the size of the subset where $y_i=1$ satisfies:
    $$
    |\{i \in[n]: y_i=1\}|=\sum_{i=1}^n 1\{y_i=1\} \geq \frac{n}{2}-n \cdot \sqrt{\frac{\log (4 / \delta)}{2 n}} \geq \frac{n}{4}.
    $$
\end{proof}

\begin{lemma}\label{lem:xi_bound}
Suppose that $\delta_{\xi}>0$ and $d=\Omega(\log (4 n / \delta_{\xi}))$. Then, for all $i, i^{\prime} \in[n]$, with probability at least $1-\delta_{\xi}$,
$$
\begin{aligned}
& \sigma_{\xi}^2 d / 2 \leq\|\bm{\xi}_i\|_2^2 \leq 3 \sigma_{\xi}^2 d / 2 \\
& |\langle\bm{\xi}_i, \bm{\xi}_{i^{\prime}}\rangle| \leq 2 \sigma_{\xi}^2 \cdot \sqrt{d \log (4 n^2 / \delta_{\xi})}.
\end{aligned}
$$
Meanwhile, there is $\delta_{z}$ such that $\delta_{z}>0$ and $d=\Omega(\log (4 n / \delta_{z}))$. It holds, with probability at least $1-\delta_{z}$,
$$
\begin{aligned}
& G^2\sigma_z^2 d / 2 \leq\|\mathbf{z}_i\|_2^2 \leq 3 G^2 \sigma_z^2 d / 2 \\
& |\langle \mathbf{z}_i, \mathbf{z}_{i^{\prime}}\rangle| \leq 2 G^2 \sigma_z^2 \cdot \sqrt{d \log (4 n^2 / \delta_{z})}.
\end{aligned}
$$
\end{lemma}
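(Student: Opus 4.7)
The plan is to treat the two statements as standard Gaussian concentration bounds, exploiting the explicit covariance structures $\sigma_{\xi}^2 \mathbf{H}$ and $G^2\sigma_z^2\mathbf{I}_d$, and then apply a union bound over the $n$ indices (for the norms) and $n^2$ pairs (for the inner products).

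First I would handle the norm bounds on $\|\bm{\xi}_i\|_2^2$. Since $\mathbf{H}=\mathbf{I}-\mathbf{v}\mathbf{v}^{\top}\|\mathbf{v}\|_2^{-2}$ is the orthogonal projector onto the $(d-1)$-dimensional subspace $\mathbf{v}^{\perp}$, there is a unit orthonormal basis $\{\mathbf{u}_k\}_{k=1}^{d-1}$ of this subspace along which $\bm{\xi}_i=\sigma_{\xi}\sum_{k=1}^{d-1} g_{i,k}\mathbf{u}_k$ with $g_{i,k}\overset{\text{i.i.d.}}{\sim}\mathcal{N}(0,1)$. Hence $\|\bm{\xi}_i\|_2^2=\sigma_{\xi}^2\sum_{k=1}^{d-1}g_{i,k}^2$ is $\sigma_{\xi}^2$ times a $\chi^2_{d-1}$ variable. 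Applying the Laurent--Massart bound, with probability at least $1-2\exp(-t)$,
\begin{equation*}
\bigl|\|\bm{\xi}_i\|_2^2-\sigma_{\xi}^2(d-1)\bigr|\leq 2\sigma_{\xi}^2\sqrt{(d-1)t}+2\sigma_{\xi}^2 t.
\end{equation*}
Choosing $t=\log(4n/\delta_{\xi})$ and invoking $d=\Omega(\log(4n/\delta_{\xi}))$ forces both deviation terms to be at most $\sigma_{\xi}^2 d/4$, which gives the two-sided bound $\sigma_{\xi}^2 d/2\leq\|\bm{\xi}_i\|_2^2\leq 3\sigma_{\xi}^2 d/2$; a union bound over $i\in[n]$ then costs a factor that is already absorbed into the choice of $\delta_{\xi}$.

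Next I would handle the cross inner products $\langle\bm{\xi}_i,\bm{\xi}_{i'}\rangle$ for $i\neq i'$. Condition on $\bm{\xi}_{i'}$ and note that, since $\bm{\xi}_{i'}\in\mathrm{range}(\mathbf{H})$, we have $\mathbf{H}\bm{\xi}_{i'}=\bm{\xi}_{i'}$, so the conditional distribution is $\langle\bm{\xi}_i,\bm{\xi}_{i'}\rangle\mid\bm{\xi}_{i'}\sim\mathcal{N}\bigl(0,\sigma_{\xi}^2\|\bm{\xi}_{i'}\|_2^2\bigr)$. A standard Gaussian tail bound yields
\begin{equation*}
\Pr\!\bigl[\,|\langle\bm{\xi}_i,\bm{\xi}_{i'}\rangle|\geq \sigma_{\xi}\|\bm{\xi}_{i'}\|_2\sqrt{2t}\,\bigm|\,\bm{\xi}_{i'}\bigr]\leq 2e^{-t}.
\end{equation*}
Plugging the high-probability bound $\|\bm{\xi}_{i'}\|_2^2\leq 3\sigma_{\xi}^2 d/2$ from the previous step and taking $t=\log(4n^2/\delta_{\xi})$, then union bounding over the at most $n^2$ ordered pairs, gives $|\langle\bm{\xi}_i,\bm{\xi}_{i'}\rangle|\leq 2\sigma_{\xi}^2\sqrt{d\log(4n^2/\delta_{\xi})}$.

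Finally, the statements for $\mathbf{z}_i\sim\mathcal{N}(0,G^2\sigma_z^2\mathbf{I}_d)$ follow by exactly the same argument, except that the covariance is now the full-rank identity times $G^2\sigma_z^2$, so $\|\mathbf{z}_i\|_2^2$ is $G^2\sigma_z^2$ times a $\chi^2_d$ variable (no loss of one degree of freedom), and the conditional variance for $\langle\mathbf{z}_i,\mathbf{z}_{i'}\rangle$ is $G^2\sigma_z^2\|\mathbf{z}_{i'}\|_2^2$. The rest of the derivation is identical. The only nuance worth flagging is the need to keep the projection $\mathbf{H}$ straight when computing the conditional variance of the $\bm{\xi}$ inner products; once one observes $\mathbf{H}\bm{\xi}_{i'}=\bm{\xi}_{i'}$, every step is a textbook Gaussian concentration argument, and I do not anticipate any genuine obstacle.
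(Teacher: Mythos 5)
Your proof is correct and, in substance, the same as the paper's: both reduce the claims to standard concentration of Gaussian quantities and then union-bound over indices and pairs. The only difference is in the named tools. The paper applies Bernstein's inequality directly to $\|\bm{\xi}_i\|_2^2 - \mathbb{E}\|\bm{\xi}_i\|_2^2$ and to $\langle\bm{\xi}_i,\bm{\xi}_{i'}\rangle$ (both sums of mean-zero sub-exponential terms), whereas you use the Laurent--Massart $\chi^2$ tail for the norm and a conditioning-plus-Gaussian-tail argument for the cross term. These give the same rates. One small point where your write-up is actually more careful than the paper: you explicitly note that $\mathbf{H}$ is a rank-$(d-1)$ projector, so $\|\bm{\xi}_i\|_2^2/\sigma_\xi^2 \sim \chi^2_{d-1}$ and $\mathbf{H}\bm{\xi}_{i'}=\bm{\xi}_{i'}$ cleans up the conditional variance; the paper tacitly treats the mean as $\sigma_\xi^2 d$, which is harmless asymptotically but slightly imprecise. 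Both proofs implicitly read the inner-product bound as applying to $i\neq i'$ (it fails trivially at $i=i'$), and both handle the $\mathbf{z}_i$ case identically by replacing the projected covariance with the full-rank $G^2\sigma_z^2\mathbf{I}_d$.
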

\begin{proof}[Proof of \cref{lem:xi_bound}]
    Both $\bm{\xi}$ and $\mathbf{z}$ follow Gaussian distributions; therefore, it suffices to provide the proof for one case.
    Using Bernstein's inequality, we find that with probability at least $1-\delta /(2 n)$, the following holds:
    $$|\|\boldsymbol{\xi}_i\|_2^2-\sigma_{\xi}^2 d|=O(\sigma_{\xi}^2 \cdot \sqrt{d \log (4 n / \delta)}).$$
    Thus, when $d=\Omega(\log (4 n / \delta))$, we have:
    $\frac{\sigma_{\xi}^2 d}{2} \leq\|\boldsymbol{\xi}_i\|_2^2 \leq \frac{3 \sigma_{\xi}^2 d}{2}.$
    Next, note that $\langle\boldsymbol{\xi}_i, \boldsymbol{\xi}_{i^{\prime}}\rangle$ has a mean of zero for any $i \neq i^{\prime}$. Again, by Bernstein's inequality, with probability at least $1-\delta /(2 n^2)$, the following bound holds:
    $|\langle\boldsymbol{\xi}_i, \boldsymbol{\xi}_{i^{\prime}}\rangle| \leq 2 \sigma_{\xi}^2 \cdot \sqrt{d \log (4 n^2 / \delta)}.$
    Finally, applying a union bound over all $i$ and $i^{\prime}$ completes the proof.
\end{proof}

\begin{lemma}\label{lem:w0_v_xi}
    Suppose that $d \geq \Omega(\log (m n / \delta)), m=\Omega(\log (1 / \delta))$. Then with probability at least $1-\delta$,
    $$
    \begin{aligned}
    & |\langle\mathbf{w}_{j, r}^{(0)}, \mathbf{v}\rangle| \leq \sqrt{2 \log (8 m / \delta)} \cdot \sigma_0\|\mathbf{v}\|_2 \\
    & |\langle\mathbf{w}_{j, r}^{(0)}, \bm{\xi}_i\rangle| \leq 2 \sqrt{\log (8 m n / \delta)} \cdot \sigma_0 \sigma_{\xi} \sqrt{d}
    \end{aligned}
    $$
    for all $r \in[m], j \in\{ \pm 1\}$ and $i \in[n]$. Moreover,
    $$
    \begin{aligned}
    & \sigma_0\|\mathbf{v}\|_2 / 2 \leq \max _{r \in[m]} j \cdot\langle\mathbf{w}_{j, r}^{(0)}, \mathbf{v}\rangle \leq \sqrt{2 \log (8 m / \delta)} \cdot \sigma_0\|\mathbf{v}\|_2, \\
    & \sigma_0 \sigma_{\xi} \sqrt{d} / 4 \leq \max _{r \in[m]} j \cdot\langle\mathbf{w}_{j, r}^{(0)}, \bm{\xi}_i\rangle \leq 2 \sqrt{\log (8 m n / \delta)} \cdot \sigma_0 \sigma_{\xi} \sqrt{d}
    \end{aligned}
    $$
    for all $j \in\{ \pm 1\}$ and $i \in[n]$.
\end{lemma}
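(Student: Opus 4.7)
The four inequalities all reduce to Gaussian concentration applied to the initialization $\mathbf{w}_{j,r}^{(0)} \sim \mathcal{N}(0,\sigma_0^2 \mathbf{I}_d)$, combined with the norm control on $\bm{\xi}_i$ from \cref{lem:xi_bound}. My plan is to prove the two upper bounds first (via standard tail bounds and a union bound), then the two lower bounds on the maxima (via anti-concentration and independence across the $m$ filters), tracking failure probabilities so the total is at most $\delta$.

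For the upper bound on $|\langle \mathbf{w}_{j,r}^{(0)},\mathbf{v}\rangle|$, I would observe that $\langle \mathbf{w}_{j,r}^{(0)},\mathbf{v}\rangle$ is a centered Gaussian of variance $\sigma_0^2\|\mathbf{v}\|_2^2$. The Gaussian tail bound $\Pr(|Z|>t) \leq 2\exp(-t^2/2)$ applied with $t=\sqrt{2\log(8m/\delta)}$ then gives failure probability at most $\delta/(4m)$; a union bound over $r\in[m]$ and $j\in\{\pm1\}$ uses up $\delta/2$ of the budget. For the noise inner product, I would first condition on the event from \cref{lem:xi_bound} that $\|\bm{\xi}_i\|_2^2 \leq 3\sigma_\xi^2 d/2$ for all $i$. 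On this event, $\langle \mathbf{w}_{j,r}^{(0)},\bm{\xi}_i\rangle \mid \bm{\xi}_i \sim \mathcal{N}(0,\sigma_0^2\|\bm{\xi}_i\|_2^2)$, so the same Gaussian tail bound with $t=\sqrt{2\log(16mn/\delta)}$, a union bound over $(j,r,i)$, and absorption of the $\sqrt{3/2}$ constant into the factor $2$ yields the stated bound.

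For the lower bound on $\max_r j\cdot\langle \mathbf{w}_{j,r}^{(0)},\mathbf{v}\rangle$, I would use anti-concentration: for each $r$, the symmetric Gaussian $j\cdot\langle \mathbf{w}_{j,r}^{(0)},\mathbf{v}\rangle$ satisfies
\[
\Pr\!\bigl(j\cdot\langle \mathbf{w}_{j,r}^{(0)},\mathbf{v}\rangle \geq \sigma_0\|\mathbf{v}\|_2/2\bigr) \;\geq\; c_0
\]
for an absolute constant $c_0>0$ (e.g., $c_0 = \Pr(\mathcal{N}(0,1)\geq 1/2)$). Independence across $r$ gives that the failure probability $(1-c_0)^m$ is at most $\delta/8$ under $m = \Omega(\log(1/\delta))$; a union bound over $j\in\{\pm1\}$ handles both signs. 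For the analogous noise statement, I would condition on $\|\bm{\xi}_i\|_2^2 \geq \sigma_\xi^2 d/2$ from \cref{lem:xi_bound}, which lower bounds the conditional variance by $\sigma_0^2\sigma_\xi^2 d/2$, and then apply the same anti-concentration argument at threshold $\sigma_0\sigma_\xi\sqrt{d}/4$ (which is at most half a standard deviation), followed by a union bound over $j\in\{\pm1\}$ and $i\in[n]$; the condition $m=\Omega(\log(1/\delta))$ combined with $d=\Omega(\log(mn/\delta))$ ensures this union bound costs at most $\delta/4$.

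There is no genuine obstacle in the argument; the only care needed is bookkeeping, namely assigning disjoint fractions of $\delta$ to (i) the two sub-events of \cref{lem:xi_bound} used for conditioning, (ii) the Gaussian tail events giving the two upper bounds, and (iii) the anti-concentration events giving the two lower bounds, so that a final union bound yields total failure probability at most $\delta$. The slight subtlety is making sure that, when conditioning on $\bm{\xi}_i$ to apply Gaussian concentration or anti-concentration for the inner products $\langle \mathbf{w}_{j,r}^{(0)},\bm{\xi}_i\rangle$, the independence of $\mathbf{w}_{j,r}^{(0)}$ across $r$ is preserved; this is immediate because $\mathbf{w}_{j,r}^{(0)}$ and $\bm{\xi}_i$ are independent by construction.
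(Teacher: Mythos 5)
Your proof mirrors the paper's step for step: Gaussian tail bounds with a union bound give the two upper estimates, anti-concentration across the $m$ independent filters gives the two lower bounds on the maxima, and the noise-side statements are handled by first conditioning on the $\|\bm{\xi}_i\|_2$ control from \cref{lem:xi_bound}. One small slip to fix in your bookkeeping: the hypothesis $d = \Omega(\log(mn/\delta))$ does nothing for the union bound over $i\in[n]$ in the noise-side anti-concentration step (what is actually needed there is $m=\Omega(\log(n/\delta))$, slightly stronger than $m=\Omega(\log(1/\delta))$), though the paper's own proof is equally terse on this point, so your reconstruction is faithful in every essential respect.
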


\begin{proof}[\bf Proof of \cref{lem:w0_v_xi}]
    For each $r \in[m]$, the term $j \cdot\langle\mathbf{w}_{j, r}^{(0)}, \mathbf{v}\rangle$ is a Gaussian random variable with mean zero and variance $\sigma_0^2\|\mathbf{v}\|_2^2$. Applying the Gaussian tail bound and the union bound, we conclude that with probability at least $1-\delta / 4$,
    $$
    j \cdot\langle\mathbf{w}_{j, r}^{(0)}, \mathbf{v}\rangle \leq|\langle\mathbf{w}_{j, r}^{(0)}, \mathbf{v}\rangle| \leq \sqrt{2 \log (8 m / \delta)} \cdot \sigma_0\|\mathbf{v}\|_2.
    $$
    Furthermore, the probability $\mathbb{P}(\sigma_0\|\mathbf{v}\|_2 / 2>j \cdot\langle\mathbf{w}_{j, r}^{(0)}, \mathbf{v}\rangle)$ is an absolute constant. Thus, under the given condition on $m$, we have
    $$
    \begin{aligned}
    \mathbb{P}(\sigma_0\|\mathbf{v}\|_2 / 2 \leq \max _{r \in[m]} j \cdot\langle\mathbf{w}_{j, r}^{(0)}, \mathbf{v}\rangle) & =1-\mathbb{P}(\sigma_0\|\mathbf{v}\|_2 / 2>\max _{r \in[m]} j \cdot\langle\mathbf{w}_{j, r}^{(0)}, \mathbf{v}\rangle) \\
    & =1-\mathbb{P}(\sigma_0\|\mathbf{v}\|_2 / 2>j \cdot\langle\mathbf{w}_{j, r}^{(0)}, \mathbf{v}\rangle)^{2 m} \\
    & \geq 1-\delta / 4.
    \end{aligned}
    $$
    By \cref{lem:xi_bound}, with probability at least $1-\delta / 4$, the inequality $\sigma_{\xi} \sqrt{d} / \sqrt{2} \leq\|\boldsymbol{\xi}_i\|_2 \leq \sqrt{3 / 2}$. $\sigma_{\xi} \sqrt{d}$ holds for all $i \in[n]$. Consequently, the result for $\langle\mathbf{w}_{j, r}^{(0)}, \boldsymbol{\xi}_i\rangle$ can be derived using the same argument as for $j \cdot\langle\mathbf{w}_{j, r}^{(0)}, \mathbf{v}\rangle$.
\end{proof}

\section{Decomposition}

\begin{definition}[Restatement of \cref{def:decom_coefficient_main}]\label{def:decom_coefficient}
    Let $\mathbf{w}_{j, r}^{(t)}$ for $j \in\{ \pm 1\}, r \in[m]$ be the convolution filters of the CNN at the $t$-th iteration of noisy gradient descent. Then there exist unique coefficients $\Gamma_{j, r}^{(t)} \geq 0$ and $\Phi_{j, r, i}^{(t)}$ such that
    \begin{equation}\label{eq:w_t_decom}
        \mathbf{w}_{j, r}^{(t)}=\mathbf{w}_{j, r}^{(0)}+j \cdot \Gamma_{j, r}^{(t)} \cdot\|\mathbf{v}\|_2^{-2} \cdot \mathbf{v}+\sum_{i=1}^n \Phi_{j, r, i}^{(t)} \cdot\|\bm{\xi}_i\|_2^{-2} \cdot \bm{\xi}_i - \eta \sum_{s=1}^{t} \mathbf{z}_s
    \end{equation}

We further denote $\bar{\Phi}_{j, r, i}^{(t)}:=\Phi_{j, r, i}^{(t)} \mathds{1}(\Phi_{j, r, i}^{(t)} \geq 0), \underline{\Phi}_{j, r, i}^{(t)}:=\Phi_{j, r, i}^{(t)} \mathds{1}(\Phi_{j, r, i}^{(t)} \leq 0)$. Then we have that

$$
\mathbf{w}_{j, r}^{(t)}=\mathbf{w}_{j, r}^{(0)}+j \cdot \Gamma_{j, r}^{(t)} \cdot\|\mathbf{v}\|_2^{-2} \cdot \mathbf{v}+\sum_{i=1}^n \bar{\Phi}_{j, r, i}^{(t)} \cdot\|\bm{\xi}_i\|_2^{-2} \cdot \bm{\xi}_i+\sum_{i=1}^n \underline{\Phi}_{j, r, i}^{(t)} \cdot\|\bm{\xi}_i\|_2^{-2} \cdot \bm{\xi}_i - \eta \sum_{s=1}^{t} \mathbf{z}_s.
$$
\end{definition}

\begin{lemma}[Restatement of \cref{lem:the_evolution_of_coefficient_main}]\label{lem:the_evolution_of_coefficient}
    The coefficients $\Gamma_{j, r}^{(t)}, \bar{\Phi}_{j, r, i}^{(t)}, \underline{\Phi}_{j, r, i}^{(t)}$ in Definition 4.1 satisfy the following equations:
    $$
    \begin{aligned}
    & \Gamma_{j, r}^{(0)}, \bar{\Phi}_{j, r, i}^{(0)}, \underline{\Phi}_{j, r, i}^{(0)}=0 \\
    & \Gamma_{j, r}^{(t+1)}=\Gamma_{j, r}^{(t)}-\frac{\eta}{n m} \cdot \sum_{i=1}^n \ell_i^{\prime (t)} \cdot \sigma^{\prime}(\langle\mathbf{w}_{j, r}^{(t)}, y_i \cdot \mathbf{v}\rangle) \cdot\|\mathbf{v}\|_2^2\\
    & \bar{\Phi}_{j, r, i}^{(t+1)}=\bar{\Phi}_{j, r, i}^{(t)}-\frac{\eta}{n m} \cdot \ell_i^{\prime (t)} \cdot \sigma^{\prime}(\langle\mathbf{w}_{j, r}^{(t)}, \bm{\xi}_i\rangle) \cdot\|\bm{\xi}_i\|_2^2 \cdot \mathds{1}(y_i=j), \\
    & \underline{\Phi}_{j, r, i}^{(t+1)}=\underline{\Phi}_{j, r, i}^{(t)}+\frac{\eta}{n m} \cdot \ell_i^{\prime (t)} \cdot \sigma^{\prime}(\langle\mathbf{w}_{j, r}^{(t)}, \bm{\xi}_i\rangle) \cdot\|\bm{\xi}_i\|_2^2 \cdot \mathds{1}(y_i=-j)
    \end{aligned}
    $$
\end{lemma}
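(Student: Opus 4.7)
The plan is to prove this by induction on $t$, matching coefficients along the signal direction $\mathbf{v}$ and each noise direction $\boldsymbol{\xi}_i$, with the accumulated private noise $-\eta\sum_{s=1}^t \mathbf{z}_s$ tracked as a separate bookkeeping term. The base case $t=0$ is immediate: all coefficients are initialized to zero and $\mathbf{w}_{j,r}^{(0)}$ on the right-hand side of \cref{eq:w_t_decom} reduces trivially to itself.

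For the inductive step, I would plug the assumed decomposition at iteration $t$ into the update rule \cref{eq:dpsgd}. The right-hand side naturally splits into three pieces: (i) the private noise term $-\eta \mathbf{z}_t$, which absorbs into $-\eta\sum_{s=1}^{t+1}\mathbf{z}_s$; (ii) the signal-direction gradient $-\frac{\eta}{nm}\sum_i \ell_i^{\prime(t)} \sigma'(\langle\mathbf{w}_{j,r}^{(t)},y_i\mathbf{v}\rangle)\, j\mathbf{v}$, which I rewrite as $j\cdot\bigl[-\frac{\eta}{nm}\sum_i \ell_i^{\prime(t)}\sigma'(\langle\mathbf{w}_{j,r}^{(t)},y_i\mathbf{v}\rangle)\|\mathbf{v}\|_2^2\bigr]\cdot \|\mathbf{v}\|_2^{-2}\mathbf{v}$ to read off the increment of $\Gamma_{j,r}^{(t)}$; and (iii) the per-sample noise-direction gradient, which I rewrite as $\sum_i\bigl[-\frac{\eta}{nm}\ell_i^{\prime(t)}\sigma'(\langle\mathbf{w}_{j,r}^{(t)},\boldsymbol{\xi}_i\rangle)\,jy_i\|\boldsymbol{\xi}_i\|_2^2\bigr]\cdot\|\boldsymbol{\xi}_i\|_2^{-2}\boldsymbol{\xi}_i$ to read off the increment of $\Phi_{j,r,i}^{(t)}$. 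Because $\mathbf{H}$ enforces $\langle\mathbf{v},\boldsymbol{\xi}_i\rangle=0$, and because in the high-dimensional regime $\{\mathbf{v},\boldsymbol{\xi}_1,\ldots,\boldsymbol{\xi}_n\}$ is linearly independent with probability close to one (via \cref{lem:xi_bound}), the identification of coefficients is unambiguous.

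The $\bar{\Phi}/\underline{\Phi}$ split then follows from a sign argument. Since $\ell$ is the logistic loss, $\ell_i^{\prime(t)}\le 0$, and since $\sigma(z)=\max\{0,z\}^q$ with $q>2$ gives $\sigma'\ge 0$, the per-sample increment $-\frac{\eta}{nm}\ell_i^{\prime(t)}\sigma'(\cdot)\,jy_i\|\boldsymbol{\xi}_i\|_2^2$ has sign equal to $jy_i$. Thus when $y_i=j$ the increment is nonnegative for every $t$, so $\Phi_{j,r,i}^{(t)}$ is monotonically nondecreasing and coincides with $\bar{\Phi}_{j,r,i}^{(t)}$; and when $y_i=-j$ the increment is nonpositive, so $\Phi_{j,r,i}^{(t)}$ is monotonically nonincreasing and coincides with $\underline{\Phi}_{j,r,i}^{(t)}$. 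Inserting the indicators $\mathds{1}(y_i=j)$ and $\mathds{1}(y_i=-j)$ then yields the two stated recursions exactly.

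The main obstacle I expect is ensuring the $\bar{\Phi}/\underline{\Phi}$ split is self-consistent, i.e., that on the event of interest the two components never interfere: one must check that for each fixed $(j,r,i)$ the sign of the increment is constant across all iterations, so the split defined by $\mathds{1}(\Phi_{j,r,i}^{(t)}\ge 0)$ in \cref{def:decom_coefficient} agrees with the split by $\mathds{1}(y_i=j)$ used in the recursion. This needs the per-step monotonicity above and, implicitly, that the private noise $-\eta\mathbf{z}_t$ is \emph{not} folded into $\Phi$ (it sits in the separate $-\eta\sum_s\mathbf{z}_s$ term); hence no private-noise-induced sign flip can disturb the identification. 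Aside from this, the remaining work is routine algebraic matching of basis coefficients.
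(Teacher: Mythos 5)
Your proposal is correct and follows essentially the same route as the paper: induction, substitution of the decomposition into the noisy gradient update, coefficient matching along $\mathbf{v}$ and each $\boldsymbol{\xi}_i$ (with the accumulated private noise kept as a separate term), and a sign argument using $\ell_i^{\prime(t)}\le 0$ and $\sigma'\ge 0$ to show the increment has sign $jy_i$, which identifies $\bar{\Phi}$ with the $y_i=j$ samples and $\underline{\Phi}$ with the $y_i=-j$ samples. Your observation that keeping $-\eta\sum_s\mathbf{z}_s$ outside the $\Phi$ coefficients is what guarantees no sign flip — so the split by $\mathds{1}(\Phi\ge 0)$ agrees with the split by $\mathds{1}(y_i=j)$ — makes explicit a consistency check the paper leaves implicit; otherwise the two proofs coincide.
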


\begin{proof}[\bf Proof of \cref{lem:the_evolution_of_coefficient}]
    We prove the statement by induction. For the base case $t=0$, it holds that $\Gamma_{j, r}^{(0)}=0$ and $\Phi_{j, r, i}^{(0)}=0$. Now, assume the statement holds for $t=k$. We proceed to the inductive step, considering $t=k+1$ :
    $$
    \begin{aligned}
    \mathbf{w}_{j, r}^{(k+1)}   & =\mathbf{w}_{j, r}^{(k)} - \frac{1}{n m} \sum_{i=1}^n \ell_i^{\prime(k)} \cdot \sigma^{\prime}(\langle\mathbf{w}_{j, r}^{(k)}, \bm{\xi}_i\rangle) \cdot j y_i \bm{\xi}_i + \frac{\eta}{n m} \sum_{i=1}^n \ell_i^{\prime(k)} \cdot \sigma^{\prime}(\langle\mathbf{w}_{j, r}^{(k)}, y_i \mathbf{v}\rangle) \cdot j \mathbf{v} - \eta \mathbf{z}_{k+1} \\
    & = \mathbf{w}_{j, r}^{(0)}+j \cdot \Gamma_{j, r}^{(k)} \cdot\|\mathbf{v}\|_2^{-2} \cdot \mathbf{v}+\sum_{i=1}^n {\Phi}_{j, r, i}^{(k)} \cdot\|\bm{\xi}_i\|_2^{-2} \cdot \bm{\xi}_i - \eta \sum_{s=1}^{k} \mathbf{z}_s \\
    & - \frac{1}{n m} \sum_{i=1}^n \ell_i^{\prime(k)} \cdot \sigma^{\prime}(\langle\mathbf{w}_{j, r}^{(k)}, \bm{\xi}_i\rangle) \cdot j y_i \bm{\xi}_i - \frac{\eta}{n m} \sum_{i=1}^n \ell_i^{\prime (k)} \cdot \sigma^{\prime}(\langle\mathbf{w}_{j, r}^{(k)}, y_i \mathbf{v}\rangle) \cdot j \mathbf{v} - \eta \mathbf{z}_{k+1} \\
    & = \mathbf{w}_{j, r}^{(0)} + j \cdot \|\mathbf{v}\|_2^{-2} \cdot \mathbf{v} ( \Gamma_{j, r}^{(k)} -\frac{\eta}{n m} \cdot \sum_{i=1}^n \ell_i^{\prime(k)} \cdot \sigma^{\prime}(\langle\mathbf{w}_{j, r}^{(k)}, y_i \cdot \mathbf{v}\rangle) \cdot\|\mathbf{v}\|_2^2 ) \\
    & + \sum_{i=1}^n \|\bm{\xi}_i\|_2^{-2} \cdot \bm{\xi}_i \cdot ( {\Phi}_{j, r, i}^{(k)} -\frac{\eta}{n m} \cdot \ell_i^{\prime (k)} \cdot \sigma^{\prime}(\langle\mathbf{w}_{j, r}^{(k)}, \bm{\xi}_i\rangle) \cdot\|\bm{\xi}_i\|_2^2  ) - \sum_{s=1}^{k+1} \mathbf{z}_s \\
    & = \mathbf{w}_{j, r}^{(0)}+j \cdot \Gamma_{j, r}^{(k+1)} \cdot\|\mathbf{v}\|_2^{-2} \cdot \mathbf{v}+\sum_{i=1}^n {\Phi}_{j, r, i}^{(k+1)} \cdot\|\bm{\xi}_i\|_2^{-2} \cdot \bm{\xi}_i - \eta \sum_{s=1}^{k+1} \mathbf{z}_s.
    \end{aligned}
    $$
    The last equality follows directly from the data distribution and it is clear that the vectors involved are linearly independent. Thus, the decomposition is unique. Then, we have:
    $$ 
    {\Phi}_{j, r, i}^{(t)}=- \sum_{s=0}^{t-1} \frac{\eta}{n m} \cdot \ell_i^{\prime(s)} \cdot \sigma^{\prime}(\langle\mathbf{w}_{j, r}^{(s)}, \bm{\xi}_i\rangle) \cdot\|\bm{\xi}_i\|_2^2 \cdot y_i \cdot j.
    $$
    Moreover, note that $\ell_i^{\prime (t)}<0$ due to the definition of the cross-entropy loss. Consequently,
    $$
    \begin{aligned}
    & \bar{\Phi}_{j, r, i}^{(t)}=-\sum_{s=0}^{t-1} \frac{\eta}{n m} \cdot \ell_i^{\prime(s)} \cdot \sigma^{\prime}(\langle\mathbf{w}_{j, r}^{(s)}, \bm{\xi}_i\rangle) \cdot\|\bm{\xi}_i\|_2^2 \cdot \mathds{1}(y_i=j), \\
    & \underline{\Phi}_{j, r, i}^{(t)}=-\sum_{s=0}^{t-1} \frac{\eta}{n m} \cdot \ell_i^{\prime (s)} \cdot \sigma^{\prime}(\langle\mathbf{w}_{j, r}^{(s)}, \bm{\xi}_i\rangle) \cdot\|\bm{\xi}_i\|_2^2 \cdot \mathds{1}(y_i=-j),
    \end{aligned}
    $$
    which completes the proof.

\end{proof}

\textbf{Parameters.} Let 
In the following analysis, we demonstrate that for effective private learning, the learning of feature signals and noise will remain controlled throughout the training process. Let $$T_{p}^*=\eta^{-1} \min \{ (\operatorname{poly}(\Gamma^{-1},\|\mathbf{v}\|_2^{-1}, d^{-1} \sigma_n^{-2}, \sigma_0^{-1}, n, m, d)), \frac{m n \varepsilon \log(d) }{5C  \mu (\|\mathbf{v}\|_2 + \|\bm{\xi}\|_2)} , $$ represent the maximum allowable number of iterations. Denote $\alpha = \log(T_{p}^*)$.
\begin{align} \label{eq:eta_domain}
& \eta=O(\min \{n m /(q \sigma_{\xi}^2 d), n m /(q 2^{q+2} \alpha^{q-2} \sigma_{\xi}^2 d),  m n /(q 2^{q+2} \alpha^{q-2}\|\mathbf{v}\|_2^2)\}) \\ \label{eq:d_domain}
& d \geq 1024 \log (4 n^2 / \delta) \alpha^2 n^2
\end{align}
Let $\beta=2 \max _{i, j, r}\{|\langle\mathbf{w}_{j, r}^{(0)}, \mathbf{v}\rangle|,|\langle\mathbf{w}_{j, r}^{(0)}, \bm{\xi}_i|\}$. By \cref{lem:w0_v_xi}, with probability at least $1-\delta$, we can bound $\beta$ as follows:$\beta \leq 4 \sqrt{\log (\frac{8 m n}{\delta})} \cdot \sigma_0 \cdot \max \{\|\mathbf{v}\|_2, \sigma_{\xi} \sqrt{d}\}$.
Using $\sigma_0$ (we can add additional $\log$ constrain on the $\sigma_0$ in the main), and \cref{eq:d_domain}, it can be obtained that 
\begin{equation}\label{eq:beta_alpha_1}
    4 \max \left\{\beta, 8 n \sqrt{\frac{\log (\frac{4 n^2}{\delta})}{d}} \alpha, { \frac{\eta  C T_p^* \mu \log(1/\delta) ((\|\mathbf{v}\|_2 + \|\bm{\xi}\|_2))}{mn\varepsilon}},  \right\} \leq 1.
\end{equation}

Given that the above conditions hold, we claim that the following property is satisfied for $0 \leq t \leq T_p^*$.

\begin{proposition}\label{pro:bounds_of_coeeficients}
    Under Condition 4.2, for $0 \leq t \leq T_p^*$, we have that
    \begin{align}\label{eq:Gamma_Phi_leq_alpha}
        & 0 \leq \Gamma_{j, r}^{(t)}, \bar{\Phi}_{j, r, i}^{(t)} \leq \alpha, \\ \label{eq:Phi_geq_alpha}
        & 0 \geq \underline{\Phi}_{j, r, i}^{(t)} \geq-\beta-16 n \sqrt{\frac{\log (4 n^2 / \delta)}{d}} \alpha -0.2 \geq-\alpha .
    \end{align}
for all $r \in[m], j \in\{ \pm 1\}$ and $i \in[n]$.
\end{proposition}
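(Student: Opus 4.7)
The plan is to prove the proposition by strong induction on $t$, leveraging the signal-noise decomposition of \cref{def:decom_coefficient}, the coefficient recursions of \cref{lem:the_evolution_of_coefficient}, and the cumulative private-noise bound of \cref{lem:xi_zt2_main}. The base case $t=0$ is immediate since $\Gamma,\bar\Phi,\underline\Phi$ all start at zero. For the inductive step, assume \eqref{eq:Gamma_Phi_leq_alpha}--\eqref{eq:Phi_geq_alpha} hold for all $s\le t$ and prove them at $t+1$. Since $\ell_i'^{(s)}<0$, \cref{lem:the_evolution_of_coefficient} makes $\Gamma,\bar\Phi$ nondecreasing and $\underline\Phi$ nonincreasing, so the sided inequalities against zero are automatic; the real work is to keep the coefficients away from the $\pm\alpha$ boundary.

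The key intermediate step is to control the preactivation inner products using the decomposition. Using the orthogonality $\langle\mathbf{v},\bm{\xi}_i\rangle=0$ built into \cref{def:data_dis},
\[
\langle\mathbf{w}_{j,r}^{(t)},y_i\mathbf{v}\rangle \;=\; \langle\mathbf{w}_{j,r}^{(0)},y_i\mathbf{v}\rangle + jy_i\Gamma_{j,r}^{(t)} - \eta\sum_{s=1}^t\langle\mathbf{z}_s,y_i\mathbf{v}\rangle,
\]
\[
\langle\mathbf{w}_{j,r}^{(t)},\bm{\xi}_i\rangle \;=\; \langle\mathbf{w}_{j,r}^{(0)},\bm{\xi}_i\rangle + \Phi_{j,r,i}^{(t)} + \sum_{i'\ne i}\Phi_{j,r,i'}^{(t)}\|\bm{\xi}_{i'}\|_2^{-2}\langle\bm{\xi}_{i'},\bm{\xi}_i\rangle - \eta\sum_{s=1}^t\langle\mathbf{z}_s,\bm{\xi}_i\rangle.
\]
Under the inductive hypothesis, \cref{lem:w0_v_xi} bounds the initial contributions by $\beta/2$, \cref{lem:xi_bound} controls the noise cross-term by $16n\sqrt{\log(4n^2/\delta)/d}\,\alpha$ once $|\Phi|\le\alpha$, and \cref{lem:xi_zt2_main} combined with the definition of $T_p^*$ forces the private-noise contribution to be at most $0.2$ in absolute value for every $t\le T_p^*$; this is precisely what \eqref{eq:beta_alpha_1} is engineered to deliver. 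Adding, $|\langle\mathbf{w}_{j,r}^{(t)},y_i\mathbf{v}\rangle|\le\Gamma_{j,r}^{(t)}+O(1)$ and $|\langle\mathbf{w}_{j,r}^{(t)},\bm{\xi}_i\rangle|\le|\Phi_{j,r,i}^{(t)}|+O(1)$.

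With bounded preactivations in hand, I would close the induction through a two-regime argument standard in benign-overfitting analyses. While $\Gamma_{j,r}^{(s)}\le 1$ (respectively $\bar\Phi_{j,r,i}^{(s)}\le 1$), the recursion gives a per-step increment of at most $\eta q\,\|\mathbf{v}\|_2^{2}\,(O(1))^{q-1}/(nm)$, which under \eqref{eq:eta_domain} rules out single-step overshoot; once the coefficient exceeds $1$, the sample margin $y_i f(\mathbf{W}^{(s)},\mathbf{x}_i)$ is $\Omega(1)$, so $|\ell_i'^{(s)}|$ is bounded by an absolute constant less than one and the further growth telescoped over the at most $T_p^*$ iterations fits inside $O(\log T_p^*)=O(\alpha)$, giving $\Gamma,\bar\Phi\le\alpha$. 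The lower bound on $\underline\Phi$ is symmetric and explicitly decomposes into $-\beta$ from the initialization, $-16n\sqrt{\log(4n^2/\delta)/d}\,\alpha$ from noise cross-correlations, and the extra $-0.2$ absorbing the private-noise slack of \cref{lem:xi_zt2_main}, all still above $-\alpha$ by \eqref{eq:d_domain}. The main obstacle, absent from non-private analyses, is uniformly controlling $\eta\sum_{s\le t}\langle\mathbf{z}_s,\cdot\rangle$: Gaussian fluctuations grow like $\sqrt{t}$, so without the capped iteration budget $T_p^*=O(mn\varepsilon/(\eta C\mu(\|\mathbf{v}\|_2+\|\bm{\xi}\|_2)))$ this term would exceed the $O(1)$ slack baked into the preactivation bound and unravel the induction. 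Turning the pointwise tail estimate of \cref{lem:xi_zt2_main} into uniform-in-$t$ control via a union bound over the at most $T_p^*$ iterations, and absorbing the resulting logarithmic loss into the constants of \eqref{eq:beta_alpha_1}, is the delicate technical step.
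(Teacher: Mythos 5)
Your high-level strategy — induction, preactivation control via the decomposition, and a two-regime argument — matches the paper's, and you correctly identify the private-noise accumulation and the capped iteration budget $T_p^*$ as the genuinely new technical obstacle. However, the upper-bound half of your induction has a quantitative gap that would make it fail.

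You set the threshold at coefficient level $1$ and argue that once $\Gamma_{j,r}$ (or $\bar\Phi_{j,r,i}$) exceeds $1$, the margin is $\Omega(1)$, hence $|\ell_i'^{(s)}|$ is ``bounded by an absolute constant less than one,'' and that this gives $O(\log T_p^*)$ total subsequent growth. That deduction does not go through. With $|\ell_i'^{(s)}|=\Theta(1)$ the per-step increment is still $\Theta(\eta q\|\mathbf{v}\|_2^2/(nm))$ times a constant, so summing over up to $T_p^*$ steps gives something of order $T_p^*\cdot\eta q\|\mathbf{v}\|_2^2/(nm)$, which is not $O(\alpha)$. What the paper actually does (and what you need) is to place the threshold at the $\alpha$ scale: let $t_{j,r,i}$ be the first time $\bar\Phi_{j,r,i}^{(s)}\ge 0.5\alpha$. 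Before that the coefficient is trivially below $0.5\alpha$; the single crossing step adds $\le 0.25\alpha$ by the step-size condition; and after the crossing the preactivation is $\ge \alpha/4$, so $|\ell_i'^{(s)}|\lesssim \exp(-\sigma(\alpha/4))=\exp(-\Theta(\alpha^q))=\exp(-\Theta(\log^q T_p^*))$, which is super-polynomially small in $T_p^*$ because $q>2$. Only then does $T_p^*\cdot\exp(-\Theta(\log^q T_p^*))\cdot(\text{per-step})\le 0.25\alpha$ close the induction. A constant factor on $\ell'$ cannot beat the $T_p^*$ steps; the exponential decay coming from a margin of order $\alpha^q$ is essential.

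A smaller issue: your description of the lower bound on $\underline\Phi_{j,r,i}^{(t)}$ as ``symmetric'' misses that this bound is at $O(1)$ scale, not $O(\alpha)$. The paper's mechanism is asymmetric and structural: once $\underline\Phi_{j,r,i}^{(t)}$ drops below roughly $-0.5\beta - 8n\sqrt{\log(4n^2/\delta)/d}\,\alpha - 0.1$, the preactivation $\langle\mathbf{w}_{j,r}^{(t)},\bm{\xi}_i\rangle$ becomes nonpositive, the polynomial-ReLU derivative vanishes, and $\underline\Phi$ stops decreasing entirely. The second regime in the paper's case split is a single-step slack calculation, not a telescoped exponential-decay argument. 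Your decomposition of the slack ($-\beta$, $-16n\sqrt{\log(4n^2/\delta)/d}\,\alpha$, $-0.2$) is the right accounting, but the halting-at-zero-preactivation observation is the crux and should be stated explicitly.
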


\textbf{Bounds of coefficients.} In the following, we first prove the bounds of coefficients.

\begin{lemma}\label{lem:w_t-w_0_v}
    For any $t \geq 0$, it holds that $\langle\mathbf{w}_{j, r}^{(t)}-\mathbf{w}_{j, r}^{(0)}, \mathbf{v}\rangle=j \cdot \Gamma_{j, r}^{(t)} - \eta \sum_{s=1}^{t} \langle \mathbf{z}_s,  \mathbf{v}\rangle$ for all $r \in[m], j \in\{ \pm 1\}$.
\end{lemma}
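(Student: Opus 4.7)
The plan is to apply the signal-noise decomposition of Definition~\ref{def:decom_coefficient} directly and then exploit the orthogonality built into the data distribution of Definition~\ref{def:data_dis}. Concretely, I would start from the identity
\begin{equation*}
\mathbf{w}_{j,r}^{(t)}-\mathbf{w}_{j,r}^{(0)} = j\cdot \Gamma_{j,r}^{(t)}\cdot \|\mathbf{v}\|_2^{-2}\cdot \mathbf{v} + \sum_{i=1}^n \Phi_{j,r,i}^{(t)} \cdot \|\bm{\xi}_i\|_2^{-2}\cdot \bm{\xi}_i - \eta\sum_{s=1}^t \mathbf{z}_s,
\end{equation*}
and take the inner product of both sides with $\mathbf{v}$. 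The signal term collapses to $j\cdot \Gamma_{j,r}^{(t)}$ because $\langle \mathbf{v},\mathbf{v}\rangle = \|\mathbf{v}\|_2^2$, and the private-noise term contributes exactly $-\eta\sum_{s=1}^t \langle \mathbf{z}_s, \mathbf{v}\rangle$.

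The only nontrivial step is showing that the middle sum vanishes, i.e.\ $\langle \bm{\xi}_i, \mathbf{v}\rangle = 0$ for every $i$. This follows deterministically (not just in expectation) from the generative model: by Definition~\ref{def:data_dis}, $\bm{\xi}_i \sim \mathcal{N}(\mathbf{0},\sigma_\xi^2\mathbf{H})$ with $\mathbf{H} = \mathbf{I} - \mathbf{v}\mathbf{v}^\top \|\mathbf{v}\|_2^{-2}$, so $\mathbf{H}\mathbf{v} = \mathbf{0}$, which means $\bm{\xi}_i$ lies almost surely in the hyperplane orthogonal to $\mathbf{v}$. Thus each $\langle \bm{\xi}_i,\mathbf{v}\rangle = 0$ and the noise-memorization sum is killed in the projection onto $\mathbf{v}$.

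Combining these pieces yields
\begin{equation*}
\langle \mathbf{w}_{j,r}^{(t)} - \mathbf{w}_{j,r}^{(0)}, \mathbf{v}\rangle = j\cdot \Gamma_{j,r}^{(t)} - \eta\sum_{s=1}^t \langle \mathbf{z}_s,\mathbf{v}\rangle,
\end{equation*}
which is the claim. There is no genuine obstacle here; the lemma is essentially a bookkeeping consequence of the decomposition together with the deliberate orthogonalization of $\bm{\xi}$ against $\mathbf{v}$ via $\mathbf{H}$. The only point to flag is that, unlike in the non-private analysis, the residual $-\eta\sum_{s=1}^t\langle \mathbf{z}_s,\mathbf{v}\rangle$ is retained on the right-hand side and must be controlled later (via Lemma~\ref{lem:xi_zt2_main}) when this identity is used to track the dynamics of $\Gamma_{j,r}^{(t)}$.
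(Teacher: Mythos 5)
Your proof is correct and follows essentially the same route as the paper: take the inner product of the signal-noise decomposition with $\mathbf{v}$, collapse the signal term, and annihilate the noise-memorization sum using the almost-sure orthogonality $\langle\bm{\xi}_i,\mathbf{v}\rangle=0$ guaranteed by $\mathbf{H}\mathbf{v}=\mathbf{0}$. Your version is slightly more explicit about why the orthogonality holds deterministically, which the paper leaves implicit.
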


\begin{proof}[\bf Proof of \cref{lem:w_t-w_0_v}]
    For any time $t \geq 0$, according to the decomposition \cref{eq:w_t_decom}, it holds that:
    $$
    \begin{aligned}
    \langle\mathbf{w}_{j, r}^{(t)}-\mathbf{w}_{j, r}^{(0)}, \mathbf{v}\rangle & =j \cdot \Gamma_{j, r}^{(t)}+\sum_{i^{\prime}=1}^n \bar{\Phi}_{j, r, i^{\prime}}^{(t)}\|\bm{\xi}_{i^{\prime}}\|_2^{-2} \cdot\langle\bm{\xi}_{i^{\prime}}, \mathbf{v}\rangle+\sum_{i^{\prime}=1}^n \underline{\Phi}_{j, r, i^{\prime}}^{(t)}\|\bm{\xi}_{i^{\prime}}\|_2^{-2} \cdot\langle\bm{\xi}_{i^{\prime}}, \mathbf{v}\rangle - \eta \sum_{s=1}^{t} \langle \mathbf{z}_s,  \mathbf{v}\rangle \\
    & =j \cdot \Gamma_{j, r}^{(t)} - \eta \sum_{s=1}^{t} \langle \mathbf{z}_s,  \mathbf{v}\rangle.
    \end{aligned}
    $$
\end{proof}

\begin{lemma}\label{lem:w_t-w_0_xi}
    Under Parameter Choices, suppose \cref{pro:bounds_of_coeeficients} holds at iteration t. Then, it holds that
    $$\text{When $y_i \neq j $:} \quad \langle\mathbf{w}_{j, r}^{(t)}-\mathbf{w}_{j, r}^{(0)}, \bm{\xi}_i\rangle
    \{\begin{array}{l}
    \leq \underline{\Phi}_{j, r, i}^{(t)}+8 n \sqrt{\frac{\log (4 n^2 / \delta)}{d}} \alpha - \eta \sum_{s=1}^{t} \langle \mathbf{z}_s,  \bm{\xi}_i \rangle, \\
    \geq \underline{\Phi}_{j, r, i}^{(t)}-8 n \sqrt{\frac{\log (4 n^2 / \delta)}{d}} \alpha - \eta \sum_{s=1}^{t} \langle \mathbf{z}_s,  \bm{\xi}_i \rangle\\
    \end{array}.
    $$

    $$\text{When $y_i = j $:} \quad \langle\mathbf{w}_{j, r}^{(t)}-\mathbf{w}_{j, r}^{(0)}, \bm{\xi}_i\rangle
    \{\begin{array}{l}
    \leq \bar{\Phi}_{j, r, i}^{(t)}+8 n \sqrt{\frac{\log (4 n^2 / \delta)}{d}} \alpha - \eta \sum_{s=1}^{t} \langle \mathbf{z}_s,  \bm{\xi}_i \rangle, \\
    \geq \bar{\Phi}_{j, r, i}^{(t)}-8 n \sqrt{\frac{\log (4 n^2 / \delta)}{d}} \alpha - \eta \sum_{s=1}^{t} \langle \mathbf{z}_s,  \bm{\xi}_i \rangle\\
    \end{array}.
    $$
for all $r \in[m], j \in\{ \pm 1\}$ and $i \in[n]$.
\end{lemma}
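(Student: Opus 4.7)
The plan is to start from the decomposition in Definition~\ref{def:decom_coefficient} and take the inner product of $\mathbf{w}_{j,r}^{(t)}-\mathbf{w}_{j,r}^{(0)}$ with $\bm{\xi}_i$. Writing out the four terms gives
\[
\langle\mathbf{w}_{j,r}^{(t)}-\mathbf{w}_{j,r}^{(0)},\bm{\xi}_i\rangle
= j\cdot\Gamma_{j,r}^{(t)}\|\mathbf{v}\|_2^{-2}\langle\mathbf{v},\bm{\xi}_i\rangle
+ \Phi_{j,r,i}^{(t)}
+ \sum_{i'\neq i}\Phi_{j,r,i'}^{(t)}\|\bm{\xi}_{i'}\|_2^{-2}\langle\bm{\xi}_{i'},\bm{\xi}_i\rangle
- \eta\sum_{s=1}^{t}\langle\mathbf{z}_s,\bm{\xi}_i\rangle.
\]
The first term vanishes because $\bm{\xi}_i\sim\mathcal{N}(0,\sigma_\xi^2\mathbf{H})$ with $\mathbf{H}=\mathbf{I}-\mathbf{v}\mathbf{v}^\top/\|\mathbf{v}\|_2^2$, so $\langle\mathbf{v},\bm{\xi}_i\rangle=0$ almost surely by construction in Definition~\ref{def:data_dis}. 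The noise term $-\eta\sum_{s=1}^{t}\langle\mathbf{z}_s,\bm{\xi}_i\rangle$ is carried as-is into the final bound, matching the shape of the claim.

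Next I would control the off-diagonal cross-terms $\sum_{i'\neq i}\Phi_{j,r,i'}^{(t)}\|\bm{\xi}_{i'}\|_2^{-2}\langle\bm{\xi}_{i'},\bm{\xi}_i\rangle$. Using the inductive hypothesis provided by Proposition~\ref{pro:bounds_of_coeeficients}, namely $|\Phi_{j,r,i'}^{(t)}|\le \alpha$, together with Lemma~\ref{lem:xi_bound} which gives $\|\bm{\xi}_{i'}\|_2^2\ge \sigma_\xi^2 d/2$ and $|\langle\bm{\xi}_{i'},\bm{\xi}_i\rangle|\le 2\sigma_\xi^2\sqrt{d\log(4n^2/\delta)}$, each summand is bounded in absolute value by $4\alpha\sqrt{\log(4n^2/\delta)/d}$. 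Summing over the at most $n-1$ off-diagonal indices and absorbing constants yields the slack $8n\sqrt{\log(4n^2/\delta)/d}\,\alpha$ that appears on the right-hand side of the lemma.

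Finally I would resolve the diagonal coefficient $\Phi_{j,r,i}^{(t)}$ by the case split $y_i=j$ versus $y_i\neq j$. From Lemma~\ref{lem:the_evolution_of_coefficient}, the update for $\bar{\Phi}_{j,r,i}^{(t)}$ carries the indicator $\mathds{1}(y_i=j)$ while $\underline{\Phi}_{j,r,i}^{(t)}$ carries $\mathds{1}(y_i=-j)$. Hence at most one of $\bar{\Phi}_{j,r,i}^{(t)},\underline{\Phi}_{j,r,i}^{(t)}$ can be nonzero for a given $(j,i)$: when $y_i=j$ we have $\Phi_{j,r,i}^{(t)}=\bar{\Phi}_{j,r,i}^{(t)}$, and when $y_i\neq j$ we have $\Phi_{j,r,i}^{(t)}=\underline{\Phi}_{j,r,i}^{(t)}$. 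Plugging these identifications into the decomposition and adding/subtracting the cross-term slack gives the two-sided bounds in both branches of the statement.

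The routine pieces (decomposition, orthogonality, triangle inequality on cross-terms) are standard; the only subtle point is that the bound on cross-terms must actually be uniform over $t\le T_p^\ast$, which is why the argument is phrased inductively and relies on $\alpha=\log(T_p^\ast)$ and on the dimension condition \eqref{eq:d_domain} ensuring $8n\sqrt{\log(4n^2/\delta)/d}\,\alpha$ is small. The main obstacle I anticipate is bookkeeping around the private-noise term: one must not fold $-\eta\sum_{s=1}^{t}\langle\mathbf{z}_s,\bm{\xi}_i\rangle$ into the $8n\sqrt{\log(4n^2/\delta)/d}\,\alpha$ slack, since at this stage in the induction the Gaussian tail control for $\langle\mathbf{z}_s,\bm{\xi}_i\rangle$ (Lemma~\ref{lem:xi_zt2_main}) must remain a separate explicit summand so that the subsequent propagation through Proposition~\ref{pro:bounds_of_coeeficients} properly tracks the $\sqrt{tT}/(mn\varepsilon)$-scaled privacy contribution.
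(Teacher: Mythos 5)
Your proposal is correct and follows essentially the same route as the paper's proof: expand the signal-noise decomposition against $\bm{\xi}_i$, kill the $\Gamma$-term by orthogonality of $\mathbf{v}$ and $\bm{\xi}_i$, bound the cross-terms by combining $|\Phi_{j,r,i'}^{(t)}|\le\alpha$ from Proposition~\ref{pro:bounds_of_coeeficients} with the concentration bounds in Lemma~\ref{lem:xi_bound}, keep the private-noise sum as an explicit separate term, and resolve the diagonal term via the indicator structure from Lemma~\ref{lem:the_evolution_of_coefficient}. The only cosmetic difference is that the paper bounds $\sum|\bar\Phi_{j,r,i'}|$ and $\sum|\underline\Phi_{j,r,i'}|$ separately (yielding the factor $8n$), whereas you bound $\sum|\Phi_{j,r,i'}|$ directly and then absorb constants, which gives the same (in fact slightly tighter) estimate.
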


\begin{proof}[\bf Proof of \cref{lem:w_t-w_0_xi}]
For $j \neq y_i$, it holds that $\bar{\Phi}_{j, r, i}^{(t)}=0$ and
$$
\begin{aligned}
\langle\mathbf{w}_{j, r}^{(t)}-\mathbf{w}_{j, r}^{(0)}, \bm{\xi}_i\rangle & =\sum_{i^{\prime}=1}^n \bar{\Phi}_{j, r, i^{\prime}}^{(t)}\|\bm{\xi}_{i^{\prime}}\|_2^{-2} \cdot\langle \bm{\xi}_{i^{\prime}}, \bm{\xi}_i\rangle+\sum_{i^{\prime}=1}^n \Phi_{j, r, i^{\prime}}^{(t)}\|\bm{\xi}_{i^{\prime}}\|_2^{-2} \cdot\langle\bm{\xi}_{i^{\prime}}, \bm{\xi}_i\rangle - \eta \sum_{s=1}^{t} \langle \mathbf{z}_s,  \bm{\xi}_i \rangle \\
& \overset{\text{(i)}}{\leq} 4 \sqrt{\frac{\log (4 n^2 / \delta)}{d}} \sum_{i^{\prime} \neq i}|\bar{\Phi}_{j, r, i^{\prime}}^{(t)}|+4 \sqrt{\frac{\log (4 n^2 / \delta)}{d}} \sum_{i^{\prime} \neq i}|\underline{\Phi}_{j, r, i^{\prime}}^{(t)}|+\underline{\Phi}_{j, r, i}^{(t)}  - \eta \sum_{s=1}^{t} \langle \mathbf{z}_s,  \bm{\xi}_i \rangle\\
& \overset{\text{(ii)}}{\leq} \underline{\Phi}_{j, r, i}^{(t)}+8 n \sqrt{\frac{\log (4 n^2 / \delta)}{d}} \alpha - \eta \sum_{s=1}^{t} \langle \mathbf{z}_s,  \bm{\xi}_i \rangle.
\end{aligned}
$$
The second inequality (i) is derived by \cref{lem:xi_bound} and the last inequality (ii) holds due to \cref{pro:bounds_of_coeeficients}. Similarly, for $y_i = j$, it holds that $\underline{\Phi}_{j, r, i}^{(t)}=0$ and 
$$
\begin{aligned}
\langle\mathbf{w}_{j, r}^{(t)}-\mathbf{w}_{j, r}^{(0)}, \bm{\xi}_i\rangle & =\sum_{i^{\prime}=1}^n \bar{\Phi}_{j, r, i^{\prime}}^{(t)}\|\bm{\xi}_{i^{\prime}}\|_2^{-2} \cdot\langle \bm{\xi}_{i^{\prime}}, \bm{\xi}_i\rangle+\sum_{i^{\prime}=1}^n \Phi_{j, r, i^{\prime}}^{(t)}\|\bm{\xi}_{i^{\prime}}\|_2^{-2} \cdot\langle\bm{\xi}_{i^{\prime}}, \bm{\xi}_i\rangle - \eta \sum_{s=1}^{t} \langle \mathbf{z}_s,  \bm{\xi}_i \rangle \\
& \overset{\text{(iii)}}{\leq} 4 \sqrt{\frac{\log (4 n^2 / \delta)}{d}} \sum_{i^{\prime} \neq i}|\bar{\Phi}_{j, r, i^{\prime}}^{(t)}|+4 \sqrt{\frac{\log (4 n^2 / \delta)}{d}} \sum_{i^{\prime} \neq i}|\underline{\Phi}_{j, r, i^{\prime}}^{(t)}|+\bar{\Phi}_{j, r, i}^{(t)}  - \eta \sum_{s=1}^{t} \langle \mathbf{z}_s,  \bm{\xi}_i \rangle\\
& \overset{\text{(iv)}}{\leq} \bar{\Phi}_{j, r, i}^{(t)}+8 n \sqrt{\frac{\log (4 n^2 / \delta)}{d}} \alpha - \eta \sum_{s=1}^{t} \langle \mathbf{z}_s,  \bm{\xi}_i \rangle.
\end{aligned}
$$
Moreover, it is clear that, according to \cref{lem:xi_bound}, inequalities (i) can also be bounded by:
$$
\begin{aligned}
\text{When $y_i \neq j $:}  &\overset{\text{(i)}^{\prime}}{\geq} \underline{\Phi}_{j, r, i}^{(t)}- 4 \sqrt{\frac{\log (4 n^2 / \delta)}{d}} \sum_{i^{\prime} \neq i}|\bar{\Phi}_{j, r, i^{\prime}}^{(t)}|-4 \sqrt{\frac{\log (4 n^2 / \delta)}{d}} \sum_{i^{\prime} \neq i}|\underline{\Phi}_{j, r, i^{\prime}}^{(t)}|  - \eta \sum_{s=1}^{t} \langle \mathbf{z}_s,  \bm{\xi}_i \rangle\\
&\overset{\text{(ii)}^{\prime}}{\geq} \underline{\Phi}_{j, r, i}^{(t)}-8 n \sqrt{\frac{\log (4 n^2 / \delta)}{d}} \alpha - \eta \sum_{s=1}^{t} \langle \mathbf{z}_s,  \bm{\xi}_i \rangle, 
\end{aligned}
$$
Similarly, the following also holds for the inequalities (iii):
$$
\begin{aligned}
\text{When $y_i = j $:}  &\overset{\text{(iii)}^{\prime}}{\geq} \underline{\Phi}_{j, r, i}^{(t)}- 4 \sqrt{\frac{\log (4 n^2 / \delta)}{d}} \sum_{i^{\prime} \neq i}|\bar{\Phi}_{j, r, i^{\prime}}^{(t)}|-4 \sqrt{\frac{\log (4 n^2 / \delta)}{d}} \sum_{i^{\prime} \neq i}|\underline{\Phi}_{j, r, i^{\prime}}^{(t)}|  - \eta \sum_{s=1}^{t} \langle \mathbf{z}_s,  \bm{\xi}_i \rangle \\
& \overset{\text{(iv)}^{\prime}}{\geq} \bar{\Phi}_{j, r, i}^{(t)} - 8 n \sqrt{\frac{\log (4 n^2 / \delta)}{d}} \alpha - \eta \sum_{s=1}^{t} \langle \mathbf{z}_s,  \bm{\xi}_i \rangle, 
\end{aligned}
$$
which completed the proof.

\end{proof}

\begin{lemma}\label{lem:jneqy_F_1}
    Under Parameter Choices, suppose \cref{pro:bounds_of_coeeficients} holds at iteration t. Then, it holds that
    \begin{align*}
        \langle\mathbf{w}_{j, r}^{(t)}, y_i\mathbf{v}\rangle & \leq \langle\mathbf{w}_{j, r}^{(t)}, y_i\mathbf{v}\rangle  - \eta \sum_{s=1}^{t} \langle \mathbf{z}_s,  y_i \mathbf{v}\rangle, \\
        \langle\mathbf{w}_{j, r}^{(t)}, \bm{\xi}_i\rangle & \leq \langle\mathbf{w}_{j, r}^{(t)},\bm{\xi}_i\rangle  - \eta \sum_{s=1}^{t} \langle \mathbf{z}_s,  \bm{\xi}_i\rangle + 8n\frac{\log(4n^2/\delta)}{d} \alpha, \\
        F_j(\mathbf{W}_j^{(t)},\mathbf{x}_i) &\leq 1
    \end{align*}
    for all $r \in [m]$ and $j \neq y_i$.
\end{lemma}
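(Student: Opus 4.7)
The plan is to prove the three inequalities in order, since each feeds the next. I read the right-hand side of the first two inequalities as involving $\mathbf{w}_{j,r}^{(0)}$ rather than $\mathbf{w}_{j,r}^{(t)}$ (otherwise the statement is trivially false or vacuous); so the targets are upper bounds on $\langle\mathbf{w}_{j,r}^{(t)}, y_i\mathbf{v}\rangle$ and $\langle\mathbf{w}_{j,r}^{(t)}, \bm{\xi}_i\rangle$ for $j\neq y_i$, expressed in terms of the initialization and cumulative private noise, followed by the activation bound $F_j(\mathbf{W}_j^{(t)},\mathbf{x}_i)\leq 1$.

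First I would derive the signal inner product by applying \cref{lem:w_t-w_0_v}. Writing out $\langle\mathbf{w}_{j,r}^{(t)}, y_i\mathbf{v}\rangle = \langle\mathbf{w}_{j,r}^{(0)}, y_i\mathbf{v}\rangle + y_i j\,\Gamma_{j,r}^{(t)} - \eta\sum_{s=1}^{t}\langle\mathbf{z}_s, y_i\mathbf{v}\rangle$, and using $j\neq y_i \Rightarrow y_i j = -1$, the coefficient term becomes $-\Gamma_{j,r}^{(t)} \leq 0$ by \cref{pro:bounds_of_coeeficients}, so I can drop it to obtain the first inequality. Next, for the noise inner product I apply \cref{lem:w_t-w_0_xi} in the regime $y_i\neq j$, whose upper branch states $\langle\mathbf{w}_{j,r}^{(t)}-\mathbf{w}_{j,r}^{(0)},\bm{\xi}_i\rangle \leq \underline{\Phi}_{j,r,i}^{(t)} + 8n\sqrt{\log(4n^2/\delta)/d}\,\alpha - \eta\sum_{s=1}^{t}\langle\mathbf{z}_s,\bm{\xi}_i\rangle$. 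Using $\underline{\Phi}_{j,r,i}^{(t)}\leq 0$ and moving $\langle\mathbf{w}_{j,r}^{(0)},\bm{\xi}_i\rangle$ back to the right yields the second inequality (with $\sqrt{\log(\cdot)/d}$ replaced by the slightly weaker $\log(\cdot)/d$ shown in the statement, valid for $d$ large).

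For the activation bound $F_j \leq 1$, the plan is to show $\sigma(\langle\mathbf{w}_{j,r}^{(t)}, y_i\mathbf{v}\rangle)\leq 1/2$ and $\sigma(\langle\mathbf{w}_{j,r}^{(t)}, \bm{\xi}_i\rangle)\leq 1/2$ for every $r$, whence averaging gives $F_j\leq 1$. Because $\sigma(z)=\max\{0,z\}^q$ with $q>2$, it suffices to show each inner product is at most $(1/2)^{1/q}\leq 1$. For the signal piece, the first inequality combined with $|\langle\mathbf{w}_{j,r}^{(0)},\mathbf{v}\rangle|\leq \beta/2$ (from \cref{lem:w0_v_xi} and the definition of $\beta$) and the bound on $|\eta\sum_s\langle\mathbf{z}_s,\mathbf{v}\rangle|$ from \cref{lem:xi_zt2_main} together with the parameter condition \cref{eq:beta_alpha_1} makes the right-hand side at most a constant fraction (at most $1/4$, say). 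The argument for the noise piece is identical in structure: initialization contribution $\leq \beta/2$, cross-correlation term $\leq 8n\sqrt{\log(4n^2/\delta)/d}\,\alpha$, and private-noise term $\leq \eta T_p^* C \mu(\|\mathbf{v}\|_2+\|\bm{\xi}\|_2)/(mn\varepsilon)$ as in \cref{lem:xi_zt2_main}, each of which is $\leq 1/4$ by \cref{eq:beta_alpha_1}.

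The main obstacle is the activation bound. Each of the three contributions (initialization, cross-sample interference, and cumulative private noise $\eta\sum_s \mathbf{z}_s$) must be individually controlled so that their sum — raised to the $q$-th power — is still below $1$. The inequality chain relies essentially on the carefully chosen budget \cref{eq:beta_alpha_1} which bakes in all three terms; verifying that each summand falls under this bound, and that one does not inadvertently invoke \cref{pro:bounds_of_coeeficients} at time $t+1$ while proving the bound at time $t$, is the delicate accounting step. Everything else is an application of earlier lemmas and the monotonicity/sign properties of $\Gamma$, $\bar{\Phi}$, and $\underline{\Phi}$ from \cref{lem:the_evolution_of_coefficient}.
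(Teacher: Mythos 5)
Your proposal tracks the paper's own proof of this lemma almost line for line: you read the $\mathbf{w}_{j,r}^{(t)}$ on the right-hand sides as the typo it is (it should be $\mathbf{w}_{j,r}^{(0)}$, as in the paper's proof); you derive the first inequality from \cref{lem:w_t-w_0_v} together with the observation that $j\neq y_i$ forces $y_ij\,\Gamma_{j,r}^{(t)}=-\Gamma_{j,r}^{(t)}\leq 0$; you derive the second from the $y_i\neq j$ branch of \cref{lem:w_t-w_0_xi} together with $\underline{\Phi}_{j,r,i}^{(t)}\leq 0$; and you close out $F_j\leq 1$ by absorbing the initialization, cross-correlation, and cumulative private-noise contributions into the budget \cref{eq:beta_alpha_1}. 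This is exactly the structure of the paper's argument, and your caution about not circularly invoking \cref{pro:bounds_of_coeeficients} at step $t+1$ is well placed.

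One small but real slip: you claim that replacing the derived factor $\sqrt{\log(4n^2/\delta)/d}$ by the $\log(4n^2/\delta)/d$ printed in the lemma statement produces a ``slightly weaker'' bound that is ``valid for $d$ large.'' The inequality actually runs the other way. Once $d>\log(4n^2/\delta)$ (which the conditions force), one has $\log(4n^2/\delta)/d < \sqrt{\log(4n^2/\delta)/d}$, so the printed statement is \emph{strictly stronger} than what \cref{lem:w_t-w_0_xi} (and hence what you, or the paper, prove) delivers. This is almost certainly a second typo in the lemma statement --- the paper's own proof carries the $\sqrt{\cdot}$ through and never attempts to remove it --- but your passing justification of it is incorrect as written and should simply note the discrepancy rather than try to argue the stronger form holds. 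The rest of the accounting for $F_j\leq 1$ is a looser version of the paper's $2^{q+1}\max\{\cdot\}^q\leq 1$ but lands in the same place.
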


\begin{proof}[\bf Proof of \cref{lem:jneqy_F_1}]
    According to \cref{lem:the_evolution_of_coefficient}, it is clear that $\Gamma_{j, r}^{(t)}$ is increasing and $\Gamma_{j, r}^{(t)} \geq 0$ holds. For $y_i \neq j$, it holds that
    $$
    \langle\mathbf{w}_{j, r}^{(t)}, y_i \mathbf{v}\rangle=\langle\mathbf{w}_{j, r}^{(0)}, y_i \mathbf{v}\rangle+y_i \cdot j \cdot \Gamma_{j, r}^{(t)} - \eta \sum_{s=1}^{t} \langle \mathbf{z}_s,  y_i \mathbf{v}\rangle \leq\langle\mathbf{w}_{j, r}^{(0)}, y_i \mathbf{v}\rangle - \eta \sum_{s=1}^{t} \langle \mathbf{z}_s,  y_i \mathbf{v}\rangle.
    $$
    Moreover, according to \cref{lem:w_t-w_0_xi}, we have
    $$
    \langle\mathbf{w}_{j, r}^{(t)}, \bm{\xi}_i\rangle \leq\langle\mathbf{w}_{j, r}^{(0)}, \bm{\xi}_i\rangle+\underline{\Phi}_{j, r, i}^{(t)}+8 n \sqrt{\frac{\log (4 n^2 / \delta)}{d}} \alpha - \eta \sum_{s=1}^{t} \langle \mathbf{z}_s,  \bm{\xi}_i \rangle \leq\langle\mathbf{w}_{j, r}^{(0)}, \bm{\xi}_i\rangle+8 n \sqrt{\frac{\log (4 n^2 / \delta)}{d}} \alpha - \eta \sum_{s=1}^{t} \langle \mathbf{z}_s,  \bm{\xi}_i\rangle,
    $$
    where the last inequality is due to $\underline{\Phi}_{j, r, i}^{(t)} \leq 0$. Therefore, we can further obtain
    $$
    \begin{aligned}
    F_j(\mathbf{W}_j^{(t)}, \mathbf{x}_i) & =\frac{1}{m} \sum_{r=1}^m[\sigma(\langle\mathbf{w}_{j, r}^{(t)},-j \cdot \mathbf{v}\rangle)+\sigma(\langle\mathbf{w}_{j, r}^{(t)}, \bm{\xi}_i\rangle)] \\
    & \leq 2^{q+1} \max _{j, r, i}\{|\langle\mathbf{w}_{j, r}^{(0)}, \mathbf{v}\rangle|,|\langle\mathbf{w}_{j, r}^{(0)}, \bm{\xi}_i\rangle|, 8 n \sqrt{\frac{\log (4 n^2 / \delta)}{d}} \alpha,  { \frac{\eta C T_p^* \mu \log(1/\delta) (\|\mathbf{v}\|_2 + \|\bm{\xi}\|_2)}{mn\varepsilon}}\}^q \\
    & \leq 1.
    \end{aligned}
    $$
    The first inequality derives from the previous two conclusions and the second inequality is by \cref{eq:beta_alpha_1}.
\end{proof}

\begin{lemma}\label{lem:F_1}
    Under Parameter Choices, suppose \cref{pro:bounds_of_coeeficients} holds at iteration t. Then, it holds that
    \begin{align*}
        \langle\mathbf{w}_{j, r}^{(t)}, y_i \mathbf{v}\rangle& =\langle\mathbf{w}_{j, r}^{(0)}, y_i \mathbf{v}\rangle+ \Gamma_{j, r}^{(t)} - \eta \sum_{s=1}^{t} \langle \mathbf{z}_s,  y_i \mathbf{v}\rangle , \\
        \langle\mathbf{w}_{j, r}^{(t)}, \bm{\xi}_i\rangle &\leq\langle\mathbf{w}_{j, r}^{(0)}, \bm{\xi}_i\rangle+\bar{\Phi}_{j, r, i}^{(t)}+8 n \sqrt{\frac{\log (4 n^2 / \delta)}{d}} \alpha - \eta \sum_{s=1}^{t} \langle \mathbf{z}_s,  \bm{\xi}_i \rangle ,
    \end{align*}
    for all $r \in [m]$ and $j \in\{\pm 1\}$ and $i \in [n]$. If $\max\{\bar{\Phi}_{j, r, i}^{(t)}, \Gamma_{j, r}^{(t)} \} = O(1)$, we further have $F_j(\mathbf{W}_j^{(t)},\mathbf{x}_i) = O(1)$.
\end{lemma}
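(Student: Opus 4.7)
The plan is to derive the three claims by mirroring the proof of \cref{lem:jneqy_F_1} for the complementary case $y_i = j$, reusing the same decomposition-based identities but \emph{without} the sign-based cancellations that were available when $y_i \neq j$. So instead of discarding the $\Gamma$ and $\bar{\Phi}$ terms, I have to carry them and control them via the hypothesis $\max\{\bar{\Phi}_{j,r,i}^{(t)},\Gamma_{j,r}^{(t)}\}=O(1)$.

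For the first (equality) statement, I would invoke \cref{lem:w_t-w_0_v}, which gives $\langle \mathbf{w}_{j,r}^{(t)} - \mathbf{w}_{j,r}^{(0)}, \mathbf{v}\rangle = j\,\Gamma_{j,r}^{(t)} - \eta \sum_{s=1}^{t}\langle \mathbf{z}_s, \mathbf{v}\rangle$. Multiplying both sides by $y_i$ and using $y_i j = 1$ (since $y_i = j$) yields the identity. In \cref{lem:jneqy_F_1} the analogous statement is an inequality precisely because $y_i j = -1$ forces the $\Gamma_{j,r}^{(t)}\geq 0$ term to have the wrong sign and be dropped; here the signs align, so we obtain an equality. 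For the second (inequality) statement, I would directly apply the $y_i = j$ branch of \cref{lem:w_t-w_0_xi}, rearrange $\langle \mathbf{w}_{j,r}^{(t)}, \bm{\xi}_i\rangle = \langle \mathbf{w}_{j,r}^{(0)}, \bm{\xi}_i\rangle + \langle \mathbf{w}_{j,r}^{(t)} - \mathbf{w}_{j,r}^{(0)}, \bm{\xi}_i\rangle$, and upper-bound the second piece by $\bar{\Phi}_{j,r,i}^{(t)} + 8n\sqrt{\log(4n^2/\delta)/d}\,\alpha - \eta \sum_{s=1}^{t}\langle \mathbf{z}_s, \bm{\xi}_i\rangle$.

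For the $F_j = O(1)$ claim, I would bound each piece contributing to $\sigma(\langle \mathbf{w}_{j,r}^{(t)}, y_i\mathbf{v}\rangle)$ and $\sigma(\langle \mathbf{w}_{j,r}^{(t)}, \bm{\xi}_i\rangle)$ separately: the initialization terms $|\langle \mathbf{w}_{j,r}^{(0)}, \mathbf{v}\rangle|$ and $|\langle \mathbf{w}_{j,r}^{(0)}, \bm{\xi}_i\rangle|$ are bounded via \cref{lem:w0_v_xi} and absorbed into $\beta$; the cross-noise term $8n\sqrt{\log(4n^2/\delta)/d}\,\alpha$ is $O(1)$ by \cref{eq:d_domain}; the private-noise accumulation $|\eta\sum_{s=1}^t \langle \mathbf{z}_s, \cdot\rangle|$ is $O(1)$ via \cref{lem:xi_zt2_main} together with the bound on $T_p^*$ that produces \cref{eq:beta_alpha_1}; and $\Gamma_{j,r}^{(t)},\bar{\Phi}_{j,r,i}^{(t)}$ are $O(1)$ by hypothesis. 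Each argument of $\sigma$ is thus $O(1)$, and since $\sigma(z) = (\max\{0,z\})^q$ with $q$ a constant, $F_j(\mathbf{W}_j^{(t)}, \mathbf{x}_i) = \tfrac{1}{m}\sum_r [\sigma(\langle\mathbf{w}_{j,r}^{(t)},y_i\mathbf{v}\rangle) + \sigma(\langle\mathbf{w}_{j,r}^{(t)},\bm{\xi}_i\rangle)] = O(1)$.

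The only step that is not pure bookkeeping is verifying that the cumulative private-noise contribution stays $O(1)$ over the whole window $t\leq T_p^*$; this is precisely what the definition of $T_p^*$ via the $\eta^{-1} C\,mn\varepsilon\,\sigma_0\,\mu^{-1}(\|\mathbf{v}\|_2+\|\bm{\xi}\|_2)^{-1}$ cutoff secures in conjunction with \cref{lem:xi_zt2_main}. So the potential obstacle—controlling the accumulated DP noise—is effectively absorbed into the earlier scheduling of $T_p^*$ and does not reappear as a new difficulty in this lemma; once that is invoked, the proof reduces to the two-line decomposition argument plus a termwise $O(1)$ check.
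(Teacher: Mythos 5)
Your proposal is correct and matches the paper's proof essentially line for line: the first identity is read off from \cref{lem:w_t-w_0_v} after multiplying by $y_i$ and using $y_i j = 1$ (both you and the paper implicitly restrict to the $y_i = j$ case, which is the intended reading even though the lemma's quantifier is sloppy); the second inequality is the $y_i = j$ branch of \cref{lem:w_t-w_0_xi}; and the $F_j = O(1)$ conclusion follows by bounding the four or five summands inside each $\sigma(\cdot)$ argument via \cref{lem:w0_v_xi}, \cref{eq:beta_alpha_1}, \cref{lem:xi_zt2_main} together with the $T_p^*$ cutoff, and the hypothesis $\max\{\bar{\Phi}_{j,r,i}^{(t)},\Gamma_{j,r}^{(t)}\}=O(1)$, then using homogeneity of the degree-$q$ activation. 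If anything, your termwise $O(1)$ check is written slightly more carefully than the paper's displayed bound, which as typeset omits the $\Gamma,\bar\Phi$ contributions from the maximum.
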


\begin{proof}[\bf Proof of \cref{lem:F_1}]
    According to \cref{lem:the_evolution_of_coefficient}, it is clear that, for $y_i = j$, we have
    $$
    \langle\mathbf{w}_{j, r}^{(t)}, y_i \mathbf{v}\rangle=\langle\mathbf{w}_{j, r}^{(0)}, y_i \mathbf{v}\rangle+ \Gamma_{j, r}^{(t)} - \eta \sum_{s=1}^{t} \langle \mathbf{z}_s,  y_i \mathbf{v}\rangle .
    $$
    Moreover, according to \cref{lem:w_t-w_0_xi}, we have
    $$
    \langle\mathbf{w}_{j, r}^{(t)}, \bm{\xi}_i\rangle \leq\langle\mathbf{w}_{j, r}^{(0)}, \bm{\xi}_i\rangle+\bar{\Phi}_{j, r, i}^{(t)}+8 n \sqrt{\frac{\log (4 n^2 / \delta)}{d}} \alpha - \eta \sum_{s=1}^{t} \langle \mathbf{z}_s,  \bm{\xi}_i \rangle.
    $$
    If $\max\{\bar{\Phi}_{j, r, i}^{(t)}, \Gamma_{j, r}^{(t)} \} = O(1)$, we have
    $$
    \begin{aligned}
    F_j(\mathbf{W}_j^{(t)}, \mathbf{x}_i) & =\frac{1}{m} \sum_{r=1}^m[\sigma(\langle\mathbf{w}_{j, r}^{(t)},-j \cdot \mathbf{v}\rangle)+\sigma(\langle\mathbf{w}_{j, r}^{(t)}, \bm{\xi}_i\rangle)] \\
    & \leq 2* 5^{q} \max _{j, r, i}\{|\langle\mathbf{w}_{j, r}^{(0)}, \mathbf{v}\rangle|,|\langle\mathbf{w}_{j, r}^{(0)}, \bm{\xi}_i\rangle|, 8 n \sqrt{\frac{\log (4 n^2 / \delta)}{d}} \alpha,  { \frac{\eta C T_p^* \mu\log(1/\delta) (\|\mathbf{v}\|_2 + \|\bm{\xi}\|_2)}{mn\varepsilon}}\}^q \\
    & =O(1).
    \end{aligned}
    $$
    The first inequality derives from the previous two conclusions and the second inequality is by \cref{eq:beta_alpha_1}.
\end{proof}


    

\begin{lemma}\label{lem:xi_zt2}
    For any iteration $t $, with probability  $1 -\delta$, it holds that 
    \begin{equation}
        |\eta \sum_{s=1}^t\langle\mathbf{z}_s, \mathbf{v}\rangle | \leq {\frac{ \eta C \sqrt{t T}\|\mathbf{v}\|_2^2 \log(1/\delta)}{mn \varepsilon} (1+{\frac{1}{\operatorname{SNR}}})}, \quad |\eta \sum_{s=1}^t\langle\mathbf{z}_s, \bm{\xi}_i \rangle | \leq  {\frac{\eta C\sqrt{t T}\|\bm{\xi}\|_2^2 \log(1/\delta)}{mn \varepsilon} (1+{\operatorname{SNR}})}.
    \end{equation}
\end{lemma}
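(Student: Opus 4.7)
}
The plan is to reduce both bounds to a one–dimensional Gaussian tail inequality applied to the random variable $\langle \sum_{s=1}^{t}\mathbf{z}_s, \mathbf{u}\rangle$ for $\mathbf{u}\in\{\mathbf{v},\bm{\xi}_i\}$, and then plug in the noise scale $\sigma_z$ mandated by the Gaussian mechanism. Since $\mathbf{z}_1,\dots,\mathbf{z}_t$ are i.i.d.\ $\mathcal{N}(0,\sigma_z^2\mathbf{I}_d)$ and independent of the data, the partial sum satisfies $\sum_{s=1}^{t}\mathbf{z}_s\sim \mathcal{N}(0,t\sigma_z^2 \mathbf{I}_d)$, so for any fixed (or data-measurable) vector $\mathbf{u}$,
\[
\langle \textstyle\sum_{s=1}^{t}\mathbf{z}_s,\mathbf{u}\rangle \;\sim\; \mathcal{N}\bigl(0,\; t\,\sigma_z^2\,\|\mathbf{u}\|_2^2\bigr)
\]
conditionally on $\mathbf{u}$ (for $\mathbf{u}=\bm{\xi}_i$ we condition on the dataset, since $\mathbf{z}_s$ is drawn independently of the data). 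A standard Gaussian tail bound then gives, with probability at least $1-\delta$,
\[
\Bigl|\eta\sum_{s=1}^{t}\langle\mathbf{z}_s,\mathbf{u}\rangle\Bigr|
\;\le\; \eta\,\sigma_z\,\|\mathbf{u}\|_2\sqrt{2t\log(2/\delta)}.
\]

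Next I would substitute the noise scale. As the paragraph following \cref{lem:xi_zt2_main} explains, on the good event $E$ the per-step gradient of $L_D$ has $\ell_2$-sensitivity at most $\tfrac{C(\|\mathbf{v}\|_2+\|\bm{\xi}\|_2)}{nm}$ for some $C=\widetilde O(1)$. To obtain $(\varepsilon,\delta)$-DP over the full horizon $T$ via composition of $T$ Gaussian mechanisms, the Gaussian Mechanism definition requires
\[
\sigma_z \;\le\; \frac{C\sqrt{T\log(1/\delta)}\,(\|\mathbf{v}\|_2+\|\bm{\xi}\|_2)}{nm\,\varepsilon}
\]
(absorbing the composition factor $\sqrt{T}$ and the $\sqrt{2\ln(1.25/\delta)}$ from the single-shot Gaussian mechanism into the constant $C$). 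Plugging this into the displayed tail bound, taking $\mathbf{u}=\mathbf{v}$ yields
\[
\Bigl|\eta\sum_{s=1}^{t}\langle\mathbf{z}_s,\mathbf{v}\rangle\Bigr|
\;\le\; \frac{\eta\,C\sqrt{tT}\,\|\mathbf{v}\|_2(\|\mathbf{v}\|_2+\|\bm{\xi}\|_2)\log(1/\delta)}{nm\,\varepsilon}
\;=\; \frac{\eta\,C\sqrt{tT}\,\|\mathbf{v}\|_2^{2}\log(1/\delta)}{nm\,\varepsilon}\Bigl(1+\tfrac{1}{\operatorname{SNR}}\Bigr),
\]
where the last equality uses $\|\bm{\xi}\|_2/\|\mathbf{v}\|_2=1/\operatorname{SNR}$. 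The analogous derivation with $\mathbf{u}=\bm{\xi}_i$, using $\|\mathbf{v}\|_2/\|\bm{\xi}_i\|_2=\operatorname{SNR}$ (up to the $\sigma_\xi\sqrt d$ concentration of $\|\bm{\xi}_i\|_2$ from \cref{lem:xi_bound}, which is what lets us write $\|\bm{\xi}\|_2$ rather than $\|\bm{\xi}_i\|_2$ in the bound), produces the second inequality.

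To close the argument I would apply a union bound over the $n$ samples (so that the $\bm{\xi}_i$ bound holds simultaneously for all $i\in[n]$) and over the one event for $\mathbf{v}$, and then intersect with the good event $E$ underlying \cref{lem:xi_bound}; by the parameter choices in \cref{eq:d_domain} all of the failure probabilities can be absorbed into the stated $\delta$. \textbf{The only real subtlety} is matching the log factors: the tail bound contributes a $\sqrt{\log(1/\delta)}$, and the DP calibration of $\sigma_z$ contributes another $\sqrt{\log(1/\delta)}$, and combining them produces the $\log(1/\delta)$ factor in the statement (rather than a square root). This is the place where one must be careful about what is folded into the constant $C$ versus what appears explicitly in the bound, but otherwise the proof is a routine composition of the Gaussian Mechanism calibration with a one-dimensional Gaussian tail inequality.
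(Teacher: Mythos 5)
Your argument is correct and follows the same route as the paper's (very terse) proof: both derive the bound by first noting that the on-good-event $\ell_2$-sensitivity of the per-step gradient is $\frac{C(\|\mathbf{v}\|_2+\|\bm{\xi}\|_2)}{nm}$ (hence $\sigma_z \propto \frac{\sqrt{T\log(1/\delta)}(\|\mathbf{v}\|_2+\|\bm{\xi}\|_2)}{nm\varepsilon}$), then applying a Gaussian/Hoeffding tail bound to $\langle\sum_{s\le t}\mathbf{z}_s,\mathbf{u}\rangle$ for $\mathbf{u}\in\{\mathbf{v},\bm{\xi}_i\}$, and finally rewriting $\|\mathbf{v}\|_2+\|\bm{\xi}\|_2$ via the definition of $\operatorname{SNR}$. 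Your version is simply a fully expanded account of the same calculation, and your remark on how the two $\sqrt{\log(1/\delta)}$ factors (one from calibration, one from the tail bound) combine into the stated $\log(1/\delta)$ correctly resolves the one genuinely delicate bookkeeping point.
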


\begin{proof}[\bf Proof of \cref{lem:xi_zt2}]
    According to \cref{pro:bounds_of_coeeficients} and the update of $\mathbf{w}_{j,r}^{(t)}$ in \cref{eq:w_t_decom_main_2}, it is clear that the sensitivity can be bounded by $\frac{C}{mn}(\|\mathbf{v}\|_2+{\|\bm{\xi}\|_2}) $ with $C = \widetilde{O}(1)$. Therefore, with probability  $1 -\delta$, we have:
    \begin{equation*}
         |\eta \sum_{s=1}^t\langle\mathbf{z}_s, \mathbf{v}\rangle | \leq {\frac{ \eta C \sqrt{t T}\|\mathbf{v}\|_2^2 \log(1/\delta)}{mn \varepsilon} (1+{\frac{1}{\operatorname{SNR}}})}, \quad |\eta \sum_{s=1}^t\langle\mathbf{z}_s, \bm{\xi}_i \rangle | \leq  {\frac{\eta C\sqrt{t T}\|\bm{\xi}\|_2^2 \log(1/\delta)}{mn \varepsilon} (1+{\operatorname{SNR}})}.
    \end{equation*}
    where we use Hoeffding’s inequality and the definition of $\operatorname{SNR} = {\|\mathbf{v}\|_2}/{\|\bm{\xi}\|_2}$.
    
\end{proof}

Now we are ready to provide the proof of \cref{pro:bounds_of_coeeficients}.

\begin{proof}[\bf Proof of \cref{pro:bounds_of_coeeficients}]
    We prove \cref{pro:bounds_of_coeeficients} using an induction process. It is clear that the claim holds for $t=0$ since the coefficients are zero in this case. Suppose there exists $s \leq T_p^*$ such that \cref{pro:bounds_of_coeeficients} holds for all $t \leq s-1$. Our goal is to prove that the claim also holds for $t=s$.

    We first consider when $\underline{\Phi}_{j, r, i}^{(t)} \leq-0.5\beta-8 n \sqrt{\frac{\log (4 n^2 / \delta)}{d}} \alpha - 0.1$. Notice that for any $j = y_i$, we have $\underline{\Phi}_{j, r, i}^{(t)}=0$. Thus, we only need to focus on $j \neq y_i$. Additionally, according to \cref{lem:w_t-w_0_xi} and \cref{lem:xi_zt2}, it holds that
    $$
    \langle\mathbf{w}_{j, r}^{(t)}, \bm{\xi}_i\rangle \leq \underline{\Phi}_{j, r, i}^{(t)}+\langle\mathbf{w}_{j, r}^{(0)}, \bm{\xi}_i\rangle+8 n \sqrt{\frac{\log (4 n^2 / \delta)}{d}} \alpha -\eta \sum_{s=1}^t\langle\mathbf{z}_s, \bm{\xi}_i\rangle \leq 0.
    $$
    Therefore, for $t+1$, we have
    $$
    \begin{aligned}
    \underline{\Phi}_{j, r, i}^{(t+1)} & =\underline{\Phi}_{j, r, i}^{(t)}+\frac{\eta}{n m} \cdot \ell_i^{\prime (t)} \cdot \sigma^{\prime}(\langle\mathbf{w}_{j, r}^{(t)}, \bm{\xi}_i\rangle) \cdot \mathbbm{1}(y_i=-j)\|\bm{\xi}_i\|_2^2  \\
    & =\underline{\Phi}_{j, r, i}^{(t)} \\
    & \geq-\beta-16 n \sqrt{\frac{\log (4 n^2 / \delta)}{d}} \alpha -0.2.
    \end{aligned}
    $$
    When considering $\underline{\Phi}_{j, r, i}^{(t)} \geq-0.5\beta-8 n \sqrt{\frac{\log (4 n^2 / \delta)}{d}} \alpha - 0.1$, it holds that
    \begin{align*}
        \underline{\Phi}_{j, r, i}^{(t+1)} & =\underline{\Phi}_{j, r, i}^{(t)}+\frac{\eta}{n m} \cdot \ell_i^{\prime(t)} \cdot \sigma^{\prime}(\langle\mathbf{w}_{j, r}^{(T-1)}, \bm{\xi}_i\rangle) \cdot \mathbbm{1}(y_i=-j)\|\bm{\xi}_i\|_2^2 \\
        &  \overset{(i)}{\geq }-0.5 \beta-8 n \sqrt{\frac{\log (4 n^2 / \delta)}{d}} \alpha -0.1 -O(\frac{\eta \sigma_{\xi}^2 d}{n m}) \sigma^{\prime}(0.5 \beta+8 n \sqrt{\frac{\log (4 n^2 / \delta)}{d}} \alpha + 0.1) \\
        & \overset{(ii)}{\geq }-0.5 \beta-8 n \sqrt{\frac{\log (4 n^2 / \delta)}{d}} \alpha -0.1 -O(\frac{\eta q \sigma_{\xi}^2 d}{n m})(0.5 \beta+8 n \sqrt{\frac{\log (4 n^2 / \delta)}{d}} \alpha  + 0.1) \\
        & \overset{(iii)}{\geq } -\beta-16 n \sqrt{\frac{\log (4 n^2 / \delta)}{d}} \alpha -0.2,
        \end{align*}
        where $(i)$ holds due to $\ell_i^{\prime(t)} \leq 1$ and $\|\bm{\xi}_i\|_2=O(\sigma_{\xi}^2 d)$; $(ii)$ holds because of $0.5 \beta+8 n \sqrt{\frac{\log (4 n^2 / \delta)}{d}} \alpha +0.1 \leq 1$; $(iii)$ derives from $\eta=O(n m /(q \sigma_{\xi}^2 d))$.

        Now, we proceed to prove \cref{eq:Gamma_Phi_leq_alpha}. Recall the update rule for $\Phi_{j, r,i}^{(t+1)}$ and denote $t_{j,r,i}$ as the first time such that $\Phi_{j, r,i}^{(t)} \geq 0.5 \alpha$, then we can decompose the following
        \begin{equation}\label{eq:pro_alpha}
            \begin{aligned}
           \bar{\Phi}_{j, r,i}^{(t+1)}&=\bar{\Phi}_{j, r,i}^{(t)}-\frac{\eta}{n m} \cdot  \ell_i^{\prime (t)} \cdot \sigma^{\prime}(\langle\mathbf{w}_{j, r}^{(t)}, \bm{\xi}_i\rangle) \cdot \mathbbm{1}(y_i = j)\|\bm{\xi}_i\|_2^2 \\
            &=\bar{\Phi}_{j, r,i}^{(t_{j,r,i})}-\underbrace{\frac{\eta}{n m} \cdot  \ell_i^{\prime(t_{j,r,i})} \cdot \sigma^{\prime}(\langle\mathbf{w}_{j, r}^{(t)}, \bm{\xi}_i\rangle) \cdot\mathbbm{1}(y_i = j)\|\bm{\xi}_i\|_2^2}_{\text{Term 1}}   \\
            &- \underbrace{ \sum_{p = t_{j,r,i} +1}^{t} \frac{\eta}{n m} \ell_i^{\prime (p)} \cdot \sigma^{\prime}(\langle\mathbf{w}_{j, r}^{(p)}, \bm{\xi}_i\rangle) \cdot\mathbbm{1}(y_i = j)\|\bm{\xi}_i\|_2^2}_{\text{Term 2}}. \\
        \end{aligned}
        \end{equation}
        Then, we need to bound Term 1 and Term 2, respectively. For Term 1, we have
        \begin{align*}
            |\text{Term 1}|&={\frac{\eta}{n m} \cdot  \ell_i^{\prime (t_{j,r,i})} \cdot \sigma^{\prime}(\langle\mathbf{w}_{j, r}^{(t)}, \bm{\xi}_i\rangle) \cdot\mathbbm{1}(y_i = j)\|\bm{\xi}_i\|_2^2} \\
             & \overset{(i)}{\leq } 2 q n^{-1} m^{-1} \eta(y_i{\Gamma}_{j, r}^{(t_{j, r})}+ \langle\mathbf{w}_{j, r}^{(0)}, \bm{\xi}_i\rangle +0.2)^{q-1} \|\bm{\xi}_i\|_2^2 \\
             & \overset{(ii)}{\leq } 2^q n^{-1} m^{-1} \eta \alpha^{q-1}\|\bm{\xi}_i\|_2^2 \\
             & \overset{(iii)}{\leq } 0.25 \alpha.
        \end{align*}
        Here, $(i)$ holds due to \cref{lem:w_t-w_0_v}, \cref{lem:w0_v_xi}, \cref{lem:xi_zt2} and the parameter choices of $ T_p^*$; $(ii)$ derives from the parameter choices of $\sigma_0$ and induction hypothesis; $(iii)$ holds by $\eta \leq O(n m /(q 2^{q+2} \alpha^{q-2}\|\bm{\xi}\|_2^2))$.

        For Term 2 and $y_i =j$, we have
        \begin{align*}
            \langle\mathbf{w}_{j, r}^{(t+1)},   \bm{\xi}_i\rangle = & \langle\mathbf{w}_{j, r}^{(0)}, \bm{\xi}_i\rangle  +\bar{\Phi}_{j, r, i}^{(t)}-8 n \sqrt{\frac{\log (4 n^2 / \delta)}{d}} \alpha -\eta \sum_{s=1}^t\langle\mathbf{z}_s,  \cdot \bm{\xi}_i \rangle \\
            \overset{(i)}{\geq } & -0.5\beta + 0.5\alpha - 0.2 
            \geq  0.25 \alpha.
        \end{align*}
        Here, $(i)$ holds due to the definition of $t_{j,r,i}$ and \cref{lem:w0_v_xi}. Similarly, we can also upper bound $\langle\mathbf{w}_{j, r}^{(t+1)},  \bm{\xi}_i\rangle$ as follows:
        \begin{align*}
            \langle\mathbf{w}_{j, r}^{(t+1)}, \bm{\xi}_i \rangle = & \langle\mathbf{w}_{j, r}^{(0)},  \bm{\xi}_i\rangle +\bar{\Phi}_{j, r, i}^{(t)} +8 n \sqrt{\frac{\log (4 n^2 / \delta)}{d}} \alpha -\eta \cdot \sum_{s=1}^t\langle\mathbf{z}_s, \bm{\xi}_i\rangle \\
            {\leq } & 0.5\beta + \alpha + 0.2 \leq  2 \alpha.
        \end{align*}
        Combining the above upper and lower bounds of $\langle\mathbf{w}_{j, r}^{(t+1)},  \bm{\xi}_i\rangle$ into Term 2, it holds that
        \begin{align*}
            |\text{Term 2}|&=| \sum_{p = t_{j,r,i} +1}^{t} \frac{\eta}{n m} \cdot \ell_i^{\prime(p)} \cdot \sigma^{\prime}(\langle\mathbf{w}_{j, r}^{(p)}, \bm{\xi}_i\rangle) \cdot\mathbbm{1}(y_i = j)\|\bm{\xi}_i\|_2^2| \\
             & \overset{(i)}{\leq } \sum_{p = t_{j, r}+1}^{t}  \frac{\eta}{n m} \cdot \exp (-\sigma(\langle\mathbf{w}_{j, r}^{(t)},  \bm{\xi}_i \rangle)+1) \cdot \sigma^{\prime}(\langle\mathbf{w}_{j, r}^{(t)},  \bm{\xi}_i \rangle) \cdot\|\bm{\xi}_i\|_2^2 \\
            & \overset{(ii)}{\leq } {e q 2^q \eta T_p^*} \exp (-\alpha^q / 4^q) \alpha^{q-1} \|\bm{\xi}_i\|_2^2 \\
            & \overset{(i)}{\leq } 0.25 T_p^* \exp (-\alpha^q / 4^q) \alpha \\
            & \overset{(iii)}{\leq } 0.25 T_p^* \exp (-\log (T_p^*)^q) \alpha \\
            & \overset{(iv)}{\leq } 0.25 \alpha.
        \end{align*}
        Here, $(ii)$ follows from \cref{lem:xi_bound}; $(ii)$ is derived from the parameter choice of $\eta$; $(iii)$ and $(iv)$ holds due to the choice $\alpha = 4\log(T_p^*) $ and the fact $\log^q(T_p^*) \geq \log(T_p^*)$. Additionally, $(i)$ is established as follows:
        \begin{align*}
            |\ell_i^{\prime(t)}| &=\frac{1}{1+\exp \{y_i \cdot[F_{+1}(\mathbf{W}_{+1}^{(t)}, \mathbf{x}_i)-F_{-1}(\mathbf{W}_{-1}^{(t)}, \mathbf{x}_i)]\}} \\
            & \leq \exp \{-y_i \cdot[F_{+1}(\mathbf{W}_{+1}^{(t)}, \mathbf{x}_i)-F_{-1}(\mathbf{W}_{-1}^{(t)}, \mathbf{x}_i)]\} \\
            & \leq \exp \{-F_{y_i}(\mathbf{W}_{y_i}^{(t)}, \mathbf{x}_i)+1\},
        \end{align*}   
        where the last inequality follows from \cref{lem:jneqy_F_1}. Combining the bounds for Term 1 and Term 2 into \cref{eq:pro_alpha}, we complete the proof of \cref{eq:Gamma_Phi_leq_alpha}. The same procedure applies to prove $\Gamma \leq \alpha $, with parameter choice $\eta=O(n m /(q 2^{q+2} \alpha^{q-2}\|\mathbf{v}\|_2^2))$.
\end{proof}

\begin{lemma}\label{lem:loss_F}
    Under parameter choices, for $0 \leq t \leq T_p^*$, with probability $1-\delta$, the following result holds.
    $$
    \|\nabla L_D(\mathbf{W}^{(t)})\|_F^2 \leq O(\max \{\|\mathbf{v}\|_2^2, \sigma_{\xi}^2 d\}) L_D(\mathbf{W}^{(t)}) + O(\sigma_{z}^2 d \log (1 / \delta)).
    $$
\end{lemma}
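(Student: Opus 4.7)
The plan is to decompose the squared Frobenius norm into a deterministic, loss-driven piece and a privacy-noise piece, and bound each separately. First I would expand the per-filter gradient using the CNN architecture,
\begin{equation*}
\nabla_{\mathbf{w}_{j,r}} L_D(\mathbf{W}^{(t)}) = \frac{1}{nm}\sum_{i=1}^n \ell_i'^{(t)}\Bigl[\sigma'(\langle \mathbf{w}_{j,r}^{(t)}, y_i\mathbf{v}\rangle)\, j\mathbf{v} + \sigma'(\langle \mathbf{w}_{j,r}^{(t)}, \bm{\xi}_i\rangle)\, j y_i \bm{\xi}_i\Bigr],
\end{equation*}
apply the triangle inequality inside the bracket and Cauchy--Schwarz across $i$ via $\|\sum_i a_i \mathbf{v}_i\|_2^2 \leq n \sum_i a_i^2 \|\mathbf{v}_i\|_2^2$. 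Summing over $j\in\{\pm1\}$ and $r\in[m]$ then produces the template $\|\nabla L_D(\mathbf{W}^{(t)})\|_F^2 \leq \frac{2}{m^2}\sum_{j,r,i} |\ell_i'^{(t)}|^2\bigl(\sigma'(\langle\mathbf{w}_{j,r}^{(t)}, y_i\mathbf{v}\rangle)^2\|\mathbf{v}\|_2^2 + \sigma'(\langle\mathbf{w}_{j,r}^{(t)}, \bm{\xi}_i\rangle)^2\|\bm{\xi}_i\|_2^2\bigr)$.

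Next I would exploit two elementary properties of the logistic loss: $|\ell'(z)|\leq 1$ and $|\ell'(z)|\leq \ell(z)$, which combine to give $|\ell_i'^{(t)}|^2\leq \ell_i^{(t)}$. The bound $\|\bm{\xi}_i\|_2^2\leq 3\sigma_\xi^2 d/2$ from \cref{lem:xi_bound} converts $\|\bm{\xi}_i\|_2^2$ into the desired $\sigma_\xi^2 d$ factor, so that the full norm and noise squared-magnitudes collapse into the single prefactor $\max\{\|\mathbf{v}\|_2^2,\sigma_\xi^2 d\}$. The remaining $\sigma'(\cdot)^2$ factors are $\widetilde{O}(1)$ uniformly in $t\leq T_p^*$: by \cref{pro:bounds_of_coeeficients} together with \cref{lem:F_1,lem:jneqy_F_1} (whose pre-activation decomposition absorbs the cumulative privacy-noise term $-\eta\sum_{s\leq t}\langle\mathbf{z}_s,\cdot\rangle$ via \cref{lem:xi_zt2}), all relevant inner products stay bounded by $O(\alpha)=\widetilde{O}(1)$, hence $\sigma'(\cdot)^2 = O(\alpha^{2q-2})$. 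After summing $\sum_i\ell_i^{(t)} = n\, L_D(\mathbf{W}^{(t)})$ and using $n, m = \Omega(\mathrm{poly}\log d)$, the factor $n/m$ is absorbed into the $O(\cdot)$ and yields the first term $O(\max\{\|\mathbf{v}\|_2^2,\sigma_\xi^2 d\})\, L_D(\mathbf{W}^{(t)})$.

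For the second term, I would treat the $\sigma_z^2 d\log(1/\delta)$ contribution as the bound on the privacy noise piece that arises when this lemma is subsequently applied (in \cref{thm:main_signal_loss,thm:main_noise_loss}) to control $\|\nabla L_D(\mathbf{W}^{(t)})+\mathbf{z}_t\|_F^2$ via $\|\nabla L_D\|_F^2 + \|\mathbf{z}_t\|_F^2$. Since $\mathbf{z}_t\sim\mathcal{N}(0,\sigma_z^2\mathbf{I}_d)$, $\|\mathbf{z}_t\|_2^2$ is a scaled $\chi^2_d$ variable, so a standard Laurent--Massart / sub-exponential tail bound gives $\|\mathbf{z}_t\|_2^2\leq \sigma_z^2 d + O(\sigma_z^2\sqrt{d\log(1/\delta)} + \sigma_z^2\log(1/\delta)) = O(\sigma_z^2 d\log(1/\delta))$ with probability at least $1-\delta$; a union bound over the $2m$ filters replaces $\delta$ by $\delta/(2m)$ and is absorbed into the logarithm.

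The main obstacle I anticipate is the accounting in the Cauchy--Schwarz step: after summing over $(j,r)$ and $i$ one picks up a factor $n/m$ that must be forced into a constant, which requires simultaneously invoking $n=\Omega(\mathrm{poly}\log d)$, $m=\Omega(\mathrm{poly}\log d)$, and the uniform bound $\sigma'(\langle\mathbf{w}_{j,r}^{(t)},\cdot\rangle)^2=\widetilde{O}(1)$. A subtler point is that the pre-activation values involve the accumulated privacy noise $\eta\sum_{s\leq t}\mathbf{z}_s$; keeping these bounded on the high-probability event of \cref{lem:xi_zt2} throughout $t\leq T_p^*$ is precisely why the restriction $T_p^*=O\bigl(\eta^{-1}mn\varepsilon\sigma_0/(\mu(\|\mathbf{v}\|_2+\|\bm{\xi}\|_2))\bigr)$ is imposed, and the proof should explicitly invoke this horizon to close the argument.
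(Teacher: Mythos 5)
Your approach mirrors the paper's almost exactly: decompose the noisy gradient $\nabla L_D(\mathbf{W}^{(t)}) + \mathbf{z}_t$ (the lemma implicitly refers to this, as you correctly inferred from the $\sigma_z^2 d\log(1/\delta)$ term appearing on the right) via $(a+b)^2 \leq 2a^2 + 2b^2$ into a clean gradient piece and a privacy-noise piece; bound the clean piece using $|\ell'| \leq 1$, $|\ell'| \leq \ell$, uniformly bounded pre-activations (hence $\sigma'(\cdot) = \widetilde{O}(1)$) on the event of \cref{pro:bounds_of_coeeficients} and \cref{lem:xi_zt2}, and $\|\bm{\xi}_i\|_2^2 = O(\sigma_\xi^2 d)$; and bound $\|\mathbf{z}_t\|_F^2$ by Gaussian concentration. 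The paper compresses the clean-term calculation into a citation of Lemma~C.7 of \cite{cao2022benign}, which is exactly what you spell out, and bounds the noise term by \cref{lem:xi_bound} rather than the Laurent--Massart tail you invoke; these are interchangeable.

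There is, however, an arithmetic slip in your Cauchy--Schwarz step that manufactures a non-existent obstacle. Squaring the $\tfrac{1}{nm}$ prefactor gives $\tfrac{1}{n^2m^2}$, and the inequality $\|\sum_i a_i \mathbf{v}_i\|_2^2 \leq n\sum_i a_i^2\|\mathbf{v}_i\|_2^2$ contributes only a single factor of $n$, so the per-filter bound carries prefactor $\tfrac{2}{nm^2}$, not the $\tfrac{2}{m^2}$ you wrote. Summing over the $2m$ filters and using $\sum_i |\ell_i'^{(t)}|^2 \leq \sum_i \ell_i^{(t)} = n\,L_D(\mathbf{W}^{(t)})$ then yields a clean prefactor $\tfrac{4}{m} \leq 4$ in front of $L_D(\mathbf{W}^{(t)})\cdot\widetilde{O}(\max\{\|\mathbf{v}\|_2^2,\sigma_\xi^2 d\})$; no $n$ survives. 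The ``$n/m$ must be forced into a constant'' concern you raise therefore does not arise, and in any case the remedy you propose --- invoking $n,m = \Omega(\mathrm{poly}\log d)$ --- would not control $n/m$, since lower-bounding two quantities does not bound their ratio. Once the dropped $1/n$ is restored your argument closes without that appeal and matches the paper's.
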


\begin{proof}[\bf Proof of \cref{lem:loss_F}]
    According to the triangle inequality and the definition of noisy gradient, we have
    \begin{equation}
    \|\nabla L_S(\mathbf{W}^{(t)})\|_F^2 \leq 2[\frac{1}{n} \sum_{i=1}^n \ell^{\prime}(y_i f(\mathbf{W}^{(t)}, \mathbf{x}_i))\|\nabla f(\mathbf{W}^{(t)}, \mathbf{x}_i)\|_F]^2 + 2\|\mathbf{z}_t\|_F^2.
    \end{equation}
    The first term is bounded using Lemma C.7 in \cite{cao2022benign}, as it shares the same properties, while the second term is bounded by \cref{lem:xi_bound}. This concludes the proof.
\end{proof}

\section{Signal Learning}

\subsection{First Stage}

\begin{lemma}[Restatement of \cref{lem:signal_first_stage}]\label{lem:app_signal_first_stage}
    Under the same conditions as signal learning, in particular, if we choose
    \begin{equation}\label{eq:SNR_signal}
        \operatorname{SNR} \cdot n \varepsilon \geq \frac{4^q \log (16/e^{1/2}\sigma_0 \|\mathbf{v}\|_2)}{C_1 q}, n \cdot \operatorname{SNR}^q \geq C_1 \log (6 / \sigma_0\|\mathbf{v}\|_2) 2^{2 q+6}[4 \log (8 m n / \delta)]^{(q-1) / 2}
    \end{equation}
    where $C_1=O(1)$ is a positive constant, there exists $T_1=\frac{\log (16 / \sigma_0\|\mathbf{v}\|_2) 4^{q-1} m}{C_1 \eta q \sigma_0^{q-2}\|\mathbf{v}\|_2^q}$ such that
    \begin{itemize}
        \item $\max _r \Gamma_{j, r}^{(T_1)}=\Omega(1)$ for $j \in\{ \pm 1\}$.
        \item $|\Phi_{j, r, i}^{(t)}|=O(\sigma_0 \sigma_{\xi} \sqrt{d})$ for all $j \in\{ \pm 1\}, r \in[m], i \in[n]$ and $0 \leq t \leq T_1$.
    \end{itemize}
\end{lemma}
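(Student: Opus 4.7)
The plan is to run a tensor-power-method argument on $\max_{r}\Gamma^{(t)}_{j,r}$ in the spirit of the non-private feature-learning literature \cite{cao2022benign,kou2023benign}, with the private noise accumulation controlled throughout via \cref{lem:xi_zt2_main}. The central object is the lucky filter: by \cref{lem:w0_v_xi}, for each $j\in\{\pm1\}$ there is some $r^{*}=r^{*}(j)\in[m]$ with $j\langle\mathbf{w}^{(0)}_{j,r^{*}},\mathbf{v}\rangle\geq\sigma_{0}\|\mathbf{v}\|_{2}/2$. I would set up a joint induction on $t\in[0,T_{1}]$ with two hypotheses: (H1) $|\Phi^{(t)}_{j,r,i}|\leq\widetilde{C}\sigma_{0}\sigma_{\xi}\sqrt{d}$ for all $j,r,i$; (H2) $\Gamma^{(t)}_{j,r^{*}}$ satisfies a polynomial lower-bound recurrence driving it from $\Theta(\sigma_{0}\|\mathbf{v}\|_{2})$ up to $\Omega(1)$ within $T_{1}$ steps. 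Both hypotheses are seeded at $t=0$ via \cref{def:decom_coefficient_main} and \cref{lem:w0_v_xi}, and \cref{pro:bounds_of_coeeficients} supplies the overall envelope that keeps us in the regime where the auxiliary bounds of \cref{lem:F_1} apply.

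For the signal growth (H2), combining \cref{lem:the_evolution_of_coefficient_main} with \cref{lem:F_1} and the fact that throughout the first stage $F_{\pm1}(\mathbf{W}^{(t)},\mathbf{x}_{i})=O(1)$, so $-\ell^{\prime(t)}_{i}$ is bounded below by a positive constant, I obtain
\begin{equation*}
\Gamma^{(t+1)}_{j,r^{*}}\;\geq\;\Gamma^{(t)}_{j,r^{*}}+\frac{C\eta\|\mathbf{v}\|_{2}^{2}}{m}\Bigl(j\langle\mathbf{w}^{(0)}_{j,r^{*}},\mathbf{v}\rangle+\Gamma^{(t)}_{j,r^{*}}-\eta\bigl|\sum_{s\leq t}\langle\mathbf{z}_{s},\mathbf{v}\rangle\bigr|\Bigr)_{+}^{q-1}.
\end{equation*}
By \cref{lem:xi_zt2_main}, the private perturbation is $O(\eta\sqrt{tT}\|\mathbf{v}\|_{2}^{2}\log(d)/(mn\varepsilon))(1+1/\operatorname{SNR})$, and the first inequality in \cref{eq:SNR_signal} is calibrated precisely so that this stays below $\sigma_{0}\|\mathbf{v}\|_{2}/4$ for all $t\leq T_{1}$. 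The effective recurrence then reduces to the standard polynomial iteration $x_{t+1}\geq x_{t}+C^{\prime}\eta\|\mathbf{v}\|_{2}^{2}x_{t}^{q-1}/m$ with $x_{0}=\Theta(\sigma_{0}\|\mathbf{v}\|_{2})$; a continuous-time integration (equivalently Lemma C.14 of \cite{cao2022benign}) yields $\Gamma^{(T_{1})}_{j,r^{*}}=\Omega(1)$ at $T_{1}=\Theta(\log(1/\sigma_{0}\|\mathbf{v}\|_{2})\cdot 4^{q-1}m/(\eta q\sigma_{0}^{q-2}\|\mathbf{v}\|_{2}^{q}))$.

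For the noise bound (H1), the per-step increment of $\bar{\Phi}_{j,r,i}$ from \cref{lem:the_evolution_of_coefficient_main} is $O(\eta\|\boldsymbol{\xi}_{i}\|_{2}^{2}\sigma^{\prime}(\langle\mathbf{w}^{(t)}_{j,r},\boldsymbol{\xi}_{i}\rangle)/(nm))$. Under the inductive hypothesis, $|\langle\mathbf{w}^{(t)}_{j,r},\boldsymbol{\xi}_{i}\rangle|$ remains $O(\sigma_{0}\sigma_{\xi}\sqrt{d})+\eta|\sum_{s}\langle\mathbf{z}_{s},\boldsymbol{\xi}_{i}\rangle|$; \cref{lem:xi_zt2_main} bounds the private term by $O(\eta\sqrt{tT}\|\boldsymbol{\xi}_{i}\|_{2}^{2}\log(d)/(mn\varepsilon))(1+\operatorname{SNR})$, and the second inequality in \cref{eq:SNR_signal} keeps it of order $\sigma_{0}\sigma_{\xi}\sqrt{d}$. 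Summing the per-step increments over the $T_{1}$ iterations and dividing by the corresponding signal growth total yields the key ratio
\begin{equation*}
\frac{|\bar{\Phi}^{(T_{1})}_{j,r,i}|}{\Gamma^{(T_{1})}_{j,r^{*}}}\;\lesssim\;\frac{\|\boldsymbol{\xi}_{i}\|_{2}^{q}(\sigma_{0}\sigma_{\xi}\sqrt{d})^{q-2}}{n\|\mathbf{v}\|_{2}^{q}(\sigma_{0}\|\mathbf{v}\|_{2})^{q-2}}\;\lesssim\;\frac{1}{n\operatorname{SNR}^{q}},
\end{equation*}
so $|\bar{\Phi}^{(T_{1})}_{j,r,i}|=O(\sigma_{0}\sigma_{\xi}\sqrt{d})$ whenever the second condition in \cref{eq:SNR_signal} holds. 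The negative branch $\underline{\Phi}$ is bounded identically via \cref{pro:bounds_of_coeeficients}, closing the induction.

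The main obstacle I anticipate is carrying both halves of the induction simultaneously while the private noise is accumulating. Concretely, one must check that the perturbations from \cref{lem:xi_zt2_main} stay uniformly small for every $t\leq T_{1}$, which forces $T_{1}\leq T_{p}^{*}$ so that $\sigma_{z}$ is calibrated to the actual training horizon rather than a larger one, and which ties the two SNR conditions in \cref{eq:SNR_signal} to the two separate quantities $\langle\mathbf{z}_{s},\mathbf{v}\rangle$ and $\langle\mathbf{z}_{s},\boldsymbol{\xi}_{i}\rangle$. A related subtlety is ensuring $j\langle\mathbf{w}^{(0)}_{j,r^{*}},\mathbf{v}\rangle-\eta|\sum_{s}\langle\mathbf{z}_{s},\mathbf{v}\rangle|>0$ throughout, which is exactly what the first SNR condition guarantees at the initialization scale $\sigma_{0}\|\mathbf{v}\|_{2}$; everything else (the union bounds over $j,r,i$, and folding absolute constants into $C_{1}=O(1)$) is routine.
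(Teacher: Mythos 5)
Your plan tracks the paper's argument very closely: a ``lucky filter'' seeded from \cref{lem:w0_v_xi}, a two-part induction maintaining (H1) a small-noise envelope on $\Phi^{(t)}_{j,r,i}$ and (H2) a tensor-power-method lower-bound recurrence on $\Gamma^{(t)}_{j,r}$, with the accumulated private perturbations from \cref{lem:xi_zt2_main} absorbed as a correction to the initialization margin. That is exactly the paper's route; there the signal recurrence is phrased via $A^{(t)}=\max_r\{\Gamma^{(t)}_{1,r}+\langle\mathbf{w}^{(0)}_{1,r},\mathbf{v}\rangle-\eta\sum_{s\le t}\langle\mathbf{z}_s,\mathbf{v}\rangle\}$, unrolled geometrically as $A^{(t)}\ge q_\Gamma^t A^{(0)}-\tfrac{q_\Gamma^t-1}{q_\Gamma-1}\,\eta|\langle\mathbf{z},\mathbf{v}\rangle|$, and the $\sigma_0$-lower-bound keeps the private correction a lower-order term, which is the same mechanism you describe.

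The place your write-up goes wrong is the noise-side accounting. With $\|\boldsymbol{\xi}_i\|_2\approx\sigma_\xi\sqrt d$, the middle quantity in your displayed ratio simplifies to $\tfrac{(\sigma_\xi\sqrt d)^{2q-2}}{n\|\mathbf{v}\|_2^{2q-2}}=\tfrac{1}{n\,\mathrm{SNR}^{2q-2}}$, not $\tfrac{1}{n\,\mathrm{SNR}^{q}}$. More importantly, bounding $|\bar{\Phi}^{(T_1)}_{j,r,i}|/\Gamma^{(T_1)}_{j,r^*}$ with $\Gamma^{(T_1)}_{j,r^*}=\Omega(1)$ only gives $|\bar{\Phi}^{(T_1)}_{j,r,i}|\lesssim 1/(n\,\mathrm{SNR}^{2q-2})$, which is neither the target $O(\sigma_0\sigma_\xi\sqrt d)$ nor implied by the second condition in \cref{eq:SNR_signal}; the $\sigma_0\sigma_\xi\sqrt d$ scaling has been lost. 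The paper instead bounds $\Psi^{(t)}=\max_{j,r,i}|\Phi^{(t)}_{j,r,i}|$ directly by induction: under (H1), the per-step increment of $\Psi$ is at most $\tfrac{\eta q}{nm}\bigl(4\sqrt{\log(8mn/\delta)}\,\sigma_0\sigma_\xi\sqrt d\bigr)^{q-1}\cdot 2\sigma_\xi^2 d$, so telescoping gives $\Psi^{(t)}\le t\cdot(\text{increment})$, and the first component of the horizon $T_1^+$ is chosen so that $T_1^+\cdot(\text{increment})\le\sigma_0\sigma_\xi\sqrt d/2$. The second SNR condition then enters precisely as the statement that $T_1\le T_1^+$, i.e., the signal reaches $\Omega(1)$ before the noise envelope would expire. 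If you redo your bookkeeping in this form — estimating $\Phi^{(T_1)}$ itself as $T_1\times(\text{per-step increment})\approx\tfrac{\sigma_0\sigma_\xi\sqrt d}{n\,\mathrm{SNR}^q}$ rather than its ratio to $\Gamma^{(T_1)}$ — the missing $\sigma_0\sigma_\xi\sqrt d$ factor appears automatically and the second condition in \cref{eq:SNR_signal} is exactly what you need.
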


\begin{proof}[\bf Proof of \cref{lem:app_signal_first_stage}]
    First, when $t \leq T_1^{+}=\min \{\frac{n m \eta^{-1} \sigma_0^{2-q} \sigma_{\xi}^{-q} d^{-q / 2}}{2^{q+4} q[4 \log ({{8mn}}/{\delta})]^{(q-1) / 2}}, \frac{\sigma_0 m n \varepsilon}{\eta \|\bm{\xi}\|_2(\|\mathbf{v}\|_2 + \|\bm{\xi}\|_2)} ,\frac{\sigma_0 m n \varepsilon}{\eta (\|\mathbf{v}\|_2 + \|\bm{\xi}\|_2)} \}$, it can be noticed that the noise remains well controlled. To proceed, define $\Psi^{(t)}=\max _{j, r, i}|\Phi_{j, r, i}^{(t)}|=\max _{j, r, i}\{\bar{\Phi}_{j, r, i}^{(t)},-\underline{\Phi}_{j, r, i}^{(t)}\}$. and assume $\Psi^{(t)} \leq \frac{\sigma_0 \sigma_{\xi} \sqrt{d}}{2}$ for all $0 \leq t \leq T_1^{+}$. Then, we aim to prove that the same holds for $t+1$ using an induction process.
    Recall the update of $\bar{\Phi}$ and $\underline{\Phi}$ as follows:
    \begin{align*}
        & \bar{\Phi}_{j, r, i}^{(t)}=-\sum_{s=0}^{t-1} \frac{\eta}{n m} \cdot \ell_i^{\prime (s)} \cdot \sigma^{\prime}(\langle\mathbf{w}_{j, r}^{(s)}, \bm{\xi}_i\rangle) \cdot\|\bm{\xi}_i\|_2^2 \cdot \mathbbm{1}(y_i=j), \\
        & \underline{\Phi}_{j, r, i}^{(t)}=-\sum^{t-1} \frac{\eta}{n m} \cdot \ell_i^{\prime(s)} \cdot \sigma^{\prime}(\langle\mathbf{w}_{j, r}^{(s)}, \bm{\xi}_i\rangle) \cdot\|\bm{\xi}_i\|_2^2 \cdot \mathbbm{1}(y_i=-j).
    \end{align*}
    Moreover, according to \cref{def:decom_coefficient_main}, we have $$\mathbf{w}_{j, r}^{(t)}=\mathbf{w}_{j, r}^{(0)}+j \cdot \Gamma_{j, r}^{(t)} \cdot \frac{\mathbf{v}}{\|\mathbf{v}\|_2^2}+\sum_{i=1}^n \Phi_{j, r, i}^{(t)} \cdot \frac{\bm{\xi}_i}{\|\bm{\xi}_i\|_2^2}-\eta \sum_{s=1}^t \mathbf{z}_s.$$ 
    Substituting this expression into the updates for $\bar{\Phi}$ and $\Phi$, it follows that:
    \begin{align*}
        \bar{\Phi}_{j, r, i}^{(t+1)} & =\bar{\Phi}_{j, r, i}^{(t)}-\frac{\eta}{n m} \cdot \ell_i^{\prime (t)} \sigma^{\prime}(\langle\mathbf{w}_{j, r}^{(0)}, \bm{\xi}_i\rangle+\sum_{i^{\prime}=1}^n \bar{\Phi}_{j, r, i^{\prime}}^{(t)} \frac{\langle\bm{\xi}_{i^{\prime}}, \bm{\xi}_i\rangle}{\|\bm{\xi}_{i^{\prime}}\|_2^2}+\sum_{i^{\prime}=1}^n \underline{\Phi}_{j, r, i^{\prime}}^{(t)} \frac{\langle\bm{\xi}_{i^{\prime}}, \bm{\xi}_i\rangle}{\|\bm{\xi}_{i^{\prime}}\|_2^2}-\eta \sum_{s=1}^t \langle\mathbf{z}_s, \bm{\xi}_i \rangle) \cdot\|\bm{\xi}_i\|_2^2  , \\
        \underline{\Phi}_{j, r, i}^{(t+1)} &=\underline{\Phi}_{j, r, i}^{(t)}+\frac{\eta}{n m} \cdot \ell_i^{\prime (t)} \sigma^{\prime}(\langle\mathbf{w}_{j, r}^{(0)}, \bm{\xi}_i\rangle+\sum_{i^{\prime}=1}^n \bar{\Phi}_{j, r, i^{\prime}}^{(t)} \frac{\langle\bm{\xi}_{i^{\prime}}, \bm{\xi}_i\rangle}{\|\bm{\xi}_{i^{\prime}}\|_2^2}+\sum_{i^{\prime}=1}^n \underline{\Phi}_{j, r, i^{\prime}}^{(t)} \frac{\langle\bm{\xi}_{i^{\prime}}, \bm{\xi}_i\rangle}{\|\bm{\xi}_{i^{\prime}}\|_2^2} -\eta \sum_{s=1}^t \langle\mathbf{z}_s, \bm{\xi}_i \rangle ) \cdot\|\bm{\xi}_i\|_2^2.
    \end{align*}
    Therefore, it holds that 
    \begin{align*}
        \Psi^{(t+1)} & \leq \Psi^{(t)}+\max _{j, r, i}\{\frac{\eta}{n m} \cdot|\ell_i^{\prime (t)}| \cdot \sigma^{\prime}(\langle\mathbf{w}_{j, r}^{(0)}, \bm{\xi}_i\rangle+\sum_{i^{\prime}=1}^n \Psi^{(t)} \cdot \frac{|\langle\bm{\xi}_{i^{\prime}}, \bm{\xi}_i\rangle|}{\|\bm{\xi}_{i^{\prime}}\|_2^2}\\
        &+\sum_{i^{\prime}=1}^n \Psi^{(t)} \cdot \frac{|\langle\bm{\xi}_{i^{\prime}}, \bm{\xi}_i\rangle|}{\|\bm{\xi}_{i^{\prime}}\|_2^2} -\eta \sum_{s=1}^t \langle\mathbf{z}_s, \bm{\xi}_i \rangle ) \cdot\|\bm{\xi}_i\|_2^2\} \\
        & \overset{(i)}{\leq} \Psi^{(t)}+\max _{j, r, i}\{\frac{\eta}{n m} \cdot \sigma^{\prime}(\langle\mathbf{w}_{j, r}^{(0)}, \bm{\xi}_i\rangle+2 \cdot \sum_{i^{\prime}=1}^n \Psi^{(t)} \cdot \frac{|\langle\bm{\xi}_{i^{\prime}}, \bm{\xi}_i\rangle|}{\|\bm{\xi}_{i^{\prime}}\|_2^2} -\eta \sum_{s=1}^t \langle\mathbf{z}_s, \bm{\xi}_i \rangle ) \cdot\|\bm{\xi}_i\|_2^2\} \\
        & \overset{(ii)}{=} \Psi^{(t)}+\max _{j, r, i}\{\frac{\eta}{n m} \cdot \sigma^{\prime}(\langle\mathbf{w}_{j, r}^{(0)}, \bm{\xi}_i\rangle+2 \Psi^{(t)}+2 \cdot \sum_{i^{\prime} \neq i}^n \Psi^{(t)} \cdot \frac{|\langle\bm{\xi}_{i^{\prime}}, \bm{\xi}_i\rangle|}{\|\bm{\xi}_{i^{\prime}}\|_2^2} -\eta \sum_{s=1}^t \langle\mathbf{z}_s, \bm{\xi}_i \rangle ) \cdot\|\bm{\xi}_i\|_2^2\} \\
        & \overset{(iii)}{\leq} \Psi^{(t)}+\frac{\eta q}{n m} \cdot[2 \cdot \sqrt{\log (8 m n / \delta)} \cdot \sigma_0 \sigma_{\xi} \sqrt{d} \\
        &+(2+\frac{4 n \sigma_{\xi}^2 \cdot \sqrt{d \log (4 n^2 / \delta)}}{\sigma_{\xi}^2 d / 2} ) \cdot \Psi^{(t)} -\eta  \sum_{s=1}^t  \langle\mathbf{z}_s, \bm{\xi}_i \rangle ]^{q-1} \cdot 2 \sigma_{\xi}^2 d \\
        & \overset{(iv)}{\leq} \Psi^{(t)}+\frac{\eta q}{n m} \cdot(2 \cdot \sqrt{\log (8 m n / \delta)} \cdot \sigma_0 \sigma_{\xi} \sqrt{d}+4 \Psi^{(t)} -\eta \sum_{s=1}^t  \langle\mathbf{z}_s, \bm{\xi}_i \rangle  )^{q-1} \cdot 2 \sigma_{\xi}^2 d \\
        & \overset{(v)}{\leq} \Psi^{(t)}+\frac{\eta q}{n m} \cdot(4 \cdot \sqrt{\log (8 m n / \delta)} \cdot \sigma_0 \sigma_{\xi} \sqrt{d})^{q-1} \cdot 2 \sigma_{\xi}^2 d.
        \end{align*}
        Here, the inequality $(i)$ holds due to the fact $|\ell_i^{\prime (t)}| \leq 1$; the equality $(ii)$ is the decomposition of index $i$; the inequality $(iii)$ comes from \cref{lem:xi_bound}; $(iv)$ is derived from the condition of $d \geq 16 n^2 \log (4 n^2 / \delta)$; $(v)$ follows from the induction hypothesis, \cref{lem:xi_zt2} and $T_1^{+}$. By applying a telescoping sum over $t$, we obtain the following result:
        \begin{align*}
        \Psi^{(t+1)} & \leq (t+1) \cdot \frac{\eta q}{n m} \cdot(4 \cdot \sqrt{\log (8 m n / \delta)} \cdot \sigma_0 \sigma_{\xi} \sqrt{d})^{q-1} \cdot 2 \sigma_{\xi}^2 d \\
        & \leq T_1^{+} \cdot \frac{\eta q}{n m} \cdot(4 \cdot \sqrt{\log (8 m n / \delta)} \cdot \sigma_0 \sigma_{\xi} \sqrt{d})^{q-1} \cdot 2 \sigma_{\xi}^2 d \\
        & \leq \frac{\sigma_0 \sigma_{\xi} \sqrt{d}}{2},
        \end{align*}
        where the second inequality follows from the induction hypothesis, while the last inequality is due to the range of $T_1^{+}$.

        Now, we move forward to the proof of $\Gamma$. Without loss of generality, we first consider $j=1$. Let $T_{1,1}$ be the first time such that $\max _r \Gamma_{1, r}^{(t)} \leq 2$ in the period of $[0,T_1^{+}]$, then it also holds that $\max _{j, r, i}\{|\Phi_{j, r, i}^{(t)}|\}=O(\sigma_0 \sigma_{\xi} \sqrt{d})=O(1)$.

        According to \cref{lem:w_t-w_0_v} and \cref{lem:w_t-w_0_xi}, it holds that $F_{-1}(\mathbf{W}_{-1}^{(t)}, \mathbf{x}_i), F_{+1}(\mathbf{W}_{+1}^{(t)}, \mathbf{x}_i)=O(1)$ for all $i$ with $y_i=1$ with the conditions that $\max _r \Gamma_{1, r}^{(t)} , \max _{j,i,r}|\Phi_{j, r, i}^{(t)}|$ are bounded. Moreover, we know that $|\ell_i^{\prime(t)}|=\frac{1}{1+\exp \{y_i \cdot[F_{+1}(\mathbf{W}_{+1}^{(t)}, \mathbf{x}_i)-F_{-1}(\mathbf{W}_{-1}^{(t)}, \mathbf{x}_i)]\}}$, which implies the existence of a positive constant $C_1$ such that $-\ell_i^{\prime (t)} \geq C_1$ for all $i$ with $y_i=1$.

        Thus, with the update of $\Gamma$ in \cref{def:decom_coefficient}, we have
        \begin{align*}
        \Gamma_{1, r}^{(t+1)} & =\Gamma_{1, r}^{(t)}-\frac{\eta}{n m} \cdot \sum_{i=1}^n \ell_i^{\prime (t)} \cdot \sigma^{\prime}(y_i \cdot\langle\mathbf{w}_{1, r}^{(0)}, \mathbf{v}\rangle+y_i \cdot \Gamma_{1, r}^{(t)} - -\eta \sum_{s=1}^t\langle\mathbf{z}_s, y_i \mathbf{v}\rangle) \cdot\|\mathbf{v}\|_2^2 \\
        & \geq \Gamma_{1, r}^{(t)}+\frac{C_1 \eta}{n m} \cdot \sum_{w=1} \sigma^{\prime}(\langle\mathbf{w}_{1, r}^{(0)}, \mathbf{v}\rangle+\Gamma_{1, r}^{(t)} -\eta \sum_{s=1}^t\langle\mathbf{z}_s,  \mathbf{v}\rangle) \cdot\|\mathbf{v}\|_2^2.
        \end{align*}
        It is noticed that $T_1^{+} \leq \frac{\sigma_0 m n \varepsilon }{4\eta  (\|\mathbf{v}\|_2 + \|\bm{\xi}\|_2)}$ and $\max _r\langle\mathbf{w}_{1, r}^{(0)}, \mathbf{v}\rangle \geq \sigma_0\|\mathbf{v}\|_2 / 2$ due to \cref{lem:w0_v_xi}, then it holds that $\max _r\langle\mathbf{w}_{1, r}^{(0)}, \mathbf{v}\rangle -\eta \sum_{s=1}^t\langle\mathbf{z}_s,  \mathbf{v}\rangle \geq \sigma_0\|\mathbf{v}\|_2 / 4$. Denote $\widehat{\Gamma}_{1, r}^{(t)}=\Gamma_{1, r}^{(t)}+\langle\mathbf{w}_{1, r}^{(0)}, \mathbf{v}\rangle -\eta \sum_{s=1}^t\langle\mathbf{z}_s,  \mathbf{v}\rangle $ and let $A^{(t)} = \max \widehat{\Gamma}_{1, r}^{(t)}$, then it follows that 
        \begin{align*}
        A^{(t+1)} & \geq A^{(t)}+\frac{C_1 \eta}{n m} \cdot \sum_{y_i=1} \sigma^{\prime}(A^{(t)}) \cdot\|\mathbf{v}\|_2^2 -\eta \langle\mathbf{z}_{t+1},  \mathbf{v}\rangle  \\
        & \overset{(i)}{\geq} A^{(t)}+\frac{C_1 \eta q\|\mathbf{v}\|_2^2}{4 m}[A^{(t)}]^{q-1} -\eta \langle\mathbf{z}_{t+1},  \mathbf{v}\rangle  \\
        & \overset{(ii)}{\geq} (1+\frac{C_1 \eta q\|\mathbf{v}\|_2^2}{4 m}[A^{(0)}]^{q-2}) A^{(t)} -\eta \langle\mathbf{z}_{t+1},  \mathbf{v}\rangle  \\
        & \overset{(iii)}{\geq} (1+\frac{C_1 \eta q \sigma_0^{q-2}\|\mathbf{v}\|_2^q}{4^{q-1} m}) A^{(t)} -\eta \langle\mathbf{z}_{t+1},  \mathbf{v}\rangle \\
        \text{Let} \quad q_{\Gamma} = (1+\frac{C_1 \eta q \sigma_0^{q-2}\|\mathbf{v}\|_2^q}{4^{q-1} m}), \text{then} \quad & \overset{(iv)}{=} q_{\Gamma}^t  A^{(0)} -(\frac{q_{\Gamma}^t -1}{q_{\Gamma}-1})\eta \langle\mathbf{z}_{t+1},  \mathbf{v}\rangle .
        \end{align*}
        Here, $(i)$ holds due to the lower bound on the number of positive data in \cref{lem:pos_data_number}; $(ii)$ and $(iii)$ follow from that the facts $A^{(t)}$ is increasing and $\max _r\langle\mathbf{w}_{1, r}^{(0)}, \mathbf{v}\rangle -\eta \sum_{s=1}^t\langle\mathbf{z}_s,  \mathbf{v}\rangle \geq \sigma_0\|\mathbf{v}\|_2 / 4$; $(iv)$ is the summation of geometric series. In addition, we know that $1+z \geq \operatorname{exp}(z/2)$ for $z\leq 2$ and $1+z \leq \operatorname{exp}(z)$ for $z\geq 0$, then the following inequality holds
        \begin{align*}
            A^{(t)} &\geq (1+\frac{C_1 \eta q \sigma_0^{q-2}\|\mathbf{v}\|_2^q}{4^{q-1} m})^t A^{(0)} - \frac{4^{q-1}m}{C_1  q \sigma_0^{q-2}\|\mathbf{v}\|_2^q}(q_{\Gamma}^t -1 ) \cdot \frac{ \|\mathbf{v}\|_2(\|\mathbf{v}\|_2+\|\bm{\xi}\|_2)}{mn\varepsilon}\\
            & \geq \exp (\frac{C_1 \eta q \sigma_0^{q-2}\|\mathbf{v}\|_2^q}{4^{q-1}*2 m} t) \frac{\sigma_0 \|\mathbf{v}\|_2}{2} -  \frac{4^{q-1}m}{C_1  q \sigma_0^{q-2}\|\mathbf{v}\|_2^q}\cdot \exp  (\frac{C_1 \eta q \sigma_0^{q-2}\|\mathbf{v}\|_2^q}{4^{q-1} m} t) \cdot \frac{ \|\mathbf{v}\|_2(\|\mathbf{v}\|_2+\|\bm{\xi}\|_2)}{mn\varepsilon} \\
            & = ( \frac{ e^{1/2} \sigma_0 \|\mathbf{v}\|_2}{2} - \frac{4^{q-1}(\|\mathbf{v}\|_2+\|\bm{\xi}\|_2) }{C_1  q \sigma_0^{q-2}\|\mathbf{v}\|_2^{q-1} n \varepsilon } ) \cdot \exp  (\frac{C_1 \eta q \sigma_0^{q-2}\|\mathbf{v}\|_2^q}{4^{q-1} m} t)\\
            & \geq  \frac{ e^{1/2} \sigma_0 \|\mathbf{v}\|_2}{4}  \cdot \exp  (\frac{C_1 \eta q \sigma_0^{q-2}\|\mathbf{v}\|_2^q}{4^{q-1} m} t),
        \end{align*}
        where the last inequality holds due to the choice of $\sigma_0 \geq O({\|\mathbf{v}\|_2^{-1}(n\varepsilon)^{-1/q-1}})$. Therefore, it is clear that $A^{(t)}$ will reach $4$ within $T_1=\frac{\log (16 / \sigma_0\|\mathbf{v}\|_2) 4^{q-1} m}{C_1 \eta q \sigma_0^{q-2}\|\mathbf{v}\|_2^q}$ iterations, which indicates that $\max _r \Gamma_{1, r}^{(t)}$ will reach 2 within $T_1$ iterations. Moreover, we can verify that 
        $$ T_1=\frac{\log (16 / \sigma_0\|\mathbf{v}\|_2) 4^{q-1} m}{C_1 \eta q \sigma_0^{q-2}\|\mathbf{v}\|_2^q} \leq \eta^{-1} \sigma_0 m n \varepsilon(\|\mathbf{v}\|_2+\|\bm{\xi}\|_2)^{-1} \leq T_1^+,$$
        The inequality follows from the SNR condition in \cref{eq:SNR_signal} and the choice of $\sigma_0$. Moreover, since we also have $\sigma_0 \leq O(\|\bm{\xi}\|_2^{-1}\varepsilon^{-1/q})$, it holds that
        $$ T_1=\frac{\log (16 / \sigma_0\|\mathbf{v}\|_2) 4^{q-1} m}{C_1 \eta q \sigma_0^{q-2}\|\mathbf{v}\|_2^q} \leq \frac{n m \eta^{-1} \sigma_0^{2-q} \sigma_{\xi}^{-q} d^{-q / 2}}{2^{q+4} q[4 \log ({{8mn}}/{\delta})]^{(q-1) / 2}} \leq T_1^+.$$
        Hence, by the definition of $T_{1,1}$, $T_{1,1} \leq T_1 \leq T_1^{+} / 2$ holds. A similar proof applies for $j=-1$, where we can prove that $\max _r \Gamma_{-1, r}^{(T_{1,-1})} \geq 2$ with $T_{1,-1} \leq T_1 \leq T_1^{+} / 2$, thereby completing the proof.
\end{proof}

\subsection{Second Stage}

It is clear that we have the following results at the end of the first stage:
$$
\mathbf{w}_{j, r}^{(T_1)}=\mathbf{w}_{j, r}^{(0)}+j \cdot \Gamma_{j, r}^{(T_1)} \cdot \frac{\mathbf{v}}{\|\mathbf{v}\|_2^2}+\sum_{i=1}^n \bar{\Phi}_{j, r, i}^{(T_1)} \cdot \frac{\bm{\xi}_i}{\|\bm{\xi}_i\|_2^2}+\sum_{i=1}^n \underline{\Phi}_{j, r, i}^{(T_1)} \cdot \frac{\bm{\xi}_i}{\|\bm{\xi}_i\|_2^2}-\eta \sum_{s=1}^{T_1} \mathbf{z}_s
$$

Meanwhile, at the beginning of the second stage, we have the following results:
\begin{itemize}
    \item $\max _r \Gamma_{j, r}^{(T_1)} \geq 2, \forall j \in\{ \pm 1\}$.
    \item $\max _{j, r, i}|\Phi_{j, r, i}^{(T_1)}| \leq \widehat{\beta}$ where $\widehat{\beta}=\sigma_0 \sigma_{\xi} \sqrt{d} / 2$.
\end{itemize}
Based on \cref{lem:the_evolution_of_coefficient_main} and \cref{lem:xi_zt2_main}, we conclude that signal learning does not deteriorate over time. Specifically, for any $T_1 \leq t \leq T_p^*$, it holds that $\Gamma_{j, r}^{(t+1)} \geq \Gamma_{j, r}^{(t)}$, which implies $\max _r \Gamma_{j, r}^{(t)} \geq 2$.
If we consider $
\mathbf{w}_{j, r}^*=\mathbf{w}_{j, r}^{(0)}+2 q m \log (2 q / \kappa) \cdot j \cdot \frac{\mathbf{v}}{\|\mathbf{v}\|_2^2}$, then we can derived:

\begin{lemma}\label{lem:W_T1-W_*_signal}
    Under the same conditions as signal learning, we have that $\|\mathbf{W}^{(T_1)}-\mathbf{W}^*\|_F \leq$ $\widetilde{O}(m^{3 / 2}\|\mathbf{v}\|_2^{-1})$.
\end{lemma}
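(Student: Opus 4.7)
The plan is to start from the signal-noise decomposition of $\mathbf{w}_{j,r}^{(T_1)}$ given in \cref{def:decom_coefficient_main}, which writes the trained filter as the random initialization plus explicit coefficients along $\mathbf{v}$, along each $\bm{\xi}_i$, and along the accumulated private noise $\eta\sum_{s=1}^{T_1}\mathbf{z}_s$. Subtracting the reference filter $\mathbf{w}_{j,r}^{*}=\mathbf{w}_{j,r}^{(0)}+2qm\log(2q/\kappa)\cdot j\cdot \mathbf{v}/\|\mathbf{v}\|_2^2$ cancels the initializations, so that
\[
\mathbf{w}_{j,r}^{(T_1)}-\mathbf{w}_{j,r}^{*}=j\bigl(\Gamma_{j,r}^{(T_1)}-2qm\log(2q/\kappa)\bigr)\frac{\mathbf{v}}{\|\mathbf{v}\|_2^2}+\sum_{i=1}^{n}\Phi_{j,r,i}^{(T_1)}\frac{\bm{\xi}_i}{\|\bm{\xi}_i\|_2^2}-\eta\sum_{s=1}^{T_1}\mathbf{z}_s.
\]
I would then apply the elementary inequality $\|a+b+c\|_2^2\le 3(\|a\|_2^2+\|b\|_2^2+\|c\|_2^2)$ filter-wise, bound each of the three contributions, and finally sum over $j\in\{\pm 1\}$ and $r\in[m]$ to assemble the Frobenius norm.

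For the signal direction, \cref{pro:coeff_main} yields $0\le\Gamma_{j,r}^{(T_1)}\le\alpha=\widetilde{O}(1)$, so $|\Gamma_{j,r}^{(T_1)}-2qm\log(2q/\kappa)|=\widetilde{O}(m)$; squaring, dividing by $\|\mathbf{v}\|_2^2$, and summing over the $2m$ filters produces the dominant $\widetilde{O}(m^{3}/\|\mathbf{v}\|_2^{2})$ contribution whose square root is exactly the claimed $\widetilde{O}(m^{3/2}/\|\mathbf{v}\|_2)$. For the data-noise term, \cref{lem:signal_first_stage} gives $|\Phi_{j,r,i}^{(T_1)}|=O(\sigma_0\sigma_{\xi}\sqrt{d})$, and \cref{lem:xi_bound} ensures $\|\bm{\xi}_i\|_2^{2}\asymp\sigma_{\xi}^2 d$ together with near-orthogonality $|\langle\bm{\xi}_i,\bm{\xi}_{i'}\rangle|=\widetilde{O}(\sigma_\xi^2\sqrt{d})$; combined with $d=\widetilde{\Omega}(n^2)$ from \cref{con:signal_learning}, the diagonal dominates and each filter contributes at most $O(n\sigma_0^{2})$, so the aggregated contribution is $O(\sqrt{mn}\,\sigma_0)$, which is dominated by $m^{3/2}/\|\mathbf{v}\|_2$ under the upper bound on $\sigma_0$ in \cref{con:signal_learning}.

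The third term $\eta\sum_{s=1}^{T_1}\mathbf{z}_s$ is a Gaussian vector with covariance $\eta^{2}T_1\sigma_z^{2}\mathbf{I}_d$, so by standard Gaussian concentration its Euclidean norm is $\widetilde{O}(\eta\sigma_z\sqrt{T_1 d})$ with high probability. Plugging in the Gaussian-mechanism calibration $\sigma_z=\Theta\bigl(\sigma_0/(\eta\mu\sqrt{T})\bigr)$ described after \cref{thm:main_signal_loss}, and using $T_1\le T_p^{*}\le T$, this reduces to $\widetilde{O}(\sigma_0\sqrt{d}/\mu)$; summing over $2m$ filters contributes at most $\widetilde{O}(\sigma_0\sqrt{md}/\mu)$, which under the condition $\sigma_0\le\widetilde{O}((n\varepsilon)^{-(q+1)/q}\|\bm{\xi}\|_2^{-1})$ is again absorbed into $\widetilde{O}(m^{3/2}/\|\mathbf{v}\|_2)$. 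Combining the three bounds and taking a square root across all filters yields $\|\mathbf{W}^{(T_1)}-\mathbf{W}^{*}\|_F\le\widetilde{O}(m^{3/2}/\|\mathbf{v}\|_2)$.

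The step I expect to be the main obstacle is the accurate treatment of the accumulated private noise term: unlike the signal and memorization pieces, which inherit clean first-stage bounds, $\eta\sum_{s=1}^{T_1}\mathbf{z}_s$ simultaneously couples the calibrated scale $\sigma_z$, the step size $\eta$, the horizon $T_1$, and the per-step sensitivity $\widetilde{O}((\|\mathbf{v}\|_2+\|\bm{\xi}\|_2)/(nm))$. One has to propagate the relation $T_1\le T_p^{*}$, the variance calibration from the Gaussian mechanism, and the full upper bound on $\sigma_0$ in \cref{con:signal_learning} carefully to confirm that the resulting $\widetilde{O}(\sigma_0\sqrt{md}/\mu)$ quantity indeed falls below $m^{3/2}/\|\mathbf{v}\|_2$ for every admissible $\sigma_0$, $\varepsilon$, and $n$, without hiding a factor that would spoil the final rate.
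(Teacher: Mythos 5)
Your argument is essentially the same as the paper's: you start from the signal--noise decomposition of \cref{def:decom_coefficient_main}, cancel the initialization against $\mathbf{w}^{*}_{j,r}$, bound the signal coefficient term (dominant, $\widetilde{O}(m^{3/2}\|\mathbf{v}\|_2^{-1})$), the data-noise term via the bound $|\Phi_{j,r,i}^{(T_1)}|=O(\sigma_0\sigma_\xi\sqrt d)$ from \cref{lem:signal_first_stage}, and the accumulated private-noise term via the calibrated scale of $\sigma_z$. The only cosmetic difference is that the paper routes through the triangle inequality $\|\mathbf{W}^{(T_1)}-\mathbf{W}^*\|_F\le\|\mathbf{W}^{(T_1)}-\mathbf{W}^{(0)}\|_F+\|\mathbf{W}^{(0)}-\mathbf{W}^*\|_F$ and then sums per-filter $\ell_2$ norms (an $\ell_1$-style aggregation), whereas you square, apply $\|a+b+c\|_2^2\le 3(\|a\|_2^2+\|b\|_2^2+\|c\|_2^2)$ filter-wise, and take a single square root ($\ell_2$-style aggregation), which is if anything slightly sharper on the data-noise term ($\widetilde{O}(\sqrt{mn}\sigma_0)$ vs.\ the paper's $O(nm\sigma_0)$). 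Both routes hit the same $\widetilde{O}(m^{3/2}\|\mathbf{v}\|_2^{-1})$ rate, and your caveat about carefully tracking the private-noise magnitude through $T_1\le T_p^*$, $\sigma_z$, and the $\sigma_0$ upper bound is the right place to be careful — the paper treats that term somewhat tersely (citing \cref{lem:xi_zt2} and concluding $O(m\sigma_0)$) and your more explicit Gaussian-concentration route is a reasonable way to make it rigorous.
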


\begin{proof}[\bf Proof of \cref{lem:W_T1-W_*_signal}]
According to triangle inequality, we have
    \begin{align*}
    \|\mathbf{W}^{(T_1)}-\mathbf{W}^*\|_F & \leq\|\mathbf{W}^{(T_1)}-\mathbf{W}^{(0)}\|_F+\|\mathbf{W}^{(0)}-\mathbf{W}^*\|_F \\
    & \overset{(i)}{\leq} \sum_{j, r} \frac{\Gamma_{j, r}^{(T_1)}}{\|\mathbf{v}\|_2}+\sum_{j, r, i} \frac{|\bar{\Phi}_{j, r, r}^{(T_1)}|}{\|\bm{\xi}_i\|_2}+\sum_{j, r, i} \frac{|\underline{\Phi}_{j, r, i}^{(T_1)}|}{\|\bm{\xi}_i\|_2} + \sum_{j, r} |\eta \sum_{s=1}^{T_1} \mathbf{z}_s| +O(m^{3 / 2} \log (1 / \kappa))\|\mathbf{v}\|_2^{-1} \\
    & \overset{(ii)}{\leq}\widetilde{O}(m\|\mathbf{v}\|^{-1})+O(n m \sigma_0) +O( {m \sigma_0}) +O(m^{3 / 2} \log (1 / \kappa))\|\mathbf{v}\|_2^{-1} \\
    & \overset{(iii)}{\leq} \widetilde{O}(m^{3 / 2}\|\mathbf{v}\|_2^{-1})+ O(n m \sigma_0)\\
    &\overset{(iv)}{\leq} \widetilde{O}(m^{3 / 2}\|\mathbf{v}\|_2^{-1}).
    \end{align*}
    Here, $(i)$ holds due to the decomposition of $\mathbf{w}$ in \cref{def:decom_coefficient_main} and the definition of $\mathbf{W}^*$; $(ii)$ follows from \cref{pro:bounds_of_coeeficients}, \cref{lem:app_signal_first_stage} and \cref{lem:xi_zt2}; $(iii)$ comes from the conditions of $\sigma_0$ in signal learning; $(iv)$ holds due to the choice of $\sigma_0$.
\end{proof}

\begin{lemma}\label{lem:nabla_f_W_*_signal}
    Under the same conditions as signal learning, we have that $y_i\langle\nabla f(\mathbf{W}^{(t)}, \mathbf{x}_i), \mathbf{W}^*\rangle \geq$ $q \log (2 q / \kappa)$ for all $i \in[n]$ and $T_1 \leq t \leq T^*$.
\end{lemma}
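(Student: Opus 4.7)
The plan is a direct computation: expand $\langle \nabla f(\mathbf{W}^{(t)},\mathbf{x}_i),\mathbf{W}^*\rangle$ filter by filter, isolate the contribution coming from the ``target'' component $2qm\log(2q/\kappa)\cdot j \mathbf{v}/\|\mathbf{v}\|_2^2$ of $\mathbf{w}^*_{j,r}$, and show that the leftover terms (coming from the random initialization component $\mathbf{w}^{(0)}_{j,r}$ of $\mathbf{w}^*_{j,r}$) are negligible.

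First, I would write
\begin{equation*}
\nabla_{\mathbf{w}_{j,r}} f(\mathbf{W}^{(t)},\mathbf{x}_i)=\frac{j}{m}\bigl[\sigma'(\langle\mathbf{w}_{j,r}^{(t)},y_i\mathbf{v}\rangle)\,y_i\mathbf{v}+\sigma'(\langle\mathbf{w}_{j,r}^{(t)},\bm{\xi}_i\rangle)\,\bm{\xi}_i\bigr].
\end{equation*}
Plugging in $\mathbf{w}^*_{j,r}=\mathbf{w}_{j,r}^{(0)}+2qm\log(2q/\kappa)\cdot j\cdot\mathbf{v}/\|\mathbf{v}\|_2^2$ and using $\langle\mathbf{v},\bm{\xi}_i\rangle=0$, the inner product collapses to
\begin{align*}
y_i\langle\nabla_{\mathbf{w}_{j,r}} f,\mathbf{w}^*_{j,r}\rangle
&= 2q\log(2q/\kappa)\,\sigma'(\langle\mathbf{w}_{j,r}^{(t)},y_i\mathbf{v}\rangle)\\
&\quad + \frac{y_i j}{m}\sigma'(\langle\mathbf{w}_{j,r}^{(t)},y_i\mathbf{v}\rangle)\,y_i\langle\mathbf{v},\mathbf{w}_{j,r}^{(0)}\rangle + \frac{y_i j}{m}\sigma'(\langle\mathbf{w}_{j,r}^{(t)},\bm{\xi}_i\rangle)\,\langle\bm{\xi}_i,\mathbf{w}_{j,r}^{(0)}\rangle,
\end{align*}
so summing over $j\in\{\pm1\},r\in[m]$ gives a ``main'' non-negative sum plus two small error sums.

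For the main sum, I would discard all terms except the $j=y_i$, $r=r^*$ one where $\Gamma_{y_i,r^*}^{(t)}\geq 2$ (which exists by \cref{lem:app_signal_first_stage} and monotonicity of $\Gamma$ established via \cref{lem:the_evolution_of_coefficient}). Using \cref{lem:F_1} together with the initialization bound from \cref{lem:w0_v_xi} and the private noise bound from \cref{lem:xi_zt2}, we get
\[
\langle\mathbf{w}_{y_i,r^*}^{(t)},y_i\mathbf{v}\rangle \geq \Gamma_{y_i,r^*}^{(t)}-|\langle\mathbf{w}_{y_i,r^*}^{(0)},\mathbf{v}\rangle|-\Bigl|\eta\sum_s\langle\mathbf{z}_s,\mathbf{v}\rangle\Bigr|\geq 2-o(1)\geq 1,
\]
where the $o(1)$ follows from the conditions $\sigma_0\|\mathbf{v}\|_2\leq \widetilde O(n^{-1/(2q)})$ and $t\leq T_p^*$ via \cref{eq:beta_alpha_1}. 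Hence the main sum is bounded below by $2q\log(2q/\kappa)\cdot q\cdot 1^{q-1}=2q^2\log(2q/\kappa)$.

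For the error sums, I would apply \cref{pro:bounds_of_coeeficients} to conclude $|\langle\mathbf{w}_{j,r}^{(t)},y_i\mathbf{v}\rangle|\leq \alpha+1$ (so $\sigma'\leq q(\alpha+1)^{q-1}$) and, from \cref{thm:main_signal_loss}, $|\Phi_{j,r,i}^{(t)}|\leq \sigma_0\sigma_\xi\sqrt d$ throughout $[T_1,T]$ (so $|\langle\mathbf{w}_{j,r}^{(t)},\bm\xi_i\rangle|=O(\sigma_0\sigma_\xi\sqrt d)$). Combined with $|\langle\mathbf{v},\mathbf{w}_{j,r}^{(0)}\rangle|\leq \widetilde O(\sigma_0\|\mathbf v\|_2)$ and $|\langle\bm\xi_i,\mathbf{w}_{j,r}^{(0)}\rangle|\leq \widetilde O(\sigma_0\sigma_\xi\sqrt d)$ from \cref{lem:w0_v_xi}, the two error sums are at most $\widetilde O\bigl(q\alpha^{q-1}\sigma_0\|\mathbf v\|_2\bigr)+\widetilde O\bigl(q(\sigma_0\sigma_\xi\sqrt d)^q\bigr)=o(q^2\log(2q/\kappa))$ under \cref{con:signal_learning}. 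Subtracting the error sums from the main bound then yields $y_i\langle\nabla f(\mathbf{W}^{(t)},\mathbf{x}_i),\mathbf{W}^*\rangle\geq q\log(2q/\kappa)$, with considerable slack.

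The main obstacle is really the bookkeeping in step three: one must verify that the additional private-noise terms $\eta\sum_s\langle\mathbf{z}_s,\mathbf{v}\rangle$ and $\eta\sum_s\langle\mathbf{z}_s,\bm\xi_i\rangle$ do not blow up either the ``good'' activation $\langle\mathbf{w}_{y_i,r^*}^{(t)},y_i\mathbf{v}\rangle$ downward or the ``bad'' activations $\sigma'(\langle\mathbf{w}_{j,r}^{(t)},\bm\xi_i\rangle)$ upward. Both are handled uniformly via \cref{lem:xi_zt2} and \cref{eq:beta_alpha_1} for $t\leq T_p^*$, which is exactly why the time horizon is restricted to $t\leq T^*$.
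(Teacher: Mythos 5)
Your proof follows the same route as the paper's: expand $\langle\nabla f(\mathbf{W}^{(t)},\mathbf{x}_i),\mathbf{W}^*\rangle$ filter by filter, use orthogonality of $\mathbf{v}$ and $\bm\xi_i$ to split into a main term carrying the factor $2qm\log(2q/\kappa)$ and two error terms coming from the $\mathbf{w}^{(0)}$-component of $\mathbf{w}^*$, lower-bound the main term by isolating a neuron $(y_i,r^*)$ with $\Gamma_{y_i,r^*}^{(t)}\geq 2$ (hence $\langle\mathbf{w}_{y_i,r^*}^{(t)},y_i\mathbf{v}\rangle\geq 1$ after absorbing initialization and private-noise corrections), and show the errors are $\widetilde O(\sigma_0\|\mathbf v\|_2)+\widetilde O(\sigma_0\sigma_\xi\sqrt d)$, which \cref{con:signal_learning} makes negligible. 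The one substantive slip is in the error bookkeeping for the $\bm\xi_i$-term: to control $\sigma'(\langle\mathbf{w}_{j,r}^{(t)},\bm\xi_i\rangle)$ you cite \cref{thm:main_signal_loss} for $|\Phi_{j,r,i}^{(t)}|\leq\sigma_0\sigma_\xi\sqrt d$ on $[T_1,T]$. That corollary's proof routes through \cref{lem:signal_wt-w*-}, which in turn invokes the very lemma you are proving, so as stated the citation is circular. It is also unnecessary: \cref{pro:bounds_of_coeeficients} already gives $|\bar\Phi_{j,r,i}^{(t)}|,|\underline\Phi_{j,r,i}^{(t)}|\leq\alpha=\widetilde O(1)$ for all $t\leq T_p^*$, so together with \cref{lem:w_t-w_0_xi}, \cref{lem:w0_v_xi} and \cref{lem:xi_zt2} one gets $|\langle\mathbf{w}_{j,r}^{(t)},\bm\xi_i\rangle|\leq\widetilde O(1)$, hence $\sigma'\leq q\alpha^{q-1}=\widetilde O(1)$, and the error sum is still $\widetilde O(\sigma_0\sigma_\xi\sqrt d)$, which suffices. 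That is precisely the bound the paper uses; replacing that one citation makes your argument coincide with theirs.
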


\begin{proof}[\bf Proof of \cref{lem:nabla_f_W_*_signal}]
    We know that 
    $$f(\mathbf{W}^{(t)}, \mathbf{x}_i)=(1 / m) \sum_{j, r} j \cdot[\sigma(\langle\mathbf{w}_{j, r}, y_i \cdot \mathbf{v}\rangle)+\sigma(\langle\mathbf{w}_{j, r}, \bm{\xi}_i\rangle)].$$
    Therefore, it holds that
    \begin{align*} 
    y_i\langle\nabla f(\mathbf{W}^{(t)}, \mathbf{x}_i), \mathbf{W}^*\rangle= & \frac{1}{m} \sum_{j, r} \sigma^{\prime}(\langle\mathbf{w}_{j, r}^{(t)}, y_i \mathbf{v}\rangle)\langle\mathbf{v}, j \mathbf{w}_{j, r}^*\rangle+\frac{1}{m} \sum_{j, r} \sigma^{\prime}(\langle\mathbf{w}_{j, r}^{(t)}, \bm{\xi}_i\rangle)\langle y_i \bm{\xi}_i, j \mathbf{w}_{j, r}^*\rangle \\ 
    \overset{(i)}{=} & \frac{1}{m} \sum_{j, r} \sigma^{\prime}(\langle\mathbf{w}_{j, r}^{(t)}, y_i \mathbf{v}\rangle) 2 q m \log (2 q / \kappa)+\frac{1}{m} \sum_{j, r} \sigma^{\prime}(\langle\mathbf{w}_{j, r}^{(t)}, y_i \mathbf{v}\rangle)\langle\mathbf{v}, j \mathbf{w}_{j, r}^{(0)}\rangle \\ 
    +& \frac{1}{m} \sum_{j, r} \sigma^{\prime}(\langle\mathbf{w}_{j, r}^{(t)}, \bm{\xi}_i\rangle)\langle y_i \bm{\xi}_i, j \mathbf{w}_{j, r}^{(0)}\rangle \\
     \overset{(ii)}{\geq}& \frac{1}{m} \sum_{j, r} \sigma^{\prime}(\langle\mathbf{w}_{j, r}^{(t)}, y_i \mathbf{v}\rangle) 2 q m \log (2 q / \kappa)-\frac{1}{m} \sum_{j, r} \sigma^{\prime}(\langle\mathbf{w}_{j, r}^{(t)}, y_i \mathbf{v}\rangle) \widetilde{O}(\sigma_0\|\mathbf{v}\|_2) \\
    -& \frac{1}{m} \sum_{j, r} \sigma^{\prime}(\langle\mathbf{w}_{j, r}^{(t)}, \bm{\xi}_i\rangle) \widetilde{O}(\sigma_0 \sigma_{\xi} \sqrt{d}),    
    \end{align*}
    where $(i)$ holds due to the definition of $\mathbf{w}^*$ and $(ii)$ follows from \cref{lem:w0_v_xi}. Moreover, according to \cref{lem:w_t-w_0_v}, we have that for $j = y_i$:
    $$
    \max _r\{\langle\mathbf{w}_{j, r}^{(t)}, y_i \mathbf{v}\rangle\}=\max _r\{\Gamma_{j, r}^{(t)}+\langle\mathbf{w}_{j, r}^{(0)}, y_i \mathbf{v}\rangle - \eta \sum_{s=1}^t\langle\mathbf{z}_s, y_i\mathbf{v}\rangle\} \overset{(i)}{\geq} 2-\widetilde{O}(\sigma_0\|\mathbf{v}\|_2) \overset{(ii)}{\geq} 1 .
    $$
    Here, $(i)$ holds due to the analysis in \cref{lem:app_signal_first_stage} and $(ii)$ comes from $\sigma_0 \leq \widetilde{O}(n^{-1/2}\|\mathbf{v}\|_2) $. Additionally, we can also have
    \begin{align*}
        & |\langle\mathbf{w}_{j, r}^{(t)}, \mathbf{v}\rangle| \stackrel{(i)}{\leq}|\langle\mathbf{w}_{j, r}^{(0)}, \mathbf{v}\rangle|+|\Gamma_{j, r}^{(t)}| + |\eta \sum_{s=1}^t\langle\mathbf{z}_s, \mathbf{v}\rangle\ | \stackrel{(i i)}{\leq} \widetilde{O}(1) \\
        & |\langle\mathbf{w}_{j, r}^{(t)}, \bm{\xi}_i\rangle| \stackrel{(i i i)}{\leq}|\langle\mathbf{w}_{j, r}^{(0)}, \bm{\xi}_i\rangle|+|\underline{\Phi}_{j, r, i}^{(t)}|+|\bar{\Phi}_{j, r, i}^{(t)}|+8 n \sqrt{\frac{\log (4 n^2 / \delta)}{d}} \alpha + |\eta \sum_{s=1}^t\langle\mathbf{z}_s, \bm{\xi}_i\rangle\ | \stackrel{(i v)}{\leq} \widetilde{O}(1).
        \end{align*}
    Here, $(i)$ holds due to \cref{lem:w_t-w_0_v} and \cref{lem:xi_zt2}; $(ii)$ is given by \cref{lem:w_t-w_0_xi}; $(ii)$ and $(iv)$ follows from \cref{pro:bounds_of_coeeficients} and \cref{lem:xi_zt2}. Combining these results, we return to $y_i\langle\nabla f(\mathbf{W}^{(t)}, \mathbf{x}_i), \mathbf{W}^*\rangle$, which gives:
    $$
    y_i\langle\nabla f(\mathbf{W}^{(t)}, \mathbf{x}_i), \mathbf{W}^*\rangle \geq 2 q \log (2 q / \kappa)-\widetilde{O}(\sigma_0\|\mathbf{v}\|_2)-\widetilde{O}(\sigma_0 \sigma_{\xi} \sqrt{d}) \geq q \log (2 q / \kappa),
    $$
    where the last inequality is driven by the conditions of $\sigma_0$ as signal learning.
\end{proof}

\begin{lemma}\label{lem:signal_wt-w*-}
    Under the same conditions as signal learning, it holds that
    $$
    \|\mathbf{W}^{(t)}-\mathbf{W}^*\|_F^2-\|\mathbf{W}^{(t+1)}-\mathbf{W}^*\|_F^2 \geq(2 q-1) \eta L_D(\mathbf{W}^{(t)})-\eta \kappa -\eta^2 \widetilde{O}(d\sigma_z^2) - \eta \widetilde{O}(\sigma_z m^{3/2} \|\mathbf{v}\|_2^{-1})
    $$
for all $T_1 \leq t \leq T^*$.
\end{lemma}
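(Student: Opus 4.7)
I start from $\mathbf{W}^{(t+1)}=\mathbf{W}^{(t)}-\eta(\nabla L_D(\mathbf{W}^{(t)})+\mathbf{z}_t)$ and expand
\begin{align*}
&\|\mathbf{W}^{(t)}-\mathbf{W}^*\|_F^2-\|\mathbf{W}^{(t+1)}-\mathbf{W}^*\|_F^2 \\
&\quad = 2\eta\langle\nabla L_D(\mathbf{W}^{(t)}),\mathbf{W}^{(t)}-\mathbf{W}^*\rangle + 2\eta\langle\mathbf{z}_t,\mathbf{W}^{(t)}-\mathbf{W}^*\rangle -\eta^2\|\nabla L_D(\mathbf{W}^{(t)})+\mathbf{z}_t\|_F^2.
\end{align*}
The plan is to extract the non-private $(2q-1)\eta L_D(\mathbf{W}^{(t)})-\eta\kappa$ from the first inner product, to absorb the $\|\nabla L_D\|_F^2$ part of the quadratic into that term by the learning-rate choice, and to isolate the two private residuals $\eta^2\widetilde{O}(d\sigma_z^2)$ and $\eta\widetilde{O}(\sigma_z m^{3/2}\|\mathbf{v}\|_2^{-1})$ from the noise pieces $\|\mathbf{z}_t\|_F^2$ and $\langle\mathbf{z}_t,\mathbf{W}^{(t)}-\mathbf{W}^*\rangle$.

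\textbf{Non-private descent.} For the deterministic inner product I follow the feature-learning template of \cite{cao2022benign}. Since $\sigma(z)=\max\{0,z\}^q$ is $q$-homogeneous, Euler's identity gives $\langle\nabla f(\mathbf{W}^{(t)},\mathbf{x}_i),\mathbf{W}^{(t)}\rangle=q\,f(\mathbf{W}^{(t)},\mathbf{x}_i)$, so
$$\langle\nabla L_D(\mathbf{W}^{(t)}),\mathbf{W}^{(t)}-\mathbf{W}^*\rangle=\frac{1}{n}\sum_{i=1}^n\ell_i^{\prime(t)}\bigl[q\,y_i f(\mathbf{W}^{(t)},\mathbf{x}_i)-y_i\langle\nabla f(\mathbf{W}^{(t)},\mathbf{x}_i),\mathbf{W}^*\rangle\bigr].$$
Plugging in $y_i\langle\nabla f(\mathbf{W}^{(t)},\mathbf{x}_i),\mathbf{W}^*\rangle\ge q\log(2q/\kappa)$ from \cref{lem:nabla_f_W_*_signal} and combining with the elementary logistic bound $-z\ell'(z)\ge\ell(z)-\kappa/(2q)$ on $\{z\ge\log(2q/\kappa)\}$ (and $|\ell'|\le 1$ otherwise), a short case split yields the convexity-type bound $\langle\nabla L_D(\mathbf{W}^{(t)}),\mathbf{W}^{(t)}-\mathbf{W}^*\rangle\ge (2q-1)L_D(\mathbf{W}^{(t)})-\kappa/2$. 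For the quadratic I split $\|\nabla L_D+\mathbf{z}_t\|_F^2\le 2\|\nabla L_D\|_F^2+2\|\mathbf{z}_t\|_F^2$; \cref{lem:loss_F} gives $\eta^2\|\nabla L_D\|_F^2\le O(\eta\max\{\|\mathbf{v}\|_2^2,\sigma_\xi^2 d\})\cdot\eta L_D$, which is absorbed into the $2\eta(2q-1)L_D$ margin thanks to the learning-rate upper bound in \cref{con:signal_learning}, and $2\eta^2\|\mathbf{z}_t\|_F^2\le\eta^2\widetilde{O}(d\sigma_z^2)$ is recorded as the first private residual.

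\textbf{Cross noise term and main obstacle.} Cauchy-Schwarz gives $|2\eta\langle\mathbf{z}_t,\mathbf{W}^{(t)}-\mathbf{W}^*\rangle|\le 2\eta\|\mathbf{z}_t\|_F\cdot\|\mathbf{W}^{(t)}-\mathbf{W}^*\|_F$, with $\|\mathbf{z}_t\|_F=\widetilde{O}(\sigma_z)$ after absorbing the $\sqrt{d}$ factor into $\widetilde{O}$ as elsewhere in the paper. The delicate step is to extend \cref{lem:W_T1-W_*_signal} from $t=T_1$ to all $t\in[T_1,T^*]$, yielding $\|\mathbf{W}^{(t)}-\mathbf{W}^*\|_F=\widetilde{O}(m^{3/2}\|\mathbf{v}\|_2^{-1})$; I re-run the triangle-inequality proof of that lemma but with the uniform coefficient bounds of \cref{pro:bounds_of_coeeficients} (valid on $[0,T_p^*]$) and the accumulated private-noise bound of \cref{lem:xi_zt2} replacing the $t=T_1$ estimates, using that the private-noise contribution $\|\eta\sum_{s=1}^t\mathbf{z}_s\|$ is sub-leading compared to $\widetilde{O}(m^{3/2}\|\mathbf{v}\|_2^{-1})$ by the very definition of $T_p^*$ in \cref{pro:coeff_main}. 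This produces exactly the $\eta\widetilde{O}(\sigma_z m^{3/2}\|\mathbf{v}\|_2^{-1})$ residual, completing the bound.

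\textbf{Where the difficulty lies.} The main conceptual obstacle is the convexity-type inequality, since $f=F_{+1}-F_{-1}$ is not jointly convex in $\mathbf{W}$: the bound requires a careful book-keeping that uses $q$-homogeneity on the $\mathbf{W}^{(t)}$ side and \cref{lem:nabla_f_W_*_signal} plus convexity of $\ell$ on the $\mathbf{W}^*$ side, with the $(2q-1)$ coefficient arising from the $q$ in Euler's identity plus the $(q-1)$ slack in the $-z\ell'(z)$ versus $\ell(z)$ comparison. The secondary difficulty is the uniform-in-$t$ extension of the $\|\mathbf{W}^{(t)}-\mathbf{W}^*\|_F$ bound noted above; everything else (Cauchy-Schwarz, Gaussian concentration, and the learning-rate absorption) is routine.
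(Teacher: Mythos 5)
Your proof follows essentially the same route as the paper: expand the squared distance, use $q$-homogeneity of $f$ together with \cref{lem:nabla_f_W_*_signal} to extract the descent term, invoke \cref{lem:loss_F} for the quadratic gradient term, and bound the cross term $2\eta\langle\mathbf{z}_t,\mathbf{W}^{(t)}-\mathbf{W}^*\rangle$ via Cauchy--Schwarz together with a \cref{lem:W_T1-W_*_signal}-type estimate — and you correctly flag that this last estimate must be extended from $t=T_1$ to all $t\in[T_1,T^*]$, which the paper does silently (it goes through because \cref{pro:bounds_of_coeeficients} and \cref{lem:xi_zt2} give uniform-in-$t$ control of the decomposition coefficients and the accumulated noise). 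The one small bookkeeping discrepancy is in where $(2q-1)$ comes from: the paper's convexity step yields $q L_D(\mathbf{W}^{(t)})-\kappa/2$ for the inner product (hence $2q\eta L_D-\eta\kappa$ after multiplying by $2\eta$), and the $-1$ is lost only when the deterministic piece $\eta^2\|\nabla L_D\|_F^2\lesssim \eta L_D$ is absorbed via the learning-rate bound; your attribution of $(2q-1)$ to the convexity step alone (as "$q$ plus a $(q-1)$ slack") does not hold, though the final inequality is unaffected.
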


\begin{proof}[\bf Proof of \cref{lem:signal_wt-w*-}]
According to the optimization properties, we know the first equality holds:
    \begin{equation}
    \begin{aligned}
     \|\mathbf{W}^{(t)}&-\mathbf{W}^*\|_F^2-\|\mathbf{W}^{(t+1)}-\mathbf{W}^*\|_F^2 \\
    & =2 \eta\langle\nabla L_S(\mathbf{W}^{(t)}), \mathbf{W}^{(t)}-\mathbf{W}^*\rangle-\eta^2\|\nabla L_S(\mathbf{W}^{(t)})\|_F^2 \\
    & \overset{(i)}{=}\frac{2 \eta}{n} \sum_{i=1}^n \ell_i^{\prime(t)}[q y_i f(\mathbf{W}^{(t)}, \mathbf{x}_i)-\langle\nabla f(\mathbf{W}^{(t)}, \mathbf{x}_i), \mathbf{W}^*\rangle] + {\eta}{\langle \mathbf{z}_t, \mathbf{W}^{(t)}-\mathbf{W}^*\rangle}\\
    & -\eta^2(O(\max \{\|\mathbf{v}\|_2^2, \sigma_{\xi}^2 d\}) L_D(\mathbf{W}^{(t)}) + O(\sigma_{z}^2 d \log (1 / \delta))) \\
    & \overset{(ii)}{\geq} \frac{2 \eta}{n} \sum_{i=1}^n \ell_i^{\prime(t)}[q y_i f(\mathbf{W}^{(t)}, \mathbf{x}_i)-q \log (2 q / \kappa)] \\
    &  + {\eta}{\langle \mathbf{z}_t, \mathbf{W}^{(t)}-\mathbf{W}^*\rangle} -\eta^2(O(\max \{\|\mathbf{v}\|_2^2, \sigma_{\xi}^2 d\}) L_D(\mathbf{W}^{(t)}) + O(\sigma_{z}^2 d \log (1 / \delta))) \\
    & \overset{(iii)}{\geq}  \frac{2 q \eta}{n} \sum_{i=1}^n[\ell(y_i f(\mathbf{W}^{(t)}, \mathbf{x}_i))-\kappa /(2 q)]\\
    & - \eta\widetilde{O}(\sigma_z m^{3/2} \|\mathbf{v}\|_2^{-1}) -\eta^2(O(\max \{\|\mathbf{v}\|_2^2, \sigma_{\xi}^2 d\}) L_D(\mathbf{W}^{(t)}) + O(\sigma_{z}^2 d \log (1 / \delta))) \\
    & \overset{(iv)}{\geq} (2 q-1) \eta L_D(\mathbf{W}^{(t)})-\eta \kappa -\eta^2 \widetilde{O}(d\sigma_z^2) - \eta \widetilde{O}(\sigma_z m^{3/2} \|\mathbf{v}\|_2^{-1}).
    \end{aligned}
    \end{equation}
    Here, $(i)$ holds due to the definition of noisy gradient, the neural network is $q$ homogeneous, and \cref{lem:loss_F}; $(ii)$ is driven from \cref{lem:nabla_f_W_*_signal}; $(iii)$ is due to the convexity of the cross entropy function; $(iv)$ comes from definition of $L_D$.
\end{proof}

\begin{lemma}[Restatement of \cref{thm:main_signal_loss}]\label{thm:main_signal_loss_app}
    Let $T, T_1$ be defined in respectively. Then under the same conditions as signal learning, for any $t \in[T_1, T]$, it holds that $|\Gamma_{j, r}^{(t)}| \leq \sigma_0 \|\mathbf{v}\|_2$ for all $j \in\{ \pm 1\}$ and $r \in[m]$. Moreover, let $\mathbf{W}^*$ be the collection of CNN parameters with convolution filters $\mathbf{w}_{j, r}^*=\mathbf{w}_{j, r}^{(0)}+2 q m \log (2 q / \kappa) \cdot j \cdot\|\mathbf{v}\|_2^{-2} \cdot \mathbf{v}$. Then the following bound holds
    \begin{align*}
        \frac{1}{t-T_1+1} \sum_{s=T_1}^t L_D(\mathbf{W}^{(s)}) &\leq \frac{\|\mathbf{W}^{(T_1)}-\mathbf{W}^*\|_F^2}{(2 q-1) \eta(t-T_1+1)}  +\frac{\kappa}{(2 q-1)} + \underbrace{\frac{\eta d \sigma_z^2 + \widetilde{O}(\sigma_z m^{3/2} \|\mathbf{v}\|_2^{-1})}{(2 q-1)} }_{\text{Private terms}}
    \end{align*}
    for all $t \in[T_1, T]$, where we denote $\|\mathbf{W}\|_F=\sqrt{\|\mathbf{W}_{+1}\|_F^2+\|\mathbf{W}_{-1}\|_F^2}$.
\end{lemma}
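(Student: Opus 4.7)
The plan is to telescope the per-step potential-decrease inequality of Lemma~\ref{lem:signal_wt-w*-} over the window $[T_1,t]$ and then divide by the number of iterations. Two preparatory steps are needed before the telescope. First, I would verify that the preconditions of Lemma~\ref{lem:signal_wt-w*-} continue to hold throughout the second stage: specifically, that $|\Phi_{j,r,i}^{(s)}| = O(\sigma_0\sigma_\xi\sqrt{d})$ (the noise-memorization bound established in Lemma~\ref{lem:app_signal_first_stage} for the first stage) and the coefficient bounds in Proposition~\ref{pro:bounds_of_coeeficients} remain valid for all $s\in[T_1,T]$. Second, I would invoke Lemma~\ref{lem:W_T1-W_*_signal} to bound the starting potential $\|\mathbf{W}^{(T_1)}-\mathbf{W}^*\|_F = \widetilde{O}(m^{3/2}\|\mathbf{v}\|_2^{-1})$.

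With these in place, the main computation is a direct telescope. Lemma~\ref{lem:signal_wt-w*-} gives, for every $s\in[T_1,t]$,
\begin{equation*}
\|\mathbf{W}^{(s)}-\mathbf{W}^*\|_F^2 - \|\mathbf{W}^{(s+1)}-\mathbf{W}^*\|_F^2 \;\geq\; (2q-1)\eta L_D(\mathbf{W}^{(s)}) - \eta\kappa - \eta^2 \widetilde{O}(d\sigma_z^2) - \eta\, \widetilde{O}(\sigma_z m^{3/2}\|\mathbf{v}\|_2^{-1}).
\end{equation*}
Summing over $s = T_1,\dots,t$, the left-hand side telescopes to $\|\mathbf{W}^{(T_1)}-\mathbf{W}^*\|_F^2 - \|\mathbf{W}^{(t+1)}-\mathbf{W}^*\|_F^2$, which is upper bounded by $\|\mathbf{W}^{(T_1)}-\mathbf{W}^*\|_F^2$ since the subtracted term is nonnegative. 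Rearranging and dividing both sides by $(2q-1)\eta(t-T_1+1)$ yields
\begin{equation*}
\frac{1}{t-T_1+1}\sum_{s=T_1}^t L_D(\mathbf{W}^{(s)}) \;\leq\; \frac{\|\mathbf{W}^{(T_1)}-\mathbf{W}^*\|_F^2}{(2q-1)\eta(t-T_1+1)} + \frac{\kappa}{2q-1} + \frac{\eta d\sigma_z^2 + \widetilde{O}(\sigma_z m^{3/2}\|\mathbf{v}\|_2^{-1})}{2q-1},
\end{equation*}
which is precisely the stated bound, with the first two terms on the right identified as the non-private contribution and the last as the private contribution.

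The main obstacle lies in the first preparatory step: propagating the noise-memorization bound $|\Phi_{j,r,i}^{(s)}| = O(\sigma_0\sigma_\xi\sqrt{d})$ from $s=T_1$ to every $s\in[T_1,T]$ in the presence of the injected privacy noise $\mathbf{z}_s$. The argument would mirror the first-stage induction in Lemma~\ref{lem:app_signal_first_stage}: by Lemma~\ref{lem:the_evolution_of_coefficient_main}, the increment $\bar\Phi_{j,r,i}^{(s+1)}-\bar\Phi_{j,r,i}^{(s)}$ is proportional to $|\ell_i'^{(s)}|\,\sigma'(\langle\mathbf{w}_{j,r}^{(s)},\bm{\xi}_i\rangle)\,\|\bm{\xi}_i\|_2^2/(nm)$. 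Once the signal has been learned ($\max_r\Gamma_{j,r}^{(s)}\geq 2$), one has $|\ell_i'^{(s)}| \leq \exp(-F_{y_i}(\mathbf{W}_{y_i}^{(s)},\mathbf{x}_i)+1)$, which is exponentially small in the signal component. Provided the cumulative private-noise inner product $|\eta\sum_s\langle\mathbf{z}_s,\bm{\xi}_i\rangle|$ (controlled via Lemma~\ref{lem:xi_zt2_main}) does not drive $\langle\mathbf{w}_{j,r}^{(s)},\bm{\xi}_i\rangle$ out of this exponentially-damped regime, the hypothesis $(n\varepsilon)^{q+1}\geq m$ is precisely what is needed so that the total increment of $\bar\Phi$ over the $T_p^*$ admissible iterations remains $O(\sigma_0\sigma_\xi\sqrt{d})$. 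The subsequent private-term accounting used inside Lemma~\ref{lem:signal_wt-w*-} is then routine, since the cross term $\eta\langle\mathbf{z}_s,\mathbf{W}^{(s)}-\mathbf{W}^*\rangle$ can be bounded using $\|\mathbf{W}^{(s)}-\mathbf{W}^*\|_F = \widetilde{O}(m^{3/2}\|\mathbf{v}\|_2^{-1})$ from Lemma~\ref{lem:W_T1-W_*_signal} together with a standard Gaussian tail bound, yielding exactly the $\widetilde{O}(\sigma_z m^{3/2}\|\mathbf{v}\|_2^{-1})$ private contribution.
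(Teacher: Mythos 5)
Your telescoping computation and the use of \cref{lem:W_T1-W_*_signal} and \cref{lem:signal_wt-w*-} match the paper's proof exactly, so that part is fine. The gap lies in the part you yourself flag as the main obstacle: propagating the tight second-stage bound on $\Phi_{j,r,i}^{(t)}$. Your proposed mechanism is that once $\max_r \Gamma_{j,r}^{(s)} \geq 2$ one has $|\ell_i'^{(s)}| \leq \exp(-F_{y_i}(\mathbf{W}^{(s)},\mathbf{x}_i)+1)$, which you claim is ``exponentially small in the signal component.'' This is not small at the start of stage two. $\max_r \Gamma_{j,r}^{(T_1)} \geq 2$ only gives $\max_r \langle\mathbf{w}_{j,r}^{(T_1)}, y_i\mathbf{v}\rangle \gtrsim 1$, hence $F_{y_i} \geq 1/m$ and $|\ell_i'| \lesssim e^{1-1/m} = \Theta(1)$. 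The exponential damping you invoke only kicks in once $F_{y_i}$ is large (that is the role it plays in bounding Term~2 inside the proof of \cref{pro:bounds_of_coeeficients}, where $\bar\Phi$ has already exceeded $0.5\alpha$), and it does not hold uniformly over $[T_1, T]$.

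You also have the dependency inverted. You suggest establishing the $\Phi$ bound as a \emph{precondition} of the telescope, but in fact the telescope does not need it: \cref{lem:signal_wt-w*-} and \cref{lem:nabla_f_W_*_signal} only rely on the coarse bounds $\Gamma_{j,r}^{(t)}, |\Phi_{j,r,i}^{(t)}| \leq \alpha$ from \cref{pro:bounds_of_coeeficients}, the monotone persistence of $\max_r \Gamma_{j,r}^{(t)} \geq 2$, and the time-$T_1$ bound $\|\mathbf{W}^{(T_1)} - \mathbf{W}^*\|_F = \widetilde O(m^{3/2}\|\mathbf{v}\|_2^{-1})$. The paper therefore runs the telescope \emph{first}, and then uses its output — the cumulative loss bound $\sum_{s=T_1}^{T} L_D(\mathbf{W}^{(s)}) = \widetilde O(\eta^{-1}m^3\|\mathbf{v}\|_2^{-2})$ — to control the growth of $\Psi^{(t)} := \max_{i,j,r}|\Phi_{j,r,i}^{(t)}|$. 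Concretely, it bounds each increment by $\frac{\eta q}{nm}\max_i |\ell_i'^{(s)}| \widetilde O(\sigma_\xi^2 d)(\sigma_0\|\bm\xi\|_2)^{q-1}$, applies $|\ell'| \leq \ell$ and $\max_i \ell_i^{(s)} \leq n L_D(\mathbf{W}^{(s)})$, sums using the telescope's cumulative bound, and only then uses the SNR condition plus $(n\varepsilon)^{q+1} \geq m$ to conclude $\Psi^{(t)} \leq 2\sigma_0\|\bm\xi\|_2$. So the hypothesis $(n\varepsilon)^{q+1}\geq m$ closes the cumulative-loss argument, not an exponential-decay argument, and the $\Phi$ propagation is a consequence of the telescope rather than a prerequisite for it.
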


\begin{proof}[\bf Proof of \cref{thm:main_signal_loss_app}]
    According to \cref{lem:signal_wt-w*-}, we have, for any $t\leq T$:
    $$
        \|\mathbf{W}^{(t)}-\mathbf{W}^*\|_F^2-\|\mathbf{W}^{(t+1)}-\mathbf{W}^*\|_F^2 \geq(2 q-1) \eta L_D(\mathbf{W}^{(t)})-\eta \kappa -\eta^2 \widetilde{O}(d\sigma_z^2) - \eta \widetilde{O}(\sigma_z m^{3/2} \|\mathbf{v}\|_2^{-1}).
    $$
    By summing over all terms and dividing $t-T_1-1$ on both sides, we obtain:
    \begin{align*}
        \frac{1}{t-T_1+1} \sum_{s=T_1}^t L_D(\mathbf{W}^{(s)}) \leq \frac{\|\mathbf{W}^{(T_1)}-\mathbf{W}^*\|_F^2}{(2 q-1) \eta(t-T_1+1)}  +\frac{\kappa}{(2 q-1)} + \frac{\eta d \sigma_z^2 + \widetilde{O}(\sigma_z m^{3/2} \|\mathbf{v}\|_2^{-1})}{(2 q-1)}. 
    \end{align*}
    If we have  $T=T_1+\lfloor\frac{\|\mathbf{W}^{(T_1)}-\mathbf{W}^*\|_E^2}{2 \eta \kappa}\rfloor = \frac{C m n \varepsilon}{\eta \mu(\|\mathbf{v}\|_2+\|\boldsymbol{\xi}\|_2)} \geq \kappa^{-1}$, then it holds that
    $$
    \frac{\|\mathbf{W}^{(T_1)}-\mathbf{W}^*\|_F^2}{(2 q-1) \eta(T-T_1+1)}+\frac{\kappa}{2 q-1} \leq \frac{3 \kappa}{2 q-1},
    $$
    and with $\sigma_z=\frac{1}{\eta \mu \sqrt{T}}$:
    $$
    \frac{\eta d \sigma_z^2 + \widetilde{O}(\sigma_z m^{3/2} \|\mathbf{v}\|_2^{-1})}{(2 q-1)} \leq \frac{d}{\eta \mu^2 T(2q-1)} + \frac{m^{3/2} \|\mathbf{v}\|_2^{-1}}{\eta \mu \sqrt{T}(2q-1)} \overset{(i)}{\leq} \frac{\kappa}{(2q-1)},
    $$
    where $(i)$ comes from the assumption of $\eta$. 
    Therefore, combining above results, we conclude that 
    \begin{align*}
        \frac{1}{t-T_1+1} \sum_{s=T_1}^t L_D(\mathbf{W}^{(s)}) \leq \kappa. 
    \end{align*}
    Next, we will use induction to prove that $\Psi_t = \max_{i,j,t} |\Phi_{i,j,r}^t| \leq 2\sigma_0\|\bm{\xi}\|_2$ holds for all $t\in [T_1,T]$. According to \cref{lem:app_signal_first_stage}, we know it holds for $T_1$. Now, assume it holds for some $t \in [T_1, T)$, and we will show that it also holds for $t+1$.
    \begin{align*}
    \Psi^{(t+1)} & \overset{(i)}{\leq} \Psi^{(t)}+\max _{j, r, i}\{\frac{\eta}{n m} \cdot|\ell_i^{(t)}| \cdot \sigma^{\prime}(\langle\mathbf{w}_{j, r}^{(0)}, \boldsymbol{\xi}_i\rangle+2 \sum_{i^{\prime}=1}^n \Psi^{(t)} \cdot \frac{|\langle\boldsymbol{\xi}_{i^{\prime}}, \boldsymbol{\xi}_i\rangle|}{\|\boldsymbol{\xi}_{i^{\prime}}\|_2^2} -\eta \sum_{s=1}^t \langle\mathbf{z}_s, \bm{\xi}_i \rangle) \cdot\|\boldsymbol{\xi}_{i^{\prime}}\|_2^2\} \\
    \overset{(i)}{=} & \Psi^{(t)}+\max _{j, r, i}\{\frac{\eta}{n m} \cdot|\ell_i^{\prime(t)}| \cdot \sigma^{\prime}(\langle\mathbf{w}_{j, r}^{(0)}, \boldsymbol{\xi}_i\rangle+2 \Psi^{(t)}+2 \sum_{i^{\prime} \neq i}^n \Psi^{(t)} \cdot \frac{|\langle\boldsymbol{\xi}_{i^{\prime}}, \boldsymbol{\xi}_i\rangle|}{\|\boldsymbol{\xi}_{i^{\prime}}\|_2^2} -\eta \sum_{s=1}^t \langle\mathbf{z}_s, \bm{\xi}_i \rangle) \cdot\|\boldsymbol{\xi}_{i^{\prime}}\|_2^2, \\
    \overset{(ii)}{\leq} & \Psi^{(t)}+\frac{\eta q}{n m} \cdot \max _i|\ell_i^{(t)}| \cdot[4 \cdot \sqrt{\log (8 m n / \delta)} \cdot \sigma_0 \sigma_{\xi} \sqrt{d}+(2+\frac{4 n \sigma_{\xi}^2 \cdot \sqrt{d \log (4 n^2 / \delta)}}{\sigma_{\xi}^2 d / 2}) \cdot \Psi^{(t)}]^{q-1} \cdot 2 \sigma_{\xi}^2 d \\
    \overset{(iii)}{\leq} & \Psi^{(t)}+\frac{\eta q}{n m} \cdot \max _i|\ell_i^{(t)}| \cdot(4 \cdot \sqrt{\log (8 m n / \delta)} \cdot \sigma_0 \sigma_{\xi} \sqrt{d}+4 \cdot \Psi^{(t)})^{q-1} \cdot 2 \sigma_{\xi}^2 d.
    \end{align*}
    Here, $(i)$ holds due to \cref{def:decom_coefficient}; $(ii)$ comes from \cref{lem:w0_v_xi}, \cref{lem:xi_bound} and the choice of $T$. $(iii)$ is due to the condition of $d$. By summing the above over $t$ , we have:
    \begin{align*}
     \Psi^{(t)} &\stackrel{(i)}{\leq} \Psi^{(T_1)}+\frac{\eta q}{n m} \sum_{s=T_1}^{t-1} \max _i|\ell_i^{(s)}| \widetilde{O}(\sigma_{\xi}^2 d) (\sigma_0\|\bm{\xi}\|_2)^{q-1} \\
    & \stackrel{(i i)}{\leq} \Psi^{(T_1)}+\frac{\eta q}{n m} \widetilde{O}(\sigma_{\xi}^2 d) (\sigma_0\|\bm{\xi}\|_2)^{q-1} \sum_{s=T_1}^{t-1} \max _i \ell_i^{(s)} \\
    &  \stackrel{(i i i)}{\leq} \Psi^{(T_1)}+\widetilde{O}(\eta m^{-1} \sigma_{\xi}^2 d) (\sigma_0\|\bm{\xi}\|_2)^{q-1} \sum_{s=T_1}^{t-1} L_S(\mathbf{W}^{(s)}) \\
    &  \stackrel{(i v)}{\leq} \Psi^{(T_1)}+\widetilde{O}(m^2 \operatorname{SNR}^{-2}) (\sigma_0\|\bm{\xi}\|_2)^{q-1} \\
    &  \stackrel{(v)}{\leq} (\sigma_0\|\bm{\xi}\|_2)+\widetilde{O}(m^2 (n\varepsilon)^2 (n\varepsilon)^{-q-3-2/q}) (\sigma_0\|\bm{\xi}\|_2) \\
    & \stackrel{( vi)}{\leq} 2 (\sigma_0\|\bm{\xi}\|_2).
    \end{align*}
    Here, $(i)$ holds due to induction hypothesis; $(ii)$ is by $|\ell^{\prime}| \leq \ell$, $(iii)$ comes from $\max _i \ell_i^{(s)} \leq \sum_i \ell_i^{(s)}=n L_D(\mathbf{W}^{(s)})$; $({iv})$ is due to $\sum_{s=T_1}^{t-1} L_D(\mathbf{W}^{(s)}) \leq \sum_{s=T_1}^T L_D(\mathbf{W}^{(s)})=\widetilde{O}(\eta^{-1} m^3\|\mathbf{v}\|_2^2)$ from the choice of $T$; $(v)$ is by the conditions of $\sigma_0$ and SNR; $(vi)$ is due to $(n\varepsilon)^{q+1} \geq m$.
\end{proof}

Now we consider the generalization performance of the privately trained model. Given a new data point $(\mathbf{x}, y)$ drawn from the distribution defined in \cref{def:data_dis}, we assume $\mathbf{x}=[y \mathbf{v}, \boldsymbol{\xi}]$ without loss of generality.

\begin{lemma}\label{lem:max_w_xi}
     Under the same conditions as signal learning, we have that $\max _{j, r}|\langle\mathbf{w}_{j, r}^{(t)}, \boldsymbol{\xi}_i\rangle| \leq 1 / 2$ for all $0 \leq t \leq T$.
\end{lemma}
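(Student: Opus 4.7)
The plan is to expand $\mathbf{w}_{j,r}^{(t)}$ via the signal--noise decomposition from \cref{def:decom_coefficient} and take its inner product with $\bm{\xi}_i$. Since $\bm{\xi}_i \sim \mathcal{N}(0,\sigma_\xi^2 \mathbf{H})$ with $\mathbf{H}=\mathbf{I}-\mathbf{v}\mathbf{v}^\top/\|\mathbf{v}\|_2^2$, we have $\langle \mathbf{v},\bm{\xi}_i\rangle=0$, so the signal coefficient $\Gamma_{j,r}^{(t)}$ drops out and we obtain
\begin{equation*}
\langle\mathbf{w}_{j,r}^{(t)},\bm{\xi}_i\rangle
= \langle\mathbf{w}_{j,r}^{(0)},\bm{\xi}_i\rangle
+ \Phi_{j,r,i}^{(t)}
+ \sum_{i'\neq i}\Phi_{j,r,i'}^{(t)}\,\frac{\langle\bm{\xi}_{i'},\bm{\xi}_i\rangle}{\|\bm{\xi}_{i'}\|_2^{2}}
- \eta\sum_{s=1}^{t}\langle\mathbf{z}_s,\bm{\xi}_i\rangle.
\end{equation*}
The strategy is to bound each of the four summands by $1/8$ using results already established, so that the total is at most $1/2$.

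For the first term, \cref{lem:w0_v_xi} gives $|\langle\mathbf{w}_{j,r}^{(0)},\bm{\xi}_i\rangle| \leq 2\sqrt{\log(8mn/\delta)}\,\sigma_0\sigma_\xi\sqrt{d}$. For the self-term, the second-stage analysis in \cref{thm:main_signal_loss_app} shows that throughout $[0,T]$, $|\Phi_{j,r,i}^{(t)}|\leq 2\sigma_0\|\bm{\xi}\|_2 = O(\sigma_0\sigma_\xi\sqrt{d})$. For the cross-terms, \cref{lem:xi_bound} furnishes $|\langle\bm{\xi}_{i'},\bm{\xi}_i\rangle|\leq 2\sigma_\xi^2\sqrt{d\log(4n^2/\delta)}$ and $\|\bm{\xi}_{i'}\|_2^2\geq \sigma_\xi^2 d/2$, so each cross-term carries a factor $O(\sqrt{\log(4n^2/\delta)/d})$; summing over $i'\neq i$ yields $\widetilde{O}(n\sigma_0\sigma_\xi\sqrt{\log(n/\delta)/d}\cdot\sigma_\xi\sqrt{d}) = \widetilde{O}(n\sigma_0\sigma_\xi)$, which is $\ll 1$ under the overparameterization condition $d=\widetilde{\Omega}(n^4)$ in \cref{con:signal_learning}. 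The fourth term, the accumulated private-noise projection, is controlled directly by \cref{lem:xi_zt2}, and more crucially by the parameter-choice inequality \cref{eq:beta_alpha_1}, which was imposed precisely so that the cumulative quantity $\eta CT_p^{*}\mu\log(1/\delta)(\|\mathbf{v}\|_2+\|\bm{\xi}\|_2)/(mn\varepsilon)$ stays below $1/4$.

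Finally, substituting the signal-learning upper bound on $\sigma_0$ in \cref{con:signal_learning}—in particular $\sigma_0 \leq \widetilde{O}(\varepsilon^{-1/q}\|\bm{\xi}\|_2^{-1})$ and $\sigma_0\leq \widetilde{O}((n\varepsilon)^{-(q+1)/q}\|\bm{\xi}\|_2^{-1})$—ensures that $\sigma_0\sigma_\xi\sqrt{d} = \sigma_0\|\bm{\xi}\|_2\cdot\widetilde{O}(1) \ll 1$, which bounds the first three summands by $1/8$ each; together with the bound on the private-noise term this gives the claim uniformly over $0\leq t\leq T$ and $j,r$.

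The main obstacle is the private-noise accumulation, since $t$ can be as large as $T=\Theta(T_p^{*})$ and the Gaussian sum $\eta\sum_{s=1}^t \langle\mathbf{z}_s,\bm{\xi}_i\rangle$ scales like $\eta\sqrt{tT}\|\bm{\xi}\|_2^2/(mn\varepsilon)$. The cleanest route is not to re-derive a high-probability bound in isolation but to invoke the pre-packaged condition~\cref{eq:beta_alpha_1}, which guarantees the needed smallness by construction of $T_p^{*}$; a minor verification is required that the $\log(1/\delta)$ factor appearing in \cref{lem:xi_zt2} can be absorbed into the $\widetilde{O}(1)$ on the right-hand side, which holds because $\delta$ is chosen polynomially small in $1/d$ as in the distributional DP guarantee stated after \cref{lem:xi_zt2_main}.
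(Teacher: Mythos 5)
Your proposal is correct and follows essentially the same route as the paper: decompose $\langle\mathbf{w}_{j,r}^{(t)},\bm{\xi}_i\rangle$ via \cref{def:decom_coefficient}, use $\langle\mathbf{v},\bm{\xi}_i\rangle=0$ to drop the signal coefficient, and bound the four residual pieces (initialization, self-coefficient, cross-terms, accumulated private noise) using \cref{lem:w0_v_xi}, the second-stage coefficient bound, \cref{lem:xi_bound}, and \cref{lem:xi_zt2}/\cref{eq:beta_alpha_1}. The only minor difference is that for the cross-terms the paper routes through \cref{lem:w_t-w_0_xi} and the coarse uniform bound $\Phi\le\alpha$ from \cref{pro:bounds_of_coeeficients}, producing $8n\sqrt{\log(4n^2/\delta)/d}\,\alpha$, whereas you use the sharper second-stage bound $|\Phi_{j,r,i'}^{(t)}|\le 2\sigma_0\|\bm{\xi}\|_2$ to get $\widetilde O(n\sigma_0\sigma_\xi)$; both are negligible under $d=\widetilde\Omega(n^4)$, so the two arguments close identically.
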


\begin{proof}[\bf Proof of \cref{lem:max_w_xi}]
    According to the signal-noise decomposition, the private model satisfies:
    \begin{align}\nonumber
        \mathbf{w}_{j, r}^{(t)}&=\mathbf{w}_{j, r}^{(0)}+j \cdot \Gamma_{j, r}^{(t)} \cdot\|\mathbf{v}\|_2^{-2} \cdot \mathbf{v}+\sum_{i=1}^n \bar{\Phi}_{j, r, i}^{(t)} \cdot\|\bm{\xi}_i\|_2^{-2} \cdot \bm{\xi}_i +\sum_{i=1}^n \underline{\Phi}_{j, r, i}^{(t)} \cdot\|\bm{\xi}_i\|_2^{-2} \cdot \bm{\xi}_i - \eta \sum_{s=1}^{t} \mathbf{z}_s.
    \end{align}
    Then, we have
    \begin{align*}
    |\langle\mathbf{w}_{j, r}^{(t)}, \boldsymbol{\xi}_i\rangle| & \stackrel{(i)}{\leq}|\langle\mathbf{w}_{j, r}^{(0)}, \boldsymbol{\xi}_i\rangle|+|\underline{\Phi}_{j, r, i}^{(t)}|+|\bar{\Phi}_{j, r, i}^{(t)}|+8 n \sqrt{\frac{\log (4 n^2 / \delta)}{d}} \alpha +0.1 \\ 
    & \stackrel{(i i)}{\leq} 2 \sqrt{\log (8 m n / \delta)} \cdot \sigma_0 \sigma_{\xi} \sqrt{d}+\sigma_0 \sigma_{\xi} \sqrt{d}+8 n \sqrt{\frac{\log (4 n^2 / \delta)}{d}} \alpha + 0.1\\ 
    & \stackrel{(i i i)}{\leq} 1 / 2 .
    \end{align*}
    Here, $(i)$ holds due to \cref{lem:xi_zt2} and the assumption of training iterations; $(ii)$ comes from \cref{thm:main_noise_loss_app}; $(iii)$ is driven from the condition of $\sigma_0$ and the assumptions of $n\varepsilon, d$.
\end{proof}

\begin{lemma}\label{lem:prob_w_xi}
    Under the same assumptions as Theorem 4.3, with probability at least  $1 -4 m T \exp (-C_1^{-1} \sigma_0^{-2} \sigma_{\xi}^{-2} d^{-1})$, we have $\max _{j, r}|\langle\mathbf{w}_{j, r}^{(t)}, \boldsymbol{\xi}\rangle| \leq 1 / 2$ for all $0 \leq t \leq T$, where $C_1=$ $\widetilde{O}(1)$.
\end{lemma}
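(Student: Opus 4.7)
The plan is to exploit the orthogonality $\langle \mathbf{v}, \bm{\xi}\rangle = 0$ built into the data distribution (via $\mathbf{H} = \mathbf{I}-\mathbf{v}\mathbf{v}^\top/\|\mathbf{v}\|_2^2$) and then reduce the claim to a Gaussian tail bound for a one-dimensional projection of the fresh test noise. Concretely, I would condition on the full training-side randomness (initialization $\mathbf{W}^{(0)}$, training noises $\{\bm{\xi}_i\}$, and all private noises $\{\mathbf{z}_s\}_{s\le T}$), so that the iterates $\mathbf{w}_{j,r}^{(t)}$ become deterministic. Since the test $\bm{\xi} \sim \mathcal{N}(0,\sigma_\xi^2 \mathbf{H})$ is independent of these quantities, the scalar $\langle \mathbf{w}_{j,r}^{(t)}, \bm{\xi}\rangle$ is Gaussian with mean $0$ and variance $\sigma_\xi^2\, \mathbf{w}_{j,r}^{(t)\top}\mathbf{H}\mathbf{w}_{j,r}^{(t)}$.

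Next I would bound this conditional variance. Plugging the signal-noise decomposition of \cref{def:decom_coefficient} into $\mathbf{H}\mathbf{w}_{j,r}^{(t)}$ kills the signal component (since $\mathbf{H}\mathbf{v}=\mathbf{0}$), leaving
\begin{equation*}
\mathbf{H}\mathbf{w}_{j,r}^{(t)} = \mathbf{H}\Bigl(\mathbf{w}_{j,r}^{(0)} + \sum_{i=1}^n \Phi_{j,r,i}^{(t)} \|\bm{\xi}_i\|_2^{-2}\bm{\xi}_i - \eta\sum_{s=1}^t \mathbf{z}_s\Bigr).
\end{equation*}
Using $\|\mathbf{H}\|_{\mathrm{op}}\le 1$ together with the triangle inequality, I would bound $\|\mathbf{H}\mathbf{w}_{j,r}^{(t)}\|_2$ by the sum of (i) $\|\mathbf{w}_{j,r}^{(0)}\|_2 \le 2\sigma_0\sqrt{d}$ from Gaussian norm concentration, (ii) $\sum_i |\Phi_{j,r,i}^{(t)}|/\|\bm{\xi}_i\|_2 = \widetilde{O}\bigl(n/(\sigma_\xi\sqrt{d})\bigr)$ using \cref{pro:bounds_of_coeeficients} and \cref{lem:xi_bound}, and (iii) the accumulated private-noise norm $\eta\|\sum_s \mathbf{z}_s\|_2$, which under the scaling $\sigma_z \asymp \sigma_0/(\eta\mu\sqrt{T})$ used in \cref{thm:main_signal_loss} is also $\widetilde{O}(\sigma_0\sqrt{d})$. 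Under the conditions of signal learning (large $d$, lower bound on $\sigma_0$), term (i) dominates, so $\|\mathbf{H}\mathbf{w}_{j,r}^{(t)}\|_2^2 \le C_1 \sigma_0^2 d$ for some $C_1 = \widetilde{O}(1)$ on a training-side event of probability $1-1/d$.

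Finally, on that event, the Gaussian tail bound gives
\begin{equation*}
\Pr\bigl(|\langle\mathbf{w}_{j,r}^{(t)},\bm{\xi}\rangle| > 1/2 \,\big|\, \text{training data}\bigr) \le 2\exp\bigl(-C_1^{-1}\sigma_0^{-2}\sigma_\xi^{-2}d^{-1}\bigr),
\end{equation*}
and a union bound over $j\in\{\pm 1\}$, $r\in[m]$, and $t\in\{0,\dots,T\}$ (a factor $2\cdot m\cdot(T+1)$, with the extra factor $2$ absorbed into the two-sided tail to give $4mT$) yields the claim. Integrating out the training-side randomness with the $1-1/d$ event then gives the stated bound.

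The main obstacle is the variance bookkeeping in step two: one must verify that the initialization contribution $\sigma_0^2 d$ really dominates both the noise-memorization term (which after dividing by $\|\bm{\xi}_i\|_2$ shrinks by a factor $\sqrt{d}$, but is summed over $n$ indices) and the cumulative privacy noise $\eta\sum_s\mathbf{z}_s$, whose scale depends on the relationship between $\sigma_z$, $\eta$, $T$, and $T_p^*$ derived earlier. This requires carefully tracking which parameter regime from \cref{con:signal_learning} is being invoked—in particular the lower bound $\sigma_0\ge\widetilde{\Omega}((n\varepsilon)^{-1/q}\|\mathbf{v}\|_2^{-1})$ and the upper bound on $\eta$—to make sure the dominant term really is $\sigma_\xi^2\|\mathbf{w}_{j,r}^{(0)}\|_2^2$ so that the advertised exponent $-C_1^{-1}\sigma_0^{-2}\sigma_\xi^{-2}d^{-1}$ matches.
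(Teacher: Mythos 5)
Your high-level plan is essentially the paper's: use the orthogonality of $\bm{\xi}$ to $\mathbf{v}$ to kill the signal component (the paper subtracts $j\Gamma_{j,r}^{(t)}\mathbf{v}/\|\mathbf{v}\|_2^2$ to form $\widetilde{\mathbf{w}}_{j,r}^{(t)}$, which is exactly your $\mathbf{H}\mathbf{w}_{j,r}^{(t)}$ acting on the inner product with $\bm{\xi}$), bound the remaining norm by $\widetilde{O}(\sigma_0\sqrt{d})$, apply the Gaussian tail, and union-bound over $j,r,t$.

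There is, however, a concrete gap in your variance bookkeeping at step (ii), and it is exactly the spot you flag as "the main obstacle." You bound $\sum_i|\Phi_{j,r,i}^{(t)}|/\|\bm{\xi}_i\|_2$ by invoking \cref{pro:bounds_of_coeeficients}, which only gives $|\Phi_{j,r,i}^{(t)}|\le\alpha=\widetilde{O}(1)$, yielding $\widetilde{O}\bigl(n/(\sigma_\xi\sqrt d)\bigr)$. It is not true in general (under \cref{con:signal_learning} alone) that $n/(\sigma_\xi\sqrt d)\lesssim\sigma_0\sqrt d$: that would require $\sigma_0\gtrsim n/(\sigma_\xi d)$, which does not follow from the lower bound $\sigma_0\ge\widetilde{\Omega}\bigl((n\varepsilon)^{-1/q}\|\mathbf{v}\|_2^{-1}\bigr)$ without extra relations between $\sigma_\xi$, $\|\mathbf{v}\|_2$, $n$, and $\varepsilon$. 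So term (i) need not dominate under your cited bound, and the advertised exponent $-C_1^{-1}\sigma_0^{-2}\sigma_\xi^{-2}d^{-1}$ would not follow. The paper instead uses the \emph{signal-learning} bound $|\Phi_{j,r,i}^{(t)}|=O(\sigma_0\sigma_\xi\sqrt d)$ from \cref{lem:signal_first_stage} together with its extension to $t\in[T_1,T]$ in the proof of \cref{thm:main_signal_loss} (which is what the phrase "under the same assumptions as Theorem 4.3" activates). This yields $\sum_i|\Phi_{j,r,i}^{(t)}|/\|\bm{\xi}_i\|_2=O(n\sigma_0)$, and then $n\sigma_0\lesssim\sigma_0\sqrt d$ follows immediately from $d\ge\widetilde{\Omega}(m^2n^4)$. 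Your private-noise term (iii) is fine. So the argument as written has a real hole, but the fix is local: replace the Proposition~\ref{pro:bounds_of_coeeficients} bound on $|\Phi|$ with the sharper signal-learning bound.
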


\begin{proof}[\bf Proof of \cref{lem:prob_w_xi}]
    Define $\widetilde{\mathbf{w}}_{j, r}^{(t)}=\mathbf{w}_{j, r}^{(t)}-j \cdot \Gamma_{j, r}^{(t)} \cdot \frac{\mathbf{v}}{\|\mathbf{v}\|_2^2}$. It follows that $\langle\widetilde{\mathbf{w}}_{j, r}^{(t)}, \boldsymbol{\xi}\rangle=\langle\mathbf{w}_{j, r}^{(t)}, \boldsymbol{\xi}\rangle$.
    Additionally, we have:
    $$
    \|\widetilde{\mathbf{w}}_{j, r}^{(t)}\|_2 \leq \widetilde{O}(\sigma_0 \sqrt{d}+n \sigma_0 + \sigma_0\sigma_{\xi} \sqrt{d})=\widetilde{O}(\sigma_0 \sqrt{d}),
    $$
    where the equality holds due to the condition $d \geq \widetilde{\Omega}(m^2 n^4)$ and the analysis in \cref{lem:signal_first_stage}.
    Thus, we know that $\max _{j, r}\|\widetilde{\mathbf{w}}_{j, r}^{(t)}\|_2 \leq C_1 \sigma_0 \sqrt{d}$, where $C_1=\widetilde{O}(1)$. Since $\langle\widetilde{\mathbf{w}}_{j, r}^{(t)}, \boldsymbol{\xi}\rangle$ follows a Gaussian distribution with mean zero and standard deviation bounded by $C_1 \sigma_0 \sigma_{\xi} \sqrt{d}$, the probability of deviation can be bounded as:
    $$\mathbb{P}(|\langle\widetilde{\mathbf{w}}_{j, r}^{(t)}, \boldsymbol{\xi}\rangle| \geq 1 / 2) \leq 2 \exp (-\frac{1}{8 C_1^2 \sigma_0^2 \sigma_{\xi}^2 d})
    $$
    By applying a union bound over all indices $j, r$, and $t$, the proof is complete.
\end{proof}

\begin{lemma}[Restatement of \cref{coro:popu_sign}]
    Under the same conditions as above, for any $t \leq T$ with $L_D(\mathbf{W}^{(t)}) \leq \kappa$, with at least probability $1-1/d$, it holds that $L_{\mathcal{D}}(\mathbf{W}^{(t)}) \leq 6 \kappa+\exp (\varepsilon^{-2/q})$.
\end{lemma}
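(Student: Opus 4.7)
The plan is to bound the $0/1$ test error by isolating a ``good event'' on which a fresh noise patch $\bm\xi$ interacts weakly with every private weight vector, then leveraging the training-loss bound to force a correct sign with margin on that event. First I would set, for a fresh sample $(\mathbf{x},y)=([y\mathbf v,\bm\xi],y)\sim\mathcal D$,
\[
E:=\bigl\{\max_{j\in\{\pm1\},\,r\in[m]}|\langle\mathbf w_{j,r}^{(t)},\bm\xi\rangle|\le 1/2\bigr\},
\]
and invoke \cref{lem:prob_w_xi} to bound $\Pr[\bar E]\le 4mT\exp(-C_1^{-1}\sigma_0^{-2}\sigma_\xi^{-2}d^{-1})$. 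Plugging in the upper bound $\sigma_0\le\widetilde O(\varepsilon^{-1/q}\|\bm\xi\|_2^{-1})$ from \cref{con:signal_learning} together with $\sigma_\xi^2 d\asymp\|\bm\xi\|_2^2$ yields the small tail term asserted in the corollary.

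Next I would decompose the margin as $y f(\mathbf W^{(t)},\mathbf x)=S_y+N(\bm\xi)$, where the signal contribution $S_y:=F_y(\mathbf W_y^{(t)},[y\mathbf v,0])-F_{-y}(\mathbf W_{-y}^{(t)},[y\mathbf v,0])$ depends only on the label, and $N(\bm\xi)$ collects the $\sigma(\langle\mathbf w_{j,r}^{(t)},\bm\xi\rangle)$ terms. On $E$ the polynomial activation gives $|N(\bm\xi)|\le 2^{1-q}\le 1/2$, and by \cref{lem:max_w_xi} the same bound holds at every training noise $\bm\xi_i$. The identical decomposition at training points, $y_i f(\mathbf W^{(t)},\mathbf x_i)=S_{y_i}+N(\bm\xi_i)$, is the bridge from training loss to test margin.

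To lower bound $S_{\pm1}$ I would combine the training-loss constraint with \cref{lem:pos_data_number}: each class contains at least $n/4$ samples, so averaging within each class and applying Jensen's inequality to the convex logistic loss yields $\ell(S_{\pm1}+\bar N_\pm)\le 4 L_D(\mathbf W^{(t)})\le 4\kappa$, where $\bar N_\pm$ is the class-wise average of $N(\bm\xi_i)$ and satisfies $|\bar N_\pm|\le 1/2$. Inverting $\ell$ gives $S_{\pm1}\ge\log(1/\kappa)-O(1)$. Substituting back into the test decomposition on $E$ produces $y f(\mathbf W^{(t)},\mathbf x)\ge\log(1/\kappa)-1$, so $\mathbb 1[yf<0]\le 2\ell(yf)\le O(\kappa)$; carefully tracking constants through this step yields the stated $6\kappa$. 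Adding the $\Pr[\bar E]$ bound from the first step closes the argument.

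The step I expect to require the most care is the tail bound on $\bar E$, because $\mathbf w_{j,r}^{(t)}$ itself contains the accumulated private-noise term $-\eta\sum_s\mathbf z_s$, which is independent of the fresh $\bm\xi$ but has random magnitude. Conditioning on $\{\mathbf z_s\}$ and subtracting the signal component $\Gamma_{j,r}^{(t)}\|\mathbf v\|_2^{-2}\mathbf v$ (which is orthogonal to $\bm\xi$ via $\mathbf H$), the inner product $\langle\mathbf w_{j,r}^{(t)},\bm\xi\rangle$ becomes a centered Gaussian whose standard deviation must still be shown to be $\widetilde O(\sigma_0\sigma_\xi\sqrt d)$---this is exactly where the lower bound $\sigma_0\gtrsim(n\varepsilon)^{-1/q}\|\mathbf v\|_2^{-1}$ from \cref{con:signal_learning} is essential, keeping the private-noise contribution subdominant against the initialization term in $\|\widetilde{\mathbf w}_{j,r}^{(t)}\|_2$ so that the Gaussian tail collapses to the rate claimed in the statement.
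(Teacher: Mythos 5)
The proposal is correct in its overall structure but takes a visibly different route from the paper. For the good-event contribution, the paper invokes Lemma~D.8 of \cite{cao2022benign} as a black box to get $\mathbb{E}[\mathbf{1}(\mathcal{K})\,\ell(yf)]\le 6 L_D(\mathbf{W}^{(t)})$, whereas you open this up into an explicit margin argument: decompose $yf=S_y+N(\bm\xi)$, bound $|N|$ on the good event, use \cref{lem:pos_data_number} and Jensen on the convex logistic loss to convert the training-loss bound $\le\kappa$ into a lower bound on $S_{\pm1}$, then transfer this margin to a fresh test point. That is essentially what Lemma~D.8 proves internally, so you are re-deriving the cited result rather than citing it---a perfectly valid substitute. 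For the bad event, the paper bounds the \emph{expected logistic loss} $\mathbb{E}[\mathbf{1}(\mathcal{K}^c)\ell(yf)]$ via Cauchy--Schwarz (needed because $\ell$ is unbounded), while you bound the $0/1$ error by $\Pr[\bar E]$ directly since the indicator is at most $1$; this is cleaner, though it proves a bound on the $0/1$ error rather than the logistic loss that the paper's appendix computation actually controls (the paper is itself loose about which quantity $L_\mathcal{D}$ denotes). Two small imprecisions: (i) your explicit margin argument actually shows the margin is \emph{positive} on $E$ once $\kappa$ is small, so the $0/1$ good-event contribution is $0$ rather than $O(\kappa)$, and the stated $6\kappa$ does not arise from your chain of inequalities---it is inherited from the paper's logistic-loss accounting via Lemma~D.8, not from ``tracking constants'' through your Jensen step; (ii) in your last paragraph, the subdominance of the accumulated private noise in $\|\widetilde{\mathbf w}_{j,r}^{(t)}\|_2$ comes from the calibration $\sigma_z=\sigma_0/(\eta\mu\sqrt T)$ with $\mu\ge 1$, not from the \emph{lower} bound $\sigma_0\gtrsim(n\varepsilon)^{-1/q}\|\mathbf v\|_2^{-1}$, whose role is to ensure the signal stage $T_1$ terminates inside $T_p^*$; only the \emph{upper} bound on $\sigma_0$ is what drives the tail-rate exponent.
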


\begin{proof}[\bf Proof of \cref{coro:popu_sign}]
    Let $\mathcal{K}$ represent the event where \cref{lem:prob_w_xi} holds. We can partition $L_{\mathcal{D}}(\mathbf{W}^{(t)})$ into two components:
    $$
    \mathbb{E}[\ell(y f(\mathbf{W}^{(t)}, \mathbf{x}))]=\underbrace{\mathbb{E}[\mathbf{1}(\mathcal{K}) \ell(y f(\mathbf{W}^{(t)}, \mathbf{x}))]}_{I_1}+\underbrace{\mathbb{E}[\mathbf{1}(\mathcal{K}^c) \ell(y f(\mathbf{W}^{(t)}, \mathbf{x}))]}_{I_2}
    $$
    We will now bound $I_1$ and $I_2$ separately. The term $I_1$ can be bounded by $6 L_D(\mathbf{W}^{t}) \leq \kappa$ according to Lemma D.8 in \cite{cao2022benign}.
    Next, we bound the second term $I_2$. We select an arbitrary training data point ( $\mathbf{x}_{i^{\prime}}, y_{i^{\prime}}$ ) such that $y_{i^{\prime}}=y$. Then, we have:
    $$
    \begin{aligned}
    \ell(y f(\mathbf{W}^{(t)}, \mathbf{x})) & \leq \log (1+\exp (F_{-y}(\mathbf{W}^{(t)}, \mathbf{x}))) \\
    & \overset{(i)}{\leq} 1+F_{-y}(\mathbf{W}^{(t)}, \mathbf{x}) \\
    & \overset{(ii)}{=}1+\frac{1}{m} \sum_{j=-y, r \in[m]} \sigma(\langle\mathbf{w}_{j, r}^{(t)}, y \mathbf{v}\rangle)+\frac{1}{m} \sum_{j=-y, r \in[m]} \sigma(\langle\mathbf{w}_{j, r}^{(t)}, \boldsymbol{\xi}\rangle) \\
    &\overset{(iii)}{\leq} 1+F_{-y_{i^{\prime}}}(\mathbf{W}_{-y_{i^{\prime}}}, \mathbf{x}_{i^{\prime}})+\frac{1}{m} \sum_{j=-y, r \in[m]} \sigma(\langle\mathbf{w}_{j, r}^{(t)}, \boldsymbol{\xi}\rangle) \\
    & \overset{(v)}{\leq} 2+\frac{1}{m} \sum_{j=-y, r \in[m]} \sigma(\langle\mathbf{w}_{j, r}^{(t)}, \boldsymbol{\xi}\rangle) \\
    & \overset{(iv)}{\leq} 2+\widetilde{O}((\sigma_0 \sqrt{d})^q)\|\boldsymbol{\xi}\|^q.
    \end{aligned}
    $$
    Here, $(i)$ is due to $F_y(\mathbf{W}^{(t)}, \mathbf{x}) \geq 0$; $(ii)$ follows from the property of the logarithmic function; $(iii)$ holds due to $$\frac{1}{m} \sum_{j=-y, r \in[m]} \sigma(\langle\mathbf{w}_{j, r}^{(t)}, y \mathbf{v}\rangle) \leq F_{-y}(\mathbf{W}_{-y}, \mathbf{x}_{i^{\prime}})=F_{-y_{i^{\prime}}}(\mathbf{W}_{-y_{i^{\prime}}}, \mathbf{x}_{i^{\prime}});$$ $(v)$ is by \cref{lem:jneqy_F_1}; $(iv)$ comes from \cref{lem:prob_w_xi} that $ \|\widetilde{\mathbf{w}}_{j, r}^{(t)}\|_2 \leq \widetilde{O}(\sigma_0 \sqrt{d})$.
    Therefore, we can bound term 2 as follows:
    \begin{align*}
    I_2 & \overset{(i)}{\leq} \sqrt{\mathbb{E}[\mathbbm{1}(\mathcal{K}^c)]} \cdot \sqrt{\mathbb{E}[\ell(y f(\mathbf{W}^{(t)}, \mathbf{x}))^2]} \\
    & \overset{(ii)}{\leq} \sqrt{\mathbb{P}(\mathcal{K}^c)} \cdot \sqrt{4+\widetilde{O}((\sigma_0 \sqrt{d})^{2 q}) \mathbb{E}[\|\boldsymbol{\xi}\|_2^{2 q}]} \\
    & \overset{(iii)}{\leq} \exp [-\widetilde{\Omega}(\sigma_0^{-2} \sigma_{\xi}^{-2} d^{-1})+\operatorname{poly} \log (d)] \\
    & \overset{(v)}{\leq} \exp ({(n\varepsilon)}^{-1-1/q}).
    \end{align*}
    Here, $(i)$ holds due to Cauchy-Schwartz inequality; $(ii)$ and $(iii)$ come from the fact $\sqrt{4+\widetilde{O}((\sigma_0 \sqrt{d})^{2 q}) \mathbb{E}[\|\boldsymbol{\xi}\|_2^{2 q}]}=O(\operatorname{poly}(d))$ and \cref{lem:prob_w_xi}; $(v)$ is by the condition of $\sigma_0 \leq (n\varepsilon)^{-1-1/q} \|\mathbf{\xi}\|_2^{-1}$. This completes the proof.  
\end{proof}

\section{Noise Memorization}

\begin{lemma}\label{lem:sigma0_geq}
    Under the same conditions as noise memorization, then it holds that $\bar{\beta} \geq \sigma_0 \sigma_{\xi} \sqrt{d} / 4 \geq 20 n \sqrt{\frac{\log (4 n^2 / \delta)}{d}} \alpha$, if we have
    $
    \sigma_0 \geq 80 n \sqrt{\frac{\log (4 n^2 / \delta)}{d}} \alpha \cdot \min \{(\sigma_{\xi} \sqrt{d})^{-1},\|\mathbf{v}\|_2^{-1}\}.
    $

\end{lemma}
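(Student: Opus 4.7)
The plan is to reduce the statement to two elementary ingredients: an initialization-level lower bound on $\bar{\beta}$, and a one-line algebraic comparison that uses the standing noise-memorization conditions to resolve the minimum on the right-hand side. No dynamics, optimization, or privacy accounting are needed.

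First, I would invoke \cref{lem:w0_v_xi}, whose high-probability event is already absorbed into the standing hypotheses. The relevant display there reads $\sigma_0 \sigma_{\xi} \sqrt{d}/4 \leq \max_{r \in [m]} j\cdot\langle \mathbf{w}_{j,r}^{(0)}, \bm{\xi}_i\rangle$ uniformly in $j \in \{\pm 1\}$ and $i \in [n]$. Since $\bar{\beta}$ is, by construction, the positive counterpart of the quantity $\beta$ introduced above \cref{pro:bounds_of_coeeficients} (i.e.\ the maximum over $j,r,i$ of the signed initial inner products with the $\bm{\xi}_i$'s), it is at least the above maximum, so taking a further maximum over $j,i$ immediately yields the first asserted inequality $\bar{\beta} \geq \sigma_0 \sigma_{\xi}\sqrt{d}/4$.

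For the second inequality $\sigma_0\sigma_{\xi}\sqrt{d}/4 \geq 20 n\sqrt{\log(4n^2/\delta)/d}\,\alpha$, the key step is to resolve the minimum in the hypothesis. In the noise-memorization regime of \cref{con:noise_memo} we have by assumption $\operatorname{SNR} = \|\mathbf{v}\|_2/\|\bm{\xi}\|_2 \leq 1$; combined with the concentration bound $\|\bm{\xi}_i\|_2 \asymp \sigma_{\xi}\sqrt{d}$ from \cref{lem:xi_bound}, this gives $\|\mathbf{v}\|_2 \leq \sigma_{\xi}\sqrt{d}$, hence $(\sigma_{\xi}\sqrt{d})^{-1} \leq \|\mathbf{v}\|_2^{-1}$ and therefore $\min\{(\sigma_{\xi}\sqrt{d})^{-1},\|\mathbf{v}\|_2^{-1}\} = (\sigma_{\xi}\sqrt{d})^{-1}$. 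Substituting this into the assumed lower bound on $\sigma_0$ and multiplying through by $\sigma_{\xi}\sqrt{d}/4$ closes the chain.

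The only subtle point, and hence the main obstacle, is the branch selection for the minimum: the stated condition on $\sigma_0$ is insufficient when $\|\mathbf{v}\|_2 > \sigma_{\xi}\sqrt{d}$, so the proof must explicitly invoke the noise-dominated regime to rule out that case. Everything else is direct algebra, and no new concentration inequalities beyond those already used in \cref{lem:w0_v_xi} and \cref{lem:xi_bound} are required, so no additional failure probability needs to be accounted for.
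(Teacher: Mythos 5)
Your proof is correct and follows essentially the same route as the paper's: lower-bound $\bar{\beta}$ by $\sigma_0\sigma_{\xi}\sqrt{d}/4$ via \cref{lem:w0_v_xi}, resolve the minimum on the right-hand side by establishing $\sigma_{\xi}\sqrt{d}\geq\|\mathbf{v}\|_2$ from the noise-memorization SNR condition, and then substitute this into the hypothesized lower bound on $\sigma_0$. The only small imprecision is calling $\operatorname{SNR}\leq 1$ an ``assumption'' of \cref{con:noise_memo}: it is not listed there, but is a consequence of the hypothesis $\operatorname{SNR}^{-q}\cdot n^{-1}\geq\widetilde{\Omega}(1)$ in \cref{lem:noise_first_stage} (equivalently $\sigma_{\xi}^q d^{q/2}\geq\widetilde{\Omega}(n\|\mathbf{v}\|_2^q)$), which the paper invokes in that latter form to obtain $\sigma_{\xi}\sqrt{d}\geq\|\mathbf{v}\|_2$ directly without routing through $\|\bm{\xi}_i\|_2\asymp\sigma_{\xi}\sqrt{d}$.
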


\begin{proof}[\bf Proof of \cref{lem:sigma0_geq}]
   Given the SNR condition in noise memorization $\sigma_{\xi}^q(\sqrt{d})^q \geq \widetilde{\Omega}(n\|\mathbf{v}\|_2^q)$, it follows that: $\sigma_{\xi} \sqrt{d} \geq\|\mathbf{v}\|_2$.
    Thus, we have:
    $$
    \begin{aligned}
    \bar{\beta}  \geq \frac{\sigma_0 \sigma_{\xi} \sqrt{d}}{4}  =\frac{\sigma_0}{4} \cdot \max \{\sigma_{\xi} \sqrt{d},\|\mathbf{v}\|_2\}  \geq 20 n \sqrt{\frac{\log (4 n^2 / \delta)}{d}} \alpha.
    \end{aligned}
    $$
where the first inequality follows from and the last inequality is a result of the lower bound condition on $\sigma_0$ stated in noise memorization. 
    
\end{proof}

\subsection{First Stage}

\begin{lemma}[Restatement of \cref{lem:noise_first_stage}]\label{lem:noise_first_stage_app}
    Under the same conditions as noise memorization, in particular, if we choose
\begin{equation}\label{eq:snr_noise}
    n^{-1} \operatorname{SNR}^{-q} \geq \frac{C 2^{q+2} \log (20 /(\sigma_0 \sigma_{\xi} \sqrt{d}))(\sqrt{2 \log (8 m / \delta)})^{q-2}}{0.15^{q-2}}, \frac{ \varepsilon}{ (1+\operatorname{SNR})} \geq \frac{C \log(10/\sigma_0\sigma_{\xi}\sqrt{d})}{0.15^{q-2}q}
\end{equation}
where $C=O(1)$ is a positive constant, then there exist
$$
T_1=\frac{C \log (10 /(\sigma_0 \sigma_{\xi} \sqrt{d})) 4 m n}{0.15^{q-2} \eta q \sigma_0^{q-2}(\sigma_{\xi}^2 \sqrt{d})^q}
$$
such that
\begin{itemize}
    \item $\max _{j, r} \bar{\Phi}_{j, r, i}^{(T_1)} \geq 2$ for all $i \in[n]$.
    \item $\max _{j, r} \Gamma_{j, r}^{(t)}=\widetilde{O}(\sigma_0\|\mathbf{v}\|_2)$ for all $0 \leq t \leq T_1$.
    \item $\max _{j, r, i}|\underline{\Phi}_{j, r, i}^{(t)}|=\widetilde{O}(\sigma_0 \sigma_{\xi} \sqrt{d})$ for all $0 \leq t \leq T_1$.
\end{itemize}
\end{lemma}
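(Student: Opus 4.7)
The plan is to mirror the proof of \cref{lem:app_signal_first_stage} with the roles of the signal $\mathbf{v}$ and the patch noise $\bm{\xi}_i$ swapped: in the noise-memorization regime $\operatorname{SNR}\ll 1$, so the per-step increment of $\bar{\Phi}$ driven by $\|\bm{\xi}_i\|_2^{2}\asymp\sigma_{\xi}^{2} d$ dominates the $\Gamma$-increment driven by $\|\mathbf{v}\|_2^{2}$. Concretely, I would define an admissible horizon
\[
T_1^{+}\;=\;\min\!\Big\{\tfrac{m\eta^{-1}\sigma_0^{2-q}\|\mathbf{v}\|_2^{-q}}{2^{q+4}q[4\log(8mn/\delta)]^{(q-1)/2}},\; \tfrac{\sigma_0\, m n\varepsilon}{\eta(\|\mathbf{v}\|_2+\|\bm{\xi}\|_2)},\; T_p^{*}\Big\},
\]
where the first piece will keep $\Gamma$ and $\underline{\Phi}$ small on $[0,T_1^{+}]$, and the second piece, via \cref{lem:xi_zt2}, absorbs the accumulated private-noise inner products $|\eta\sum_{s\le t}\langle\mathbf{z}_s,\mathbf{v}\rangle|$ and $|\eta\sum_{s\le t}\langle\mathbf{z}_s,\bm{\xi}_i\rangle|$ into fractions of $\sigma_0\|\mathbf{v}\|_2$ and $\sigma_0\sigma_{\xi}\sqrt{d}$, respectively, exactly as in the inequality \eqref{eq:beta_alpha_1}.

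Next, I would close two ``smallness'' invariants by induction on $t\le T_1^{+}$: $\max_{j,r}\Gamma_{j,r}^{(t)}\le\widetilde{O}(\sigma_0\|\mathbf{v}\|_2)$ and $\max_{j,r,i}|\underline{\Phi}_{j,r,i}^{(t)}|\le\widetilde{O}(\sigma_0\sigma_{\xi}\sqrt{d})$. Using \cref{lem:the_evolution_of_coefficient_main} with the trivial bound $|\ell_i^{\prime(t)}|\le 1$, each $\Gamma$-increment is at most $\tfrac{\eta q}{m}\|\mathbf{v}\|_2^{2}(\widetilde{O}(\sigma_0\|\mathbf{v}\|_2))^{q-1}$ and each $\underline{\Phi}$-increment is at most $\tfrac{\eta q}{nm}\|\bm{\xi}_i\|_2^{2}(\widetilde{O}(\sigma_0\sigma_{\xi}\sqrt{d}))^{q-1}$; telescoping up to $t=T_1^{+}$ and invoking the first piece of $T_1^{+}$ closes the induction. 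With both invariants in hand, \cref{lem:F_1} gives $F_{\pm 1}(\mathbf{W}^{(t)},\mathbf{x}_i)=O(1)$ for every training point, hence there is a constant $C_{1}>0$ with $-\ell_i^{\prime(t)}\ge C_{1}$ throughout $[0,T_1^{+}]$.

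The core step is to push $\bar{\Phi}$ across the threshold $2$ for every $i\in[n]$. Fixing $i$ and $j=y_i$, set
\[
\widehat{\Phi}_{y_i,r,i}^{(t)}=\bar{\Phi}_{y_i,r,i}^{(t)}+\langle\mathbf{w}_{y_i,r}^{(0)},\bm{\xi}_i\rangle-\eta\sum_{s=1}^{t}\langle\mathbf{z}_s,\bm{\xi}_i\rangle,\qquad B_i^{(t)}=\max_{r}\widehat{\Phi}_{y_i,r,i}^{(t)}.
\]
By \cref{lem:w0_v_xi}, $B_i^{(0)}\ge\sigma_0\sigma_{\xi}\sqrt{d}/4$. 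Substituting the decomposition of $\mathbf{w}_{j,r}^{(t)}$ from \cref{def:decom_coefficient_main} into the $\bar{\Phi}$ update, using $-\ell_i^{\prime(t)}\ge C_{1}$, $\|\bm{\xi}_i\|_2^{2}\ge\sigma_{\xi}^{2}d/2$, and the $(q-1)$-homogeneity of $\sigma'$, I expect a recursion of the form
\[
B_i^{(t+1)}\ge\Big(1+\tfrac{C_{1}\eta q\,\sigma_0^{q-2}(\sigma_{\xi}^{2}\sqrt{d})^{q}}{4^{q-1}\,mn}\Big)B_i^{(t)}\;-\;\eta|\langle\mathbf{z}_{t+1},\bm{\xi}_i\rangle|.
\]
Unrolling this affine geometric recursion, using $1+z\ge e^{z/2}$ on the multiplicative factor and the closed-form geometric-sum formula combined with \cref{lem:xi_zt2} on the private-noise tail, and finally invoking the two conditions in \eqref{eq:snr_noise} (the first controls $n^{-1}\operatorname{SNR}^{-q}$ and the second controls $\varepsilon/(1+\operatorname{SNR})$), yields an exponential lower bound that crosses $2$ at exactly the stated $T_1$. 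A union bound over $i\in[n]$ gives the first bullet, and applying the two invariants on $[0,T_1]\subseteq[0,T_1^{+}]$ gives the other two.

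I expect the main obstacle to be this affine-geometric step: unlike the non-private analogue, the recursion for $B_i^{(t)}$ carries an inhomogeneous private-noise term $\eta\langle\mathbf{z}_{t+1},\bm{\xi}_i\rangle$ whose telescoped contribution grows at the same exponential rate as the signal part. Showing that this tail stays strictly below $B_i^{(0)}/2$ is precisely what forces the $\varepsilon$-dependent inequality in \eqref{eq:snr_noise}, and the argument must use the horizon-coupled concentration bound of \cref{lem:xi_zt2} (scaling as $\sqrt{tT}/\varepsilon$) together with the choice of $T_1^{+}$ so that $T_1$ fits strictly below all three branches of the defining minimum; without this coupling a naive per-step bound would leave a gap of order $\sqrt{t/\varepsilon}$ that eventually swamps $\sigma_0\sigma_{\xi}\sqrt{d}$.
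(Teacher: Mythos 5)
Your high-level structure matches the paper's: define an administrative horizon $T_1^{+}$, establish smallness invariants on $\Gamma$ and $\underline{\Phi}$ so that $-\ell_i' \ge C_1$, set up an affine-geometric recursion for $B_i^{(t)}$, unroll with $1+z\ge e^{z/2}$, absorb the private-noise tail via \cref{lem:xi_zt2}, and verify $T_1\le T_1^{+}$ from the two conditions in \eqref{eq:snr_noise}. All of that is exactly how the paper argues.

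The one step that does not close as stated is your telescoping bound on $\underline{\Phi}$. You telescope up to $T_1^{+}$ and invoke the \emph{first} branch of $T_1^{+}$, which scales as $m\eta^{-1}\sigma_0^{2-q}\|\mathbf{v}\|_2^{-q}$ and is the right horizon for controlling $\Gamma$. But in the noise-memorization regime $\|\mathbf{v}\|_2^{-q}\ge n(\sigma_\xi\sqrt d)^{-q}$, so that branch is longer than the natural $\underline{\Phi}$-control horizon $\sim nm\eta^{-1}\sigma_0^{2-q}(\sigma_\xi\sqrt d)^{-q}$ by a factor $\operatorname{SNR}^{-q}/n=\widetilde{\Omega}(1)$. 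Multiplying the per-step increment $\tfrac{\eta q}{nm}\|\bm{\xi}_i\|_2^{2}(\widetilde{O}(\sigma_0\sigma_\xi\sqrt d))^{q-1}$ by that horizon yields $\widetilde{O}\bigl(\sigma_0\sigma_\xi\sqrt d\cdot\operatorname{SNR}^{-q}/n\bigr)$, which can be arbitrarily larger than the claimed $\widetilde{O}(\sigma_0\sigma_\xi\sqrt d)$ — the induction breaks. The paper avoids telescoping $\underline{\Phi}$ entirely: it re-uses the case split in \cref{pro:bounds_of_coeeficients}. Once $\underline{\Phi}_{j,r,i}^{(t)}$ drops below roughly $-\beta-8n\sqrt{\log(4n^2/\delta)/d}\,\alpha-|\eta\sum_s\langle\mathbf{z}_s,\bm{\xi}_i\rangle|$, the pre-activation $\langle\mathbf{w}_{j,r}^{(t)},\bm{\xi}_i\rangle$ is nonpositive, $\sigma'$ vanishes, and $\underline{\Phi}$ is pinned; the bound $\widetilde{O}(\sigma_0\sigma_\xi\sqrt d)$ then follows from $\beta=\widetilde{O}(\sigma_0\sigma_\xi\sqrt d)$, \cref{lem:sigma0_geq}, and the DP-drift bound furnished by the \emph{second} branch of $T_1^{+}$. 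This saturation argument is horizon-independent. If you insist on telescoping, you must restrict to $[0,T_1]$ — the interval the lemma actually requires — rather than $[0,T_1^{+}]$, where the bound does in fact come out to $\widetilde{O}(\sigma_0\sigma_\xi\sqrt d)$ after cancellation against $T_1\sim mn/(\eta q\sigma_0^{q-2}(\sigma_\xi\sqrt d)^{q})$.
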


\begin{proof}[\bf Proof of \cref{lem:noise_first_stage_app}]
    First, let $T_1^{+}=\min \{\frac{m}{\eta q 2^{q-1}(\sqrt{2 \log (8 m / \delta)})^{q-2} \sigma_0^{q-2}\|\mathbf{v}\|_2^q}, \frac{\sigma_0 m n \varepsilon}{\eta (\|\mathbf{v}\|_2 + \|\bm{\xi}\|_2)} \}$. According to the proof of \cref{pro:bounds_of_coeeficients}, it  follows that $\underline{\Phi}_{j, r, i}^{(t)} \geq-\beta-16 n \sqrt{\frac{\log (4 n^2 / \delta)}{d}} \alpha - 0.2 $. Notably, the constant $0.2$ here is chosen for simplicity in the proof and can be replaced with any value. For example, if we take $\underline{\Phi}_{j, r, i}^{(t)} \geq-\beta-16 n \sqrt{\frac{\log (4 n^2 / \delta)}{d}} \alpha - \widetilde{O}(\sigma_0 \sigma_{\xi}\sqrt{d}) $, the proof of \cref{pro:bounds_of_coeeficients} still holds, provided that $T \leq \frac{\sigma_0 mn \varepsilon}{\eta(\|\mathbf{v}\|_2 + \|\bm{\xi}\|_2)}$. Moreover, we have $\Phi_{j, r, i}^{(t)} \leq 0$ and $\bar{\beta} \leq \beta=\widetilde{O}(\sigma_0 \sigma_{\xi} \sqrt{d})$. Therefore, $\max _{j, r, i}|\underline{\Phi}_{j, r, i}^{(t)}|=\widetilde{O}(\sigma_0 \sigma_{\xi} \sqrt{d})$.

    Now, we proceed to prove the dynamics of $\Gamma_{j, r}^{(t+1)}$. Similar to the signal learning, we define $A^{(t)}=\max _{j, r}\{\Gamma_{j, r}^{(t)}+|\langle\mathbf{w}_{j, r}^{(0)}, \mathbf{v}\rangle| - y_i \eta \sum_{s=1}^t\langle\mathbf{z}_s, \mathbf{v}\rangle \}$, then it holds that
    \begin{equation} \label{eq:noise_A_t}
    \begin{aligned}
        \Gamma_{j, r}^{(t+1)} & =\Gamma_{j, r}^{(t)}-\frac{\eta}{n m} \cdot \sum_{i=1}^n \ell_i^{\prime (t)} \cdot \sigma^{\prime}(\langle\mathbf{w}_{j, r}^{(t)}, y_i \cdot \mathbf{v}\rangle)\|\mathbf{v}\|_2^2 \\ 
        & \leq \Gamma_{j, r}^{(t)}+\frac{\eta}{n m} \cdot \sum_{i=1}^n \sigma^{\prime}(|\langle\mathbf{w}_{j, r}^{(0)}, \mathbf{v}\rangle|+\Gamma_{j, r}^{(t)} - \eta y_i \sum_{s=1}^t\langle\mathbf{z}_s, \mathbf{v}\rangle)\|\mathbf{v}\|_2^2 \\
        A^{(t+1)} & \leq A^{(t)} +\frac{\eta q \|\mathbf{v}\|_2^2}{m}[A^{(t)}]^{q-1} + \eta y_i \langle\mathbf{z}_{t+1}, \mathbf{v}\rangle.
    \end{aligned}
    \end{equation}
    Next, we will prove $A^{(0)} \leq 3 A^{(0)}$ for $t \leq T_1^{+}$ by induction. First, $A^{(0)} \leq 3 A^{(0)}$ holds at $t =0$ due to the definition and we assume it holds for $t$. Now suppose that there exists some $t \leq T_1^{+}$ such that $A^{(s)} \leq 2 A^{(0)}$ holds $0 \leq s \leq t-1$. Applying a telescoping sum to \cref{eq:noise_A_t} yields:
    \begin{align*}
    A^{t} & \leq A^{(0)}+\sum_{s=0}^{t} \frac{\eta q\|\mathbf{v}\|_2^2}{m}[A^{(s)}]^{q-1} + \sum_{s=0}^{t} \eta y_i \langle\mathbf{z}_{s+1}, \mathbf{v}\rangle  \\
    & \overset{(i)}{\leq} A^{(0)}+\frac{\eta q\|\mathbf{v}\|_2^2 T_1^{+} 3^{q-1}}{m}[A^{(0)}]^{q-1} + \widetilde{O}(\sigma_0 \|\mathbf{v}\|_2 )\\
    & \overset{(ii)}{\leq} A^{(0)}+\frac{\eta q\|\mathbf{v}\|_2^2 T_1^{+} 3^{q-1}}{m}[\sqrt{2 \log (8 m / \delta)} \cdot \sigma_0\|\mathbf{v}\|_2]^{q-2} A^{(0)} +  A^{(0)} \\
    & \overset{(ii)}{\leq} 3 A^{(0)}.
    \end{align*}
    Here, $(i)$ holds due to the induction hypothesis and $T_1^{+} \leq \frac{\sigma_0 mn \varepsilon}{\eta (\|\mathbf{v}\|_2 + \|\bm{\xi}\|_2)}$; $(ii)$ follows \cref{lem:w0_v_xi} and $(iii)$ is derived from $T_1^{+}$. Moreover, we have $\max _{j, r} \Gamma_{j, r}^{(t)} \leq A^{(t)}+\max _{j, r}\{|\langle\mathbf{w}_{j, r}^{(0)}, \mathbf{v}\rangle| + y_i \eta \sum_{s=1}^t\langle\mathbf{z}_s, \mathbf{v}\rangle  \} \leq 5 A^{(0)} = \widetilde{O}(\sigma_0 \|\mathbf{v}\|_2 )$. 

    Now, we consider proving that the maximum of noise memorization is larger than $2$. For $y_i = j$, according to \cref{lem:w_t-w_0_xi}, we have:
    \begin{align*}
    \langle\mathbf{w}_{j, r}^{(t)}, \bm{\xi}_i\rangle & \geq\langle\mathbf{w}_{j, r}^{(0)}, \bm{\xi}_i\rangle+\bar{\Phi}_{j, r, i}^{(t)}-8 n \sqrt{\frac{\log (4 n^2 / \delta)}{d}} \alpha -\sum_{s=1}^{t}\langle \mathbf{z}_s, \bm{\xi}_i\rangle \\
    & \geq \bar{\Phi}_{j, r, i}^{(t)}+\langle\mathbf{w}_{j, r}^{(0)}, \bm{\xi}_i\rangle-0.4 \bar{\beta} -\sum_{s=1}^{t}\langle \mathbf{z}_s, \bm{\xi}_i\rangle.
    \end{align*}
    Similar to the proof of signal learning, let $B_i^{(t)}=$ $\max _{j=y_i, r}\{\bar{\Phi}_{j, r, i}^{(t)}+\langle\mathbf{w}_{j, r}^{(0)}, \bm{\xi}_i\rangle-0.4 \bar{\beta}-\sum_{s=1}^t\langle\mathbf{z}_s, \bm{\xi}_i\rangle\}$. For each $i$, let $T_1^{(i)}$ denote the first time in the period $[0, T_1^{+}]$such that $\bar{\Phi}_{j, r, i}^{(t)} \geq 2$.
    For $t \leq T_1^{(i)}$, it holds that $\max _{j, r}\{|\bar{\Phi}_{j, r, i}^{(t)}|,|\Phi_{j, r, i}^{(t)}|\}=O(1)$ and $\max _{j, r} \Gamma_{j, r}^{(t)} \leq 4 A^{(0)}=O(1)$. Therefore, by \cref{lem:F_1} and \cref{lem:jneqy_F_1}, we have $F_{-1}(\mathbf{W}^{(t)}, \mathbf{x}_i), F_{+1}(\mathbf{W}^{(t)}, \mathbf{x}_i)=$ $O(1)$. As a result, there exists a positive constant $C_1$ such that $-\ell_i^{\prime (t)} \geq C_1$ for all $0 \leq t \leq T_1^{(i)}$.
    Additionally, it is clear that $B_i^{(0)} \geq 0.6 \bar{\beta} \geq 0.15 \sigma_0 \sigma_{\xi}\sqrt{d}$. We can then analyze the dynamics of $B_i^{(t)}$.

    \begin{align*}
        B_i^{(t+1)} & {\geq} B_i^{(t)}+\frac{C_1 \eta q\|\bm{\xi}\|_2^2}{2 mn}[B_i^{(t)}]^{q-1} -\eta \langle\mathbf{z}_{t+1},  \bm{\xi}\rangle  \\
        & \overset{(i)}{\geq} (1+\frac{C_1 \eta q\|\bm{\xi}\|_2^2}{2 mn}[B_i^{(0)}]^{q-2}) B_i^{(t)} -\eta \langle\mathbf{z}_{t+1},  \bm{\xi}\rangle  \\
        & \overset{(ii)}{\geq} (1+\frac{C_1 0.15^{q-2} \eta q \sigma_0^{q-2}\|\bm{\xi}\|_2^q}{ mn}) B_i^{(t)} -\eta \langle\mathbf{z}_{t+1},  \bm{\xi}\rangle \\
        \text{Let} \quad q_{\Phi} = (1+\frac{C_1 0.15^{q-2} \eta q \sigma_0^{q-2}\|\bm{\xi}\|_2^q}{ mn}), \text{then} \quad & \overset{(iii)}{=} q_{\Phi}^t  B_i^{(0)} -(\frac{q_{\Phi}^t -1}{q_{\Phi}-1})\eta \langle\mathbf{z}_{t+1},  \bm{\xi}\rangle .
        \end{align*}
    Here, $(i)$ and $(ii)$ follow from that the facts $A^{(t)}$ is increasing and $\max _r\langle\mathbf{w}_{1, r}^{(0)}, \bm{\xi}\rangle -\eta \sum_{s=1}^t\langle\mathbf{z}_s,  \bm{\xi}\rangle \geq 0$; $(iii)$ is the summation of geometric series. Additionally, we know that $1+z \geq \operatorname{exp}(z/2)$ for $z\leq 2$ and $1+z \leq \operatorname{exp}(z)$ for $z\geq 0$, then the following inequality holds
    \begin{align*}
        B_i^{(t)} &\geq (1+\frac{C_1 \eta q 0.15^{q-2} \sigma_0^{q-2}\|\bm{\xi}\|_2^q}{4 mn})^t B_i^{(0)} - \frac{4 mn}{C_1 0.15^{q-2} q \sigma_0^{q-2}\|\bm{\xi}\|_2^q}(q_{\Phi}^t -1 ) \cdot \frac{ \|\bm{\xi}\|_2(\|\mathbf{v}\|_2+\|\bm{\xi}\|_2)}{mn\varepsilon}\\
        & \geq \exp (\frac{C_1 \eta q \sigma_0^{q-2} 0.15^{q-2}\|\bm{\xi}\|_2^q}{4 mn} t) \cdot {0.15 \sigma_0 \|\bm{\xi}\|_2} \\
        &-  \frac{4 mn}{C_1  q 0.15^{q-2} \sigma_0^{q-2}\|\bm{\xi}\|_2^q}\cdot \exp  (\frac{C_1 \eta q \sigma_0^{q-2}0.15^{q-2}\|\bm{\xi}\|_2^q}{4 mn} t) \cdot \frac{ \|\bm{\xi}\|_2(\|\mathbf{v}\|_2+\|\bm{\xi}\|_2)}{mn\varepsilon} \\
        & = ( {  0.15 \sigma_0  \|\bm{\xi}\|_2} - \frac{4(\|\mathbf{v}\|_2+\|\bm{\xi}\|_2) }{C_1  q \sigma_0^{q-2}\|\bm{\xi}\|_2^{q-1}  \varepsilon } ) \cdot \exp  (\frac{C_1 \eta q 0.15^{q-2} \sigma_0^{q-2}\|\bm{\xi}\|_2^q}{4 mn} t)\\
        & \geq  0.1 \sigma_0  \|\bm{\xi}\|_2 \cdot \exp  (\frac{C_1 \eta q 0.15^{q-2} \sigma_0^{q-2}\|\bm{\xi}\|_2^q}{4 mn} t),
    \end{align*}
    where the last inequality holds due to the choice of $\sigma_0$. Therefore, it is clear that $B_i^{(t)}$ will reach $3$ within $T_1=\frac{C \log (10 /(\sigma_0 \sigma_{\xi} \sqrt{d})) 4 m n}{0.15^{q-2} \eta q \sigma_0^{q-2}(\sigma_{\xi}^2 \sqrt{d})^q}$ iterations, which indicates that $\max _{j=y_i,r} \bar{\Phi}_{j, r,i}^{(t)}$ will reach $2$ within $T_1^(i)$ iterations. Moreover, we can verify that 
    $$ T_1=\frac{C \log (10 /(\sigma_0 \sigma_{\xi} \sqrt{d})) 4 m n}{0.15^{q-2} \eta q \sigma_0^{q-2}(\sigma_{\xi}^2 \sqrt{d})^q} \leq \eta^{-1} \sigma_0 m n \varepsilon(\|\mathbf{v}\|_2+\|\bm{\xi}\|_2)^{-1} = T_1^+,$$
    The inequality follows from the SNR condition in \cref{eq:snr_noise}. Hence, by the definition of $T_{1}^{(i)}$, $T_{1}^{(i)} \leq T_1 \leq T_1^{+} / 2$ holds.

\end{proof}

\subsection{Second Stage}

It is clear that we have the following results at the end of the first stage:
$$
\mathbf{w}_{j, r}^{(T_1)}=\mathbf{w}_{j, r}^{(0)}+j \cdot \Gamma_{j, r}^{(T_1)} \cdot \frac{\mathbf{v}}{\|\mathbf{v}\|_2^2}+\sum_{i=1}^n \bar{\Phi}_{j, r, i}^{(T_1)} \cdot \frac{\bm{\xi}_i}{\|\bm{\xi}_i\|_2^2}+\sum_{i=1}^n \underline{\Phi}_{j, r, i}^{(T_1)} \cdot \frac{\bm{\xi}_i}{\|\bm{\xi}_i\|_2^2}-\eta \sum_{s=1}^{T_1} \mathbf{z}_s
$$

Meanwhile, at the beginning of the second stage, we have the following results:
\begin{itemize}
    \item $\max _{j, r} \bar{\Phi}_{j, r, i}^{(T_1)} \geq 2$ for all $i \in[n]$.
    \item $\max _{j, r} \Gamma_{j, r}^{(t)}=\widetilde{O}(\sigma_0\|\mathbf{v}\|_2)$ for all $0 \leq t \leq T_1$.
    \item $\max _{j, r, i}|\Phi_{j, r, i}^{(t)}|=\widetilde{O}(\sigma_0 \sigma_{\xi} \sqrt{d})$ for all $0 \leq t \leq T_1$.
\end{itemize}

Based on \cref{lem:the_evolution_of_coefficient_main} and \cref{lem:xi_zt2_main}, we conclude that noise memorization $ \bar{\Phi}_{j, r, i}^{(T_1)}$ does not deteriorate over time. Specifically, for any $T_1 \leq t \leq T_p^*$, it holds that $\bar{\Phi}_{j, r, i}^{(t+1)} \geq \bar{\Phi}_{j, r, i}^{(t)}$, which implies $\max _{j, r} \bar{\Phi}_{j, r, i}^{(t)} \geq 2$.
If we consider $\mathbf{w}_{j, r}^*=\mathbf{w}_{j, r}^{(0)}+2 q m \log (2 q / \kappa))[\sum_{i=1}^n \mathbbm{1}(j=y_i) \cdot \frac{\bm{\xi}_i}{\|\bm{\xi}_i\|_2}]$, then we can derived:

\begin{lemma}\label{lem:W_T1-W_*_noise}
    Under the same conditions as noise memorization, we have that $\|\mathbf{W}^{(T_1)}-\mathbf{W}^*\|_F \leq \widetilde{O}(m^2 n^{1 / 2} \sigma_{\xi}^{-1} d^{-1 / 2})+ O(n m \sigma_0)$.
\end{lemma}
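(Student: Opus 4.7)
The plan is to mirror the argument of \cref{lem:W_T1-W_*_signal}, with the target $\mathbf{W}^*$ now pointing along the data-noise directions rather than the signal direction. First I would apply the triangle inequality
\begin{equation*}
\|\mathbf{W}^{(T_1)}-\mathbf{W}^*\|_F \le \|\mathbf{W}^{(T_1)}-\mathbf{W}^{(0)}\|_F + \|\mathbf{W}^{(0)}-\mathbf{W}^*\|_F,
\end{equation*}
and expand $\mathbf{W}^{(T_1)}-\mathbf{W}^{(0)}$ using the signal--noise decomposition of \cref{def:decom_coefficient}. This yields four per-filter contributions: the signal coefficient $\Gamma_{j,r}^{(T_1)}/\|\mathbf{v}\|_2$, the positive memorization $\bar{\Phi}_{j,r,i}^{(T_1)}/\|\bm{\xi}_i\|_2$, the negative memorization $|\underline{\Phi}_{j,r,i}^{(T_1)}|/\|\bm{\xi}_i\|_2$, and the cumulative private-noise term $\eta\bigl\|\sum_{s=1}^{T_1}\mathbf{z}_s\bigr\|_2$, which I then aggregate over the $2m$ filters and $n$ samples by triangle inequality, exactly as in the signal case.

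Second, I would plug in the first-stage estimates. From \cref{lem:noise_first_stage}, $\Gamma_{j,r}^{(T_1)}=\widetilde{O}(\sigma_0\|\mathbf{v}\|_2)$, which contributes only $\widetilde{O}(m\sigma_0)$ from the signal direction---in sharp contrast to the leading $m^{3/2}/\|\mathbf{v}\|_2$ term that appeared in the signal-learning case. From \cref{pro:bounds_of_coeeficients}, $|\bar{\Phi}_{j,r,i}^{(T_1)}|,|\underline{\Phi}_{j,r,i}^{(T_1)}|\le \alpha=\widetilde{O}(1)$, and combined with $\|\bm{\xi}_i\|_2=\Theta(\sigma_\xi\sqrt{d})$ from \cref{lem:xi_bound}, the noise-coefficient contributions sum to $\widetilde{O}(mn\sigma_\xi^{-1}d^{-1/2})$. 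The private-noise sum is handled by Gaussian concentration with the scale $\sigma_z=\sigma_0/(\eta\mu\sqrt{T})$ and the bound $T_1\le T_p^*$; the lower bound $\sigma_0\ge\widetilde{\Omega}(\varepsilon^{-1/q}\|\bm{\xi}\|_2^{-1})$ from \cref{con:noise_memo} is exactly what allows this piece to be absorbed into $O(nm\sigma_0)$.

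Third, $\|\mathbf{W}^{(0)}-\mathbf{W}^*\|_F$ is computed directly from the definition of $\mathbf{W}^*$: each filter's displacement is $2qm\log(2q/\kappa)\sum_{i:y_i=j}\bm{\xi}_i/\|\bm{\xi}_i\|_2^{2}$. Expanding its squared norm, using near-orthogonality from \cref{lem:xi_bound} to discard cross terms and $|\{i:y_i=j\}|=\Theta(n)$ from \cref{lem:pos_data_number}, gives $\widetilde{O}(m^2 n/(\sigma_\xi^2 d))$ per filter; summing over all $2m$ filters yields $\|\mathbf{W}^{(0)}-\mathbf{W}^*\|_F=\widetilde{O}(m^{3/2}\sqrt{n}\,\sigma_\xi^{-1}d^{-1/2})$, which is comfortably absorbed into the stated $\widetilde{O}(m^2\sqrt{n}\,\sigma_\xi^{-1}d^{-1/2})$.

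The main obstacle I anticipate is the simultaneous handling of the large sum of nearly-orthogonal noise atoms making up $\mathbf{W}^*$: unlike the signal case where each filter's displacement lies along a single fixed vector $\mathbf{v}$, here the displacement spans $\Theta(n)$ different noise vectors, so controlling the $O(n^2)$ cross terms requires the concentration $|\langle\bm{\xi}_i,\bm{\xi}_{i'}\rangle|=O(\sigma_\xi^2\sqrt{d\log(n^2/\delta)})$ together with $\|\bm{\xi}_i\|_2^2\in[\sigma_\xi^2 d/2,\,3\sigma_\xi^2 d/2]$, and depends crucially on the high-dimension assumption $d\ge\widetilde{\Omega}(n^4)$ inherited through \cref{con:noise_memo} to keep the cross-term contribution subdominant. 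A secondary subtlety is that the private-noise Gaussian $\eta\sum_{s=1}^{T_1}\mathbf{z}_s$ is \emph{shared} across all $2m$ filters, so naively triangle-summing its norm $2m$ times would be wasteful; I would instead treat it as a single Gaussian vector with variance $\eta^2 T_1\sigma_z^2\mathbf{I}_d$ and bound its Frobenius contribution once, which is what lets the private-noise part be folded cleanly into the $O(nm\sigma_0)$ term under the $\sigma_0$-conditions of \cref{con:noise_memo}.
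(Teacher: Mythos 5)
Your proof follows the paper's argument step for step: the same triangle-inequality split, the same signal--noise decomposition of $\mathbf{W}^{(T_1)}-\mathbf{W}^{(0)}$, the same first-stage estimates ($\Gamma_{j,r}^{(T_1)}=\widetilde{O}(\sigma_0\|\mathbf{v}\|_2)$ and $|\Phi_{j,r,i}^{(T_1)}|\le\alpha=\widetilde{O}(1)$ from \cref{lem:noise_first_stage} and \cref{pro:bounds_of_coeeficients}), and the same direct computation of $\|\mathbf{W}^{(0)}-\mathbf{W}^*\|_F$ from the definition of $\mathbf{W}^*$. The paper's own proof is terser and simply asserts the resulting bounds; your explicit treatment of the near-orthogonality cross terms in $\bigl\|\sum_{i:y_i=j}\bm{\xi}_i/\|\bm{\xi}_i\|_2^2\bigr\|_2$, and your reading of the normalization in $\mathbf{w}_{j,r}^*$ as $\|\bm{\xi}_i\|_2^{2}$ (which is what is consistent with \cref{def:decom_coefficient} and actually produces the $\sigma_\xi^{-1}d^{-1/2}$ factor, and which the paper's displayed definition appears to misprint as $\|\bm{\xi}_i\|_2$), usefully fill in details the paper leaves implicit rather than constituting a different route.
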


\begin{proof}[\bf Proof of \cref{lem:W_T1-W_*_noise}]
    According to triangle inequality, we have
    \begin{align*}
    \|\mathbf{W}^{(T_1)}-\mathbf{W}^*\|_F & \leq\|\mathbf{W}^{(T_1)}-\mathbf{W}^{(0)}\|_F+\|\mathbf{W}^{(0)}-\mathbf{W}^*\|_F \\
    & \overset{(i)}{\leq} \sum_{j, r} \frac{\Gamma_{j, r}^{(T_1)}}{\|\mathbf{v}\|_2}+\sum_{j, r, i} \frac{|\bar{\Phi}_{j, r, r}^{(T_1)}|}{\|\bm{\xi}_i\|_2}+\sum_{j, r, i} \frac{|\underline{\Phi}_{j, r, i}^{(T_1)}|}{\|\bm{\xi}_i\|_2} + \sum_{j, r} |\eta \sum_{s=1}^{T_1} \mathbf{z}_s| +O(m^{3 / 2} \log (1 / \kappa))\|\mathbf{v}\|_2^{-1} \\
    & \overset{(ii)}{\leq}\widetilde{O}(m\|\mathbf{v}\|^{-1})+\widetilde{O}(n \sqrt{m} \sigma_{\xi}^{-1}d^{-1/2}) +O( {m \sigma_0}) + O(m^{3 / 2} n^{1 / 2} \log (1 / \kappa) \sigma_{\xi}^{-1} d^{-1 / 2}) \\
    & \overset{(iii)}{\leq} \widetilde{O}(m^2 n^{1 / 2} \sigma_{\xi}^{-1} d^{-1 / 2}).
    \end{align*}
    Here, $(i)$ holds due to the decomposition of $\mathbf{w}$ in \cref{def:decom_coefficient_main} and the definition of $\mathbf{W}^*$; $(ii)$ follows from \cref{pro:bounds_of_coeeficients}, \cref{lem:noise_first_stage_app} and \cref{lem:xi_zt2}; $(iii)$ comes from the conditions of $\sigma_0$ in noise memorization.
\end{proof}

\begin{lemma}\label{lem:nabla_f_W_*_noise}
    Under the same conditions as noise memorization, we have that $y_i\langle\nabla f(\mathbf{W}^{(t)}, \mathbf{x}_i), \mathbf{W}^*\rangle \geq$ $q \log (2 q / \kappa)$ for all $i \in[n]$ and $T_1 \leq t \leq T^*$.
\end{lemma}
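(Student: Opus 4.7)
I will parallel the proof of \cref{lem:nabla_f_W_*_signal}, replacing the feature-boosted reference by the noise-boosted one $\mathbf{w}_{j,r}^{*}=\mathbf{w}_{j,r}^{(0)}+2qm\log(2q/\kappa)\sum_{i'}\mathbbm{1}(j=y_{i'})\bm{\xi}_{i'}/\|\bm{\xi}_{i'}\|_{2}^{2}$. Since $f$ is additive over the two patches, the first step is the split
\begin{equation*}
y_{i}\langle\nabla f(\mathbf{W}^{(t)},\mathbf{x}_{i}),\mathbf{W}^{*}\rangle=\frac{1}{m}\sum_{j,r}\sigma'(\langle\mathbf{w}_{j,r}^{(t)},y_{i}\mathbf{v}\rangle)\,\langle\mathbf{v},j\mathbf{w}_{j,r}^{*}\rangle+\frac{1}{m}\sum_{j,r}\sigma'(\langle\mathbf{w}_{j,r}^{(t)},\bm{\xi}_{i}\rangle)\,\langle y_{i}\bm{\xi}_{i},j\mathbf{w}_{j,r}^{*}\rangle,
\end{equation*}
after which I substitute the decomposition of $\mathbf{w}_{j,r}^{*}$ into each inner product and isolate the dominant $\log(2q/\kappa)$ contribution from the $\mathbf{w}_{j,r}^{(0)}$ and cross-sample pieces.

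\textbf{Signal patch and noise patch.} For the signal sum, the projector $\mathbf{H}$ in \cref{def:data_dis} forces $\langle\mathbf{v},\bm{\xi}_{i'}\rangle=0$, so the booster inside $\mathbf{w}_{j,r}^{*}$ vanishes and only $j\langle\mathbf{v},\mathbf{w}_{j,r}^{(0)}\rangle=\widetilde{O}(\sigma_{0}\|\mathbf{v}\|_{2})$ survives by \cref{lem:w0_v_xi}. Together with the activation bound $\sigma'(\langle\mathbf{w}_{j,r}^{(t)},y_{i}\mathbf{v}\rangle)=\widetilde{O}(1)$ — itself a consequence of \cref{pro:bounds_of_coeeficients}, \cref{lem:xi_zt2}, and the noise-stage signal bound $\max_{j,r}\Gamma_{j,r}^{(t)}=\widetilde{O}(\sigma_{0}\|\mathbf{v}\|_{2})$ inherited from \cref{lem:noise_first_stage_app} and monotonicity — the entire signal sum is $\widetilde{O}(\sigma_{0}\|\mathbf{v}\|_{2})$. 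For the noise sum I would decompose $\langle y_{i}\bm{\xi}_{i},j\mathbf{w}_{j,r}^{*}\rangle$ into three pieces: (a) the initialization contribution $jy_{i}\langle\bm{\xi}_{i},\mathbf{w}_{j,r}^{(0)}\rangle=\widetilde{O}(\sigma_{0}\sigma_{\xi}\sqrt{d})$; (b) the diagonal term $i'=i$, which for $j=y_{i}$ contributes exactly $2qm\log(2q/\kappa)$ since $\langle\bm{\xi}_{i},\bm{\xi}_{i}\rangle/\|\bm{\xi}_{i}\|_{2}^{2}=1$; and (c) the off-diagonal cross terms, bounded via \cref{lem:xi_bound} by $2qm\log(2q/\kappa)\cdot O(n\sqrt{\log(4n^{2}/\delta)/d})$.

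\textbf{Activating noise memorization and finishing.} By \cref{lem:noise_first_stage_app} and the monotonicity of $\bar{\Phi}_{j,r,i}^{(t)}$ from \cref{lem:the_evolution_of_coefficient_main}, for every $t\geq T_{1}$ and each $i$ there exists $r^{*}$ with $\bar{\Phi}_{y_{i},r^{*},i}^{(t)}\geq 2$; combining this with \cref{lem:w_t-w_0_xi}, \cref{lem:xi_zt2}, and the choices in \cref{con:noise_memo} gives $\langle\mathbf{w}_{y_{i},r^{*}}^{(t)},\bm{\xi}_{i}\rangle\geq 1$, hence $\sigma'(\cdot)\geq q$ at $r^{*}$. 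Retaining just this one summand already yields a diagonal contribution of at least $2q^{2}\log(2q/\kappa)/m\cdot m=2q^{2}\log(2q/\kappa)\geq q\log(2q/\kappa)$ (after the outer $1/m$ cancels with the $m$ in the booster). Collecting the three error contributions — $\widetilde{O}(\sigma_{0}\|\mathbf{v}\|_{2})$, $\widetilde{O}(\sigma_{0}\sigma_{\xi}\sqrt{d})$, and $\widetilde{O}(n\sqrt{\log(1/\delta)/d})\log(2q/\kappa)$ — each is made subdominant by the upper bound on $\sigma_{0}$ in \cref{con:noise_memo} and the overparameterization condition on $d$, giving the claimed $q\log(2q/\kappa)$ lower bound.

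\textbf{Main obstacle.} The delicate step is piece (c): the booster factor $2qm\log(2q/\kappa)$ amplifies inter-sample noise correlations $\langle\bm{\xi}_{i},\bm{\xi}_{i'}\rangle$, and keeping their aggregate strictly below the $2qm\log(2q/\kappa)$ diagonal term is exactly what forces $d=\widetilde{\Omega}(n^{4\vee(2q-2)/(q-2)})$ from the overparameterization clause of \cref{con:signal_learning}, inherited by \cref{con:noise_memo}. A secondary DP-specific hurdle is ensuring the accumulated private-noise projection $\eta\sum_{s\leq t}\langle\mathbf{z}_{s},\bm{\xi}_{i}\rangle$ does not erode the activation $\langle\mathbf{w}_{y_{i},r^{*}}^{(t)},\bm{\xi}_{i}\rangle$ below $1$; this is absorbed by restricting $t\leq T_{p}^{*}$ and applying the bound from \cref{lem:xi_zt2} under the $\sigma_{0}$ scale dictated by \cref{con:noise_memo}.
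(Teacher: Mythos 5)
Your proposal is correct and follows essentially the same route as the paper's proof: the same two-patch split of the gradient inner product, the same use of orthogonality $\langle\mathbf{v},\bm{\xi}_{i'}\rangle=0$ to kill the booster on the signal patch, the same diagonal/off-diagonal/initialization decomposition of the noise patch with the diagonal term carrying $2qm\log(2q/\kappa)$, and the same activation argument from $\max_{j,r}\bar{\Phi}_{y_i,r,i}^{(t)}\geq 2$ via \cref{lem:noise_first_stage_app}, \cref{lem:w_t-w_0_xi}, and \cref{lem:xi_zt2} to lower-bound $\sigma'$, with the residual terms $\widetilde{O}(\sigma_0\|\mathbf{v}\|_2)$, $\widetilde{O}(\sigma_0\sigma_\xi\sqrt{d})$, $\widetilde{O}(mnd^{-1/2})$ absorbed by the $\sigma_0$ and $d$ constraints in \cref{con:noise_memo}. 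The only cosmetic difference is that you retain the full factor $q$ from $\sigma'(\cdot)\geq q$ to obtain $2q^2\log(2q/\kappa)$ where the paper writes $2q\log(2q/\kappa)$; since the target is only $q\log(2q/\kappa)$ either bound suffices.
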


\begin{proof}[\bf Proof of \cref{lem:nabla_f_W_*_noise}]
    We know that 
    $$f(\mathbf{W}^{(t)}, \mathbf{x}_i)=(1 / m) \sum_{j, r} j \cdot[\sigma(\langle\mathbf{w}_{j, r}, y_i \cdot \mathbf{v}\rangle)+\sigma(\langle\mathbf{w}_{j, r}, \bm{\xi}_i\rangle)].$$
    Therefore, it holds that
    \begin{align*} 
    y_i\langle\nabla f(\mathbf{W}^{(t)}, \mathbf{x}_i), \mathbf{W}^*\rangle= & \frac{1}{m} \sum_{j, r} \sigma^{\prime}(\langle\mathbf{w}_{j, r}^{(t)}, y_i \mathbf{v}\rangle)\langle\mathbf{v}, j \mathbf{w}_{j, r}^*\rangle+\frac{1}{m} \sum_{j, r} \sigma^{\prime}(\langle\mathbf{w}_{j, r}^{(t)}, \bm{\xi}_i\rangle)\langle y_i \bm{\xi}_i, j \mathbf{w}_{j, r}^*\rangle \\
    \overset{(i)}{=} & \frac{1}{m}  \sum_{j, r} \sum_{i^{\prime}=1}^n \sigma^{\prime}(\langle\mathbf{w}_{j, r}^{(t)}, \bm{\xi}_i\rangle) 2 q m \log (2 q / \kappa) \mathbbm{1}(j=y_{i^{\prime}}) \cdot \frac{\langle\bm{\xi}_{i^{\prime}}, \bm{\xi}_i\rangle}{\|\bm{\xi}_{i^{\prime}}\|_2} \\ 
    & +\frac{1}{m} \sum_{j, r} \sigma^{\prime}(\langle\mathbf{w}_{j, r}^{(t)}, y_i \mathbf{v}\rangle)\langle\mathbf{v}, j \mathbf{w}_{j, r}^{(0)}\rangle+\frac{1}{m} \sum_{j, r} \sigma^{\prime}(\langle\mathbf{w}_{j, r}^{(t)}, \bm{\xi}_i\rangle)\langle y_i \bm{\xi}_i, j \mathbf{w}_{j, r}^{(0)}\rangle \\ 
     \overset{(ii)}{\geq}& \frac{1}{m} \sum_{j, r} \sigma^{\prime}(\langle\mathbf{w}_{j, r}^{(t)}, y_i \mathbf{v}\rangle) 2 q m \log (2 q / \kappa)-\frac{1}{m} \sum_{j, r} \sigma^{\prime}(\langle\mathbf{w}_{j, r}^{(t)}, y_i \mathbf{v}\rangle) \widetilde{O}(\sigma_0\|\mathbf{v}\|_2) \\
    -& \frac{1}{m} \sum_{j, r} \sigma^{\prime}(\langle\mathbf{w}_{j, r}^{(t)}, \bm{\xi}_i\rangle) \widetilde{O}(\sigma_0 \sigma_{\xi} \sqrt{d})-\frac{1}{m} \sum_{j, r} \sigma^{\prime}(\langle\mathbf{w}_{j, r}^{(t)}, \bm{\xi}_i \rangle) \widetilde{O}(m n d^{-1 / 2}),    
    \end{align*}
    where $(i)$ holds due to the definition of $\mathbf{w}^*$ and $(ii)$ follows from \cref{lem:xi_bound}. Moreover, according to \cref{lem:w_t-w_0_v}, we have that for $j = y_i$:
    $$
    \max _r\{\langle\mathbf{w}_{j, r}^{(t)}, \bm{\xi}_i \rangle\}=\max _r\{\bar{\Phi}_{j, r}^{(t)}+\langle\mathbf{w}_{j, r}^{(0)}, \bm{\xi}_i\rangle -8 n \sqrt{\frac{\log (4 n^2 / \delta)}{d}} \alpha - \eta \sum_{s=1}^t\langle\mathbf{z}_s, y_i\mathbf{v}\rangle\} \overset{(i)}{\geq} 1 .
    $$
    Here, $(i)$ holds due to the analysis in \cref{lem:noise_first_stage_app}. Additionally, we can also have
    \begin{align*}
        & |\langle\mathbf{w}_{j, r}^{(t)}, \mathbf{v}\rangle| \stackrel{(i)}{\leq}|\langle\mathbf{w}_{j, r}^{(0)}, \mathbf{v}\rangle|+|\Gamma_{j, r}^{(t)}| + |\eta \sum_{s=1}^t\langle\mathbf{z}_s, \mathbf{v}\rangle\ | \stackrel{(i i)}{\leq} \widetilde{O}(1) \\
        & |\langle\mathbf{w}_{j, r}^{(t)}, \bm{\xi}_i\rangle| \stackrel{(i i i)}{\leq}|\langle\mathbf{w}_{j, r}^{(0)}, \bm{\xi}_i\rangle|+|\underline{\Phi}_{j, r, i}^{(t)}|+|\bar{\Phi}_{j, r, i}^{(t)}|+8 n \sqrt{\frac{\log (4 n^2 / \delta)}{d}} \alpha + |\eta \sum_{s=1}^t\langle\mathbf{z}_s, \bm{\xi}_i\rangle\ | \stackrel{(i v)}{\leq} \widetilde{O}(1).
        \end{align*}
    Here, $(i)$ holds due to \cref{lem:w_t-w_0_v} and \cref{lem:xi_zt2}; $(ii)$ is given by \cref{lem:w_t-w_0_xi}; $(ii)$ and $(iv)$ follows from \cref{pro:bounds_of_coeeficients} and \cref{lem:xi_zt2}. Combining these results, we return to $y_i\langle\nabla f(\mathbf{W}^{(t)}, \mathbf{x}_i), \mathbf{W}^*\rangle$, which gives:
    $$
    y_i\langle\nabla f(\mathbf{W}^{(t)}, \mathbf{x}_i), \mathbf{W}^*\rangle \geq 2 q \log (2 q / \kappa)-\widetilde{O}(\sigma_0\|\mathbf{v}\|_2)-\widetilde{O}(\sigma_0 \sigma_{\xi} \sqrt{d}) -\widetilde{O}(m n d^{-1 / 2}) \geq q \log (2 q / \kappa),
    $$
    where the last inequality is driven by the conditions as noise memorization that $\varepsilon \geq 1/{q\log(2q/\kappa)}$.
\end{proof}

\begin{lemma}\label{lem:noise_wt-w*-}
    Under the same conditions as noise memorization, it holds that
    \begin{align*}
        \|\mathbf{W}^{(t)}-\mathbf{W}^*\|_F^2-\|\mathbf{W}^{(t+1)}-\mathbf{W}^*\|_F^2 &\geq(2 q-1) \eta L_D(\mathbf{W}^{(t)})-\eta \kappa -\eta^2 \widetilde{O}(d\sigma_z^2) \\
        &- \eta \widetilde{O}(\sigma_z m^2 n^{1 / 2} \sigma_{\xi}^{-1} d^{-1 / 2})-{O}(\sigma_z n m \sigma_0)
    \end{align*}
for all $T_1 \leq t \leq T^*$.
\end{lemma}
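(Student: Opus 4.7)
The plan is to mirror the proof of Lemma \ref{lem:signal_wt-w*-} almost verbatim, swapping in the noise-memorization target $\mathbf{W}^*$ with $\mathbf{w}_{j,r}^* = \mathbf{w}_{j,r}^{(0)} + 2qm\log(2q/\kappa)\sum_{i:y_i=j}\bm{\xi}_i/\|\bm{\xi}_i\|_2$ and the corresponding distance bound from Lemma \ref{lem:W_T1-W_*_noise}. First I would start from the noisy-GD update $\mathbf{w}_{j,r}^{(t+1)} = \mathbf{w}_{j,r}^{(t)} - \eta(\nabla_{\mathbf{w}_{j,r}} L_D(\mathbf{W}^{(t)}) + \mathbf{z}_t)$ and expand
$$\|\mathbf{W}^{(t)}-\mathbf{W}^*\|_F^2 - \|\mathbf{W}^{(t+1)}-\mathbf{W}^*\|_F^2 = 2\eta\langle \nabla L_D(\mathbf{W}^{(t)}) + \mathbf{z}_t,\mathbf{W}^{(t)}-\mathbf{W}^*\rangle - \eta^2\|\nabla L_D(\mathbf{W}^{(t)}) + \mathbf{z}_t\|_F^2,$$
then split the linear term into a ``gradient'' piece and a ``privacy'' piece.

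For the gradient piece I would exploit the $q$-homogeneity of $f$, namely $\langle \nabla f(\mathbf{W},\mathbf{x}),\mathbf{W}\rangle = q f(\mathbf{W},\mathbf{x})$, to rewrite
$\langle \nabla L_D(\mathbf{W}^{(t)}), \mathbf{W}^{(t)}-\mathbf{W}^*\rangle = \tfrac{1}{n}\sum_i \ell_i^{\prime(t)}\bigl[q y_i f(\mathbf{W}^{(t)},\mathbf{x}_i) - y_i\langle \nabla f(\mathbf{W}^{(t)},\mathbf{x}_i),\mathbf{W}^*\rangle\bigr]$. Applying Lemma \ref{lem:nabla_f_W_*_noise} (which gives $y_i\langle \nabla f(\mathbf{W}^{(t)},\mathbf{x}_i),\mathbf{W}^*\rangle \geq q\log(2q/\kappa)$) together with $\ell_i^{\prime(t)} < 0$ and the convexity of cross entropy, this piece is lower-bounded by $qL_D(\mathbf{W}^{(t)}) - \kappa/2$, contributing the leading $2q\eta L_D(\mathbf{W}^{(t)}) - \eta\kappa$. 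The squared-gradient term $\eta^2\|\nabla L_D(\mathbf{W}^{(t)})+\mathbf{z}_t\|_F^2$ is bounded via Lemma \ref{lem:loss_F} by $\eta^2\,O(\max\{\|\mathbf{v}\|_2^2,\sigma_\xi^2 d\}) L_D(\mathbf{W}^{(t)}) + \eta^2\widetilde{O}(d\sigma_z^2)$; under the step-size constraint $\eta \leq \widetilde{O}(\min\{\|\mathbf{v}\|_2^{-2},\|\bm\xi\|_2^{-2}\})$ the first summand is absorbed into the leading $2q\eta L_D$, reducing its coefficient to $2q-1$, and the second yields the advertised $\eta^2\widetilde{O}(d\sigma_z^2)$ penalty.

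The privacy cross term $2\eta\langle\mathbf{z}_t,\mathbf{W}^{(t)}-\mathbf{W}^*\rangle$ is a Gaussian with standard deviation $\sigma_z\|\mathbf{W}^{(t)}-\mathbf{W}^*\|_F$, so Gaussian tail bounds and a union bound over $t \leq T_p^*$ give $|\eta\langle\mathbf{z}_t,\mathbf{W}^{(t)}-\mathbf{W}^*\rangle| \leq \eta\widetilde{O}(\sigma_z)\|\mathbf{W}^{(t)}-\mathbf{W}^*\|_F$ with probability at least $1-1/d$. Plugging in $\|\mathbf{W}^{(t)}-\mathbf{W}^*\|_F \leq \widetilde{O}(m^2 n^{1/2}\sigma_\xi^{-1}d^{-1/2}) + O(nm\sigma_0)$ from Lemma \ref{lem:W_T1-W_*_noise} produces exactly the two private terms $-\eta\widetilde{O}(\sigma_z m^2 n^{1/2}\sigma_\xi^{-1}d^{-1/2}) - O(\sigma_z nm\sigma_0)$ appearing in the statement.

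The main obstacle is that Lemma \ref{lem:W_T1-W_*_noise} only controls $\|\mathbf{W}^{(T_1)}-\mathbf{W}^*\|_F$ at $t=T_1$, whereas the one-step descent inequality needs it for every $t\in[T_1,T^*]$. To close this gap, I would plug the signal-noise decomposition of Definition \ref{def:decom_coefficient_main} into $\mathbf{W}^{(t)}-\mathbf{W}^*$ and bound each summand separately: the $\Gamma_{j,r}^{(t)}$ and $\Phi_{j,r,i}^{(t)}$ contributions are controlled by Proposition \ref{pro:bounds_of_coeeficients} uniformly on $[0,T_p^*]$, the accumulated private noise $\eta\|\sum_{s\leq t}\mathbf{z}_s\|$ is controlled by Lemma \ref{lem:xi_zt2}, and the initialization plus the $2qm\log(2q/\kappa)\sum_i\bm{\xi}_i/\|\bm{\xi}_i\|_2$ piece is bounded by concentration of Gaussians (Lemma \ref{lem:xi_bound}); together these reproduce the same $\widetilde{O}(m^2 n^{1/2}\sigma_\xi^{-1}d^{-1/2})$ bound for all $t\in[T_1,T^*]$. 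A secondary subtlety is that $\mathbf{z}_t$ and $\mathbf{W}^{(t)}$ are dependent (both are measurable with respect to the history of private noise), so the Gaussian tail bound on the cross term must be applied conditionally on the natural filtration generated by $\mathbf{z}_{1:t-1}$ before taking the union bound over $t$.
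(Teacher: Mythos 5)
Your proposal is correct and follows essentially the same route as the paper: expand the one-step squared-distance decrement, split the linear term into a gradient piece (handled via $q$-homogeneity, Lemma~\ref{lem:nabla_f_W_*_noise}, and convexity) and a privacy cross term, bound the quadratic term via Lemma~\ref{lem:loss_F} and absorb its data-dependent part into the leading $L_D$ term via the step-size constraint, and bound the cross term via $\sigma_z\|\mathbf{W}^{(t)}-\mathbf{W}^*\|_F$ combined with Lemma~\ref{lem:W_T1-W_*_noise}. You are slightly more careful than the paper in two places: you explicitly note that Lemma~\ref{lem:W_T1-W_*_noise} is stated only at $t=T_1$ and propose reproducing the same bound uniformly over $[T_1,T^*]$ via the signal--noise decomposition together with Proposition~\ref{pro:bounds_of_coeeficients} and Lemma~\ref{lem:xi_zt2} (the paper simply invokes the $T_1$ bound), and you flag the measurability interaction between $\mathbf{z}_t$ and $\mathbf{W}^{(t)}$. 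On the second point you are in fact overly cautious: each $\mathbf{z}_t$ is fresh Gaussian noise drawn independently of the history, so conditioning on the filtration generated by $\mathbf{z}_{1:t-1}$ is the right mechanism but there is no actual dependence to worry about — $\langle\mathbf{z}_t,\mathbf{W}^{(t)}-\mathbf{W}^*\rangle$ is exactly a centered Gaussian with standard deviation $\sigma_z\|\mathbf{W}^{(t)}-\mathbf{W}^*\|_F$ conditionally.
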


\begin{proof}[\bf Proof of \cref{lem:noise_wt-w*-}]
According to the optimization properties, we know the first equality holds:
    \begin{equation}
    \begin{aligned}
     \|\mathbf{W}^{(t)}&-\mathbf{W}^*\|_F^2-\|\mathbf{W}^{(t+1)}-\mathbf{W}^*\|_F^2 \\
    & =2 \eta\langle\nabla L_S(\mathbf{W}^{(t)}), \mathbf{W}^{(t)}-\mathbf{W}^*\rangle-\eta^2\|\nabla L_S(\mathbf{W}^{(t)})\|_F^2 \\
    & \overset{(i)}{=}\frac{2 \eta}{n} \sum_{i=1}^n \ell_i^{\prime(t)}[q y_i f(\mathbf{W}^{(t)}, \mathbf{x}_i)-\langle\nabla f(\mathbf{W}^{(t)}, \mathbf{x}_i), \mathbf{W}^*\rangle] + {\eta}{\langle \mathbf{z}_t, \mathbf{W}^{(t)}-\mathbf{W}^*\rangle}\\
    & -\eta^2(O(\max \{\|\mathbf{v}\|_2^2, \sigma_{\xi}^2 d\}) L_D(\mathbf{W}^{(t)}) + O(\sigma_{z}^2 d \log (1 / \delta))) \\
    & \overset{(ii)}{\geq} \frac{2 \eta}{n} \sum_{i=1}^n \ell_i^{\prime(t)}[q y_i f(\mathbf{W}^{(t)}, \mathbf{x}_i)-q \log (2 q / \kappa)] \\
    &  + {\eta}{\langle \mathbf{z}_t, \mathbf{W}^{(t)}-\mathbf{W}^*\rangle} -\eta^2(O(\max \{\|\mathbf{v}\|_2^2, \sigma_{\xi}^2 d\}) L_D(\mathbf{W}^{(t)}) + O(\sigma_{z}^2 d \log (1 / \delta))) \\
    & \overset{(iii)}{\geq}  \frac{2 q \eta}{n} \sum_{i=1}^n[\ell(y_i f(\mathbf{W}^{(t)}, \mathbf{x}_i))-\kappa /(2 q)]\\
    & \eta \widetilde{O}(\sigma_z m^2 n^{1 / 2} \sigma_{\xi}^{-1} d^{-1 / 2}) -\eta^2(O(\max \{\|\mathbf{v}\|_2^2, \sigma_{\xi}^2 d\}) L_D(\mathbf{W}^{(t)}) + O(\sigma_{z}^2 d \log (1 / \delta))) \\
    & \overset{(iv)}{\geq} (2 q-1) \eta L_D(\mathbf{W}^{(t)})-\eta \kappa -\eta^2 \widetilde{O}(d\sigma_z^2) - \eta \widetilde{O}(\sigma_z m^2 n^{1 / 2} \sigma_{\xi}^{-1} d^{-1 / 2}).
    \end{aligned}
    \end{equation}
    Here, $(i)$ holds due to the definition of noisy gradient, the neural network is $q$ homogeneous, and \cref{lem:loss_F}; $(ii)$ is driven from \cref{lem:nabla_f_W_*_noise}; $(iii)$ is due to the convexity of the cross entropy function; $(iv)$ comes from definition of $L_D$.
\end{proof}

\begin{lemma}[Restatement of \cref{thm:main_noise_loss}]\label{thm:main_noise_loss_app}
    Let $T, T_1$ be defined in respectively. Then under the same conditions as signal learning, for any $t \in[T_1, T]$, it holds that $|\Gamma_{j, r}^{(t)}| \leq \sigma_0 \|\mathbf{v}\|_2$ for all $j \in\{ \pm 1\}$ and $r \in[m]$. Moreover, let $\mathbf{W}^*$ be the collection of CNN parameters with convolution filters $\mathbf{w}_{j, r}^*=\mathbf{w}_{j, r}^{(0)}+2 q m \log (2 q / \kappa))[\sum_{i=1}^n \mathbbm{1}(j=y_i) \cdot \frac{\bm{\xi}_i}{\|\bm{\xi}_i\|_2}]$. Then the following bound holds
    \begin{align*}
        \frac{1}{t-T_1+1} \sum_{s=T_1}^t L_D(\mathbf{W}^{(s)}) &\leq \frac{\|\mathbf{W}^{(T_1)}-\mathbf{W}^*\|_F^2}{(2 q-1) \eta(t-T_1+1)}  +\frac{\kappa}{(2 q-1)} + \underbrace{\frac{\eta d \sigma_z^2 + \widetilde{O}(\sigma_z m^2 n^{1 / 2} \sigma_{\xi}^{-1} d^{-1 / 2})}{(2 q-1)} }_{\text{Private terms}}
    \end{align*}
    for all $t \in[T_1, T]$, where we denote $\|\mathbf{W}\|_F=\sqrt{\|\mathbf{W}_{+1}\|_F^2+\|\mathbf{W}_{-1}\|_F^2}$.
\end{lemma}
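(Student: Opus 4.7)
The plan is to follow the same two-part structure used for \cref{thm:main_signal_loss_app}, adapted to the noise-memorization regime.

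First, I would derive the empirical-loss bound by summing the per-step decomposition already provided in \cref{lem:noise_wt-w*-}. That lemma gives, for every $T_1 \le t \le T^*$,
\begin{equation*}
\|\mathbf{W}^{(t)}-\mathbf{W}^*\|_F^2-\|\mathbf{W}^{(t+1)}-\mathbf{W}^*\|_F^2 \ge (2q-1)\eta L_D(\mathbf{W}^{(t)}) - \eta\kappa - \eta^2\widetilde{O}(d\sigma_z^2) - \eta\widetilde{O}(\sigma_z m^2 n^{1/2}\sigma_\xi^{-1}d^{-1/2}).
\end{equation*}
Telescoping from $T_1$ up to $t$, discarding the nonnegative $\|\mathbf{W}^{(t+1)}-\mathbf{W}^*\|_F^2$ on the LHS, and dividing by $(2q-1)\eta(t-T_1+1)$ yields exactly the stated inequality. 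This part is mechanical, and the only noise-regime-specific ingredient is the expression for $\|\mathbf{W}^{(T_1)}-\mathbf{W}^*\|_F^2$, which has already been controlled in \cref{lem:W_T1-W_*_noise} by the new choice of $\mathbf{W}^*$ that places mass on the $\boldsymbol{\xi}_i$ directions rather than on $\mathbf{v}$.

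Second, I need the structural claim $|\Gamma_{j,r}^{(t)}| \le \sigma_0\|\mathbf{v}\|_2$ for all $t \in [T_1,T]$, which is the noise-regime analogue of the $\Psi$-bound in the signal-learning proof. I plan to prove it by induction on $t$. Using the update in \cref{lem:the_evolution_of_coefficient_main} together with $|\ell_i'^{(t)}|\le \ell_i^{(t)}$ and $\sum_i \ell_i^{(t)} = nL_D(\mathbf{W}^{(t)})$, and writing $A^{(t)} := \max_{j,r}\{\Gamma_{j,r}^{(t)}+|\langle\mathbf{w}_{j,r}^{(0)},\mathbf{v}\rangle| - y_i\eta\sum_{s\le t}\langle\mathbf{z}_s,\mathbf{v}\rangle\}$ as in \cref{lem:noise_first_stage_app}, I get
\begin{equation*}
A^{(t+1)} \le A^{(t)} + \tfrac{\eta q\|\mathbf{v}\|_2^2}{m}\,L_D(\mathbf{W}^{(t)})\,[A^{(t)}]^{q-1} + \eta\langle\mathbf{z}_{t+1},\mathbf{v}\rangle.
\end{equation*}
Summing from $T_1$ and using the induction hypothesis $A^{(s)} \le 2\sigma_0\|\mathbf{v}\|_2$ bounds the increment by $\widetilde{O}(\eta m^{-1}\|\mathbf{v}\|_2^2)(\sigma_0\|\mathbf{v}\|_2)^{q-1}\sum_{s} L_D(\mathbf{W}^{(s)})$ plus a cumulative private-noise term controlled via \cref{lem:xi_zt2_main}. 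The first part of the theorem, applied at $T$ with the specific choice $T = T_p^\ast$, together with \cref{lem:W_T1-W_*_noise}, gives $\sum_{s=T_1}^{T} L_D(\mathbf{W}^{(s)}) \le \widetilde{O}(m^4 n/(\eta\sigma_\xi^2 d))$, so the second-stage growth of $A^{(t)}$ is bounded by $\widetilde{O}(m^3 n\,\mathrm{SNR}^2)(\sigma_0\|\mathbf{v}\|_2)^{q-1}$.

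The main obstacle will be closing this bound to $\sigma_0\|\mathbf{v}\|_2$. Using $\mathrm{SNR}^q \le 1/n$ (noise-memorization SNR assumption) and $\sigma_0\|\mathbf{v}\|_2 \le \widetilde{O}((n\varepsilon)^{-1/q})$ from \cref{con:noise_memo}, the growth reduces to $\widetilde{O}(m^3 n^{-1/q}\varepsilon^{-(q-1)/q})\cdot \sigma_0\|\mathbf{v}\|_2$, and collapsing this to $O(\sigma_0\|\mathbf{v}\|_2)$ is exactly where the hypothesis $n^{1/q}\varepsilon \ge m$ enters—playing the role of $(n\varepsilon)^{q+1}\ge m$ in the signal proof. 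The delicate bookkeeping is matching the accumulated private-noise contribution $\eta\sum_{s}\langle\mathbf{z}_s,\mathbf{v}\rangle$ from \cref{lem:xi_zt2_main} against the same threshold; this uses $T\le T_p^\ast = \widetilde{O}(mn\varepsilon\sigma_0/(\eta\mu(\|\mathbf{v}\|_2+\|\bm{\xi}\|_2)))$, so that the cumulative noise $\eta\sqrt{tT}\cdot\widetilde{O}(\|\mathbf{v}\|_2^2/(mn\varepsilon))$ does not inflate $A^{(t)}$ beyond $2\sigma_0\|\mathbf{v}\|_2$. Once this is verified, the induction closes, $A^{(t)}\le 2\sigma_0\|\mathbf{v}\|_2$ holds throughout $[T_1,T]$, and the claim on $|\Gamma_{j,r}^{(t)}|$ follows from $\Gamma_{j,r}^{(t)} \le A^{(t)} - |\langle\mathbf{w}_{j,r}^{(0)},\mathbf{v}\rangle| + \eta|\sum_s \langle\mathbf{z}_s,\mathbf{v}\rangle|$ combined with \cref{lem:w0_v_xi}.
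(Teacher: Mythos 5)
Your proposal is correct and follows essentially the same route as the paper's proof: (1) telescope the per-step inequality of \cref{lem:noise_wt-w*-} and divide by $t-T_1+1$ to get the averaged-loss bound, with $\|\mathbf{W}^{(T_1)}-\mathbf{W}^*\|_F$ controlled by \cref{lem:W_T1-W_*_noise}; and (2) prove $|\Gamma_{j,r}^{(t)}| \le O(\sigma_0\|\mathbf{v}\|_2)$ by induction, using $|\ell_i'|\le\ell_i$, $\sum_i\ell_i=nL_D$, the accumulated-loss bound from part (1), the $\sigma_0$ and $\operatorname{SNR}$ conditions of \cref{con:noise_memo}, the cumulative private-noise control of \cref{lem:xi_zt2_main} together with $T\le T_p^*$, and the hypothesis $n^{1/q}\varepsilon\ge m$ to close. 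Your bookkeeping via the auxiliary variable $A^{(t)}$ is just a repackaging of the same argument-of-$\sigma'$ bound; your constant $m^3$ in the second-stage growth is actually a small correction to the paper's $m^2$ (since $\|\mathbf{W}^{(T_1)}-\mathbf{W}^*\|_F^2=\widetilde{O}(m^4 n/\|\bm{\xi}\|_2^2)$ gives $\eta m^{-1}\|\mathbf{v}\|_2^2\cdot\sum L_D=\widetilde{O}(m^3 n\,\operatorname{SNR}^2)$), and while your final exponent simplification $n^{-1/q}\varepsilon^{-(q-1)/q}$ has a minor slip (the $n$-powers cancel, leaving $\varepsilon^{-(q-2)/q}$), this does not affect the logical structure or the hypothesis used to close the induction.
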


\begin{proof}[\bf Proof of \cref{thm:main_noise_loss_app}]
    According to \cref{lem:signal_wt-w*-}, we have, for any $t\leq T$:
    $$
        \|\mathbf{W}^{(t)}-\mathbf{W}^*\|_F^2-\|\mathbf{W}^{(t+1)}-\mathbf{W}^*\|_F^2 \geq(2 q-1) \eta L_D(\mathbf{W}^{(t)})-\eta \kappa -\eta^2 \widetilde{O}(d\sigma_z^2) - \eta \widetilde{O}(\sigma_z m^{3/2} \|\mathbf{v}\|_2^{-1}).
    $$
    By summing over all terms and dividing $t-T_1-1$ on both sides, we obtain:
    \begin{align*}
        \frac{1}{t-T_1+1} \sum_{s=T_1}^t L_D(\mathbf{W}^{(s)}) \leq \frac{\|\mathbf{W}^{(T_1)}-\mathbf{W}^*\|_F^2}{(2 q-1) \eta(t-T_1+1)}  +\frac{\kappa}{(2 q-1)} + \frac{\eta d \sigma_z^2 + \widetilde{O}(\sigma_z m^2 n^{1 / 2} \sigma_{\xi}^{-1} d^{-1 / 2})}{(2 q-1)}. 
    \end{align*}
    If we have  $T=T_1+\lfloor\frac{\|\mathbf{W}^{(T_1)}-\mathbf{W}^*\|_E^2}{2 \eta \kappa}\rfloor = \frac{C m n \varepsilon}{\eta \mu(\|\mathbf{v}\|_2+\|\boldsymbol{\xi}\|_2)} \geq \kappa^{-1}$, then it holds that
    $$
    \frac{\|\mathbf{W}^{(T_1)}-\mathbf{W}^*\|_F^2}{(2 q-1) \eta(T-T_1+1)}+\frac{\kappa}{2 q-1} \leq \frac{3 \kappa}{2 q-1},
    $$
    and with $\sigma_z=\frac{1}{\eta \mu \sqrt{T}}$:
    $$
    \frac{\eta d \sigma_z^2 + \widetilde{O}(\sigma_z m^2 n^{1 / 2} \sigma_{\xi}^{-1} d^{-1 / 2})}{(2 q-1)} \leq \frac{d}{\eta \mu^2 T(2q-1)} + \frac{m^2 n^{1/2} \|\bm{\xi}\|_2^{-1} \|\mathbf{v}\|_2^{-1}}{\eta \mu \sqrt{T}(2q-1)} \overset{(i)}{\leq} \frac{\kappa}{(2q-1)},
    $$
    where $(i)$ comes from the assumption of $\eta$. 
    Therefore, combining above results, we conclude that 
    \begin{align*}
        \frac{1}{t-T_1+1} \sum_{s=T_1}^t L_D(\mathbf{W}^{(s)}) \leq \kappa. 
    \end{align*}
    Moreover, we will use induction to prove that $\max_{j,t} |\Gamma_{j,r}^t| \leq 2\sigma_0\|\mathbf{v}\|_2$ holds for all $t\in [T_1,T]$. According to \cref{lem:app_signal_first_stage}, we know it holds for $T_1$. Now, assume it holds for some $t \in [T_1, T)$, and we will show that it also holds for $t+1$.
    \begin{align*}
     \Gamma_{j, r}^{(t)}& =\Gamma_{j, r}^{(T_1)}-\frac{\eta}{n m} \sum_{s=T_1}^{t-1} \sum_{i=1}^n \ell_i^{(t)} \cdot \sigma^{\prime}(\langle\mathbf{w}_{j, r}^{(0)}, y_i \cdot \mathbf{v}\rangle + \Gamma_{j,r}^s- \langle\mathbf{z}_s, \mathbf{v} \rangle)\|\mathbf{v}\|_2^2, \\
    & \stackrel{(i)}{\leq} \Gamma_{j, r}^{(T_1)}+\frac{q 5^{q-1} \eta}{n m}\|\mathbf{v}\|_2^2 (\sigma_0 \|\mathbf{v}\|_2)^{q-1} \sum_{s=T_1}^{t-1} \sum_{i=1}^n|\ell_i^{(t)}| \\
    & \stackrel{(i i)}{\leq} \Gamma_{j, r}^{(T_1)}+q 5^{q-1} \eta m^{-1}\|\mathbf{v}\|_2^2 (\sigma_0 \|\mathbf{v}\|_2)^{q-1} \sum_{s=T_1}^{t-1} L_S(\mathbf{W}^{(s)}) \\
    & \stackrel{(i i i)}{\leq} \Gamma_{j, r}^{(T_1)}+(\sigma_0 \|\mathbf{v}\|_2)^{q-1} \widetilde{O}(m^2 n \operatorname{SNR}^2) \\
    & \stackrel{(i v)}{\leq} \Gamma_{j, r}^{(T_1)}+(\sigma_0 \|\mathbf{v}\|_2)(n \varepsilon)^{-(q-2)/q} \widetilde{O}(m^2 n^{1-2 / q}) \\
    & \stackrel{(v)}{\leq} 2 \widehat{\beta}^{\prime}.
    \end{align*}
    Here, $(i)$ is due to induction hypothesis and the choice of $T$; $(ii)$ holds by $|\ell^{\prime}|\leq \ell$; $(iii)$ comes from \cref{lem:W_T1-W_*_noise} and the choice of $T$; $(iv)$ is driven from $\sigma_0 \leq (n \varepsilon)^{-1/q} \|\mathbf{v}\|_2^{-1}$ and $\operatorname{SNR}$; $(v)$ holds due to $n^{1/q}\varepsilon \geq m$.
\end{proof}

\begin{lemma}[Restatement of \cref{coro:popu_noise}]\label{lem:popu_noise}
    Under the same conditions as data noise memorization, within $T$ iterations, regardless of how the sample size $n$ and privacy budget $\varepsilon$ chosen, with at least probability $1-1/d$, we can find $\mathbf{W}^{(\widetilde{T})}$ such that $L_D(\mathbf{W}^{(\widetilde{T})}) \leq \kappa$. Additionally, for any $0 \leq t \leq \widetilde{T}$ we have that $L_{\mathcal{D}}(\mathbf{W}^{(t)}) \geq 0.1$.
\end{lemma}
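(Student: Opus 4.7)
My plan is to split the proof into a straightforward training-loss step and a more delicate test-error step.

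\textbf{Training loss.} I would simply invoke Lemma~\ref{thm:main_noise_loss_app}, the restatement of Corollary~\ref{thm:main_noise_loss}. Under the noise-memorization conditions, with the choices $T=\Theta\bigl(mn\varepsilon/(\eta\mu(\|\mathbf{v}\|_2+\|\bm{\xi}\|_2))\bigr)\geq\kappa^{-1}$, $\sigma_z=1/(\eta\mu\sqrt{T})$, and the step-size lower bound stated in the proof of that lemma, one obtains $\frac{1}{T-T_1+1}\sum_{s=T_1}^T L_D(\mathbf{W}^{(s)})\leq O(\kappa)$. The pigeonhole principle then produces some $\widetilde{T}\in[T_1,T]$ with $L_D(\mathbf{W}^{(\widetilde{T})})\leq O(\kappa)$, which settles the first bullet.

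\textbf{Test-error decomposition.} Fix any $t\in[0,\widetilde{T}]$ and draw a fresh $(\mathbf{x},y)=([y\mathbf{v},\bm{\xi}],y)\sim\mathcal{D}$. I would decompose the signed margin as
\begin{equation*}
y\cdot f(\mathbf{W}^{(t)},\mathbf{x})= S_y + y\,D(\bm{\xi}),\quad S_y:=\tfrac{1}{m}\sum_{j,r} j\,\sigma(\langle\mathbf{w}_{j,r}^{(t)},y\mathbf{v}\rangle),\quad D(\bm{\xi}):=\tfrac{1}{m}\sum_{j,r} j\,\sigma(\langle\mathbf{w}_{j,r}^{(t)},\bm{\xi}\rangle).
\end{equation*}
Since $\mathbf{H}\mathbf{v}=\mathbf{0}$, $\bm{\xi}$ lies in $\mathbf{v}^\perp$ almost surely, so $D(\bm{\xi})$ depends only on $\bm{\xi}$ and, crucially, is independent of the label $y$. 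To bound $|S_y|$ I would combine $\langle\mathbf{w}_{j,r}^{(t)},\mathbf{v}\rangle=\langle\mathbf{w}_{j,r}^{(0)},\mathbf{v}\rangle+j\Gamma_{j,r}^{(t)}-\eta\sum_{s=1}^{t}\langle\mathbf{z}_s,\mathbf{v}\rangle$ with the initialization bound in Lemma~\ref{lem:w0_v_xi}, the global bound $\Gamma_{j,r}^{(t)}\leq\widetilde{O}(\sigma_0\|\mathbf{v}\|_2)$ from Theorem~\ref{lem:noise_first_stage} (extended to $[T_1,T]$ inside the proof of Lemma~\ref{thm:main_noise_loss_app}), and the accumulated-private-noise bound in Lemma~\ref{lem:xi_zt2_main}. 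This yields $|\langle\mathbf{w}_{j,r}^{(t)},y\mathbf{v}\rangle|\leq\widetilde{O}(\sigma_0\|\mathbf{v}\|_2)$ and hence $|S_y|\leq\widetilde{O}((\sigma_0\|\mathbf{v}\|_2)^q)\leq\epsilon_0$ for some constant $\epsilon_0<0.1$, since $\sigma_0\leq\widetilde{O}((n\varepsilon)^{-1/q}\|\mathbf{v}\|_2^{-1})$ in Condition~\ref{con:noise_memo}.

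\textbf{Symmetrization and anti-concentration.} Because $y$ is Rademacher and independent of $\bm{\xi}$,
\begin{equation*}
L_{\mathcal{D}}(\mathbf{W}^{(t)})=\Pr[S_y+yD(\bm{\xi})<0]\geq\tfrac{1}{2}\Pr[D(\bm{\xi})<-\epsilon_0]+\tfrac{1}{2}\Pr[D(\bm{\xi})>\epsilon_0]=\tfrac{1}{2}\Pr[|D(\bm{\xi})|>\epsilon_0],
\end{equation*}
so it suffices to prove $\Pr_{\bm{\xi}}[|D(\bm{\xi})|>\epsilon_0]\geq 0.2$. I would obtain this by noting that, conditional on the training data and private noises, each $\langle\mathbf{w}_{j,r}^{(t)},\bm{\xi}\rangle$ is Gaussian with variance at least the initialization contribution $\sigma_0^2\|\bm{\xi}\|_2^2=\widetilde{\Omega}(\varepsilon^{-2/q})$ under the lower bound on $\sigma_0$ in Condition~\ref{con:noise_memo}. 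Thus $D(\bm{\xi})$ is a non-degenerate degree-$q$ polynomial in $2m$ jointly Gaussian arguments with non-trivial independent components, for which a Carbery--Wright anti-concentration inequality, or equivalently a leave-one-filter-out argument (condition on all $\mathbf{w}_{j',r'}^{(0)}$ except $\mathbf{w}_{+1,r_0}^{(0)}$, whereupon $D$ has the conditional law of $m^{-1}\sigma(Z)+c$ with $Z$ an independent Gaussian of variance $\widetilde{\Omega}(\varepsilon^{-2/q})$ and $c$ measurable w.r.t.\ the rest), gives the required $\Pr[|D|>\epsilon_0]\geq 0.2$ uniformly in $t$. Combining with the previous display yields $L_{\mathcal{D}}(\mathbf{W}^{(t)})\geq 0.1$ for every $t\in[0,\widetilde{T}]$.

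\textbf{Main obstacle.} The bottleneck is this uniform-in-$t$ anti-concentration bound. One must check that the corrections $\sum_i\Phi_{j,r,i}^{(t)}\langle\bm{\xi}_i,\bm{\xi}\rangle/\|\bm{\xi}_i\|_2^2-\eta\sum_{s\le t}\langle\mathbf{z}_s,\bm{\xi}\rangle$ do not cancel the Gaussian fluctuations from $\mathbf{w}_{j,r}^{(0)}$: the first term has variance $\widetilde{O}(n/d)$ by Lemma~\ref{lem:xi_bound} and the $\Phi$-bounds of Proposition~\ref{pro:coeff_main}, and the second is controlled through Lemma~\ref{lem:xi_zt2_main} using $t\leq T_p^*$. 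The over-parameterization $d=\widetilde{\Omega}(m^{2\vee[4/(q-2)]}n^{4\vee[(2q-2)/(q-2)]})$ then keeps $\sigma_0^2\|\bm{\xi}\|_2^2$ dominant, preserving $\Pr[|D|>\epsilon_0]\geq 0.2$ throughout $[0,\widetilde{T}]$.
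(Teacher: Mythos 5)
Your training-loss step is correct and matches the paper. The test-error step, however, has a gap in the anti-concentration claim. You aim to prove $\Pr_{\bm{\xi}}[|D(\bm{\xi})|>\epsilon_0]\ge 0.2$, where $\mathbf{W}^{(t)}$ is fixed and the only free randomness is the fresh test noise $\bm{\xi}$. Conditional on $\mathbf{W}^{(t)}$, the inner products $\langle\mathbf{w}_{j,r}^{(t)},\bm{\xi}\rangle$ across $(j,r)$ are all driven by the \emph{same} $\bm{\xi}$, hence jointly Gaussian and correlated; you would need a lower bound on the conditional variance of one such inner product given all the others, i.e., a statement that the trained weight vectors are far from collinear, and you do not establish it. The leave-one-filter-out move you propose — conditioning on all $\mathbf{w}_{j',r'}^{(0)}$ except $\mathbf{w}_{+1,r_0}^{(0)}$ — does not repair this: it introduces randomness over the initialization, which belongs to the training trajectory and should be frozen when computing $L_{\mathcal{D}}(\mathbf{W}^{(t)})$; and even if one were willing to average over it, the other filters $\mathbf{w}_{j',r'}^{(t)}$ are themselves functions of $\mathbf{w}_{+1,r_0}^{(0)}$ through the shared gradient history (the $\ell'^{(s)}_i$ depend on the whole network), so the decomposition $D=m^{-1}\sigma(Z)+c$ with $c$ ``measurable w.r.t.\ the rest'' fails. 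Carbery--Wright is also not directly applicable since $\sigma(z)=\max\{0,z\}^q$ is only piecewise polynomial, not a polynomial.

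The paper sidesteps anti-concentration entirely. It shows $0\le F_j(\mathbf{W}_j^{(t)},\mathbf{x})\le 1$ for both $j$ with probability at least $1/2$ over a fresh $(\mathbf{x},y)$ (using the norm bounds on $\mathbf{w}_{j,r}^{(t)}$ and the coefficient bounds from the noise-memorization stage), which gives $|y\,f(\mathbf{W}^{(t)},\mathbf{x})|\le 1$ on that event, hence $\ell(y\,f)\ge\log(1+e^{-1})$, and so $\mathbb{E}[\ell(y\,f)]\ge\tfrac{1}{2}\log(1+e^{-1})\ge 0.1$. The key observation is that the logistic loss is already bounded away from zero whenever the margin is merely $O(1)$; one never needs the sign to flip. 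In other words, the quantity the paper actually lower-bounds here is the population \emph{logistic} loss, for which a magnitude bound on the outputs suffices, whereas your symmetrization targets the $0$--$1$ loss, which genuinely requires sign changes of $D(\bm{\xi})$ and hence the missing anti-concentration estimate. If you want the $0$--$1$ version, you would need to prove the conditional-variance lower bound noted above before the symmetrization step can close.
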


\begin{proof}[\bf Proof of \cref{lem:popu_noise}]
    Consider a new sample $(\mathbf{x}, y)$ drawn from \cref{def:data_dis}, we have:
    \begin{align*}
    \|\mathbf{w}_{j, r}^{(t)}\|_2 &=\|\mathbf{w}_{j, r}^{(0)}+j \cdot \Gamma_{j, r}^{(t)} \cdot \frac{\mathbf{v}}{\|\mathbf{v}\|_2^2}+\sum_{i=1}^n \bar{\Phi}_{j, r, i}^{(t)} \cdot \frac{\boldsymbol{\xi}_i}{\|\boldsymbol{\xi}_i\|_2^2}+\sum_{i=1}^n \underline{\Phi}_{j, r, i}^{(t)} \cdot \frac{\boldsymbol{\xi}_i}{\|\boldsymbol{\xi}_i\|_2^2} -\eta\sum_{s=1}^t \mathbf{z}_s\|_2\\
    & \overset{(i)}{\leq} \|\mathbf{w}_{j, r}^{(0)}\|_2+\frac{\Gamma_{j, r}^{(t)}}{\|\mathbf{v}\|_2}+\sum_{i=1}^n \frac{\bar{\Phi}_{j, r, i}^{(t)}}{\|\boldsymbol{\xi}_i\|_2}+\sum_{i=1}^n \frac{|\Phi_{j, r, i}^{(t)}|}{\|\boldsymbol{\xi}_i\|_2} + \|\eta \sum_{s=1}^t \mathbf{z}_s\|_2 \\
    &\overset{(ii)}{\leq} O(\sigma_0 \sqrt{d})+\widetilde{O}(n \sigma_{\xi}^{-1} d^{-1 / 2}) + O(\eta \sqrt{td}\sigma_z)
    \end{align*}
    Here, $(i)$ is due to triangle inequality; $(ii)$ holds by \cref{lem:noise_first_stage_app} and \cref{pro:bounds_of_coeeficients}. Additionally, we know that $\langle\mathbf{w}_{j, r}^{(t)}, \boldsymbol{\xi}\rangle \sim \mathcal{N}(0, \sigma_{\xi}^2\|\mathbf{w}_{j, r}^{(t)}\|_2^2)$, it holds that with probability at least $1-1/4$, $|\langle\mathbf{w}_{j, r}^{(t)}, \boldsymbol{\xi}\rangle| \leq \widetilde{O}(\sigma_0 \sigma_{\xi} \sqrt{d}+n d^{-1 / 2} + \frac{\sqrt{d}\sigma_{\xi} \sigma_0}{\mu})$ due to $\sigma_z = \sigma_0 /(\eta \sqrt{T}\mu)$. Moreover, according to \cref{thm:main_noise_loss_app}, we have $\max _{j, r} \Gamma_{j, r}^{(t)} \leq \widetilde{O}(\sigma_0\|\mathbf{v}\|_2)$, which also indicates that $|\langle\mathbf{w}_{j, r}^{(t)}, \mathbf{v}\rangle| \leq\widetilde{O}(\sigma_0\|\mathbf{v}\|_2)$.

    Then, by the union bound, with probability at least $1-1/2$, we have
    \begin{align*}
    F_j(\mathbf{W}_j^{(t)}, \mathbf{x}) & =\frac{1}{m} \sum_{r=1}^m \sigma(\langle\mathbf{w}_{j, r}^{(t)}, y \mathbf{v}\rangle)+\frac{1}{m} \sum_{r=1}^m \sigma(\langle\mathbf{w}_{j, r}^{(t)}, \boldsymbol{\xi}\rangle) \\
    & {\leq} \max _r|\langle\mathbf{w}_{j, r}^{(t)}, \mathbf{v}\rangle|^q+\max _r|\langle\mathbf{w}_{j, r}^{(t)}, \boldsymbol{\xi}\rangle|^q \\
    & {\leq} \widetilde{O}(\sigma_0^q \sigma_{\xi}^q d^{q / 2}+n^q d^{-q / 2}+\sigma_0^q\|\mathbf{v}\|_2^q + \frac{\sigma_0^q \|\bm{\xi}\|_2^q}{\mu^q}) \\
    & \overset{(i)}{\leq} \widetilde{O}(\frac{1}{\varepsilon^q} + n^q d^{-q / 2}+\frac{1}{n\varepsilon} + \frac{1}{\varepsilon^q \mu^q}) \\
    & \overset{(ii)}{\leq} 1.
    \end{align*}
    Here, $(i)$ and $(ii)$ holds due to we restrict $\sigma_0 = \widetilde{O}(\varepsilon^{-1} \|\bm{\xi}\|_2 )$ and $\varepsilon^q \geq \widetilde{O}(1)$. Notice that here $1$ can be any number, and we use $1$ without loss of generality.
    Therefore, with probability at least $1-1/2$, we have $\ell(y \cdot f(\mathbf{W}^{(t)}, \mathbf{x})) \geq \log (1+e^{-1})$, which indicates that $L_{\mathcal{D}}(\mathbf{W}^{(t)}) \geq \log (1+e^{-1}) \cdot 0.5 \geq 0.1$.
\end{proof}

\end{document}